\newcommand{\cmark}{\ding{51}}
\newlength{\bibitemsep}\setlength{\bibitemsep}{.2\baselineskip plus .05\baselineskip minus .05\baselineskip}
\newlength{\bibparskip}\setlength{\bibparskip}{0pt}
\let\oldthebibliography\thebibliography
\renewcommand\thebibliography[1]{\oldthebibliography{#1}\setlength{\parskip}{\bibitemsep}\setlength{\itemsep}{\bibparskip}}
        \newtheorem{property}[theorem]{Property}
\newtheorem{inequality}{Inequality}
\newtheorem{assumption}{Assumption}
\newtheorem{model}{Model}
\newtheorem{algo}{Algorithm}
\newtheorem{takeaway}{Take-away}
\crefname{algo}{Algorithm}{Algorithms}
\crefname{model}{Model}{Models}
\crefname{lemma}{Lemma}{Lemmas}
\crefname{fact}{Fact}{Facts}
\crefname{theorem}{Theorem}{Theorems}
\crefname{corollary}{Corollary}{Corollaries}
\crefname{exemple}{Exemple}{Examples}
\crefname{remark}{Remark}{Remarks}
\crefname{property}{Property}{Properties}
\crefname{inequality}{Inequality}{Inequalities}
\crefname{claim}{Claim}{Claims}
\crefname{example}{Example}{Examples}
\crefname{problem}{Problem}{Problems}
\crefname{definition}{Definition}{Definitions}
\crefname{figure}{Figure}{Figures}
\crefname{assumption}{Assumption}{Assumptions}
\crefname{subsection}{Subsection}{Subsections}
\crefname{section}{Section}{Sections}
\crefname{algorithm}{Algorithm}{Algorithms}
\crefname{algocf}{alg.}{algs.}
\Crefname{algocf}{Algorithm}{Algorithms}
\crefname{proposition}{Proposition}{Propositions}
\newcommand{\gs}{\vspace{-0.5em}}
\newcommand{\kcarre}{{\otimes 2}}
\DeclareMathOperator*{\argmin}{arg\,min}
\newcommand{\ffrac}[2]{\ensuremath{\frac{\displaystyle #1}{\displaystyle #2}}}
\newcommand{\Artemis}{\texttt{Artemis}}
\newcommand{\Cquant}{\C_\mathrm{q}}
\newcommand{\Cstabquant}{\C_{\mathrm{sq}}}
\newcommand{\Cspars}{\C_{\mathrm{s}}}
\newcommand{\Crand}{\C_{\mathrm{rd}1}}
\newcommand{\Crandh}{\C_{\mathrm{rd}h}}
\newcommand{\Csketch}{\C_{\Phi}}
\newcommand{\Cpp}{\C_{\mathrm{PP}}}
\newcommand{\aniac}{\mathfrak{C}_{\mathrm{ania}}}
\newcommand{\qqquad}{\qquad\qquad}
\newcommand{\I}{\mathcal{I}}
\newcommand{\FF}{\mathcal{F}}
\newcommand{\GG}{\mathcal{G}}
\newcommand{\HH}{\mathcal{H}}
\newcommand{\E}{\mathbb{E}}
\newcommand{\N}{\mathbb{N}}
\newcommand{\R}{\mathbb{R}}
\newcommand{\WW}{\R^d}
  \newcommand{\lnrm}{\left \|} \newcommand{\rnrm}{\right \|} \newcommand{\normeucl}[1]{\| #1 \|_2}
\newcommand{\vertiii}[1]{{\vert\kern-0.25ex\vert\kern-0.25ex\vert #1 
    \vert\kern-0.25ex\vert\kern-0.25ex\vert}}
\newcommand{\FullExpec}[1]{\E \left[#1\right]} \newcommand{\fullexpec}[1]{\E [#1]} 
\newcommand{\Expec}[2]{\E \left[#1~\middle|~#2\right]} \newcommand{\expec}[2]{\E [#1~|~#2]} 
 \newcommand{\Cov}[1]{\mathrm{Cov}\left[#1\right]}
\newcommand{\PdtScl}[2]{\left\langle#1,#2\right\rangle}  \newcommand{\pdtscl}[2]{\langle#1,#2\rangle}  \newcommand{\SqrdNrm}[1]{ \lnrm #1\rnrm^2} 
\newcommand{\sqrdnrm}[1]{ \| #1\|^2} \newcommand{\bigpar}[1]{\left( #1 \right)}
\newcommand{\omgC}{\omega}\newcommand{\omgCOne}{\Omega}
\newcommand{\omgCTwo}{(\omgC + 1)}
\newcommand{\ws}{w_*}
\newcommand{\g}{\textsl{g}} \newcommand{\gw}{g} 
\newcommand{\wkm}{w_{k-1}}
\newcommand{\gwki}{\g_{k}^i(\wkm)}
\newcommand{\gwkstar}{\gw_{k,*}}
\newcommand{\xik}{\xi_k(\etakm)}
\newcommand{\ustar}{u_k}
\newcommand{\xikstaru}{\xi_k^{\mathrm{add}}}\newcommand{\xikstaruk}{\xi_k^{\mathrm{add}}}
\newcommand{\xikmultiplicatif}{\xi^{\mathrm{mult}}}
\newcommandx*\xikmultFNM[2][1=,2=\ustar,usedefault]{\xikmultiplicatif_k \bigpar{#1, u_k, #2}}
\newcommandx*\xikmultFN[2][1=,2=\ustar,usedefault]{\xikmultiplicatif_k \bigpar{#1}}
\newcommand{\etakm}{\eta_{k-1}}
\newcommand{\etabarkm}{\overline{\eta}_{K-1}}
\newcommand{\xCov}{H}
\newcommand{\epsiXCov}{\xCov}
\newcommandx*\compCov[3][1=,2=,3=\xCov,usedefault]{\mathfrak{C}(\C_{#2}^{#1}, p_{{#3}_{#1}})}
\newcommandx*{\compCovPoly}[3][1=,2=,3=,usedefault]{\mathfrak{C}_{\R[\cdot]}(\C_{#3}, p_{#1\xCov_{#2}})}
\newcommand{\Fhess}{H_{F}}
\newcommand{\Ftrace}{R_{F}}
\newcommandx*\FLcompCov[2][1=,2=,usedefault]{\overline{\mathfrak{C}(\C_{#2}, (p_{#1\epsiXCov_{i}})_{i=1}^N)}}
\newcommandx*{\FLcompCovPoly}[1][1=,usedefault]{\overline{\mathfrak{C}_{\R[\cdot]}(#1(\xCov_i)_{i=1}^N)}}
\newcommand{\bigdim}{d}
\newcommand{\subdim}{h}
\newcommand{\iN}{{i=1}^N}
\newcommand{\OneToN}{[N]}\newcommand{\OneToK}{[K]}
\newcommand{\C}{\mathcal{C}}
\newcommand{\boundAdd}{\mathcal{A}}\newcommand{\boundMult}{\mathcal{M}_2}\newcommand{\boundMultPrime}{\mathcal{M}_1}
\newcommand{\ShaA}{\Sha_{\mathrm{add}}}
\newcommand{\ShaM}{\Sha_{\mathrm{mult}}}
\newcommand{\Tr}[1]{\mathrm{Tr}\bigpar{#1}}
\newcommand{\Diag}[1]{\mathrm{Diag}\bigpar{#1}}
\newcommand{\eig}[1]{\mathrm{eig}(#1)}
\newcommand{\Id}{\mathrm{I}}
\newcommand{\cst}{C}
\newcommand{\Flast}{\FF}
\newcommand{\Fx}{\GG}
\newcommand{\Fy}{\HH}
\newcommand{\Fsto}{\mathcal{I}}
\newcommand{\sign}{\mathrm{sign}}
\newcommand{\Bern}{\mathrm{Bern}}
\DeclareFontFamily{U}{wncy}{}
\DeclareFontShape{U}{wncy}{m}{n}{<->wncyr10}{}
\DeclareSymbolFont{mcy}{U}{wncy}{m}{n}
\DeclareMathSymbol{\Sha}{\mathord}{mcy}{"58}
\newcommand{\oset}[3][0ex]{\mathrel{\mathop{#3}\limits^{
    \vbox to#1{\kern-2\ex@
    \hbox{$\scriptstyle#2$}\vss}}}}
\def\lesssim{\oset[-.2ex]{\sim}{\le}}
\def\precsim{\oset[-.3ex]{\sim}{\preccurlyeq}} 
\definecolor{brickred}{rgb}{0.8, 0.25, 0.33}
\definecolor{tabblue}{HTML}{1F77B4}
\definecolor{taborange}{HTML}{FF7F0e}
\definecolor{tabred}{HTML}{d62728}
\definecolor{tabgreen}{HTML}{2ca02c}
\definecolor{tabgray}{HTML}{7f7f7f}
\definecolor{forestgreen}{rgb}{0.13, 0.55, 0.13}
\definecolor{carmine}{rgb}{0.59, 0.0, 0.09}
\newcommandx{\warning}[1]{\textcolor{carmine}{#1}}
\begin{document}

\title{Compressed and distributed least-squares regression: convergence rates with applications to federated learning}

\author{\name Constantin Philippenko \\ \email constantin.philippenko@polytechnique.edu \\\addr École polytechnique, Institut Polytechnique de Paris, CMAP \AND  
\name Aymeric Dieuleveut \\ \email aymeric.dieuleveut@polytechnique.edu \\ \addr École polytechnique, Institut Polytechnique de Paris, CMAP
} 

\editor{Lorenzo Rosasco}

\maketitle

\addtocontents{toc}{\protect\setcounter{tocdepth}{0}} 

\begin{abstract}

In this paper, we investigate the impact of compression on stochastic gradient algorithms for machine learning, a technique widely used in distributed and federated learning.  
We underline differences in terms of convergence rates between several unbiased compression operators, that all satisfy the same condition on their variance, thus going beyond the classical worst-case analysis.
To do so, we focus on the case of least-squares regression (LSR) and analyze a general stochastic approximation algorithm for minimizing quadratic functions relying on a random field. 
We consider weak assumptions on the random field, tailored to the analysis (specifically, expected Hölder regularity), and on the noise covariance, enabling the analysis of various randomizing mechanisms, including compression. We then extend our results to the case of federated learning.

More formally, we highlight the impact on the convergence of the covariance $\aniac$ of the \textit{\underline{a}dditive \underline{n}oise \underline{i}nduced by the \underline{a}lgorithm}.
We demonstrate despite the non-regularity of the stochastic field, that the limit variance term scales with $\Tr{\aniac \Fhess^{-1}}/K$ (where $\Fhess$ is the Hessian of the optimization problem and $K$ the number of iterations) generalizing the rate for the vanilla LSR case where it is $\sigma^2 \Tr{\Fhess \Fhess^{-1}} / K = \sigma^2 d / K$~\citep{bach_non-strongly-convex_2013}. Then, we analyze the dependency of $\aniac$ on the compression strategy and ultimately its impact on convergence, first in the centralized case, then in two heterogeneous FL frameworks.
\end{abstract}

\begin{keywords}
  Large-scale optimization, linear stochastic approximation, least-squares regression, federated learning, compression
\end{keywords}

\section{Introduction}
Large-scale optimization \citep{bottou_tradeoffs_2007} has become ubiquitous in today's learning problems due to the incredible growth of data collection. It becomes computationally extremely hard to process a full dataset or even, to store it on a single device \citep{abadi_tensorflow_2016,seide_cntk_2016,caldas_leaf_2019}. This led practitioners to either process each observation only once in a streaming fashion, or to design distributed algorithms. This paper is part of this line of work and considers in particular stochastic federated algorithms \citep[][]{konecny_federated_2016,mcmahan_communication-efficient_2017} that use a central server to orchestrate the training over a network of $N$ in $\N^*$ clients.

A well-identified challenge in this framework is the communication cost of the learning process \citep{seide_1-bit_2014,chilimbi_project_2014,strom_scalable_2015} based on stochastic gradient algorithms. Indeed, iteratively  exchanging gradient or model information between the local workers and the central server generates a huge computational and bandwidth bottleneck.
To reduce this communication cost, two strategies have been widely implemented and analyzed:
performing local updates~\citep[see e.g.~][]{mcmahan_communication-efficient_2017,karimireddy_scaffold_2020}, or reducing the size  of the exchanged messages by passing them through a compression operator, on the uplink channel \citep{seide_1-bit_2014,alistarh_qsgd_2017,alistarh_convergence_2018,mishchenko_distributed_2019,karimireddy_error_2019,wu_error_2018,horvath_natural_2022,mishchenko_distributed_2019,li_acceleration_2020,richtarik_ef21_2021}, or on both uplink and downlink channels \citep{harrane_reducing_2018,tang_doublesqueeze_2019,liu_double_2020,zheng_communication-efficient_2019,philippenko_artemis_2020,philippenko_preserved_2021,gorbunov_linearly_2020,sattler_robust_2019,fatkhullin_ef21_2021}. These two strategies, although typically analyzed independently, are often combined. We focus on compression; to reduce the cost of exchanging a vector, three techniques are combined: (1) sending the message to only a few clients, (2) sending only a fraction of the coordinates, (3) sending low-precision updates.

Most analyses of the impact of compression schema rely on generic assumptions on the compression operator $\C$, typically either \textit{contractive}, i.e.~for any $z$ in $\R^d$, $\|\C(z)-z\| < (1-\delta)\|z\|$ with $\delta \in ]0;1[$ \citep[almost surely or in expectation, see for instance][]{seide_1-bit_2014,stich_sparsified_2018,karimireddy_error_2019,ivkin_communication-efficient_2019,koloskova_decentralized_2019,gorbunov_linearly_2020,beznosikov_biased_2020}, or unbiased with bounded variance increase, i.e., for any $z$ in $\R^d$,  $\E [\C(z)] = z $ and $ \E[\|\C(z)-z\|^2] \leq \omgC \|z\|^2$ for a parameter $\omgC > 1$ \citep[see among others][]{alistarh_qsgd_2017,wu_error_2018,mishchenko_distributed_2019,chraibi_distributed_2019,gorbunov_unified_2020,reisizadeh_fedpaq_2020,horvath_natural_2022,kovalev_lower_2021,philippenko_artemis_2020,philippenko_preserved_2021,haddadpour_federated_2021,li_canita_2021,khirirat_distributed_2018}. Unlike biased---and often deterministic---operators, unbiased operators typically benefit from a variance reduction proportional to the number of clients (e.g., \citealp{gorbunov_linearly_2020} vs \citealp{horvath_stochastic_2019}). 

In parallel, a line of work has thus focused on the design of compression schemes satisfying one of these two assumptions~\citep{bernstein_signsgd_2018,dai_hyper-sphere_2019,beznosikov_biased_2020,horvath_natural_2022,xu_optimal_2020,leconte_dostovoq_2021,gandikota_vqsgd_2021,ramezani_nuqsgd_2021,horvath_natural_2022}.
Two fundamental strategies are typically combined: (1) quantization~\citep{rabbat_quantized_2005,gersho_vector_2012,alistarh_convergence_2018}, and (2) random projection~\citep{vempala_random_2005,rahimi_random_2008,nesterov_efficiency_2012,nutini_coordinate_2015}.
These methods are compared based on (1) the number of bits required for storing or exchanging a $d$ dimensional vector and (2) the resulting  variance increase $\omgC$ or contractiveness constant $\delta$.
Consequently, convergence results are \textit{worst-case} results over the class of compression operators: two compression operators satisfying the same variance assumption are regarded as producing the same convergence rate. 

 \ \\ 
\noindent
\fbox{
\begin{minipage}{\dimexpr\textwidth-4\fboxsep\relax}
The goal of this paper is to provide an in-depth analysis of compression within a fundamental learning framework, namely least-squares regression \citep[LSR,][]{legendre_nouvelles_1806}, 
in order  to highlight the differences in convergence between several unbiased compression schemes having the \textit{same}  variance increase. 
\end{minipage}
}

More precisely, we aim at analyzing updates on a sequence of models $(w_k)_{k\in \N}$ of the form $w_k = \wkm - \frac{\gamma}{N} \sum_\iN \C^i(\g_k^i(\wkm))$, where $\gamma$ is the step-size and $\g_k^i$ is a stochastic oracle on the gradient of the least-squares objective function of client $i$ (see \Cref{ex:cent_comp_LMS,ex:dist_comp_LMS}).

To the best of our knowledge, this study is the first to \emph{compare compressors that are in the same class}, i.e. satisfying
the same variance assumption. Especially, this analysis will highlight the impact of (1) the  compression scheme's regularity (Lipschitz in squared expectation or not) and of (2) the correlation between the compression of the different coordinates.
We highlight three examples of possible take-aways from our analysis, that will be detailed in \Cref{sec:application_compressed_LSR}.
\begin{takeaway}
Quantization-based compression schemes do not have \emph{Lipschitz in squared expectation} regularity but satisfy a Hölder condition. Because of that, their convergence is degraded, yet they asymptotically achieve a rate comparable to projection-based compressors, in which the limit covariance is similar.
\end{takeaway}
\begin{takeaway}
Rand-$h$ and partial participation with probability $(h/d)$ satisfy the same variance condition. Yet the convergence of compressed least mean squares algorithms for partial participation is more robust to ill-conditioned problems.
\end{takeaway}

\begin{takeaway}
The asymptotic convergence rate is expected to be at least as good for quantization than for sparsification or randomized coordinate selection, \emph{if} the features are standardized. On the contrary, if the features are independent and the feature vector is normalized, then quantization is worse than  sparsification or randomized coordinate selection.
\end{takeaway}

We consider a random-design LSR framework and make the following assumption on the input-output pairs distribution
\begin{model}[Federated case]
\label{model:fed}
We consider $N$ clients. Each client $i$ in~$\OneToN$ accesses $K$ in~$\N^*$ i.i.d.~observations $(x_k^i,y_k^i)_{k\in\OneToK}\sim \mathcal{D}_i^{\otimes K}$, such that  there exists a well-defined client-dependent model~$\ws^i$: 
\begin{align}
\label{eq:model}
\forall k \in \OneToK, \quad y_k^i = \PdtScl{x_k^i}{\ws^i}  + \varepsilon_k^i, \qquad \text{with}~~\varepsilon_k^i \sim \mathcal{N}(0, \sigma^2)~\,, 
\end{align}
for an i.i.d.~sequence~$\bigpar{(\varepsilon_k^i)_{k\in \OneToK, i\in\OneToN}}$ independent of $\bigpar{(x_k^i)_{k\in\OneToK,i\in\OneToN}}$. 
We use the generic notation $(x^i, y^i,\varepsilon^i)$ for such an input-output-noise triplet on client $i$.
Moreover, we assume that the inputs' second moment\footnote{In the following, we may refer to this matrix $\xCov$ as the \textit{covariance} (in the case of centered features, covariance is equal to the second moment)} is bounded to define $\E [x^i \otimes x^i] = \xCov_i$ and $\E [\|{x^i}\|^2] = R^2_i$; such that $\E [\|{x^i}\|^2 x^i \otimes x^i] \preccurlyeq R_i^2 \xCov_i$. For any $i \in \OneToN$, we consider the expected squared loss on client $i$ of a model $w$ as $F_i(w) := \frac{1}{2} \E_{(x^i, y^i)\sim\mathcal{D}_i}[(\PdtScl{x^i}{w} - y^i)^2]$.
\end{model}

\begin{remark}[Almost surely bounded features]
\label[remark]{rem:as_bounded_features}
    In the case of linear compressors, we will also assume that for each client $i$ in $\OneToN$, features are almost surely bounded by $R_i^2$.
\end{remark}
 
\noindent
This model is classical in the single worker case \citep[e.g.][]{hsu_random_2012,bach_non-strongly-convex_2013}: 
\begin{model}[Centralized case]
\label{model:centralized}
    We consider \Cref{model:fed} with $N=1$ client. For simplicity, we then omit the $i$ superscript.
\end{model}

We focus on the problem of minimizing the global expected risk $F: \R^d \rightarrow \R$, thus finding the optimal model $\ws$ in $\R^d$ such that:
\begin{align}
\label{eq:def_wstar}
    \ws = \argmin_{w \in \R^d} \left\{ F(w) := \frac{1}{N} \sum_\iN F_i(w) \right\}~~ \tag{OPT}
\end{align}
Note that we assume that $\mathrm{Span}\{\mathrm{Supp}(x^i), i \in {\OneToN}\} =\R^d$ to ensure the existence and uniqueness of~$\ws$.

The empirical version of the risk minimization admits an explicit formula, yet is computationally too expensive to compute for large problems. This is why, in practice, LSR is solved using iterative stochastic algorithms, for example Stochastic Gradient Descent \citep[SGD, see][]{robbins_stochastic_1951}. SGD for LSR is often referred to as the \emph{Least Mean Squares} (LMS) algorithm~\citep{bershad_analysis_1986,macchi_adaptative_1995}. Analysis of LMS~\citep{gyorfi_averaged_1996,bach_non-strongly-convex_2013} and its variants received a lot of interest over the last decades. Indeed despite its simplicity, LSR  is a model of choice for practitioners because of its efficiency to train good and interpretable models \citep[see e.g.][chapter 5.1]{molnar_interpretable_2020}. Moreover, its simplicity enables to isolate and analyze challenges faced in specific configurations, for instance, non-strong convexity \citep{bach_non-strongly-convex_2013}, interaction between acceleration and stochasticity \citep{dieuleveut_harder_2017,jain_accelerating_2018,varre_accelerated_2022}, non-uniform iterate averaging~\citep{jain_parallelizing_2018,neu_iterate_2018,muecke_beating_2019}, infinite-dimensional frameworks~\citep{dieuleveut_nonparametric_2016}, or over-parametrized regimes and double descent phenomena \citep{belkin_reconciling_2019}. 

Our approach follows this line of work: our goal is to analyze the impact of \textit{compression} in FL algorithms, by providing a careful study of compressed LMS, based on a fine-grained analysis of Stochastic Approximation (SA) under weak assumptions on the random field. 
More precisely, we consider linear stochastic approximation recursion, to find a zero of the linear mean-field $\nabla F$.

\begin{definition}[Linear Stochastic Approximation, LSA]
\label[definition]{def:class_of_algo}
Let $w_0 \in \R^d$ be the initialization, the linear\footnote{While in LSA literature, both the mean-field  $\nabla F$ and the noise-field $(\xi_k)$ are linear, we do not here consider the noise fields to be linear.
} stochastic approximation recursion is defined as:
\begin{equation}
\tag{LSA}\label{eq:LSA}
w_{k} = \wkm - \gamma \nabla F (\wkm) + \gamma \xi_k(\wkm - \ws),\ \  k \in \N ,
\end{equation}
where $\gamma > 0$ is the step-size and 
$(\xi_k)_{k \in \N^*}$ is a sequence of i.i.d. zero-centered random fields that characterizes the stochastic oracle on $\nabla F(\cdot)$.
For any $k\in \N^*$, we denote $\Flast_k = \sigma\bigpar{\xi_1, \dots, \xi_k}$, such  that the filtration $(\Flast_k)_{k\geq0}$ is adapted to $(w_k)_{k \geq 0}$. 

We assume that $F$ is quadratic, we denote $\Fhess$ its Hessian, $\Ftrace^2 := \Tr{\Fhess}$ its trace and $\mu$ its smallest eigenvalue. For any $k$ in $\N$, with $\eta_k = w_k - \ws$, we get equivalently:
\begin{equation*} 
\eta_k = (\Id- \gamma \Fhess) \eta_{k-1} + \gamma \xi_k(\eta_{k-1}), \  \ k \in \N .
\end{equation*}
\end{definition}
As underlined by \citet{bach_non-strongly-convex_2013},  \eqref{eq:LSA} corresponds to a homogeneous Markov chain. A study of stochastic approximation using results and techniques from the Markov chain literature can be found for instance in \citet{freidlin_random_1998} or more recently in \citet{dieuleveut_bridging_2020}. 

\eqref{eq:LSA} encompasses three examples of interest, the first one is the classical LMS algorithm.
Indeed, with the observations  in \Cref{model:fed,model:centralized}, for any client $i\in\OneToN$, any iteration $k$ in $\OneToK$, any model $w\in \R^d$, 
\begin{equation}\label{eq:def_oracle}
\g_k^i(w) := (\PdtScl{x_k^i}{w} - y_k^i) x_k^i 
\end{equation}
is an unbiased oracle of $\nabla F_i(w)$. This can be used to define the following three algorithms.

\begin{algo}[LMS]\label{ex:LMS}
For LMS algorithm, with a single worker (\Cref{model:centralized}), we have for all $k \in \N$,  $w_k = \wkm - \gamma \g_k(\wkm) = \wkm - \gamma (\PdtScl{x_k}{\wkm} -y_k)x_k$, thus equivalently, we have~$\xi_k(\cdot) = (\E[x_1 x_1^\top] - x_k x_k^\top) (\cdot) + \epsilon_k x_k $. Indeed, for any $w$ in $\R^d$, $\xi_k(w - w_*) = \nabla F(w)  - \g_k(w) = \E[x_1 x_1^\top] (w - \ws) - (\PdtScl{x_k}{w} - y_k) x_k = (\E[x_1 x_1^\top] - x_k x_k^\top) (w - \ws) - (\PdtScl{x_k}{w_*} - y_k) x_k$.
\end{algo}
Second, the case of a single client compressed LMS algorithm. 
\begin{algo}[Centralized compressed LMS]\label{ex:cent_comp_LMS}
A single client ($N=1$) observes at any step $k \in\OneToK$ an oracle $\g_k(\cdot)$ on the gradient of the objective function $F$, and applies a random compression mechanism $\C_k(\cdot)$. Thus, for any step-size $\gamma>0$ and any $k \in \N^*$, the resulting sequence of iterates $(w_k)_{k \in\N}$ satisfies: $w_k = \wkm - {\gamma}\C_k(\g_k(\wkm)) \,.$
\end{algo}

And finally, the extension to the distributed case.
\begin{algo}[Distributed compressed LMS]\label{ex:dist_comp_LMS}
In our motivating example,  each client $i \in \OneToN$ observes at any step $k \in \OneToK$ an oracle $\g_k^i(\cdot)$ on the gradient of the local objective function $F_i$, and applies a random compression mechanism $\C_k^i(\cdot)$. Thus, for any step-size $\gamma>0$ and any $k \in \N^*$, the resulting sequence of iterates $(w_k)_{k \in\N}$ satisfies:
$w_k = \wkm - \frac{\gamma}{N} \sum_\iN \C_k^i(\g_k^i(\wkm)) \,$ (we consider the randomization made on clients $(\C_k^i(\cdot))_{i \in \{1 \cdots, N\}}$ to be independent)
\end{algo}

\begin{remark}
    The analysis naturally covers any randomized postprocessing $\C_k^i(\cdot)$, beyond the compression case.
\end{remark}

\paragraph{Challenges, contributions and structure of the paper.}
Although there is abundant literature on the study of \eqref{eq:LSA}, the application to~\Cref{ex:cent_comp_LMS,ex:dist_comp_LMS} poses novel challenges. Especially, most analyses of LSA \citep[][]{blum_multidimensional_1954,ljung_analysis_1977,ljung_theory_1983} assume that the field $\xi_k$ is linear \citep[i.e.~for any $z,z' \in \R^d, \ \xi_k(z)-\xi_k(z')=\xi_k(z-z')$, see][]{konda_linear_2003,benveniste_adaptive_2012,leluc_sgd_2022}. More general non-asymptotic results on SA with a Lipschitz mean-field (i.e. SGD with a smooth objective) also assume that the noise-field is Lipschitz-in-squared-expectation i.e.~for any $z,z' \in \R^d,\E[\|\xi_k(z)-\xi_k(z')\|^2] \leq C \|z-z'\|^2$
\citep[][]{moulines_nonasymptotic_2011,bach_adaptivity_2014,dieuleveut_bridging_2020,gadat_optimal_2023}. One major specificity and bottleneck in the case of compression is the fact that the resulting field \textbf{does not} satisfy such an assumption.
The rest of the paper is thus organized as follows:
\begin{enumerate}[topsep=2pt,itemsep=1pt,leftmargin=0.5cm]
\item In \Cref{sec:theoretical_analysis}, we provide a non-asymptotic analysis of \eqref{eq:LSA} under weak regularity assumptions of the noise field $(\xi_k)_k$. 
We show that the asymptotically dominant term depends  on the covariance matrix $\aniac$ of the \textit{\underline{a}dditive \underline{n}oise \underline{i}nduced by the \underline{a}lgorithm}, as expected from the classical asymptotic literature ~\citep{polyak_acceleration_1992}. The backbone results of our paper are \Cref{thm:bm2013_with_nonlinear_operator_compression,thm:bm2013_with_linear_operator_compression} which generalize the results from \cite{bach_non-strongly-convex_2013} for \Cref{ex:LMS}. The limit convergence rate term scales with  $\Tr{\aniac \Fhess^{-1}}/K$, which highlights the interaction between the Hessian of the optimization problem $\Fhess$, and the additive noise's covariance $\aniac$.

 \item In \Cref{sec:application_compressed_LSR}, we prove that assumptions made in \Cref{sec:theoretical_analysis} are valid for \Cref{ex:cent_comp_LMS} with classical compression schemes.
 Although this single-client case is a simple configuration, it enables to describe the impact of the compressor choice on the dependency between the features' covariance $\xCov$ (which is also the Hessian $\Fhess$ of the optimization problem) and the additive noise's covariance $\aniac$.
 Contrary to \Cref{ex:LMS}, for which the noise is said to be \textit{structured}, i.e.~the additive noise's covariance is proportional to the Hessian $\Fhess$, applying a random compression mechanism on the gradient breaks this  structure.
This phenomenon is noteworthy: for an ill-conditioned $\Fhess$, it  may lead to a drastic increase in $\Tr{\aniac \Fhess^{-1}}$ and thus, to a degradation in convergence. 
By calculating the additive noise's covariance for various compression mechanisms, we identify differences that classical literature was unable to capture.

\item In \Cref{sec:federated_learning}, we study the distributed \Cref{ex:dist_comp_LMS} with heterogeneous clients. 
We examine two different sources of heterogeneity for which we show that \Cref{thm:bm2013_with_nonlinear_operator_compression,thm:bm2013_with_linear_operator_compression} remain valid. First, the case of heterogeneous features' covariances $(\xCov_i)_\iN$ in \Cref{subsec:fl_sigma}; second, the case of heterogeneous local optimal points $(\ws^i)_\iN$ in \Cref{subsec:fl_wstar}. 
\end{enumerate}

These results are validated by numerical experiments which help to get an intuition of the underlying mechanisms. The code is provided on our GitHub repository: \url{https://github.com/philipco/structured_noise}.
We summarize hereafter the structure of the paper in \Cref{fig:chart_summary_paper}.

\begin{figure}
    \centering
    \includegraphics[trim={8.2cm 13.1cm 7.2cm 5.5cm},clip, width=0.85\linewidth]{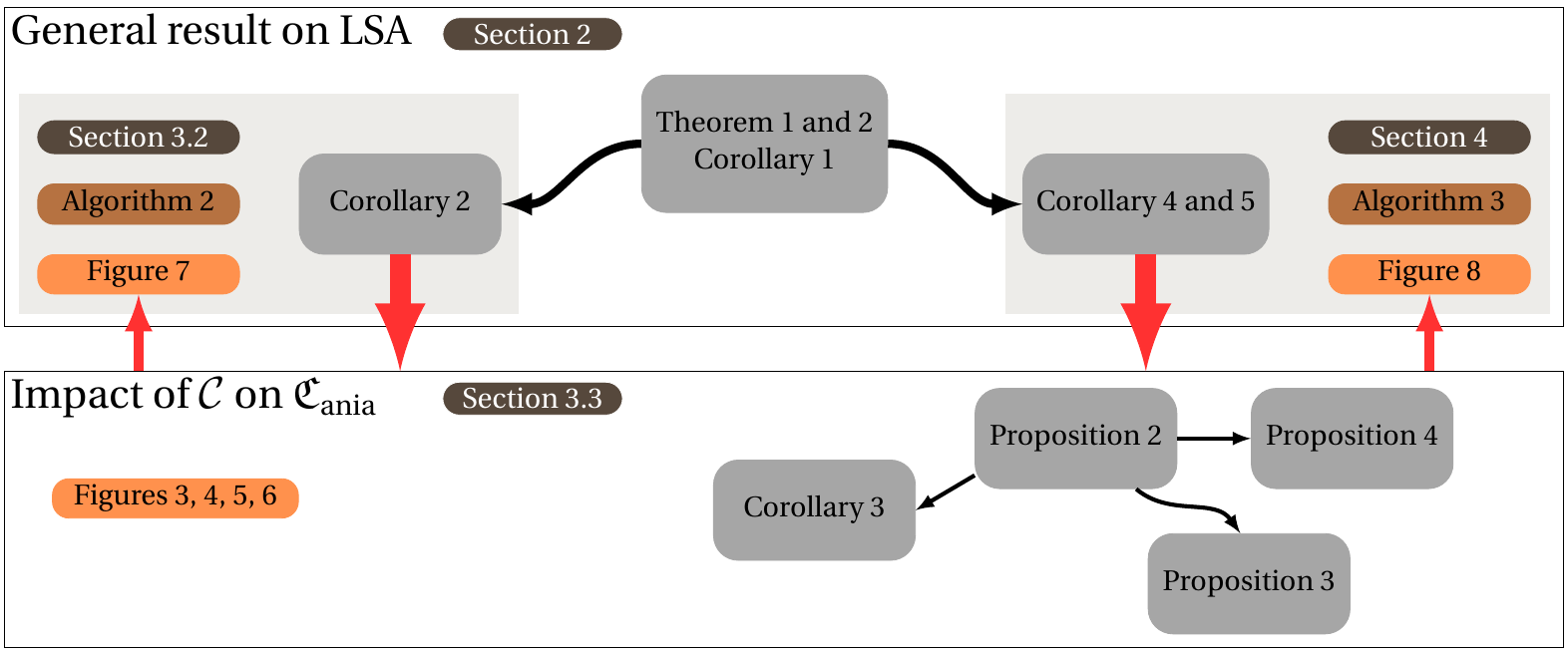}
    \caption{Flow chart summarizing our results.\gs\gs}
    \label{fig:chart_summary_paper}
\end{figure}

\paragraph{Notations.} We denote by $\preccurlyeq$ the order between self-adjoint operators, i.e., $A \preccurlyeq B$ if and only if $B-A$ is positive semi-definite (p.s.d.) and $A \precsim B$ if $A \preccurlyeq B$ and $A = B + O(\frac{1}{d})$. We denote by $A^{1/2}$ the p.s.d.~square root of any symmetric p.s.d.~matrix $A$. 
For two vectors $x, y$ in $\R^d$, the Kronecker product is defined as $x \otimes y := x y^\top$, the element-wise product is denoted as $x\odot y$, and the Euclidean norm is $\| x \|^2 := \sum_{i=1}^d x_i^2$.  For any rectangular matrix $A$ in $\R^{n\times m}$ s.t. $AA^\top$ is inversible, we denote $A^\dagger := A^\top (A A^\top)^{-1}$ the Moore–Penrose pseudo inverse. For $x,y$ in $\R^d$, we use $x\wedge y$ for the minimum between two values, and $x\lesssim y$ if $x \leq y $ and $x = y + O(\frac{1}{d})$. For any sequence of vector $(x_k)_{k \in \{0,\dots, K \}}$ we denote $\overline{x}_{K-1} =\sum_{k=0}^{K-1} x_k / K$.
We use $e_i$ to denote the vector in $\R^d$ with zero everywhere except at coordinate $i$, and $\mathcal{O}_d(\R)$ the group of orthogonal matrices. 
Finally, 
all random variables are defined on a probability space $(\Omega, \mathcal{A}, \mathbb{P}), \mathbb{E}$ is the expectation associated with the probability $\mathbb{P}$ and $\mathcal{A}$ is a $\sigma$-algebra. 
We define the set of probability distribution function $\mathcal P_M$ 
whose second moment is equal to $M$ in $\R^{d \times d}$:
$
\mathcal P_M = \{\text{probability distribution } p_M \text{~over $\R^d$ s.t.}, \E_{\varepsilon \sim p_M} [\varepsilon^{\kcarre}] = M \}\,.
$
Any such distribution  $p_M$ is indexed with its matrix of covariance. 

\section{Non asymptotic convergence result for \texorpdfstring{\eqref{eq:LSA}}{LSA}}
\label{sec:theoretical_analysis}

\subsection{Definition of the additive noise's covariance and assumptions on the random fields}
\label{subsec:def_ania_asu_field}
For any $k$ in $\N^*$, we define the additive noise $\xi_k^\mathrm{add}$ and the multiplicative noise $\xi_k^\mathrm{mult}(\cdot)$.
\begin{definition}[Additive and multiplicative noise]
\label[definition]{def:add_mult_noise}
Under the setting of \Cref{def:class_of_algo}, for any $k$ in $\N^*$, we define:
\[
\xi_k^\mathrm{add} := \xi_k(0) \qquad \text{and} \qquad \xi_k^\mathrm{mult} : z \in \R^d \mapsto \xi_k(z) - \xi_k^\mathrm{add} \,.
\]
\end{definition}
\begin{remark}
\label[remark]{remark:main:add_noise_indep_of_k}
Observe that $(\xi_k^\mathrm{add})_{k\in\N^*}$ is an i.i.d. sequence of random variables and $(\xi_k^\mathrm{mult})_{k\in\N^*}$ is an i.i.d. sequence of random field. The following assumptions, made for $k=1$, are thus equivalently valid for any $k\geq 1$.
\end{remark}

\begin{assumption}[Second moment]
\label{asu:main:bound_add_noise}
${\xi_1^\mathrm{add}}$ admits a second order moment. We note $\boundAdd \geq 0$ such that $\E [\|\xi_1^\mathrm{add}\|^2] ~\leq~\boundAdd$.
\end{assumption}
\Cref{asu:main:bound_add_noise,remark:main:add_noise_indep_of_k} enable us to define the covariance of the \underline{a}dditive \underline{n}oise \underline{i}nduced by the \underline{a}lgorithm. 
\begin{definition}[Additive noise's induced by the algorithm's covariance.]
\label[definition]{def:ania}
Under the setting of \Cref{def:class_of_algo}, we define the \emph{additive noise's covariance} as the covariance of the additive noise:
$\aniac = \E[\xi_1^\mathrm{add} \otimes \xi_1^\mathrm{add}]\,.
$
\end{definition}

Secondly, we state our assumptions on the multiplicative part of the noise, especially its regularity around 0 (note that $\xi_1^\mathrm{mult}(0)=0$).
\begin{assumption}[Second moment of the multiplicative noise]
\label{asu:main:second_moment_noise}
There exist two constants $\boundMultPrime,\boundMult > 0$ such that, for any $\eta$ in $\R^d$, the following hold:
\begin{enumerate}[label={\textbf{A\ref*{asu:main:second_moment_noise}.\arabic*:}}, ref={\theassumption.\arabic*}, leftmargin=*, align=left]
\item \label[assumption]{asu:main:bound_mult_noise}
    $\E[\|{\xi_1^\mathrm{mult}(\eta)}\|^2] \leq 2 \boundMult \|{\Fhess^{1/2} \eta}\|^2 + {4 \boundAdd\, }.$
\item \label[assumption]{asu:main:bound_mult_noise_holder}
    $\E[\|{\xi_1^\mathrm{mult}(\eta)}\|^2] \leq  \boundMultPrime \|\Fhess^{1/2} \eta \| + 3 \boundMult \|\Fhess^{1/2} \eta \|^2.$
\end{enumerate}
\end{assumption}

The main originality of this section is the analysis under \Cref{asu:main:bound_mult_noise_holder}.  This Hölder-type condition will appear naturally for  compression in \Cref{sec:application_compressed_LSR}. Up to our knowledge, \eqref{eq:LSA} has not been analyzed under this particular condition.

Under these assumptions, asymptotic results from \citet{polyak_acceleration_1992} can be applied. Especially, we establish the asymptotic normality of $(\sqrt{K} \etabarkm)_{K >0}$, with an asymptotic variance equal to $\Fhess^{-1} \aniac \Fhess^{-1}$. 
\begin{proposition}[CLT for \eqref{eq:LSA}]
\label[proposition]{prop:asymptotic_normality}
Under \Cref{asu:main:bound_add_noise,asu:main:second_moment_noise}, consider a sequence $(w_k)_{k\in\N^*}$ produced in the setting of \Cref{def:class_of_algo} for a step-size $(\gamma_k)_{k\in \N^*}$ s.t. $\gamma_k =k^{-\alpha}$, $\alpha\in ]0,1[$. Then $(\sqrt{K} \etabarkm)_{K>0}$ is asymptotically normal and converge in distribution to $\mathcal{N}(0, \Fhess^{-1} \aniac \Fhess^{-1})$.
\end{proposition}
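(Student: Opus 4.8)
The plan is to reproduce the Polyak--Juditsky averaging mechanism underlying \citet{polyak_acceleration_1992}: I would reduce the CLT to a classical central limit theorem for the i.i.d. additive noise, and show that every other contribution is asymptotically negligible. The essential preliminary ingredient, which I would establish first, is a rate on the last iterate. Since $F$ is quadratic with $\Fhess \succ 0$ (smallest eigenvalue $\mu>0$), the linear part of the recursion in \Cref{def:class_of_algo} is contractive for small step-size. Conditioning on $\Flast_{k-1}$ and using that the field is zero-centered (so the cross term vanishes), together with \Cref{asu:main:bound_add_noise,asu:main:bound_mult_noise} to bound $\E[\|\xi_k(\eta_{k-1})\|^2\mid \Flast_{k-1}]$ by $C_1\|\Fhess^{1/2}\eta_{k-1}\|^2 + C_2$, one obtains for $k$ large enough a one-step inequality $\E\|\eta_k\|^2 \le (1-\gamma_k\mu)\,\E\|\eta_{k-1}\|^2 + C\gamma_k^2$. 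A standard Chung-type lemma with $\gamma_k=k^{-\alpha}$, $\alpha\in\,]0,1[\,$, then yields $\E\|\eta_k\|^2 = O(\gamma_k)=O(k^{-\alpha})$, and hence $\E\|\eta_k\| = O(k^{-\alpha/2})$.

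Next I would rewrite the recursion as $\Fhess\eta_{k-1} = (\eta_{k-1}-\eta_k)/\gamma_k + \xi_k(\eta_{k-1})$, sum over $k=1,\dots,K$, and use $\etabarkm = \frac1K\sum_{k=0}^{K-1}\eta_k$. After an Abel summation of the telescoping part and multiplication by $\sqrt K$, this produces
\[
\sqrt K\,\Fhess\,\etabarkm = \frac{1}{\sqrt K}\sum_{k=1}^K \xi_k^{\mathrm{add}} + \frac{1}{\sqrt K}\sum_{k=1}^K \xi_k^{\mathrm{mult}}(\eta_{k-1}) + \frac{1}{\sqrt K}\Bigl(\frac{\eta_0}{\gamma_1}-\frac{\eta_K}{\gamma_K}+\sum_{k=1}^{K-1}\Bigl(\tfrac{1}{\gamma_{k+1}}-\tfrac{1}{\gamma_k}\Bigr)\eta_k\Bigr).
\]
The first term is a normalized sum of i.i.d., mean-zero variables with finite covariance $\aniac$ (\Cref{asu:main:bound_add_noise,def:ania}), so the multivariate CLT gives convergence in distribution to $\mathcal N(0,\aniac)$. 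For the boundary/remainder term I would take $L^1$ norms: the deterministic piece $\eta_0/\gamma_1$ is $O(K^{-1/2})$; the piece $\eta_K/(\gamma_K\sqrt K)$ is $O(K^{\alpha/2-1/2})$ using $\E\|\eta_K\|=O(K^{-\alpha/2})$; and since $\gamma_{k+1}^{-1}-\gamma_k^{-1}=O(k^{\alpha-1})$, the last sum is $O\bigl(K^{-1/2}\sum_k k^{\alpha/2-1}\bigr)=O(K^{\alpha/2-1/2})$. All vanish for $\alpha<1$.

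The main obstacle is the multiplicative term $\frac{1}{\sqrt K}\sum_k \xi_k^{\mathrm{mult}}(\eta_{k-1})$, and this is exactly where \Cref{asu:main:bound_mult_noise_holder} (rather than the merely Lipschitz-in-squared-expectation \Cref{asu:main:bound_mult_noise}) is indispensable. Because $\eta_{k-1}$ is $\Flast_{k-1}$-measurable while the field is zero-centered and independent of the past, $(\xi_k^{\mathrm{mult}}(\eta_{k-1}))_k$ is a martingale-difference sequence, so its terms are orthogonal in $L^2$ and $\E\bigl\|\tfrac{1}{\sqrt K}\sum_k \xi_k^{\mathrm{mult}}(\eta_{k-1})\bigr\|^2 = \tfrac1K\sum_k \E\|\xi_k^{\mathrm{mult}}(\eta_{k-1})\|^2$. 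The Hölder bound controls each summand by $\boundMultPrime\,\E\|\Fhess^{1/2}\eta_{k-1}\| + 3\boundMult\,\E\|\Fhess^{1/2}\eta_{k-1}\|^2 = O(k^{-\alpha/2})$, with \emph{no} constant floor; averaging gives $O(K^{-\alpha/2})\to 0$. Crucially, \Cref{asu:main:bound_mult_noise} alone would leave a term of order $\boundAdd$ that does not vanish, which is precisely why the Hölder regularity is needed here. Hence this term tends to $0$ in $L^2$, thus in probability. Combining the three estimates by Slutsky's lemma gives $\sqrt K\,\Fhess\,\etabarkm \xrightarrow{d} \mathcal N(0,\aniac)$, and applying the continuous linear map $\Fhess^{-1}$ (continuous mapping theorem, recalling that $A\,\mathcal N(0,\Sigma) = \mathcal N(0, A\Sigma A^\top)$) yields $\sqrt K\,\etabarkm \xrightarrow{d}\mathcal N(0,\Fhess^{-1}\aniac\Fhess^{-1})$, as claimed.
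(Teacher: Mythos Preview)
Your argument is correct and follows the Polyak--Ruppert averaging mechanism, but the paper proceeds more economically: rather than re-deriving the expansion, it invokes Theorem~1 of \citet{polyak_acceleration_1992} as a black box. Concretely, the paper (i) establishes only that $\eta_K\to 0$ in $L^2$ (no quantitative rate), and (ii) verifies the single remaining hypothesis of that theorem, namely that the \emph{conditional covariance} $\E[\xi_k(\eta_{k-1})^{\otimes 2}\mid\Flast_{k-1}]$ converges in probability to $\aniac$. Step (ii) is done by writing $\xi_k=\xi_k^{\mathrm{add}}+\xi_k^{\mathrm{mult}}$ and using \Cref{asu:main:bound_mult_noise_holder} together with $\eta_{k-1}\to 0$ to kill the multiplicative and cross pieces---the same place where you invoke A2.2, and for the same reason. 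Your route instead redoes the Polyak--Juditsky machinery from scratch: you need the sharper bound $\E\|\eta_k\|^2=O(k^{-\alpha})$ precisely because you carry the Abel-summation remainder explicitly, whereas the paper hides that computation inside the cited theorem. Both are valid; the paper's version is shorter, while yours is self-contained and makes transparent exactly which rate on the last iterate is driving the negligibility of each term.
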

The proof of this result is almost straightforward and is recalled in \Cref{app:subsec:clt}. In the following, we establish non-asymptotic results in \Cref{thm:bm2013_with_linear_operator_compression,thm:bm2013_with_nonlinear_operator_compression}, that highlight the impact of \Cref{asu:main:bound_mult_noise_holder}.

\subsection{Convergence rates for \texorpdfstring{\eqref{eq:LSA}}{LSA}, general case}
\label{subsec:theorems}

In this section, we present non-asymptotic convergence rates for \eqref{eq:LSA} under the  assumptions above. These results build upon the work of \citet{bach_non-strongly-convex_2013}.  Our first result is the main result, under the Hölder assumption on the noise field, it is demonstrated in \Cref{app:sec:nonlinear_bm}.

\begin{theorem}[Non-linear multiplicative noise]
\label{thm:bm2013_with_nonlinear_operator_compression}
Under \Cref{asu:main:second_moment_noise,asu:main:bound_add_noise}, consider a sequence $(w_k)_{k\in\N^*}$ produced in the setting of \Cref{def:class_of_algo} for a constant step-size $\gamma$ such that $ \gamma( \Ftrace^2 + 2 \boundMult) \leq 1/2$. Then for any horizon $K$, we have:
\begin{align*}
    \E[ F(\overline{w}_{K-1}) - F(\ws) ] &\leq \frac{1}{2K} \Bigg(\ffrac{\|\Fhess^{-1/2} \eta_0\|}{\gamma \sqrt{K}} \wedge \frac{\|\eta_0 \|}{\sqrt{\gamma}} + \sqrt{\Tr{\aniac \Fhess^{-1} }}  + \bigpar{10\boundAdd \gamma}^{1/4}\sqrt{\boundMultPrime \mu^{-1}}  \\
    &\qqquad + \bigpar{30\boundAdd \gamma}^{1/2} \sqrt{\boundMult\mu^{-1}} \Bigg)^2 \,.
\end{align*}
\end{theorem}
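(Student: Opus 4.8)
The plan is to reduce the statement to an $L^2$ bound on $\Fhess^{1/2}\overline{\eta}_{K-1}$, since $F(\overline{w}_{K-1}) - F(\ws) = \tfrac12\|\Fhess^{1/2}\overline{\eta}_{K-1}\|^2$, so that the claimed inequality is equivalent to bounding $\sqrt{K\,\E\|\Fhess^{1/2}\overline{\eta}_{K-1}\|^2}$ by the sum of the four displayed terms. First I would rewrite the recursion of \Cref{def:class_of_algo} as $\Fhess\eta_{k-1} = \gamma^{-1}(\eta_{k-1}-\eta_k) + \xi_k(\eta_{k-1})$ and sum over $k=1,\dots,K$ (Abel/telescoping summation, as in \citealp{bach_non-strongly-convex_2013}), obtaining $\Fhess^{1/2}\overline{\eta}_{K-1} = \tfrac{1}{\gamma K}\Fhess^{-1/2}(\eta_0-\eta_K) + \tfrac1K\sum_{k=1}^K \Fhess^{-1/2}\xi_k(\eta_{k-1})$. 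Splitting $\xi_k(\eta_{k-1}) = \xi_k^{\mathrm{add}} + \xi_k^{\mathrm{mult}}(\eta_{k-1})$ (\Cref{def:add_mult_noise}) and applying Minkowski's inequality in $L^2$ would separate the error into an initial/final-condition term, an additive-noise term, and a multiplicative-noise term, to be controlled independently.

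Second, the additive term is the easy and dominant one: the variables $(\Fhess^{-1/2}\xi_k^{\mathrm{add}})_k$ are i.i.d.\ and centered, so cross terms vanish and $\E\|\tfrac1K\sum_k \Fhess^{-1/2}\xi_k^{\mathrm{add}}\|^2 = \tfrac1K \E\|\Fhess^{-1/2}\xi_1^{\mathrm{add}}\|^2 = \tfrac1K\Tr{\aniac\Fhess^{-1}}$; after multiplication by $\sqrt K$ this yields exactly $\sqrt{\Tr{\aniac\Fhess^{-1}}}$. For the multiplicative term I would use that $(\xi_k^{\mathrm{mult}}(\eta_{k-1}))_k$ is a martingale-increment sequence (centered given $\Flast_{k-1}$), so again cross terms vanish and $\E\|\tfrac1K\sum_k\Fhess^{-1/2}\xi_k^{\mathrm{mult}}(\eta_{k-1})\|^2 = \tfrac1{K^2}\sum_k \E\|\Fhess^{-1/2}\xi_k^{\mathrm{mult}}(\eta_{k-1})\|^2 \le \tfrac{1}{\mu K^2}\sum_k \E\|\xi_k^{\mathrm{mult}}(\eta_{k-1})\|^2$. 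Here, crucially, I would invoke the Hölder bound \Cref{asu:main:bound_mult_noise_holder} conditionally on $\Flast_{k-1}$ rather than the quadratic one, bounding this by the averages $\tfrac1K\sum_k\E\|\Fhess^{1/2}\eta_{k-1}\|$ and $\tfrac1K\sum_k\E\|\Fhess^{1/2}\eta_{k-1}\|^2$.

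Third, I would establish the needed stability estimate by a Lyapunov argument on $\E\|\eta_k\|^2$: the cross term cancels by centering, $\|(\Id-\gamma\Fhess)\eta_{k-1}\|^2 \le \|\eta_{k-1}\|^2 - (2\gamma-\gamma^2\Ftrace^2)\|\Fhess^{1/2}\eta_{k-1}\|^2$, and \Cref{asu:main:bound_mult_noise} with \Cref{asu:main:bound_add_noise} gives $\E[\|\xi_k(\eta_{k-1})\|^2\mid\Flast_{k-1}] \le 10\boundAdd + 4\boundMult\|\Fhess^{1/2}\eta_{k-1}\|^2$. The condition $\gamma(\Ftrace^2 + 2\boundMult)\le1/2$ forces $\gamma(\Ftrace^2+4\boundMult)\le1$, making the $\|\Fhess^{1/2}\eta_{k-1}\|^2$ coefficient at most $-\gamma$, so summing and telescoping yields $\tfrac1K\sum_k\E\|\Fhess^{1/2}\eta_{k-1}\|^2 \le \tfrac{\|\eta_0\|^2}{\gamma K} + 10\gamma\boundAdd$, and via $\Fhess\succeq\mu\Id$ a uniform bound $\sup_k\E\|\eta_k\|^2 \le \|\eta_0\|^2 + 10\gamma\boundAdd/\mu$ that controls the $\eta_K$ contribution. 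A Jensen/Cauchy--Schwarz step bounds $\tfrac1K\sum_k\E\|\Fhess^{1/2}\eta_{k-1}\|$ by the square root of the same quantity. Plugging these into the multiplicative term and using $\sqrt{a+b}\le\sqrt a+\sqrt b$ produces, in the $\gamma$-dominant regime, precisely $(10\boundAdd\gamma)^{1/4}\sqrt{\boundMultPrime\mu^{-1}}$ from the first-power (Hölder) part and $(30\boundAdd\gamma)^{1/2}\sqrt{\boundMult\mu^{-1}}$ from the quadratic part (with $30=3\cdot10$). For the initial-condition term I would bound the bias contribution two ways — via the operator-norm estimate $\|\Id-(\Id-\gamma\Fhess)^K\|\le1$, giving $\tfrac{\|\Fhess^{-1/2}\eta_0\|}{\gamma\sqrt K}$, and via the eigenvalue-wise estimate $1-(1-\gamma\lambda)^K\le\sqrt{\gamma\lambda K}$, giving $\tfrac{\|\eta_0\|}{\sqrt\gamma}$ — whence the minimum in the statement.

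Finally, the main obstacle is the use of the Hölder regularity \Cref{asu:main:bound_mult_noise_holder} in place of the usual Lipschitz-in-squared-expectation assumption. Had I used only the quadratic bound \Cref{asu:main:bound_mult_noise}, its additive constant $4\boundAdd$ would contribute a term of order $\sqrt{\boundAdd\mu^{-1}}$ that does \emph{not} vanish as $\gamma\to0$ and would swamp the variance term; it is exactly the absence of a constant in the Hölder bound that makes the multiplicative correction scale with a positive power of $\gamma$ and thus become asymptotically negligible. The price is the first-power moment $\E\|\Fhess^{1/2}\eta_{k-1}\|$, whose control through the square root of the stability bound is what produces the unusual fourth-root exponent $(10\boundAdd\gamma)^{1/4}$; keeping track of these moments and their combination, rather than a single clean squared recursion, is the delicate part of the argument.
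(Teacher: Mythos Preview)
Your overall plan is sound: the additive/multiplicative split, the martingale orthogonality for the multiplicative term, the Hölder bound \Cref{asu:main:bound_mult_noise_holder} applied conditionally, and the Lyapunov stability estimate are all essentially what the paper does, and your final paragraph correctly identifies why the Hölder regularity (rather than \Cref{asu:main:bound_mult_noise}) is the key new ingredient.

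The gap is in the initial-condition term. Your Abel/telescoping identity produces $\tfrac{1}{\gamma K}\Fhess^{-1/2}(\eta_0-\eta_K)$, but the operator $\Id-(\Id-\gamma\Fhess)^K$ you then invoke does \emph{not} arise from this identity; it comes from unrolling a deterministic (or additive-noise-only) recursion. In your decomposition $\eta_K$ is random and must be handled via your uniform bound $\sup_k\E\|\eta_k\|^2$, which costs an extra $\mu^{-1/2}$ and leaves a residual $\tfrac{\sqrt{\boundAdd}}{\mu\sqrt{\gamma K}}$ absent from the statement. Moreover, from $\tfrac{1}{\gamma K}\Fhess^{-1/2}\eta_0$ alone you only read off the first bias bound; the second bound $\|\eta_0\|/\sqrt\gamma$ and hence the minimum cannot be obtained this way.

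The paper fixes this by introducing an auxiliary sequence $\eta_k^0=(\Id-\gamma\Fhess)\eta_{k-1}^0+\gamma\xi_k^{\mathrm{add}}$ with $\eta_0^0=\eta_0$, and splitting $\overline\eta_{K-1}=\overline\eta_{K-1}^0+\overline\alpha_{K-1}$ where $\alpha_k=\eta_k-\eta_k^0$. The first piece is analyzed exactly as in \citet{bach_non-strongly-convex_2013}: its noise-free part is deterministic and unrolls to $\tfrac{1}{\gamma K}(\Id-(\Id-\gamma\Fhess)^K)\Fhess^{-1}\eta_0$, from which both bias bounds follow cleanly, and its noise part gives $\sqrt{\Tr{\aniac\Fhess^{-1}}}$. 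The second piece has $\alpha_0=0$, so unrolling (which here \emph{is} Abel summation with zero initial condition) yields $\E\|\Fhess^{1/2}\overline\alpha_{K-1}\|^2\le\tfrac{1}{K^2}\sum_k\E\|\Fhess^{-1/2}\xi_k^{\mathrm{mult}}(\eta_{k-1})\|^2$ with no $\eta_K$ leftover, after which your Hölder--Lyapunov argument applies verbatim. The correction is therefore small but structural: introduce the auxiliary sequence first, and apply the telescoping only to the zero-started difference $\alpha_k$.
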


The first two terms of the RHS correspond respectively to the impact of the initial condition~$\eta_0$ and the impact of the additive noise. The dependency on these two terms is similar to the one established in \citet{bach_non-strongly-convex_2013} in the case of LMS.   Note that following~\citet{defossez_averaged_2015}, we improve the dependency on the initial condition to $\frac{\|\eta_0 \|^2}{\gamma K} \wedge \frac{\|\Fhess^{-1/2}\eta_0 \|^2}{\gamma^2 K^2}$. Regarding the noise term, the dependency on $\frac{\Tr{\aniac \Fhess^{-1} }}{2K}$ corresponds to the classical asymptotic noise term in CLT for Stochastic Approximation \citep[e.g., ][]{delyon1996general,duflo_random_1997,gyorfi_averaged_1996}. 
In fact, for a sequence of step sizes  $\gamma_t$ decreasing to zero, we recover the variance from \Cref{prop:asymptotic_normality}. 
Remark that in \citep{bach_non-strongly-convex_2013} and several follow up works, the algorithm under consideration is LMS (\Cref{ex:LMS}, which enables to  ensure that $\aniac\preccurlyeq \sigma^2 \Fhess$: the variance term thus scales as $\sigma^2d/K$. On the contrary,  \Cref{ex:cent_comp_LMS,ex:dist_comp_LMS} do not always satisfy $\aniac\preccurlyeq \sigma^2 \Fhess$: in such case, $\Tr{\aniac \Fhess^{-1} }$ may scale as $1/\mu$.

The third and fourth term, that scale respectively as $\sqrt{\gamma}/K$ and $\gamma/K$, are asymptotically negligible for $\gamma = o(1)$. Those term  are proportional to the Hölder-regularity constants $\boundMultPrime,\boundMult$, and also increase with $\mu^{-1}$. The dominant term is $\frac{\boundMultPrime\sqrt{10\boundAdd \gamma}}{\mu K}$. Interestingly, when $\gamma$ is constant (not decreasing with $K$), then the limit variance of the algorithm is affected. 
Moreover, contrary to \citep{bach_non-strongly-convex_2013}, we do not recover a convergence rate independent of $\mu$.  This dependency is un-avoidable as the multiplicative noise is only controlled around $\ws$: without strong-convexity, the iterates may not converge to $\ws$.
While these additional terms in the variance may be considered as a drawback, it can be mitigated by taking a step-size $\gamma$ proportional to $1/K^\alpha$ with $\alpha> 0$ small ($\gamma$ is horizon dependent, but constant). 

\begin{corollary}
\label[corollary]{cor:bm2013_with_nonlinear_operator_compression}
Under the assumptions of \Cref{thm:bm2013_with_nonlinear_operator_compression}, 
with  $\gamma = 1 / K^\alpha$,  and $\alpha\in ]0,1/2[$,  we have:
\begin{eqnarray*}
 \E[ F(\overline{w}_{K-1}) - F(\ws) ] &   \leq & \frac{60}{K} \Bigg({\Tr{\aniac \Fhess^{-1} }}  +  \ffrac{\|\Fhess^{-1/2} \eta_0\|^2}{K^{(1-2\alpha)}} + \frac{{\boundMultPrime}  \sqrt{\boundAdd }}{\mu K^{\alpha/2}}+\frac{{\boundMult}  \boundAdd}{\mu K^{\alpha}} \Bigg) \,.
\end{eqnarray*}
\end{corollary}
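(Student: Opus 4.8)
The plan is to derive the corollary directly from the bound of \Cref{thm:bm2013_with_nonlinear_operator_compression} by substituting the horizon-dependent step-size $\gamma = K^{-\alpha}$ and then linearizing the square. First I would check admissibility: since $\alpha \in \,]0,1/2[$, we have $\gamma \to 0$ as $K$ grows, so the step-size condition $\gamma(\Ftrace^2 + 2\boundMult) \le 1/2$ holds for every $K$ beyond a fixed threshold (the leading constant $60$ can be enlarged to absorb the finitely many smaller values of $K$, or the statement is read for $K$ large enough). Thus \Cref{thm:bm2013_with_nonlinear_operator_compression} applies with this choice of $\gamma$.

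The heart of the argument is the elementary inequality $(a_1+a_2+a_3+a_4)^2 \le 4(a_1^2 + a_2^2 + a_3^2 + a_4^2)$ (the power-mean / Cauchy--Schwarz bound for four nonnegative terms). Applying it to the four summands inside the square in \Cref{thm:bm2013_with_nonlinear_operator_compression} turns the bound $\frac{1}{2K}(\cdots)^2$ into $\frac{2}{K}\sum_{i} a_i^2$. For the initial-condition term I would retain only the first branch of the minimum, namely $\|\Fhess^{-1/2}\eta_0\|/(\gamma\sqrt K)$, which is legitimate since the minimum is upper bounded by either branch; this branch is the relevant one precisely because $\alpha < 1/2$ makes its square $\|\Fhess^{-1/2}\eta_0\|^2 K^{2\alpha - 1}$ decay with $K$.

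It then remains to substitute $\gamma = K^{-\alpha}$ into each squared summand:
\begin{align*}
\Bigl(\tfrac{\|\Fhess^{-1/2}\eta_0\|}{\gamma\sqrt K}\Bigr)^2 &= \tfrac{\|\Fhess^{-1/2}\eta_0\|^2}{K^{1-2\alpha}}, \quad \bigl(\sqrt{\Tr{\aniac \Fhess^{-1}}}\bigr)^2 = \Tr{\aniac \Fhess^{-1}}, \\
\bigl((10\boundAdd\gamma)^{1/4}\sqrt{\boundMultPrime\mu^{-1}}\bigr)^2 &= \sqrt{10}\,\tfrac{\boundMultPrime\sqrt{\boundAdd}}{\mu K^{\alpha/2}}, \quad \bigl((30\boundAdd\gamma)^{1/2}\sqrt{\boundMult\mu^{-1}}\bigr)^2 = 30\,\tfrac{\boundMult\boundAdd}{\mu K^{\alpha}}.
\end{align*}
Multiplying by the prefactor $2/K$ produces coefficients $2$, $2$, $2\sqrt{10}$ and $60$ respectively; since each term is nonnegative, I would bound every coefficient by the largest one, $60$, to obtain the stated inequality with the common constant $60/K$.

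The computation is essentially bookkeeping, so I do not expect a genuine mathematical obstacle; the only points requiring care are (i) justifying the step-size admissibility for the chosen horizon-dependent $\gamma$ and confirming that the regime $\alpha < 1/2$ is exactly what guarantees both admissibility for large $K$ and the decay of the initial-condition term, and (ii) tracking the numerical constants to confirm that $60$ dominates $2\sqrt{10} \approx 6.33$ and the factor $30$ arising from the last summand.
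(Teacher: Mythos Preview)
Your proposal is correct and follows the natural approach: the paper does not spell out a separate proof for this corollary, as it is a direct consequence of \Cref{thm:bm2013_with_nonlinear_operator_compression} obtained exactly by the substitution $\gamma=K^{-\alpha}$, the linearization $(a_1+\cdots+a_4)^2\le 4\sum a_i^2$, and absorbing the resulting numerical constants into the common factor $60$. Your handling of the step-size admissibility and of the choice of branch in the minimum is also appropriate.
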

The decrease of the second order terms is then optimized for $\alpha=2/5$. To highlight the impact of the non-linearity in compression schemes, we provide for comparison the result for a linear multiplicative noise.

\subsection{Convergence rates for \texorpdfstring{\eqref{eq:LSA}}{LSA}, linear case}
\label{subsec:theorems_lin}

Alternatively, to cover the particular case of a linear multiplicative noise (e.g., to recover LMS or projection-based compressed LMS) we make the following stronger hypothesis:
\begin{assumption}
\label{asu:main:bound_mult_noise_lin}
The multiplicative noise is linear i.e.~there exists a random matrix $\Xi_1$ in $\R^{d\times d}$ s.t.~for any $\eta$ in~$\R^d$, we have a.s.  $\xi_1^\mathrm{mult}(\eta) = \Xi_1 \eta$. Moreover $\E[\|{\xi_1^\mathrm{mult}(\eta)}\|^2] \leq  \boundMult \|\Fhess^{1/2} \eta \|^2$.
\end{assumption}

\begin{remark}
Note that $\Xi_1$ is not necessarily symmetric (in \Cref{ex:dist_comp_LMS,ex:cent_comp_LMS}, this results from the compression). 
\end{remark}

In addition to \Cref{asu:main:bound_mult_noise_lin}, in the case of linear multiplicative noise, we also consider the following assumption.
\begin{assumption}
\label{asu:main:baniac_lin}
The following hold.
\begin{enumerate}[label={\textbf{A\ref*{asu:main:baniac_lin}.\arabic*:}},ref={\theassumption.\arabic*}]
\item \label[assumption]{asu:main:bound_cov_add_noise_lin} There exists a constant\footnote{This letter $\Sha$ is the Russian upper letter ``sha''.} $\ShaA>0 $ s.t. $\aniac \preccurlyeq \ShaA \Fhess$.
\item \label[assumption]{asu:main:bound_cov_mult_noise_lin}  There exists a constant $\ShaM >0 $, such that $\FullExpec{\Xi_1 \Xi_1^\top} \preccurlyeq  \ShaM \Fhess$. 
\end{enumerate}
\end{assumption}

\begin{remark}[Link between \Cref{asu:main:baniac_lin,asu:main:second_moment_noise,asu:main:bound_add_noise}]
 \Cref{asu:main:bound_add_noise} (resp. \Cref{asu:main:second_moment_noise}) corresponds to an assumption on the second order moment of the additive noise (resp. multiplicative), while \Cref{asu:main:bound_cov_add_noise_lin}  (resp. \Cref{asu:main:bound_cov_mult_noise_lin}) is a (stronger) assumption on its covariance.  
\end{remark}

\begin{theorem}[Linear multiplicative noise]
\label{thm:bm2013_with_linear_operator_compression}
Under \Cref{asu:main:bound_add_noise,asu:main:bound_mult_noise_lin,asu:main:baniac_lin}, i.e., with a linear multiplicative noise. Consider a sequence $(w_k)_{k\in\N^*}$ produced in the setting of \Cref{def:class_of_algo}, for a constant step-size $\gamma$ such that $\gamma ( \Ftrace^2 + \boundMult) \leq 1$ and $4 \ShaM \gamma  \leq 1$.  Then for any horizon $K$, we have
\begin{align*}
    \E[ F(\overline{w}_{K-1}) - F(\ws) ]\leq \frac{1}{2K} \bigpar{\frac{\|\eta_0 \|}{\sqrt{\gamma}} + \sqrt{\Tr{\aniac \Fhess^{-1}}} + 2  \sqrt{\gamma d \ShaA \ShaM}}^2 \,.
\end{align*}
\end{theorem}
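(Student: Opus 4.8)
The plan is to reduce the excess risk to a second-moment bound on the averaged iterate and then split it into a bias and a variance part. Since $F$ is quadratic with Hessian $\Fhess$, we have $\E[F(\overline{w}_{K-1})-F(\ws)]=\tfrac12\E\SqrdNrm{\Fhess^{1/2}\overline{\eta}_{K-1}}$ with $\eta_k=w_k-\ws$. Because the multiplicative noise is \emph{linear} (\Cref{asu:main:bound_mult_noise_lin}), the recursion $\eta_k=(M+\gamma\Xi_k)\eta_{k-1}+\gamma\xi_k^{\mathrm{add}}$, with $M:=\Id-\gamma\Fhess$, is affine in $\eta_{k-1}$ with the \emph{same} random operator driving both contributions; hence I can write $\eta_k=\eta_k^{\mathrm b}+\eta_k^{\mathrm v}$ exactly, where $\eta_k^{\mathrm b}$ solves the homogeneous recursion from $\eta_0$ (no additive noise) and $\eta_k^{\mathrm v}$ the inhomogeneous one from $0$. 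Minkowski's inequality gives $\sqrt{\E\SqrdNrm{\Fhess^{1/2}\overline{\eta}_{K-1}}}\le\sqrt{\E\SqrdNrm{\Fhess^{1/2}\overline{\eta}^{\mathrm b}_{K-1}}}+\sqrt{\E\SqrdNrm{\Fhess^{1/2}\overline{\eta}^{\mathrm v}_{K-1}}}$, so it suffices to bound the two factors by $\tfrac1{\sqrt K}\tfrac{\|\eta_0\|}{\sqrt\gamma}$ and $\tfrac1{\sqrt K}\bigpar{\sqrt{\Tr{\aniac\Fhess^{-1}}}+2\sqrt{\gamma d\ShaA\ShaM}}$.

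For the bias I would run an energy argument on $\E\SqrdNrm{\eta_k^{\mathrm b}}$. Expanding the square and using $\E[\Xi_k]=0$ to kill the cross term, then $\SqrdNrm{\Fhess\eta}\le\Ftrace^2\SqrdNrm{\Fhess^{1/2}\eta}$ and $\E\SqrdNrm{\Xi_k\eta}\le\boundMult\SqrdNrm{\Fhess^{1/2}\eta}$, the step-size condition $\gamma(\Ftrace^2+\boundMult)\le1$ gives the one-step inequality $\E\SqrdNrm{\eta_k^{\mathrm b}}\le\E\SqrdNrm{\eta_{k-1}^{\mathrm b}}-\gamma\E\SqrdNrm{\Fhess^{1/2}\eta_{k-1}^{\mathrm b}}$. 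Telescoping yields $\sum_{k=0}^{K-1}\E\SqrdNrm{\Fhess^{1/2}\eta_k^{\mathrm b}}\le\SqrdNrm{\eta_0}/\gamma$, and convexity of $w\mapsto\SqrdNrm{\Fhess^{1/2}w}$ turns this into $\E\SqrdNrm{\Fhess^{1/2}\overline{\eta}^{\mathrm b}_{K-1}}\le\SqrdNrm{\eta_0}/(\gamma K)$, i.e.\ the $\tfrac{\|\eta_0\|}{\sqrt\gamma}$ term.

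The variance part is the crux. Set $C_k:=\E[\eta_k^{\mathrm v}\otimes\eta_k^{\mathrm v}]$; since $\E[\eta_k^{\mathrm v}]=0$, all mixed additive/multiplicative cross terms vanish in expectation and $C_k=MC_{k-1}M^\top+\gamma^2\E[\Xi_1 C_{k-1}\Xi_1^\top]+\gamma^2\aniac$ with $C_0=0$. The main obstacle is controlling the multiplicative map $C\mapsto\E[\Xi_1 C\Xi_1^\top]$, which classical linear-field analyses never face. I would first establish, by induction on $k$, the uniform \emph{operator} bound $C_k\preccurlyeq c\,\Id$ with $c=\tfrac43\gamma\ShaA$: assuming $C_{k-1}\preccurlyeq c\,\Id$, the three terms are controlled via $\aniac\preccurlyeq\ShaA\Fhess$, $\E[\Xi_1\Xi_1^\top]\preccurlyeq\ShaM\Fhess$ and $\Fhess\preccurlyeq\Ftrace^2\Id$, and the residual reduces to the scalar requirement $\gamma\ShaA\le c\,(2-\gamma\Ftrace^2-\gamma\ShaM)$, which holds because the two step-size conditions $\gamma\Ftrace^2\le1$ and $4\ShaM\gamma\le1$ leave slack $2-\gamma\Ftrace^2-\gamma\ShaM\ge\tfrac34$. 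Keeping the bound proportional to $\Id$ (rather than to $\Fhess$, which \emph{fails} here precisely because compression breaks the structure $\aniac\propto\Fhess$) is exactly what avoids any $\mu^{-1}$ factor.

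Finally I would assemble the averaged covariance. Using $\E[\Xi_j]=0$, the lagged covariances are $\E[\eta_{l}^{\mathrm v}\otimes\eta_k^{\mathrm v}]=M^{\,l-k}C_k$ for $l\ge k$, so $K^2\,\E\SqrdNrm{\Fhess^{1/2}\overline{\eta}^{\mathrm v}_{K-1}}=\sum_k\Tr{\Fhess C_k}+2\sum_k\Tr{\Fhess\,(\sum_{j\ge1}M^j)\,C_k}$. Since $M\succeq0$ (again by $\gamma\Ftrace^2\le1$), the truncated geometric series satisfies $\sum_{j=1}^{m}M^j\preccurlyeq\tfrac1\gamma\Fhess^{-1}-\Id$, the omitted tail being PSD and thus droppable, which gives
\[
\E\SqrdNrm{\Fhess^{1/2}\overline{\eta}^{\mathrm v}_{K-1}}\le\frac{2}{\gamma K^2}\sum_k\Tr{C_k}-\frac1{K^2}\sum_k\Tr{\Fhess C_k}.
\]
I would then telescope the scalar recursion for $t_k:=\Tr{\Fhess^{-1}C_k}$; the term it produces cancels the $-\sum_k\Tr{\Fhess C_k}$ above, leaving only $K\Tr{\aniac\Fhess^{-1}}$ (main) and $\sum_k\gamma^2\Tr{\Fhess^{-1}\E[\Xi_1 C_{k-1}\Xi_1^\top]}\le K\gamma^2 c\,\ShaM\Tr{\Fhess^{-1}\Fhess}=\tfrac43\gamma^2 K d\,\ShaA\ShaM$, where the factor $d=\Tr{\Fhess^{-1}\Fhess}$ appears by combining $C_{k-1}\preccurlyeq c\,\Id$ with $\E[\Xi_1\Xi_1^\top]\preccurlyeq\ShaM\Fhess$. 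Hence $\E\SqrdNrm{\Fhess^{1/2}\overline{\eta}^{\mathrm v}_{K-1}}\le\tfrac1K\bigpar{\Tr{\aniac\Fhess^{-1}}+\tfrac43\gamma d\,\ShaA\ShaM}$, and $\sqrt{a+b}\le\sqrt a+\sqrt b$ together with $\sqrt{4/3}\le2$ gives the claimed variance factor. Squaring the combined Minkowski bound and multiplying by $\tfrac12$ recovers the theorem. The two delicate points are the uniform operator bound of the third paragraph and the exact cancellation of the $\Tr{\Fhess C_k}$ terms, which together keep the second-order correction at the clean scale $\gamma d\,\ShaA\ShaM/K$.
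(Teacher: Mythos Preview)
Your argument is correct, and it takes a genuinely different route from the paper for the variance part. The paper follows the Bach--Moulines hierarchy: it introduces the nested sequences $\eta_k^0=(\Id-\gamma\Fhess)\eta_{k-1}^0+\gamma\xi_k^{\mathrm{add}}$ and $\eta_k^r=(\Id-\gamma\Fhess)\eta_{k-1}^r+\gamma\Xi_k\eta_{k-1}^{r-1}$, proves by induction the operator bound $\E[\eta_k^r\otimes\eta_k^r]\preccurlyeq\gamma^{r+1}\ShaA\ShaM^r\,\Id$, applies Lemma~2 of \citet{bach_non-strongly-convex_2013} to each level, sums the resulting geometric series in $(\gamma\ShaM)^{r/2}$ via Minkowski, and finally lets $r\to\infty$ to obtain the term $\sqrt{\gamma d\ShaA\ShaM}/(1-\sqrt{\gamma\ShaM})$. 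Your approach bypasses this expansion entirely: you work directly with the covariance recursion $C_k=MC_{k-1}M+\gamma^2\E[\Xi_1 C_{k-1}\Xi_1^\top]+\gamma^2\aniac$, establish the uniform bound $C_k\preccurlyeq\tfrac{4}{3}\gamma\ShaA\,\Id$ in one shot, compute the lagged covariances $M^{l-k}C_k$, and close with the exact telescoping of $\Tr{\Fhess^{-1}C_k}$. The two methods meet at the same operator bound (summing the paper's levels gives $\gamma\ShaA/(1-\gamma\ShaM)\le\tfrac{4}{3}\gamma\ShaA$ under $4\gamma\ShaM\le1$), but your path is more self-contained: it avoids both the infinite hierarchy and the appeal to an external lemma, at the price of the slightly more delicate telescoping identity where the $\sum_k\Tr{\Fhess C_k}$ terms cancel. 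The paper's route, in exchange, is more modular and reuses the existing Bach--Moulines machinery verbatim. The bias part (your second paragraph) is identical to the paper's treatment.
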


\Cref{thm:bm2013_with_linear_operator_compression} generalizes Theorem 1 from \citet{bach_non-strongly-convex_2013}. It also highlights the impact of additive noise's covariance, and the comparison between \Cref{thm:bm2013_with_nonlinear_operator_compression} and \Cref{thm:bm2013_with_linear_operator_compression} shows the advantage of linear compression schemes. Indeed the variance scales as  $K^{-1}(\Tr{\aniac \Fhess^{-1}} + 4   \gamma d \ShaA \ShaM) $. As before, the first term $\Tr{\aniac \Fhess^{-1}}$ corresponds to the asymptotic variance given in~\Cref{prop:asymptotic_normality}, and the second term is negligible: (i) for all $4 \ShaM \gamma  \leq 1$ it can be upper bounded by $d \ShaA$, and for LMS \citep[see][]{bach_non-strongly-convex_2013}, the variance term is $\Tr{\aniac \Fhess^{-1}}=d \sigma^2 $, which is thus at least as large, (ii) it scales with $\gamma$ thus is asymptotically negligible as $\gamma$ tends to $0$.  
Overall,  depending on  $\aniac$, the algorithm may or may not suffer from the lack of  strong-convexity ($\mu$ tending to $ 0$).
More precisely, in the case of linear multiplicative noise, we can obtain a $O(K^{-1})$ rate independent of $\mu$  if and only if $\aniac\preccurlyeq a \Fhess$, with $a$ in $\R$. The proof of \Cref{thm:bm2013_with_linear_operator_compression} is given in \Cref{app:sec:linear_bm}, and follows the line of proof of \cite{bach_non-strongly-convex_2013}.

\textbf{Conclusion:} we established rates for \eqref{eq:LSA} for both the Hölder-noise case and the linear noise case. In the former, convergence requires strong convexity while in the latter, we can achieve $O(K^{-1})$ for  $\aniac\preccurlyeq a \Fhess$. In both cases, the dominant term for an optimal choice of $\gamma$ scales as $\frac{\Tr{\aniac \Fhess^{-1} }}{K}$. 

In the following section, we turn to the analysis of \Cref{ex:cent_comp_LMS}: we show how the choice of the compression impacts both the linearity of the  noise and the structure of $\aniac$. 

\section{Application to Algorithm~\ref{ex:cent_comp_LMS}:  compressed LSR on a single worker}
\label{sec:application_compressed_LSR}

In this section, we analyze  \Cref{ex:cent_comp_LMS}, i.e.~compressed LSR.
In \Cref{subsec:compressors}, we introduce the  compression operators of interest and verify in \Cref{subsec:applicability_lsa} that \Cref{thm:bm2013_with_linear_operator_compression,thm:bm2013_with_nonlinear_operator_compression} can be applied.
Then, in \Cref{subsec:impact_cov_on_additive_noise}, we provide {explicit formulas of $\Tr{\aniac H^{-1}}$} for various compression schemes.
Finally, in \Cref{subsec:numerical_experiments}, we validate our findings with numerical experiments.

\subsection{Compression operators}
\label{subsec:compressors}
Our analysis applies to most unbiased compression operators.

\begin{definition}[Compression operators]
\label[definition]{def:operators_compression}
Let $z\in\R^d$.
\begin{enumerate}[itemsep=1pt,topsep=0pt]
    \item 
\textbf{$1$-quantization} is defined as $\Cquant (z) := \| z \| \sign(z) \odot \chi  \text{~with~} \chi \sim \otimes_{i=1}^d (\mathrm{Bern}(|z_i| / \|z\|_2)).$

\item \textbf{Stabilized $1$-quantization} is defined as $\Cstabquant (z) := U^\top \C_q(Uz)$, with $U \in \mathrm{Unif}(\mathcal{O}_d)$.

\item \textbf{Rand-$h$} is defined as $\Crandh(z) := \frac{d}{h} B(S) \odot z$ with $S\sim \mathrm{Unif}(\mathcal{P}_h([d]))$ and $B(S)_i = \mathbb{1}_{i \in S}$.

\item \textbf{Sparsification} is defined as $\Cspars (z) := \frac{1}{p} B \odot z \in \R^d \text{~with~}B \sim \otimes_{i=1}^d\bigpar{\mathrm{Bern}(p)} \,.$

\item \textbf{Partial participation} is defined $\Cpp (z) := \frac{b_0 }{p} z$ with $b_0 \sim \mathrm{Bern}(p)$.

\item \textbf{Random Projection}, also referred to as \emph{sketching},  is defined as $\Csketch (z) := \frac{1}{p} \Phi^\dagger \Phi z$, where $\subdim \ll \bigdim\in\N$, $p= \subdim / \bigdim$ and $\Phi \in \R^{\subdim \times \bigdim}$ is a random projection matrix onto a lower-dimension space \citep{vempala_random_2005,li_very_2006}. 
In the following, we consider Gaussian projection, where each element $i, j\in\llbracket 1, \subdim \rrbracket \times \llbracket 1, \bigdim \rrbracket$ follows an independent zero-centered normal distribution.  

\end{enumerate}
\end{definition}

We refer to the introduction for related work on compression. Operators $\Cquant,\Cstabquant$ are quantization-based schemes while $\Crand,\Cspars, \Cpp, \Csketch$ are projection-based. Indeed  sparsification can be seen as a random projection (for $\subdim \ll \bigdim$, $p= \subdim / \bigdim$   and $\subdim$ randomly sampled  coordinates $\I$ from $\llbracket 1, d \rrbracket$ such that for any $i\in\I$, the $i^{th}$ lines of $\Phi$ are equal to $e_i\in\R^\bigdim$, and equal to zero otherwise).
For $\Cpp$, the motivation is distributed settings, in which the intermittent availability of clients prevents them from systematically participating in the training. This can be modeled through \textit{partial participation}: clients only participate in a fraction $p$ of the training steps. In theoretical analyses, this can be handled as a compression scheme $\Cpp$, in which the compression of a vector $z$ is either $z/p$ or 0. Observe that in the centralized case, this is slightly artificial as it actually means that no update is performed at most steps and that the step-size is scaled at the other steps. 
Finally, we denote $\C_{\Id_d} : z \in \R^d \mapsto z$ the operator that does not carry out any compression.

\begin{remark}
The analysis of random projection is related to Random features \citep{rahimi_random_2008}, usually used for Kernel learning in infinite dimensions. Nyström method \citep[introduced by][]{kumar_ensemble_2009} is another similar technique of compression often used in this setting, it consists of removing a subset $\mathcal S \subset \{1, \cdots, d\}$ of lines and columns in the kernel matrix $K$. 
Both techniques have been extensively studied in the context of linear and non-linear kernel learning \citep{rudi_less_2015,rudi_falkon_2017,rudi_generalization_2017,lin_optimal_2017}.
Recently, the combination of SGD and random features has been analyzed by \citet{carratino_learning_2018}. However, their results cannot be directly applied to our setting for two reasons. Firstly, their analysis is for infinite dimensions, where they obtain a $O(1/\sqrt{K})$ rate of convergence. Secondly, the compressions used in their approach are not independent at each iteration.
\end{remark}

\begin{remark}
   Diffusion LMS (i.e. distributed learning without a central server) has also been studied from the perspective of low-cost training by \citet{arablouei_analysis_2015,harrane_reducing_2018}, but using only  clients' partial participation or sparsification. Contrary to our work they use biased compression and an adaptive correction step to compensate for the induced error. They provide results guarantying asymptotic convergence \citep[][see Equations (28)-(37)]{harrane_reducing_2018}.
\end{remark}

\subsection{Applicability of the results on~\texorpdfstring{\eqref{eq:LSA}}{LSA}~from~Section~\ref{sec:theoretical_analysis}}
\label{subsec:applicability_lsa}

We first show that our results from \Cref{sec:theoretical_analysis} can be applied for \Cref{ex:cent_comp_LMS} with a random compression operator $\C$, in the case of \Cref{model:centralized}. 

\begin{lemma}
\label[lemma]{lem:compressor}
For any compressor $\C \in \{ \Cquant, \Cstabquant, \Crandh, \Cspars, \Csketch, \Cpp \}$, there exists constants $\omgC, \omgCOne \in \R^*_+$, such that the random operator $\C$ satisfies the following properties for all $z, z'\in\R^d$.
\begin{enumerate}[label={\textbf{L.\arabic*:}},ref={L.\arabic*},noitemsep]
\item \label{item:urvb} $\E [\C(z)] = z$ and $\E [ \| \C(z) - z\|^2] \leq \omgC \|z\|^2$ (unbiasedness and variance relatively bounded),
\item \label{item:holder_compressor} $\E [ \| \C(z) - \C(z')\|^2] \leq  \omgCOne  \min(\|z\|, \|z'\|) \|z - z'\|+ 3 \omgCTwo \|z- z' \|^2 \text{(Hölder-type bound),}$
\end{enumerate}
with $\omgC = \sqrt{d}$ and $\omgCOne
 = 12 \sqrt{d}$ (resp.  $\omgC = (1-p)/p$ and $\omgCOne
 = 0$) for $\Cquant$ and $\Cstabquant$ (resp. $\Crandh, \Cspars$, $\Csketch, \Cpp$). 
\end{lemma}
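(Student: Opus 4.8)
The plan is to split the six operators into two groups according to the structure of their action. The operators $\Crandh,\Cspars,\Cpp,\Csketch$ are \emph{linear}, i.e.\ each can be written as $\C(z)=Mz$ for a random matrix $M$ with $\E[M]=\Id$, whereas $\Cquant,\Cstabquant$ are non-linear. For the linear group, property~\ref{item:holder_compressor} is a free consequence of~\ref{item:urvb}: by linearity $\C(z)-\C(z')=\C(z-z')$, and the bias–variance identity $\E[\|\C(w)\|^2]=\E[\|\C(w)-w\|^2]+\|w\|^2$ combined with~\ref{item:urvb} gives $\E[\|\C(z)-\C(z')\|^2]=\E[\|\C(z-z')\|^2]\le(\omgC+1)\|z-z'\|^2\le 3\omgCTwo\|z-z'\|^2$, i.e.\ the Hölder bound with $\omgCOne=0$. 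Hence for these four operators it suffices to check~\ref{item:urvb} by a direct second-moment computation: for $\Cspars$ and $\Cpp$ the diagonal entries of $M$ are $\Bern(p)/p$, so $\E[(M_{ii}-1)^2]=(1-p)/p$; for $\Crandh$ one uses $\mathbb{P}(i\in S)=h/d$ and $\mathrm{Var}\big(\tfrac{d}{h}\mathbb{1}_{i\in S}\big)=(1-p)/p$ with $p=h/d$ (the coordinates are correlated, but $\|\C(z)-z\|^2$ is a plain sum of squares, so only diagonal variances enter); for $\Csketch$ one writes $\C(z)=\tfrac1p Pz$ with $P$ the orthogonal projector onto a uniformly random $h$-dimensional subspace, and uses $\E[P]=p\,\Id$ together with $\langle Pz,z\rangle=\|Pz\|^2$ to obtain $\E[\|\tfrac1p Pz-z\|^2]=(\tfrac1p-1)\|z\|^2$. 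In every case $\omgC=(1-p)/p$.

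For the quantizers I would first dispatch~\ref{item:urvb} for $\Cquant$: unbiasedness follows from $\E[\chi_i]=|z_i|/\|z\|$, and since $\chi_i$ is Bernoulli ($\chi_i^2=\chi_i$) one gets the exact identity $\E[\|\Cquant(z)\|^2]=\|z\|\,\|z\|_1$, whence $\E[\|\Cquant(z)-z\|^2]=\|z\|\,(\|z\|_1-\|z\|)\le(\sqrt d-1)\|z\|^2\le\sqrt d\,\|z\|^2$ by Cauchy–Schwarz, giving $\omgC=\sqrt d$.

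The heart of the proof, and the step I expect to be the main obstacle, is property~\ref{item:holder_compressor} for $\Cquant$. First, the bound genuinely fails under \emph{independent} randomness: taking $z=z'$ the right-hand side vanishes while the left-hand side equals $2\|z\|(\|z\|_1-\|z\|)>0$. The statement is only correct when $\Cquant(z)$ and $\Cquant(z')$ share their seed, which is precisely the regime arising in the noise field of \Cref{ex:cent_comp_LMS}, where the \emph{same} operator $\C_k$ is evaluated at $\g_k(\ws+\eta)$ and $\g_k(\ws)$. I would therefore fix the monotone coupling $\chi_i(z)=\mathbb{1}_{u_i<|z_i|/\|z\|}$ with $u_i\sim\mathrm{Unif}[0,1]$ i.i.d., which yields $\E[\chi_i(z)\chi_i(z')]=\min(|z_i|/\|z\|,|z'_i|/\|z'\|)$. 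Expanding $\E[\|\Cquant(z)-\Cquant(z')\|^2]=\E[\|\Cquant(z)\|^2]+\E[\|\Cquant(z')\|^2]-2\E[\langle\Cquant(z),\Cquant(z')\rangle]$ reduces the whole quantity to a sum of per-coordinate terms $T_i=\|z\||z_i|+\|z'\||z'_i|-2\,\sign(z_iz'_i)\min(|z_i|\|z'\|,|z'_i|\|z\|)$, all diagonal since inner products and squared norms only pair matching coordinates.

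The remaining work is to bound $\sum_i T_i$ by casework on the signs of $z_i,z'_i$. Assuming without loss of generality $\|z'\|\le\|z\|$: for same-sign coordinates the identity $\min(x,y)=\tfrac12(x+y-|x-y|)$ gives $T_i=(\|z\|-\|z'\|)(|z_i|-|z'_i|)+\big|\,|z_i|\|z'\|-|z'_i|\|z\|\,\big|$, which I bound by $T_i\le\|z\|\,\big||z_i|-|z'_i|\big|+|z'_i|\,(\|z\|-\|z'\|)$; for opposite-sign coordinates (and those with one of $z_i,z'_i$ zero) the relation $|z_i|+|z'_i|=|z_i-z'_i|$ gives $T_i\le(\|z\|+3\|z'\|)\,|z_i-z'_i|$. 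Summing with Cauchy–Schwarz $\|\cdot\|_1\le\sqrt d\,\|\cdot\|$, the reverse triangle inequality $\|z\|-\|z'\|\le\|z-z'\|$, and the key conversion $\max(\|z\|,\|z'\|)\le\min(\|z\|,\|z'\|)+\|z-z'\|$ turns every $\max$-type prefactor into $\min(\|z\|,\|z'\|)$ at the cost of an extra $\|z-z'\|^2$ term; careful bookkeeping of the constants then yields $\E[\|\Cquant(z)-\Cquant(z')\|^2]\le 12\sqrt d\,\min(\|z\|,\|z'\|)\|z-z'\|+3(\sqrt d+1)\|z-z'\|^2$, i.e.\ $\omgCOne=12\sqrt d$ and $\omgCTwo=\sqrt d+1$. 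Finally, since $\Cstabquant=U^\top\Cquant(U\cdot)$ with $U\in\mathcal{O}_d$ orthogonal, we have $\|\Cstabquant(z)-\Cstabquant(z')\|=\|\Cquant(Uz)-\Cquant(Uz')\|$ and $\|Uz\|=\|z\|$, so conditioning on $U$, applying the two $\Cquant$ bounds to $Uz,Uz'$, and taking $\E_U$ transfers both~\ref{item:urvb} and~\ref{item:holder_compressor} to $\Cstabquant$ with identical constants.
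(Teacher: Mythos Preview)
Your treatment of the linear compressors and of~\ref{item:urvb} for $\Cquant$ coincides with the paper's: the paper simply declares the linear cases ``straightforward'' and defers the quantization variance bound to \citet{alistarh_qsgd_2017}. Your observation that~\ref{item:holder_compressor} requires a shared seed (i.e.\ a coupling) is also made explicit in the paper, and your reduction from $\Cstabquant$ to $\Cquant$ via orthogonal invariance is correct.

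For the Hölder bound on $\Cquant$ you take a genuinely different route. The paper writes $\Cquant(x)-\Cquant(y)=A+B+C$ as a three-step telescope through intermediate expressions: $A$ changes the Bernoulli threshold denominator from $\|x\|$ to $\|y\|$, $B$ changes the sign vector and the numerator in the threshold, and $C$ changes the overall scaling prefactor $\|x\|\to\|y\|$. Each piece is bounded separately (with sign casework appearing only inside $B$), and then the crude inequality $\|A+B+C\|^2\le 3(\|A\|^2+\|B\|^2+\|C\|^2)$ produces the constants $12\sqrt d$ and $3\sqrt d$. You instead expand the squared norm directly, evaluate the cross term $\E[\langle\Cquant(z),\Cquant(z')\rangle]$ exactly under the monotone coupling, and do per-coordinate sign casework on the resulting $T_i$. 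Your argument is more elementary and avoids the factor-$3$ loss from the triangle inequality --- with tight bookkeeping it actually yields sharper constants than the stated $12\sqrt d$ --- whereas the paper's telescoping is more modular, each of $A,B,C$ isolating a single parameter change with a clean interpretation.
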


We note $\mathbb{C}$ the set of unbiased compressors verifying \Cref{lem:compressor}.  \Cref{item:urvb} is frequently established in the literature and corresponds to the worst-case assumption, see the introduction for references. On the other hand, \Cref{item:holder_compressor} is the Hölder-type bound, which is not used in the literature up to our knowledge. The expected squared distance between the compression of two nearby points scales with the \emph{non-squared} norm of the distance. Moreover, the distance is multiplied by an unavoidable coefficient scaling with $z, z'$. Remark that in \Cref{item:holder_compressor}, we assume the compression randomness to be the same for the compression of $z$ and $z'$: formally, we control $\mathcal W_2(\C(z), \C(z'))^2$, with $\mathcal W_2$ the Wassertein-$2$ distance. This lemma is demonstrated in \Cref{app:subsec:var_cov_compression}.

\begin{remark}
    For a given $\omgC$, note that the \emph{communication cost} $c$ for quantization-based and projection-based compressors is not always equivalent. For $1$-quantization we have $c \approx \frac{3}{2} \sqrt{d} \log_2{d} + 32$ while for projection-based we have $c \approx 32 \sqrt{d}$, for $\sqrt{d}$-quantization we have $c \approx 3d + 32 $ while for projection-based we have $c=16d$.
\end{remark}

\Cref{lem:compressor} enables to show that \Cref{thm:bm2013_with_linear_operator_compression,thm:bm2013_with_nonlinear_operator_compression}, and \Cref{ex:cent_comp_LMS} are valid in the context of \Cref{model:centralized}.

\begin{corollary}
\label[corollary]{cor:value_constants_in_thm}
Consider \Cref{ex:cent_comp_LMS} in the context of \Cref{model:centralized}, with a compressor $\C \in \{ \Cquant, \Cstabquant, \Crandh, \Cspars,$ $\Csketch, \Cpp \}$. With  \Cref{lem:compressor} above, \Cref{asu:main:second_moment_noise,asu:main:bound_add_noise} on the resulting random field~$(\xi_k)_{k \in \N^*}$ are valid, with in particular $\Fhess = \xCov$, $\Ftrace^2 = R^2$, $\boundAdd = (\omgC +1) R^2 \sigma^2$, $\boundMult = (\omgC + 1)R^2$, $\boundMultPrime = \omgCOne R^2 \sigma$. Therefore, it follows that \Cref{thm:bm2013_with_nonlinear_operator_compression} holds.

Moreover for any linear compressor $\C \in \{\Crandh, \Cspars,$ $\Csketch, \Cpp \}$, under \Cref{rem:as_bounded_features}, we also have that \Cref{asu:main:bound_mult_noise_lin,asu:main:baniac_lin} are valid with $\ShaA = \sigma^2 \Sha_\xCov$ and $\ShaM = R^2 \Sha_\xCov$, with $\Sha_\xCov$ given below.
Therefore, it follows that \Cref{thm:bm2013_with_linear_operator_compression} holds. 

\begin{center}
    \begin{tabular}{lllll}
    \toprule
Compressor & $\Crandh$ & $\Cspars$ & $\Cpp$ & $\Csketch$ \\
\midrule
    $\Sha_\xCov$ & $\frac{h-1}{p(d-1)} + (1- \frac{h-1}{d-1}) \frac{\tau}{p}$&  $1 + \frac{(1- p) \tau}{p}$ &$\frac{1}{p}$ & $\frac{\alpha - \beta}{p} + \frac{\beta \tau}{p}$ \\ 
    $\Sha_\xCov$ (if $H$ diagonal) & $\frac{1}{p}$&  $\frac{1}{p}$  &$\frac{1}{p}$& $\frac{\alpha - \beta}{p} + \frac{\beta \tau}{p}$\\
    \bottomrule
\end{tabular}
\end{center}
Where $p = h/d$, $\tau = \Tr{\xCov} / \mu$, and for sketching $\alpha = \frac{\subdim +2}{d+2}$ and $\beta = \frac{d - \subdim}{(d - 1) (d + 2)}$.
\end{corollary}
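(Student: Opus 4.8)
The plan is to instantiate the random field of \Cref{ex:cent_comp_LMS} and then verify each assumption in turn, reading off the constants. Matching the update $w_k = \wkm - \gamma\,\C_k(\g_k(\wkm))$ with \eqref{eq:LSA} gives the field $\xi_k(\eta) = \Fhess\eta - \C_k(\g_k(\ws + \eta))$, where $\nabla F = \Fhess\,\cdot$ with $\Fhess = \E[x\otimes x] = \xCov$ and $\Ftrace^2 = \Tr{\xCov} = R^2$. Since $\g_k(\ws) = -\varepsilon_k x_k$ and $\g_k(\ws+\eta) = \PdtScl{x_k}{\eta}x_k - \varepsilon_k x_k$, the additive and multiplicative parts of \Cref{def:add_mult_noise} become $\xi_k^{\mathrm{add}} = -\C_k(-\varepsilon_k x_k)$ and $\xi_k^{\mathrm{mult}}(\eta) = \Fhess\eta - \bigpar{\C_k(\g_k(\ws+\eta)) - \C_k(\g_k(\ws))}$, the two compressions sharing the same randomness. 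For \Cref{asu:main:bound_add_noise} I expand $\E[\|\C(z)\|^2] = \E[\|\C(z)-z\|^2] + \|z\|^2 \leq \omgCTwo\|z\|^2$ using unbiasedness (\ref{item:urvb}) and apply it at $z = -\varepsilon_k x_k$, with $\varepsilon_k \perp x_k$, to get $\boundAdd = \omgCTwo R^2\sigma^2$.

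For the multiplicative noise I would use $\E[\|\xi_k^{\mathrm{mult}}(\eta)\|^2] = \E[\|\C(z)-\C(z_*)\|^2] - \|\Fhess\eta\|^2 \leq \E[\|\C(z)-\C(z_*)\|^2]$, the equality holding because $\E[\C(z)-\C(z_*)] = \Fhess\eta$, where $z = \g_k(\ws+\eta)$, $z_* = \g_k(\ws)$, so $z - z_* = \PdtScl{x_k}{\eta}x_k$. The key algebraic identity is $\|\Fhess^{1/2}\eta\|^2 = \eta^\top\xCov\eta = \E[\PdtScl{x}{\eta}^2]$, and the model's fourth-moment bound $\E[\|x\|^2 x\otimes x]\preccurlyeq R^2\xCov$ yields $\E[\PdtScl{x}{\eta}^2\|x\|^2]\leq R^2\|\Fhess^{1/2}\eta\|^2$. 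For \Cref{asu:main:bound_mult_noise} I bound $\|\C(z)-\C(z_*)\|^2\leq 2\|\C(z)\|^2 + 2\|\C(z_*)\|^2$ and use $\E[\|\C(z)\|^2]\leq\omgCTwo\|z\|^2$ together with $\E[\varepsilon]=0$ (so the $\PdtScl{x}{\eta}$--$\varepsilon$ cross term disappears), producing the clean $2\boundMult\|\Fhess^{1/2}\eta\|^2 + 4\boundAdd$ with $\boundMult = \omgCTwo R^2$. For \Cref{asu:main:bound_mult_noise_holder} I instead apply the Hölder bound (\ref{item:holder_compressor}) with $\|z-z_*\| = |\PdtScl{x}{\eta}|\,\|x\|$ and $\min(\|z\|,\|z_*\|)\leq|\varepsilon|\,\|x\|$, then Cauchy--Schwarz $\E[|\PdtScl{x}{\eta}|\,\|x\|^2]\leq R^2\|\Fhess^{1/2}\eta\|$ and $\E|\varepsilon|\leq\sigma$, giving $\boundMultPrime = \omgCOne R^2\sigma$. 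This establishes the hypotheses of \Cref{thm:bm2013_with_nonlinear_operator_compression}.

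For the linear operators $\Crandh,\Cspars,\Cpp,\Csketch$ I write $\C(z) = Mz$ for a random matrix $M$ with $\E[M] = \Id$ and $M\perp x$, so $\xi_k^{\mathrm{mult}}(\eta) = \bigpar{\Fhess - M_k x_k x_k^\top}\eta$ is linear in $\eta$, giving $\Xi_1 = \Fhess - Mxx^\top$ and, via $\E_M\|Mx\|^2\leq\omgCTwo\|x\|^2$, the bound of \Cref{asu:main:bound_mult_noise_lin} with $\boundMult = \omgCTwo R^2$. For the covariances, $\xi_k^{\mathrm{add}} = \varepsilon_k M x_k$ gives $\aniac = \sigma^2\,\E[Mxx^\top M^\top]$, and expanding $\E[\Xi_1\Xi_1^\top]$ (the $\Fhess^2$ cross terms cancelling since $\E[M]=\Id$, $M\perp x$) yields $\E[\Xi_1\Xi_1^\top] = \E[\|x\|^2 Mxx^\top M^\top] - \Fhess^2\preccurlyeq\E[\|x\|^2 Mxx^\top M^\top]$. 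Here I invoke \Cref{rem:as_bounded_features}: with $\|x\|^2\leq R^2$ almost surely, $\E[\|x\|^2 Mxx^\top M^\top]\preccurlyeq R^2\,\E[Mxx^\top M^\top] = (R^2/\sigma^2)\aniac$, so it suffices to prove a single inequality $\E[Mxx^\top M^\top]\preccurlyeq\Sha_\xCov\,\xCov$; then $\ShaA = \sigma^2\Sha_\xCov$ and $\ShaM = R^2\Sha_\xCov$ follow simultaneously, verifying \Cref{asu:main:baniac_lin}.

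The remaining and main task is to compute $\E_M[Mxx^\top M^\top]$ for each scheme. For the diagonal operators ($\Cspars$ with $M = p^{-1}\Diag(B)$, $\Crandh$ with $M = (d/h)\Diag(B(S))$) this reduces to the pairwise moments $\E[B_iB_j]$, yielding $\E_M[Mxx^\top M^\top] = c_1\, xx^\top + c_2\,\Diag(xx^\top)$; after taking $\E_x$ and bounding the diagonal part via $\Diag(\xCov)\preccurlyeq\tau\xCov$ (from $\mu\Id\preccurlyeq\xCov$, $\tau = \Tr{\xCov}/\mu$) the tabulated $\Sha_\xCov$ drop out, collapsing to $1/p$ when $\xCov$ is diagonal. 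The hard part will be the sketching operator: since Gaussian $\Phi$ makes $P = \Phi^\dagger\Phi$ an orthogonal projection onto a \emph{uniformly random} $h$-dimensional subspace, rotation invariance forces $\E_P[P B P] = a B + b\,\Tr{B}\,\Id$ for symmetric $B$, and I would pin down $a,b$ from the two scalar constraints $a + bd = h/d$ (trace) and $a + b = \E[(P)_{11}^2] = \tfrac{h(h+2)}{d(d+2)}$, the latter obtained from the $\mathrm{Beta}(h/2,(d-h)/2)$ law of $\|Pe_1\|^2$. Substituting $M = p^{-1}P$ and simplifying with $p = h/d$ should recover exactly $\Sha_\xCov = (\alpha-\beta)/p + \beta\tau/p$ with $\alpha = (h+2)/(d+2)$ and $\beta = (d-h)/((d-1)(d+2))$, which, being rotation-invariant, is insensitive to whether $\xCov$ is diagonal. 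Together these verify \Cref{asu:main:baniac_lin} and hence \Cref{thm:bm2013_with_linear_operator_compression}.
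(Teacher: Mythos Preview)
Your proposal is correct and follows essentially the same architecture as the paper's proof (Properties~\ref{prop:zero_centered_noise}--\ref{prop:baniac_lin} in \Cref{app:subsec:validity_asu_random_fields}): decompose the field into additive and multiplicative parts, verify each assumption via \Cref{lem:compressor}, and finish by bounding the compressor covariances with $\Diag(\xCov)\preccurlyeq\Tr(\xCov)\,\Id_d\preccurlyeq\tau\xCov$. Two minor variations are worth flagging. First, for \Cref{asu:main:bound_mult_noise_holder} the paper keeps $\varepsilon$ inside and applies Cauchy--Schwarz to $\E[\|\gwkstar\|\cdot\|\gwkstar-\g_k(w)\|]$, whereas you factor $\E|\varepsilon|$ out via independence and then Cauchy--Schwarz the remaining $\E[|\langle x,\eta\rangle|\,\|x\|\cdot\|x\|]$; both routes land on $\boundMultPrime=\omgCOne R^2\sigma$.

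The one genuinely different step is the sketching covariance. The paper (\Cref{app:subsec:sketching_cov}) parametrizes $y^\top\C_\Phi(x)$ through a uniform pair $(X,u)$ on the sphere and computes fourth moments of sphere coordinates via Wallis integrals and a Beta marginal, which is correct but lengthy. Your argument---that $P=\Phi^\dagger\Phi$ is a projection onto a Haar-random $h$-plane, so rotational equivariance forces $\E_P[PBP]=aB+b\,\Tr(B)\Id$, with $(a,b)$ fixed by the trace identity $a+bd=h/d$ and by $\E[P_{11}^2]=\E[\|Pe_1\|^4]$ with $\|Pe_1\|^2\sim\mathrm{Beta}(h/2,(d-h)/2)$---is a cleaner derivation of the same constants $\alpha,\beta$. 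It sidesteps the explicit sphere-coordinate calculus at the cost of invoking the Beta law of $\|Pe_1\|^2$, which is a standard fact. Either way you recover exactly $\Sha_\xCov=(\alpha-\beta)/p+\beta\tau/p$, and the rest of the corollary follows.
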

This corollary is proved in \Cref{app:subsec:validity_asu_random_fields}. We observe that a first difference in terms of convergence exists between quantization-based compression and projection-based: for the former, \textit{only} \Cref{thm:bm2013_with_nonlinear_operator_compression} can be applied and the lower-order terms always have a \textit{poorer dependency on $\mu$} while for the latter, \Cref{thm:bm2013_with_linear_operator_compression} is applicable and lower-order terms do not necessarily depends on $\mu$. Indeed, the constants $\Sha_\xCov$ do not depend on $\mu$ for $\Cpp$, and for $\Crand, \Cspars$, \textit{when} the features' covariance $\xCov$ is diagonal. On the contrary, there is always a dependency on $\mu $ for $\Csketch$, and for $\Crand, \Cspars$ when $\xCov$ is not diagonal.
In practice, this means that, among projection-based compressors, regarding lower-order terms, the convergence is expected to be slower for random Gaussian projection.  

We now turn to the analysis of the impact of the choice of the compression on the dominant asymptotic term $\Tr{\Fhess^{-1}\aniac}$.

\subsection{Impact of the compression on the additive noise covariance}
\label{subsec:impact_cov_on_additive_noise}

In this section, we illustrate how distinct compressors lead to different covariances for the additive noise.
This shows how $\Tr{\Fhess^{-1}\aniac}$ is impacted  by the choice of a compressor. 

First recall that for \Cref{ex:cent_comp_LMS} in the context of \Cref{model:centralized}, with any compressor $\C $, the additive noise writes for any $k\in\OneToK$, as: $$\xikstaru \overset{\text{def.}~\ref{def:add_mult_noise}}{=} \xi_k(0)  \overset{\text{algo}~\ref{ex:cent_comp_LMS}}{=} \nabla F (\ws) - \C_k(\g_k(\ws)) \overset{\text{eq.}~\ref{eq:def_oracle}}{=}   - \C_k((\PdtScl{x_k}{\ws} - y_k) x_k  ) \overset{\text{model}~\ref{model:centralized}}{=} \C_k(\varepsilon_k x_k)\,.$$  
Also recall that $\aniac$ is defined as $\aniac := \E[(\xikstaru)^{\otimes 2}]= \E[\C(\varepsilon_k x_k)^{\otimes 2}] $. Moreover, note that $\C(\varepsilon_k x_k) \overset{\text{a.s.}}{=} \varepsilon_k \C(x_k)$ for all operators under consideration (this is immediate for linear operators and results from the scaling for quantization-based ones). Consequently 
\begin{equation}
\label{eq:aniac_comp_cov}
\aniac = \E[\varepsilon_k^2 \C( x_k)^{\otimes 2}]= \sigma^2 \E[ \C( x_k)^{\otimes 2}],
\end{equation}
as $\E[\varepsilon_k^2|x_k]=\sigma^2$. Ultimately, we have to study the covariance of $ \C( x_k)$, for $  x_k$ a random variable with second-moment $\xCov$.

We thus generically study the covariance of $\C(E)$, for~$E$ a random vector with distribution $p_M$ with second moment\footnote{Remark that we do not assume $\E[E]=0$. Indeed, all computations only depend on the\textit{ second-order moment} $M$ of $E$, not on its variance (and the convergence depends of the second\textit{-order moment} $\xCov$ of $x$, not its variance). It is clear,  that $ \E [\C(E)^{\kcarre}] $ does not depend on the fact that $E$  is centered: indeed, for $R$ a Rademacher $1/2$ independent of $E$, we have $\E [\C(E)^{\kcarre}] = \E [R^2] \E[ \C(E)^{\kcarre}] \overset{\perp}{=} \E [(R \C(E))^{\kcarre}] = \E [\C(R E)^{\kcarre}] $ and $R E$ is (1) centered (2) has the same second-moment as $E$. Remark that centering the covariates before learning does impact $\xCov$: indeed $\xCov=\E [(x)^{\kcarre}]= \E [(x-\E[X])^{\kcarre}]  + (\E[X])^{\otimes 2}$). Centering subtracts $(\E[X])^{\otimes 2}$ to the second moment, which is a rank-1 matrix, typically does not affect the smallest eigenvalue, but it can affect the top-eigenvalue.}   $\E[E^{\otimes 2}] = M$.

\begin{definition}[Compressor' covariance on $p_M$]
\label[definition]{def:cov_of_compression}
We define the following operator $\mathfrak{C}$ which returns \emph{the covariance of a random mechanism $\C$} acting on a distribution $p_M \in \mathcal{P}_M$,
\[
\mathfrak{C}:
	\begin{array}{lcl}
		\mathbb{C} \times \mathcal{P}_M & \to & \R^{d\times d} \\
		(\C, \ \ p_M) & \xmapsto & \E [\C(E)^{\kcarre}] \,,
	\end{array}
\]
where $E\sim p_M$ and the expectation is over  the joint randomness of $\C$ and $E$, which are considered independent, that is $\E [\C(E)^{\kcarre}] = \int_{\mathbb R^d} \E [\C(e)^{\kcarre}]  \mathrm{d} p_M(e)$.
\end{definition}

\begin{wrapfigure}[7]{r}{0.38\textwidth}
\vspace{-1.2cm}
  \begin{center}
    \includegraphics[width=1\linewidth]{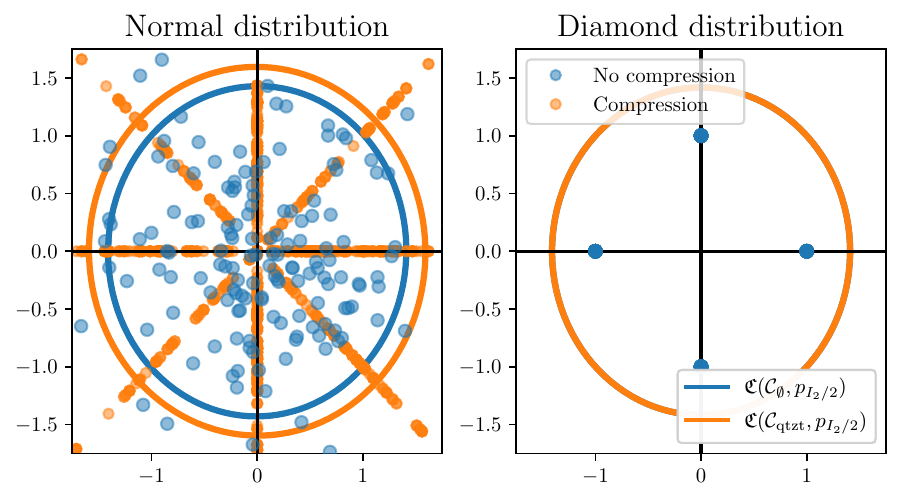}
  \end{center}
  \vspace{-0.8cm}\caption{Illustration of Remark~\ref{rem:covCpm}}
  \label{fig:normal_vs_diamond}
\end{wrapfigure}

Using a compressor $\C\in\mathbb{C}$, we therefore have by \Cref{eq:aniac_comp_cov}:
\begin{equation}
\label{eq:aniac_to_C}
    \aniac = \sigma^2 \mathfrak{C}(\C, p_\epsiXCov),
\end{equation}
where $ p_\epsiXCov$ is the marginal distribution of $x_k$ (for any $k$). 

\begin{remark}[Dependence on $p_M$, not only $M$]
\label[remark]{rem:covCpm}
Note that, for $\C=\Cquant$, there exist two distributions $p_M, p'_M$ with the same covariance $M$, such that $\mathfrak{C}(\C, p_M) \neq \mathfrak{C}(\C, p'_M)$. 
This is why we cannot simply denote $\mathfrak{C}(\C, M) $.
\end{remark}

Indeed, consider $d=2$ and (1) a normal distribution $\textcolor{tabblue}{E_1}\sim\mathcal{N}(0, I_2/2)$, vs (2) a \emph{diamond} distribution $\textcolor{tabblue}{E_2}\sim \mathbb{P}_\diamond$, such that
$\mathbb{P}_\diamond\{(1,0)\} = \mathbb{P}_\diamond\{(-1,0)\} = \mathbb{P}_\diamond\{(0,1)\} = \mathbb{P}_\diamond\{(0,-1)\}$ $ = 1/4 \,,
$
and thus $\Cov{\textcolor{tabblue}{E_1}}=\Cov{\textcolor{tabblue}{E_2}}=  I_2/2$. Then $\Cov{\textcolor{tabblue}{E_1}} \prec \Cov{\textcolor{taborange}{\Cquant(E_1)}} $, but  $\textcolor{taborange}{\Cquant(E_2)}\overset{\mathrm{a.s.}}{= } \textcolor{tabblue}{E_2}$ thus  $\Cov{\textcolor{tabblue}{E_2}} = \Cov{\textcolor{taborange}{\Cquant(E_2)}} $. We illustrate this  on \Cref{fig:normal_vs_diamond}: we represent $\textcolor{tabblue}{E_i}$ in blue and $\textcolor{taborange}{\Cquant(E_i)}$ in orange for $i=1$ (left) and $i=2$ (right). We also represent the covariance matrices by plotting the ellipses $\textcolor{tabblue}{\mathcal{E}_{\Cov{E_i}}}$ and   $\textcolor{taborange}{\mathcal{E}_{\Cov{\Cquant(E_i)}} } $, where $\mathcal{E}_M =\{ x \in \R^d, x^\top M^{-1} x = 4 \}$ (see \Cref{def:covariance_ellipse})\footnote{The constant 4 is chosen so that for Gaussian distributions, the expected fraction of points within the ellipse is $86,4\% \simeq 1-F_{\chi^2(2)}(4)$ }. 

We now compute for the compression operators, the value or an upper bound on  $\compCov[][]$.

\begin{proposition}[Compression and covariance]
\label[proposition]{prop:covariance_formula}
The following formulas hold: 
\begin{equation*}
\renewcommand{\arraystretch}{1.3}
    \begin{array}{ll}
     \compCov[][\Id_d][M] &= M\\
     \compCov[][q][M] &\preccurlyeq \widetilde{\mathfrak{C}}(\Cquant, M): =  M + \sqrt{\Tr{M} } \sqrt{\Diag{M}} - \Diag{M} \\ 
     & \text{(with equality if $\|E\|$ is a.s. constant under $  p_M$)}\\
     \compCov[][s][M] &= M + {(1 - p)}{p^{-1}} \Diag{M}\\
     \compCov[][\Phi][M] &= p^{-1} \bigpar{(\frac{h+1}{d+2} + \delta_{hd}) M + \bigpar{1 - \frac{h-1}{d-1}} \frac{\Tr{M} }{d+2}\Id_d},~\text{with }  \delta_{hd} =\frac{h-1}{(d-1)(d+2)} = O(\frac{1}{d})\\ 
     \compCov[][\mathrm{rd}h][M] &= p^{-1}\bigpar{\frac{h-1}{d-1} M + \bigpar{1 - \frac{h-1}{d-1}}  \Diag{M} }\\
     \compCov[][\mathrm{PP}][M]  &= p^{-1} M\,.
\end{array}
\end{equation*}
\end{proposition}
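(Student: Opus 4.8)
The plan is to treat the six operators one at a time, exploiting in each case the factorization $\mathfrak{C}(\C, p_M) = \E_E\big[\E_\C[\C(E)^{\kcarre}]\big]$ that follows from the independence of $E\sim p_M$ and the internal randomness of $\C$ (\Cref{def:cov_of_compression}). For every operator except $\Cquant$ the compression is \emph{linear in $E$} given its internal randomness, so I would first integrate over $E$ (replacing $\E_E[EE^\top]$ by $M$) and only afterwards average over the compression. For $\Cquant$, which merely rescales correctly, I would instead integrate over the quantization noise first, conditionally on $E=z$.

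The three coordinate-masking operators $\Cpp,\Cspars,\Crandh$ are the most direct. For $\Cpp$ one has $\Cpp(E)^{\kcarre} = p^{-2} b_0^2\, EE^\top$ with $b_0^2=b_0\sim\Bern(p)$, so $\E[b_0^2]=p$ and the result is $p^{-1}M$. For $\Cspars$ and $\Crandh$ I would compute entrywise: writing the masks as $B_i\sim\Bern(p)$ (independent) resp.\ $\mathbb{1}_{i\in S}$, the $(i,j)$ entry of $\E_\C[\C(E)^{\kcarre}]$ is a constant times $E_iE_j$, and this constant differs on the diagonal ($i=j$) and off it ($i\ne j$). The elementary inputs are $\E[B_i^2]=p$, $\E[B_iB_j]=p^2$ for sparsification, and $\Proba{i\in S}=h/d=p$, $\Proba{\{i,j\}\subseteq S}=\tfrac{h(h-1)}{d(d-1)}$ for Rand-$h$; dividing by $p^2$ and integrating over $E$ then assembles the announced combinations of $M$ and $\Diag{M}$.

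For $\Cquant$ I would condition on $E=z$ and use $\Cquant(z)_i=\|z\|\sign(z_i)\chi_i$ with $\chi_i\sim\Bern(|z_i|/\|z\|)$ independent across $i$. Since $\chi_i^2=\chi_i$ and $\sign(z_i)^2=1$, the diagonal of $\E_\chi[\Cquant(z)^{\kcarre}]$ is $\|z\|\,|z_i|$, while independence gives off-diagonal entries $z_iz_j$; hence $\E_\chi[\Cquant(z)^{\kcarre}]=zz^\top-\Diag{zz^\top}+\Diag{(\|z\|\,|z_i|)_i}$. Averaging over $E$ yields $M-\Diag{M}+\Diag{(\E[\|E\|\,|E_i|])_i}$, and a coordinatewise Cauchy--Schwarz bound $\E[\|E\|\,|E_i|]\le\sqrt{\Tr{M}}\sqrt{M_{ii}}$ produces the upper bound $\widetilde{\mathfrak{C}}(\Cquant,M)$, with equality exactly in the boundary case of Cauchy--Schwarz (in particular under the constancy condition stated in the proposition).

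The main obstacle is $\Csketch$. Writing $P:=\Phi^\dagger\Phi=\Phi^\top(\Phi\Phi^\top)^{-1}\Phi$, this is the orthogonal projector onto the row space of $\Phi$, which, because a Gaussian $\Phi$ has rotationally invariant law, is distributed as the projector onto a \emph{uniformly random} $h$-dimensional subspace of $\R^d$. Since $\Csketch(E)^{\kcarre}=p^{-2}PEE^\top P$, integrating over $E$ reduces the task to evaluating $p^{-2}\E_P[PMP]$. I would argue that $M\mapsto\E_P[PMP]$ is an $\mathcal{O}_d(\R)$-equivariant linear map on symmetric matrices (using $U^\top PU\overset{d}{=}P$), hence necessarily of the form $M\mapsto aM+b\,\Tr{M}\,\Id$; the crux is then to pin down the two scalars $a,b$, which amounts to computing the second moments of the entries of a uniform projector, namely $\E[P_{ii}^2]=\tfrac{h(h+2)}{d(d+2)}$ and $\E[P_{ij}^2]=\tfrac{h(d-h)}{d(d-1)(d+2)}$ for $i\ne j$ (consistency checked against $\Tr{P}=\Tr{P^2}=h$ and $\E[P_{ii}]=h/d$). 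Substituting $a=\E[P_{11}^2]-\E[P_{12}^2]$ and $b=\E[P_{12}^2]$, dividing by $p^2=(h/d)^2$, and simplifying the resulting rational fractions recovers the announced coefficients $\tfrac{h+1}{d+2}+\delta_{hd}$ and $(1-\tfrac{h-1}{d-1})\tfrac{1}{d+2}$. This last moment computation, together with the representation-theoretic reduction justifying the two-parameter form, is where essentially all the work lies.
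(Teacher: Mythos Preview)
For the identity, $\Cpp$, $\Cspars$, $\Crandh$, and $\Cquant$, your argument is essentially the paper's: both compute the mask second moments $\E[B_iB_j]$ (via independence for $\Cspars$, via the identity $h^2=\E[(\sum_i B_i)^2]$ for $\Crandh$) and, for quantization, both condition on $E=z$ to obtain diagonal entries $\|z\|\,|z_i|$ and off-diagonal entries $z_iz_j$, then bound $\E[\|E\|\,|E_i|]$ by a coordinatewise Cauchy--Schwarz.

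For $\Csketch$ you take a genuinely different route. The paper fixes a unit vector $y$, writes $p\,\Csketch=P^\top J_h P$ with $P$ Haar-orthogonal, parametrizes $Py=aX+bu$ with $X=Px/\|x\|$ uniform on $S^{d-1}$ and $u\perp X$ uniform on the orthogonal sphere, and expands $\E[(y^\top\Csketch(x))^2]$ into three terms; this requires $\E[X_1^4]$ (obtained via the Beta marginal and Wallis integrals) and $\E[u_1^2\mid X]$ (obtained via an explicit geometric argument with nested hyperplanes and spheres). Your approach---integrate over $E$ first to reduce to $p^{-2}\E_P[PMP]$, then invoke $\mathcal{O}_d(\R)$-equivariance of $M\mapsto\E_P[PMP]$ (Schur on the decomposition of symmetric matrices into multiples of $\Id_d$ and traceless matrices) to force the form $aM+b\,\Tr{M}\,\Id_d$---is cleaner and standard in random-matrix theory. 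Note, however, that your two consistency checks $\Tr{P}=h$ and $\Tr{P^2}=h$ yield only \emph{one} independent linear relation $\E[P_{11}^2]+(d-1)\E[P_{12}^2]=h/d$ (they coincide because $P^2=P$), so you still owe one genuine moment computation---e.g.\ $\E[X_1^4]=3/(d(d+2))$ for $X$ uniform on $S^{d-1}$, from which $\E[P_{11}^2]=\E[(\sum_{k\le h}X_k^2)^2]$ follows---and this is precisely the paper's $\alpha'$. Your framing buys structural clarity and makes the two-parameter form immediate; the paper's hands-on computation is more self-contained but spans several pages.
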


\textbf{Conclusion and interpretation.} Most compression operators induce \textit{both} a \textit{structured} noise \citep{flammarion_from_2015} which covariance scales with $H$ and an \textit{unstructured} noise, which covariance scales with $\Diag{H}$ or $\Id_d$---thus corresponding to an \textit{isotropic} noise.  

From the convergence standpoint, the asymptotic convergence rate scales with the trace $\Tr{\aniac H^{-1}}~=~\sigma^2 \Tr{\compCov H^{-1}}$. Therefore, the un-structured part in the noise is problematic  as $\Tr{ \aniac H^{-1})}$ will strongly depends on the smallest eigenvalue~$\mu$.  This comes from the fact that the compression induces a significant noise in directions in which the Hessian curvature is very limited (thus directions onto which the contraction towards the optimum in the algorithm is weak).

A particular case is when $H$ is diagonal (e.g. the features are \textit{centered} and \textit{independent}), we get the following corollary.

\begin{corollary}[Compression and covariance, diagonal case]
\label[corollary]{prop:covariance_formula_diagonal_case}
If $M$ is diagonal, then \Cref{prop:covariance_formula} is simplified to the following (with the same $\delta_{hd}$): 
\begin{equation*}
\begin{array}{llll}
 \compCov[][\Id_d][M] & = M & \compCov[][\Phi][M] & = p^{-1} \bigpar{(\frac{h+1}{d+2} + \delta_{hd}) M + (1 - \frac{h-1}{d-1}) \frac{\Tr{M} }{d+2}\Id_d} \\
 \compCov[][q][M] & \preccurlyeq \sqrt{\Tr{M} } \sqrt{M}  \hspace{0.6cm}\ &   \compCov[][\mathrm{rd}h][M] & = p^{-1} M\\
 \compCov[][s][M] & = p^{-1} M & \compCov[][\mathrm{PP}][M] &  = p^{-1} M.\\
\end{array}
\end{equation*}
\end{corollary}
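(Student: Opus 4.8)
The plan is to derive this corollary as a direct specialization of \Cref{prop:covariance_formula} to a diagonal second moment $M$. The single observation that drives everything is that when $M$ is diagonal the diagonal-extraction operator acts trivially, i.e.\ $\Diag{M} = M$, and consequently $\sqrt{\Diag{M}} = M^{1/2}$ by uniqueness of the p.s.d.\ square root. First I would substitute this identity into each of the six formulas of \Cref{prop:covariance_formula} and then collapse the resulting scalar coefficients.

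For $\C_{\Id_d}$ and $\Cpp$ there is nothing to do, since the general formulas $M$ and $p^{-1}M$ contain no $\Diag{\cdot}$ term. For $\Csketch$ the general expression involves $M$ and $\Tr{M}\Id_d$ but again no $\Diag{\cdot}$ term, so it is carried over unchanged; this is precisely the content of the conclusion preceding the corollary, namely that the isotropic $\Id_d$ contribution does \emph{not} collapse onto $M$ even when $M$ is diagonal, which is why sketching remains qualitatively different.

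The three lines that genuinely simplify are those carrying a $\Diag{M}$ term. For sparsification, $M + \frac{1-p}{p}\Diag{M} = M + \frac{1-p}{p}M = \frac{1}{p}M$, using $1 + \frac{1-p}{p} = p^{-1}$. For Rand-$h$, the coefficients of $M$ and of $\Diag{M} = M$ are $\frac{h-1}{d-1}$ and $1 - \frac{h-1}{d-1}$, which sum to $1$, so the bracket equals $M$ and the formula becomes $p^{-1}M$. For $1$-quantization, the general upper bound $M + \sqrt{\Tr{M}}\sqrt{\Diag{M}} - \Diag{M}$ becomes $M + \sqrt{\Tr{M}}\,M^{1/2} - M = \sqrt{\Tr{M}}\,M^{1/2}$, which matches the claimed $\sqrt{\Tr{M}}\sqrt{M}$.

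I do not expect any real obstacle here: the corollary is a pure algebraic specialization, and the only things to check are that the manipulations $\Diag{M} = M$ and $\sqrt{\Diag{M}} = M^{1/2}$ are legitimate (immediate for diagonal $M$) and that the $\preccurlyeq$ rather than an equality is retained in the quantization line, as inherited from \Cref{prop:covariance_formula}. The mildest care needed is simply to verify the two coefficient identities $1 + \frac{1-p}{p} = p^{-1}$ and $\frac{h-1}{d-1} + \bigpar{1 - \frac{h-1}{d-1}} = 1$, both of which are elementary.
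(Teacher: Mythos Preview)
Your proposal is correct and is exactly the intended derivation: the paper presents this corollary as an immediate specialization of \Cref{prop:covariance_formula} without a separate proof, and your substitution $\Diag{M}=M$ (hence $\sqrt{\Diag{M}}=M^{1/2}$) followed by the elementary coefficient collapses is precisely what is required.
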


\begin{remark}[Composition of compressors]
\label[remark]{rem:composition_compressors}
    For all compression schemes but $\Cquant$, we observe that $\compCov[][][M]$ is a function of $M$, which complements \Cref{rem:covCpm}. In that particular case, we can then denote $\mathfrak C (C ,M)$. This means that the lemma can be extended to any composition of compression schemes, for example to compute  
    $\mathfrak{C} ({C_1 \circ C_2, M})=\mathfrak {C} ({C_1 ,\mathfrak C ( C_2 ,M)}).$
\end{remark}

From  Proposition \ref{prop:covariance_formula} and Corollary \ref{prop:covariance_formula_diagonal_case} we can deduce certain generic comparisons between the asymptotic convergence rates, depending on the compression operator (for compression operators having the same variance bound). They are proven in \Cref{app:subsec:proof_particular_case}. In the following,  for any $a,b \in \R$, we use the notation $a\lesssim b$, to denote a \textit{systematic inequality} (i.e., $a\le b$) with a negligible difference as $d\to \infty$ (i.e., $a=b+O(1/d)$), and similarly for any two symmetric matrices  $A,B \in \mathcal S_d(\R)$, $A\precsim B$, for $A\preccurlyeq B$ and $A=B+O(1/d)$ as $d\to \infty$.

\begin{proposition}[Comparison between ${\Cpp, \Cspars, \Crandh, \Csketch}$, $\omega=d/h- 1$] 
\label[proposition]{prop:particular_cases} We consider $\C \in \{\Cpp, \Cspars$,  $\Crandh, \Csketch\}$ with $p=h/d$, such that $\C$ always satisfies \Cref{lem:compressor} with $\omgC = d/h- 1$. For any matrix~$M~\in~\R^{d \times d}$:
\begin{enumerate}[topsep=2pt,itemsep=1pt,leftmargin=0.5cm]
    \item \label{item:eq_trace_diag}  If $M$ is diagonal, then:
    \begin{itemize}
        \item $\compCov[][\mathrm{PP}][M]  = \compCov[][\mathrm{s}][M]  = \compCov[][\mathrm{rd}h][M]  = \frac{d}{h} M$, 
        \item $\mathrm{Tr} \big( \compCov[][\mathrm{PP}/\mathrm{s}/\mathrm{rd}h][M] M^{-1} \big) \leq \Tr{\compCov[][\Phi][M] M^{-1}} $.
    \end{itemize}
    This means that the asymptotic convergence rate does not depend on the choice of the  compressor between $\Cpp, \Cspars, \Crand$ in the diagonal case.
    \item \label{item:ineq_trace_diag} Moreover,  for any matrix $M$ with a \emph{constant diagonal} (e.g., we standardize\footnote{That means we center and rescale to get a variance of one for each feature.} the data in the pre-processing step, such that  $\Diag{M}=\Id_{d}$), we have:
    $$\mathrm{Tr}(\compCov[][\mathrm{PP}][M] M^{-1}) \le \mathrm{Tr}(\compCov[][\Phi][M] M^{-1}) \le \mathrm{Tr}(\compCov[][s][M] M^{-1})\le \mathrm{Tr}(\compCov[][\mathrm{rd}h][M] M^{-1}) \,,$$
    with strict inequalities if $M$ is not proportional to $\Id_d$. This means that we expect the asymptotic convergence rate to be faster for PP than Sparsification, Sketching, or Rand-$h$ (illustrated in experiments).
\end{enumerate}
\end{proposition}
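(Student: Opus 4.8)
The plan is to feed the explicit covariance formulas of \Cref{prop:covariance_formula} into the trace $\Tr{\mathfrak{C}(\C,p_M) M^{-1}}$ and reduce every comparison to an elementary scalar inequality in the single quantity $T := \Tr{M^{-1}}$. The one analytic ingredient I will use throughout is the classical bound $\Tr{M}\Tr{M^{-1}} \geq d^2$, valid for any $M \succ 0$ (Cauchy--Schwarz, equivalently AM--HM, applied to the eigenvalues of $M$), with equality if and only if $M$ is proportional to $\Id_d$.

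For the diagonal case (item~\ref{item:eq_trace_diag}), I would first substitute $\Diag{M}=M$ (which holds precisely because $M$ is diagonal) into the formulas for $\Cpp,\Cspars,\Crandh$; after cancelling the $\frac{h-1}{d-1}$ and $\frac{1-p}{p}$ coefficients each collapses to $\frac{d}{h}M$, giving the three announced equalities $\compCov[][\mathrm{PP}][M]=\compCov[][\mathrm{s}][M]=\compCov[][\mathrm{rd}h][M]=\frac{d}{h}M$ and hence $\Tr{\compCov[][\mathrm{PP}/\mathrm{s}/\mathrm{rd}h][M]M^{-1}}=\frac{d^2}{h}$. For the sketching term I would compute $\Tr{\compCov[][\Phi][M]M^{-1}}$ using $\Tr{M M^{-1}}=d$ and $\Tr{\Id_d M^{-1}}=T$; the combination $\frac{h+1}{d+2}+\delta_{hd}$ simplifies to $\frac{d(h+1)-2}{(d-1)(d+2)}$, and a short computation shows that replacing $\Tr{M}\Tr{M^{-1}}$ by its lower bound $d^2$ yields exactly $\frac{d^2}{h}$. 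Thus $\Tr{\compCov[][\Phi][M]M^{-1}}\ge \frac{d^2}{h}$, which is the claimed inequality.

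For the standardized case (item~\ref{item:ineq_trace_diag}) I would use $\Diag{M}=\Id_d$, so $\Tr{M}=d$, and write each of the four traces as an affine function $A_\C + B_\C\, T$ of $T=\Tr{M^{-1}}$ by substituting the formulas of \Cref{prop:covariance_formula}. The key structural observation is that all four affine functions take the \emph{common value} $\frac{d^2}{h}$ at $T=d$: algebraically $A_\C+B_\C\, d=\frac{d^2}{h}$ for each $\C$, which mirrors the fact that at $M=\Id_d$ every compressor with the same $\omega=d/h-1$ reduces to $\frac{d}{h}\Id_d$. Since $T\ge d$ (from $\Tr{M}\Tr{M^{-1}}\ge d^2$ together with $\Tr{M}=d$), with equality iff $M\propto\Id_d$, the ordering of the four traces on $\{T\ge d\}$ is dictated entirely by the ordering of their slopes. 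I would then read off $B_{\mathrm{PP}}=0$, $B_\Phi=\frac{d^2(d-h)}{h(d-1)(d+2)}$, $B_{\mathrm{s}}=\frac{d-h}{h}$, $B_{\mathrm{rd}h}=\frac{d(d-h)}{h(d-1)}$, so that the chain $B_{\mathrm{PP}}\le B_\Phi\le B_{\mathrm{s}}\le B_{\mathrm{rd}h}$ reduces to $d^2\le (d-1)(d+2)$ (i.e.\ $d\ge 2$) and $d-1\le d$, both immediate; this delivers the four-term ordering, with strict inequalities wherever the slopes differ strictly, i.e.\ as soon as $T>d$, that is whenever $M$ is not proportional to $\Id_d$.

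The computations are elementary; the only place requiring care is the sketching term, where the $O(1/d)$ correction $\delta_{hd}$ must be kept exactly so that the common-point identity $A_\Phi+B_\Phi\, d=\frac{d^2}{h}$ holds without error. I expect this bookkeeping---and the recognition that the four affine functions share a common point, so that the whole proposition collapses to a comparison of slopes---to be the only real subtlety; there is no genuine analytic obstacle beyond the standard inequality $\Tr{M}\Tr{M^{-1}}\ge d^2$.
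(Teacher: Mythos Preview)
Your proof is correct and uses the same analytic ingredient as the paper---the Cauchy--Schwarz bound $\Tr{M}\Tr{M^{-1}}\ge d^2$ (equivalently $c\Tr{M^{-1}}\ge d$ when $\Diag{M}=c\Id_d$)---but your organization of item~\ref{item:ineq_trace_diag} is genuinely cleaner. The paper establishes the chain by three separate pairwise computations: it writes out $\Tr{(\mathfrak{C}(\C,p_M)-\mathfrak{C}(\C',p_M))M^{-1}}$ for each adjacent pair and applies the Cauchy--Schwarz lemma ad hoc in each case. Your ``common point plus slope ordering'' observation unifies these: recognizing that all four traces, as affine functions of $T=\Tr{M^{-1}}$, coincide at $T=d$ (the extremal value forced by $M=\Id_d$) reduces the entire chain to the single comparison $0\le \frac{d^2}{(d-1)(d+2)}\le 1\le \frac{d}{d-1}$ of the slopes. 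This explains structurally \emph{why} the ordering holds, whereas the paper's pairwise differences verify it computationally. Two minor remarks: (i) you work only with $\Diag{M}=\Id_d$, whereas the statement allows $\Diag{M}=c\Id_d$; this is harmless since both sides of every trace inequality are invariant under $M\mapsto M/c$; (ii) the strictness claim requires not only $T>d$ but also that the slopes differ strictly, which for $\Phi$ versus $\mathrm{s}$ needs $d>2$---an edge case the paper's statement also overlooks.
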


In the next proposition, we compare compressors $\Cspars, \Cpp$ to $\Cquant$ for equal $\omgC = \sqrt{d}$ (we exclude $\Crandh$ and $\Csketch$ for which  $h$ must be an integer).
  
\begin{proposition}[Comparison between ${\Cpp, \Cquant,  \Cspars}$, $\omega=\sqrt{d}$ ] 
\label[proposition]{prop:particular_cases_quant}
We consider that $\C$ is in $\{\Cpp, \Cquant, \Cspars\}$ with $p=(\sqrt{d} + 1)^{-1}$, such that $\C$ always satisfies \Cref{lem:compressor} with $\omega = \sqrt{d}$. 
\begin{enumerate}[topsep=2pt,itemsep=1pt,leftmargin=0.5cm]
\item \label{item:ineq_trace_diag_quantiz} For any symmetric matrix $M$ diagonal, we have:
$$\Tr{\compCov[][\mathrm{PP}][M] M^{-1}} = \Tr{\compCov[][s][M] M^{-1}} \overset{\text{possib. } \ll }{\leq } \left(1 + \frac{1}{\sqrt{d}}\right)\Tr{ \widetilde{\mathfrak{C}}(\Cquant, M)M^{-1}}\,.$$
\item \label{item:ineq_trace_diag_quantiz_sparsif} If $M$ is not necessarily diagonal  but with a \emph{constant diagonal} (e.g., after standardization), then 
\begin{itemize}
    \item $\widetilde{\mathfrak{C}}(\Cquant, M) \preccurlyeq \compCov[][\mathrm{s}][M]$ 
    \item $\Tr{\compCov[][\mathrm{PP}][M] M^{-1}} \leq  \left(1 + \frac{1}{\sqrt{d}}\right)\Tr{ \widetilde{\mathfrak{C}}(\Cquant, M)M^{-1}} $ 
\end{itemize}
\end{enumerate}
This means that sparsification  is expected to always result in a poorer asymptotic convergence rate than quantization. Moreover, the  \textit{upper bound} on the covariance $\widetilde{\mathfrak{C}}(\Cquant, M)$ for  quantization itself leads to a worst bound than for PP.\footnote{Note that the behavior for quantization, apart from the upper bound $\widetilde{\mathfrak{C}}(\Cquant, M)$ is not quantified, it is thus possible that quantization performs even better than PP.} 
\end{proposition}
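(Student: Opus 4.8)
The plan is to reduce both items to the closed-form expressions for $\compCov[][][M]$ collected in \Cref{prop:covariance_formula,prop:covariance_formula_diagonal_case}, substitute the prescribed value $p = (\sqrt d + 1)^{-1}$ (so that $p^{-1} = \sqrt d + 1$ and $(1-p)/p = \sqrt d$), and then verify elementary scalar and PSD inequalities. Throughout I would write $\lambda_1, \dots, \lambda_d$ for the eigenvalues of $M$, so that $\Tr{M} = \sum_i \lambda_i$ and $\Tr{M^{-1}} = \sum_i \lambda_i^{-1}$, and I would use the invariance of the trace under the basis in which $M$ is diagonalized.

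For item~\ref{item:ineq_trace_diag_quantiz} (diagonal $M$), the equality $\Tr{\compCov[][\mathrm{PP}][M] M^{-1}} = \Tr{\compCov[][s][M] M^{-1}}$ is immediate: by \Cref{prop:covariance_formula_diagonal_case} both $\compCov[][\mathrm{PP}][M]$ and $\compCov[][s][M]$ equal $p^{-1} M$, so each trace is $p^{-1} d = (\sqrt d + 1) d$. On the quantization side \Cref{prop:covariance_formula_diagonal_case} gives $\widetilde{\mathfrak{C}}(\Cquant, M) = \sqrt{\Tr{M}}\sqrt{M}$, whence $\Tr{\widetilde{\mathfrak{C}}(\Cquant, M) M^{-1}} = (\sum_i \lambda_i)^{1/2} (\sum_i \lambda_i^{-1/2})$. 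After dividing both sides by the common factor $\sqrt d + 1$, the claim reduces to $d^{3/2} \le (\sum_i \lambda_i)^{1/2}(\sum_i \lambda_i^{-1/2})$. I would prove this with the substitution $\mu_i := \sqrt{\lambda_i}$ and two standard inequalities: Cauchy--Schwarz gives $(\sum_i \mu_i^2)^{1/2} \ge d^{-1/2} \sum_i \mu_i$, and the AM--HM inequality gives $\sum_i \mu_i^{-1} \ge d^2 / \sum_i \mu_i$; multiplying the two yields exactly $d^{3/2}$. The slack in these inequalities (equality only when all $\lambda_i$ coincide) is precisely what justifies the ``possibly $\ll$'' annotation.

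For item~\ref{item:ineq_trace_diag_quantiz_sparsif} (non-diagonal $M$ with constant diagonal, standardized so $\Diag{M} = \Id$ and hence $\Tr{M} = d$), the PSD comparison is a direct computation from \Cref{prop:covariance_formula}: there $\widetilde{\mathfrak{C}}(\Cquant, M) = M + \sqrt d\,\Id - \Id$ while $\compCov[][s][M] = M + \sqrt d\,\Id$, so their difference is $\Id \succeq 0$, giving $\widetilde{\mathfrak{C}}(\Cquant, M) \preccurlyeq \compCov[][s][M]$. For the trace inequality I would compute $\Tr{\compCov[][\mathrm{PP}][M] M^{-1}} = (\sqrt d + 1) d$ and $\Tr{\widetilde{\mathfrak{C}}(\Cquant, M) M^{-1}} = d + (\sqrt d - 1)\Tr{M^{-1}}$; after clearing the factor $(\sqrt d + 1)/\sqrt d$ (and using $d > 1$ so that $\sqrt d - 1 > 0$), the desired bound collapses to $\Tr{M^{-1}} \ge d$. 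Since $\sum_i \lambda_i = \Tr{M} = d$, this is once more the AM--HM inequality $\sum_i \lambda_i^{-1} \ge d^2 / \sum_i \lambda_i = d$, equivalently the convexity of $x \mapsto 1/x$.

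The only genuinely non-mechanical step is the diagonal quantization bound $d^{3/2} \le (\sum_i \lambda_i)^{1/2}(\sum_i \lambda_i^{-1/2})$: the obstacle is the mismatch between the exponents $\lambda_i^{1}$ and $\lambda_i^{-1/2}$, which do not pair directly under a single Cauchy--Schwarz; the substitution $\mu_i = \sqrt{\lambda_i}$ resolves this by turning the product into a clean combination of Cauchy--Schwarz (on the $\mu_i^2$ factor) and AM--HM (on the $\mu_i^{-1}$ factor). Everything else is substitution of $p = (\sqrt d + 1)^{-1}$ into the formulas of \Cref{prop:covariance_formula,prop:covariance_formula_diagonal_case} together with reading off traces, so I expect the write-up to be short once these two scalar inequalities are in place.
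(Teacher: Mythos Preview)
Your proposal is correct and follows essentially the same route as the paper: substitute the closed forms from \Cref{prop:covariance_formula,prop:covariance_formula_diagonal_case} with $p=(\sqrt d+1)^{-1}$, reduce item~\ref{item:ineq_trace_diag_quantiz} to the scalar bound $d^{3/2}\le(\sum_i\lambda_i)^{1/2}(\sum_i\lambda_i^{-1/2})$, and reduce item~\ref{item:ineq_trace_diag_quantiz_sparsif} to $\Tr{M^{-1}}\ge d$. The paper proves the first inequality by two Cauchy--Schwarz steps (splitting $1=\lambda_i^{1/4}\lambda_i^{-1/4}$, then bounding $\sum\lambda_i^{1/2}$), which is equivalent to your C--S/AM--HM chain after your substitution $\mu_i=\sqrt{\lambda_i}$; the two arguments are interchangeable. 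One small point: in item~\ref{item:ineq_trace_diag_quantiz_sparsif} you assume $\Diag{M}=\Id$, whereas the statement allows any constant diagonal $c\Id$---either note that all the quantities involved are invariant under $M\mapsto M/c$ (so $c=1$ is WLOG), or carry the $c$ through as the paper does, which costs only replacing $\Tr{M^{-1}}\ge d$ by $c\,\Tr{M^{-1}}\ge d$.
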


We now propose a detailed illustration of the results of  \Cref{prop:covariance_formula} and \Cref{prop:covariance_formula_diagonal_case}, first in a low-dimensional setting ($d=2$) and then in higher dimension  on synthetic and real datasets.

\subsubsection{Illustration of Proposition \ref{prop:covariance_formula} and Corollary \ref{prop:covariance_formula_diagonal_case} in dimension 2.}
\label{subsubsec:illustration_dim_2}

In order to  build intuition, we illustrate  \Cref{prop:covariance_formula} and  \Cref{prop:covariance_formula_diagonal_case}  in \Cref{fig:compression_scatter_plot,fig:compression_scatter_plot_diag}, showing how compression affects the additive noise covariance, in a simple 2-dimensional case,  for both a non-diagonal matrix $M$ (\Cref{fig:compression_scatter_plot}) and a diagonal one (\Cref{fig:compression_scatter_plot_diag}).

\begin{figure}
\vspace{-1cm}
  \begin{center}
    \includegraphics[width=1\linewidth]{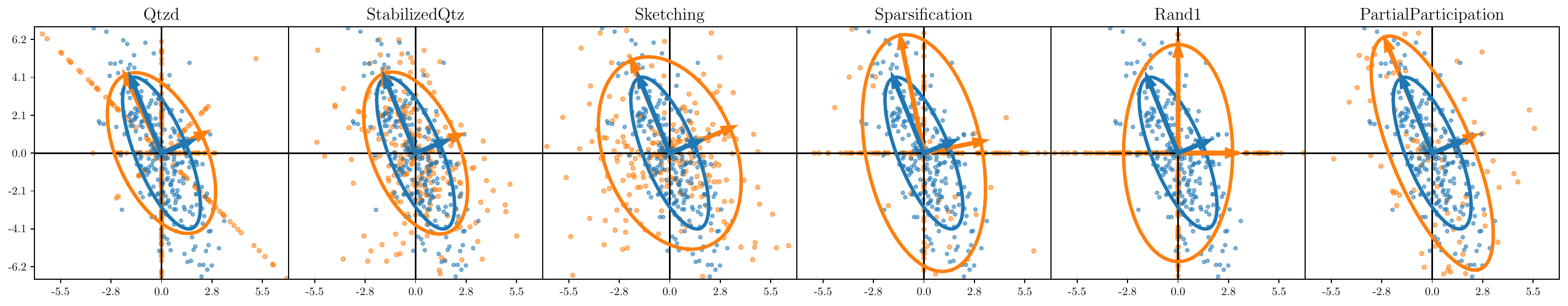}
  \end{center}
  \vspace{-0.5cm} 
  \caption{$\xCov$ not diagonal. Scatter plot of \textcolor{tabblue}{$(x_k)_{i=1}^K$}/ \textcolor{taborange}{$(\C(x_k))_{i=1}^K$} with its ellipse \textcolor{tabblue}{$\mathcal{E}_{\Cov {x_k}}$}/\textcolor{taborange}{$\mathcal{E}_{\Cov {\C (x_k)}}$}. \vspace{-0.25cm}}
  \label{fig:compression_scatter_plot}
\end{figure}

\begin{figure}
  \begin{center}
    \includegraphics[width=1\linewidth]{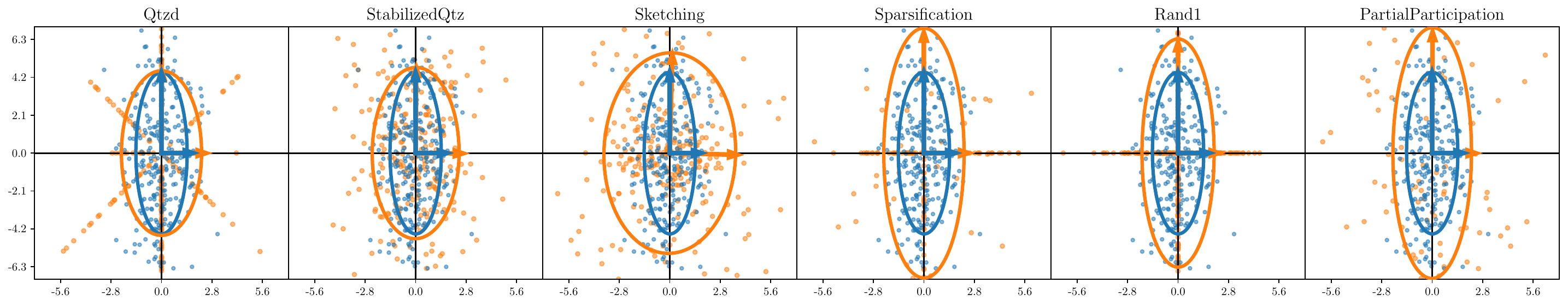}
  \end{center}
  \vspace{-0.5cm} 
  \caption{$\xCov$ diagonal. Scatter plot of \textcolor{tabblue}{$(x_k)_{i=1}^K$}/\textcolor{taborange}{$(\C(x_k))_{i=1}^K$} with its ellipse \textcolor{tabblue}{$\mathcal{E}_{\Cov {x_k}}$}/\textcolor{taborange}{$\mathcal{E}_{\Cov {\C (x_k)}}$}.}
  \label{fig:compression_scatter_plot_diag} \vspace{-0.5cm}
\end{figure}

More specifically, we consider features $(x_k)_{k \in \OneToK}$ sampled from $\mathcal{N}(0, M)$ where $M = Q D Q $, $D= \Diag{1, 10}$ and $Q$ is rotation matrix with angle $\pi/8$ (resp. $0$) in \Cref{fig:compression_scatter_plot} (resp.~\ref{fig:compression_scatter_plot_diag}). 
We represent the values of \textcolor{tabblue}{$x_k$} and \textcolor{taborange}{$\C (x_k)$}, unit-ellipses of the corresponding covariance matrices \textcolor{tabblue}{$\mathcal{E}_{\Cov{x_k}}$} and \textcolor{taborange}{$\mathcal{E}_{\Cov {\C (x_k)}}$} (see \Cref{def:covariance_ellipse}---recall that $\textcolor{tabblue}{\mathcal{E}_{\Cov{x_k}}} \subset \textcolor{taborange}{\mathcal{E}_{\Cov {\C (x_k)}}} \Leftrightarrow \textcolor{tabblue}{{\Cov{x_k}}} \preccurlyeq \textcolor{taborange}{{\Cov {\C (x_k)}}} $), as well as their two eigenvectors; we take $p = (1 + \sqrt{d})^{-1} = 0.41$, hence for $\C\in\{ \Cquant, \Cstabquant, \Cspars, \Cpp\}$ we have $\omgC = 1.41$ but for sketching and rand-$1$, we have $p = 1/2$ and $\omgC = (1-p)/p = 1$.

We make the following observations:\begin{itemize}[itemsep=1pt,leftmargin=1cm,noitemsep]
    \item[{[Qtz]}] For quantization and stabilized quantization, in the non-diagonal case, the eigenvectors of \textcolor{tabblue}{$\mathcal{E}_{\Cov{x_k}}$} and \textcolor{taborange}{$\mathcal{E}_{\Cov {\C (x_k)}}$} are slightly\footnote{On the figure, there are nearly aligned, but actually differ.} different (as $\sqrt{\Diag{M}}$ and $M$ are not jointly diagonalizable, as well as if $\Diag M$ is constant, although this case  is not presented here, but in \Cref{app:fig:compression_scatter_plot_std} in \Cref{app:subsec:proof_particular_case}). They are equal for the diagonal case (as $\sqrt{\Diag{M}}$ and $M$ are both diagonal so  the eigenvectors are aligned with the axis).  In both cases, the eigenvalue decay is reduced (from $\lambda_2/\lambda_1= 1/10$ without compression to  $1/\sqrt{10}$ with compression, which visually corresponds to a ``wider'' ellipse).

    This slower eigenvalue decay results from the \textit{unstructured-noise}, i.e., large noise on the weak-curvature direction, which is particularly visible on \Cref{fig:compression_scatter_plot_diag}. This is critical as it results in a potentially much larger limit rate, as  $\Tr{\compCov[][q][M] M^{-1}}\simeq \Tr{M^{-1/2}}$.
        
    \item[{[Skt]}] For sketching, the eigenvectors remain the same for \textcolor{tabblue}{$\mathcal{E}_{\Cov{x_k}}$} and \textcolor{taborange}{$\mathcal{E}_{\Cov {\C (x_k)}}$} (as $\Id_2$ and $M$ are  jointly diagonalizable, see \Cref{prop:covariance_formula_diagonal_case}), both in the diagonal and non-diagonal case.  However, the isotropic noise with covariance $\Id_2$ is visible (wide ellipse), also drastically impacting $\Tr{\compCov[][\mathrm{PP}][M] M^{-1}}\varpropto \Tr{M^{-1}}$.
    \item[{[Sp]}] For $p$-sparsification, eigenvectors are not aligned with the ones of $M$ in the non-diagonal case, but are in the diagonal case. In this latter case, the covariance $\compCov[][s][M] $ is proportional~to~$M$.
    \item[{[Rd]}] Same remarks hold for Rand-$1$ than for sparsification. We see that $\compCov[][\mathrm{rd}1][M] $ is diagonal, as expected.  Again, both operators induce an unstructured-noise in the non-diagonal case. 
    \item[{[PP]}] For PP, the covariances are always proportional (with factor $p^{-1}$), i.e., the ellipses have the same axis and \textcolor{taborange}{$\mathcal{E}_{\Cov {\C (x_k)}}$} is a scaled  version of  \textcolor{tabblue}{$\mathcal{E}_{\Cov{x_k}}$}.
\end{itemize}
We highlight the following points regarding pairwise comparisons: 
\begin{itemize}[topsep=0pt,itemsep=1pt,leftmargin=1cm,noitemsep]
     \item In the diagonal case, as stated by \Cref{item:eq_trace_diag} in \Cref{prop:particular_cases}, \textcolor{taborange}{${\Cov {\Cspars (x_k)}}$} and \textcolor{taborange}{${\Cov {\Cpp (x_k)}}$} are identical. \textcolor{taborange}{${\Cov {\Crand (x_k)}}$} would have been identical too if $p = 1 /d$  (but here we observe $\compCov[][\mathrm{rd}1][M] \preccurlyeq \compCov[][s/\mathrm{PP}][M]$ because the variance of rand-$1$ is smaller that for sparsification/PP). 
     \item In the non-diagonal case, from \Cref{item:ineq_trace_diag} in \Cref{prop:particular_cases}, we have $\Tr{\compCov[][\mathrm{PP}][M] M^{-1}} \leq \Tr{\compCov[][s][M] M^{-1}}$, however we do not have $ \compCov[][\mathrm{PP}][M] \preccurlyeq \compCov[][s][M]$, hence we can not conclude anything on \textcolor{taborange}{${\Cov {\Cpp (x_k)}}$} and \textcolor{taborange}{${\Cov {\Cspars (x_k)}}$}.
    \item In the non-diagonal scenario, we observe on \Cref{fig:compression_scatter_plot}, that $\compCov[][q][M] \preccurlyeq \compCov[][s][M]$ (as in \Cref{item:ineq_trace_diag_quantiz_sparsif} in \Cref{prop:particular_cases_quant}). 
\end{itemize}

\subsubsection{Illustration of Proposition \ref{prop:covariance_formula} and Corollary \ref{prop:covariance_formula_diagonal_case} in dimension \texorpdfstring{$d>2$}{d>2 }}

\label{subsec:computation_baniac}

\begin{figure}
    \centering     
    \begin{subfigure}{0.32\linewidth}
        \includegraphics[width=\linewidth]{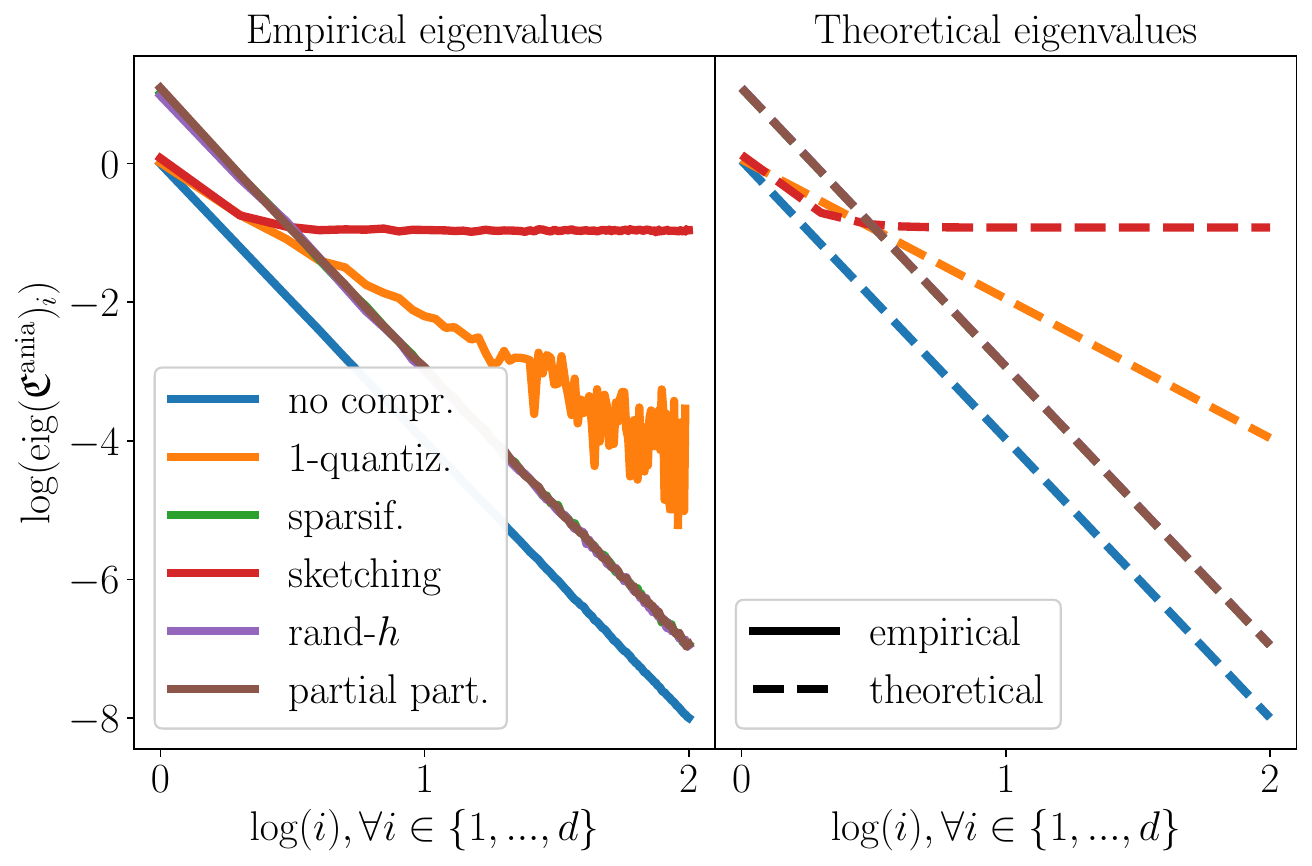}
        \caption{ \label{fig:eig_diag} $M$ diagonal, $d=100$}
    \end{subfigure}
    \begin{subfigure}{0.32\linewidth}
       \includegraphics[width=\linewidth]{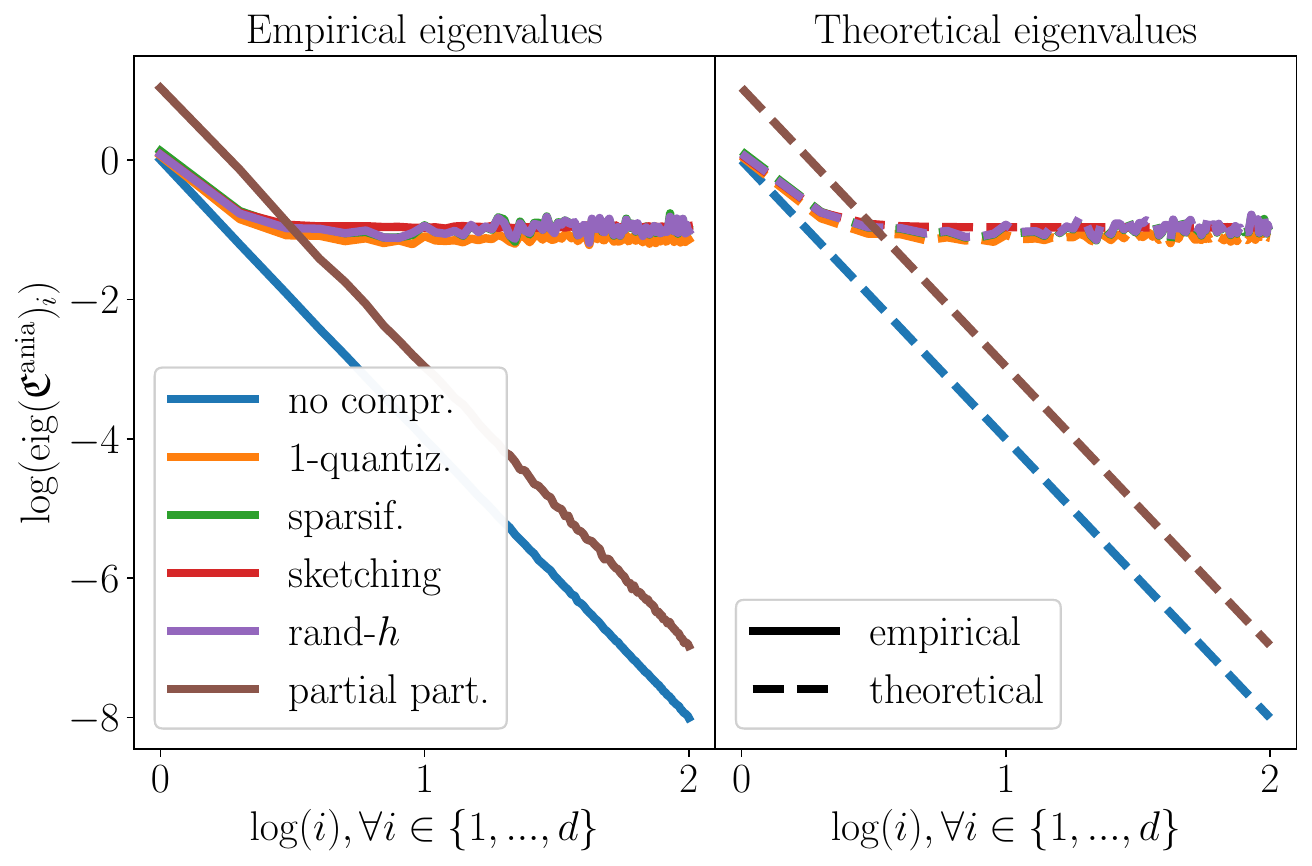}
        \caption{\label{fig:eig_ortho} $M$ non-diagonal, $d=100$}
    \end{subfigure}
    \begin{subfigure}{0.32\linewidth}
       \includegraphics[width=\linewidth]{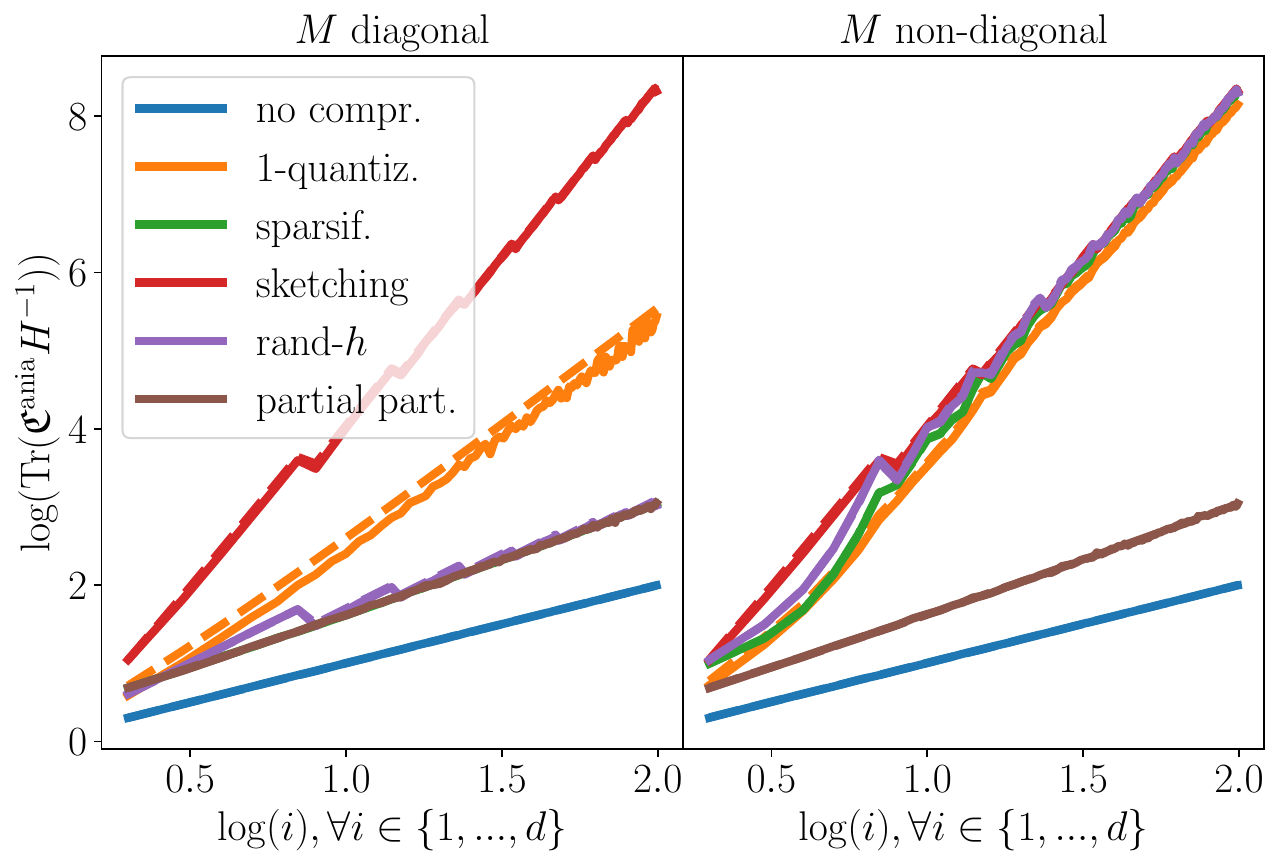}
        \caption{\label{fig:trace} $\Tr{\aniac M^{-1}}$, $d \in \llbracket 2, 100 \rrbracket$}
    \end{subfigure}
    \caption{Figures \ref{fig:eig_diag}~\&~\ref{fig:eig_ortho}: Eigenvalues of $\compCov[][][M]$.   \Cref{fig:trace}: $\Tr{\compCov[][][M] M^{-1}}$.~$K~=~10^4, \omgC = 10$, $M = Q \Diag{(1/i^4)_{i=1}^d} Q^T$ and $Q=\Id_d$ (on \ref{fig:eig_diag}~\&~\ref{fig:trace}-l) or $Q\sim \mathrm{Unif}(\mathcal{O}_d)$ (on \ref{fig:eig_ortho}~\&~\ref{fig:trace}-r). Plain lines: empirical values; dashed lines: theoretical formula or upper bound given by \Cref{prop:covariance_formula}.}
    \label{fig:eigenvalues}
\end{figure}

Another way of visualizing the structured and isotropic parts of the noise is by plotting the eigenvalues of $\compCov[][][M]$ in dimension $d=100$. This is done in \Cref{fig:eigenvalues}, in which we plot the eigenvalues in decreasing order for both $M$ and $\compCov[][][M]$, with Gaussian $p_M= \mathcal N (0, M)$ and $\mathrm{Sp}(M) = \{(1/i^4)_{i=1}^d\}$. We see that in the diagonal case, in \Cref{fig:eig_diag}, all operators but $\Cquant,\Csketch$ have a covariance proportional to $M$ (thus a slope $-4$ on a log/log scale), while $\Cquant$ is proportional to $\sqrt{M}$ (thus a slope $-2$) and $\Csketch$ has an isotropic component (thus eigenvalues not decreasing to 0). In \Cref{fig:eig_ortho} we see that only $\Cpp$ has a covariance proportional to $M$ while all other ones have an isotropic component (thus eigenvalues not decreasing to 0). We plot both empirical values and the ones obtained in \Cref{prop:covariance_formula}, which shows that the upper bound on quantization is reasonable in practice and acts as a safety check for other compression schemes.

We plot on \Cref{fig:trace} the theoretical and empirical $\Tr{\compCov[][][M] M^{-1} }$ again in two cases, diagonal and non-diagonal.
In the diagonal case, PP, sparsification, and rand-$h$ have the same behavior; their traces have the smallest value among all compressors. 
However, in the general case of non-diagonal features' covariance, all compression operators have similar slow performance except for PP. For $d=100$, all the compressors \emph{have $\omgC = 10$, but $\Tr{\compCov[][][M] M^{-1} }$  varies by several orders depending on the compressor}, illustrating again that compressors satisfying \Cref{lem:compressor} with the same $\omgC$ may have vastly different behaviors.

\begin{figure}
    \centering     
    \begin{subfigure}{0.49\linewidth}
        \label{fig:quantum}
        \includegraphics[width=\linewidth]{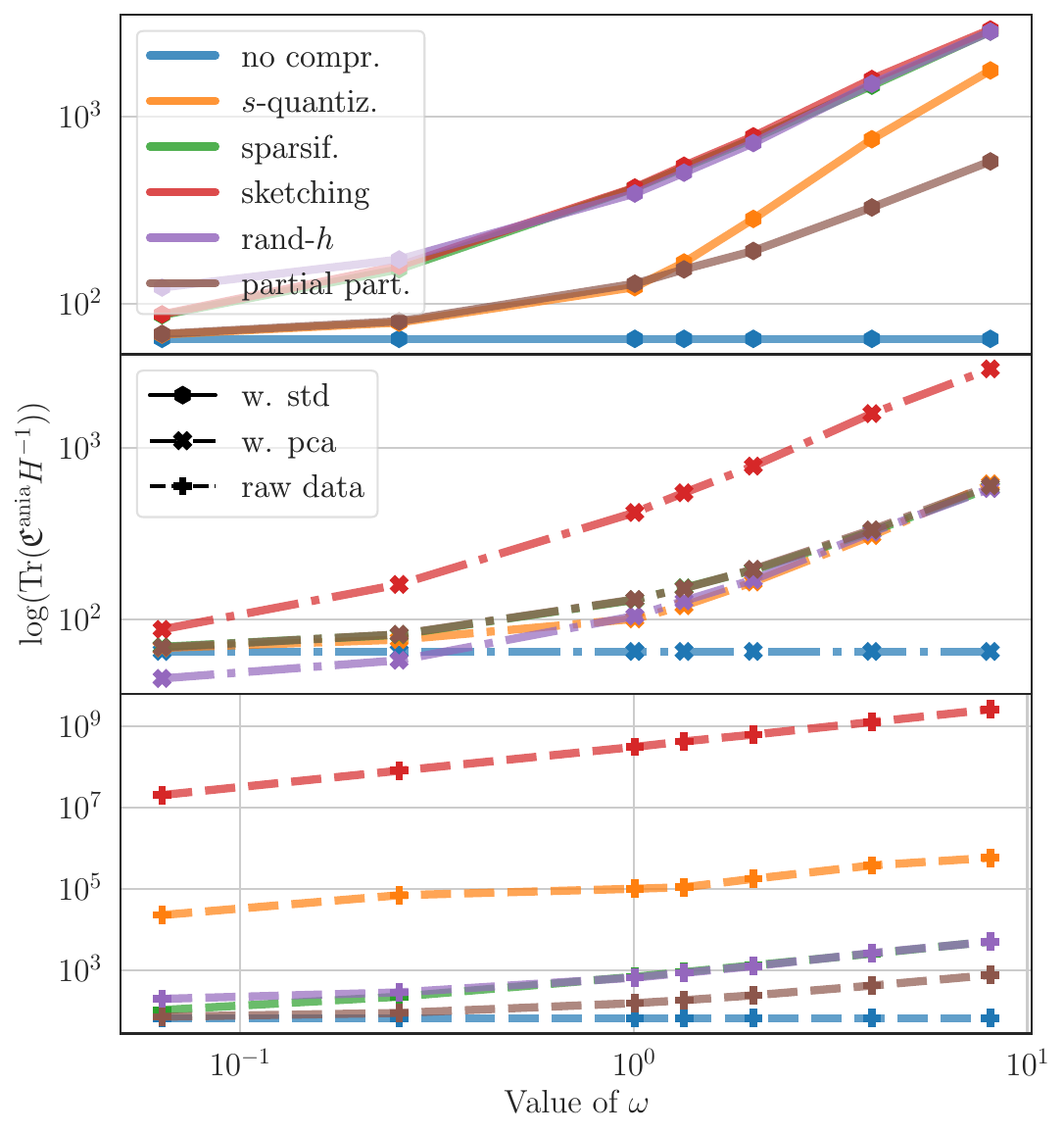}
        \caption{quantum (tabular dataset)}
    \end{subfigure}
    \begin{subfigure}{0.49\linewidth}
        \label{fig:cifar10}
        \includegraphics[width=\linewidth]{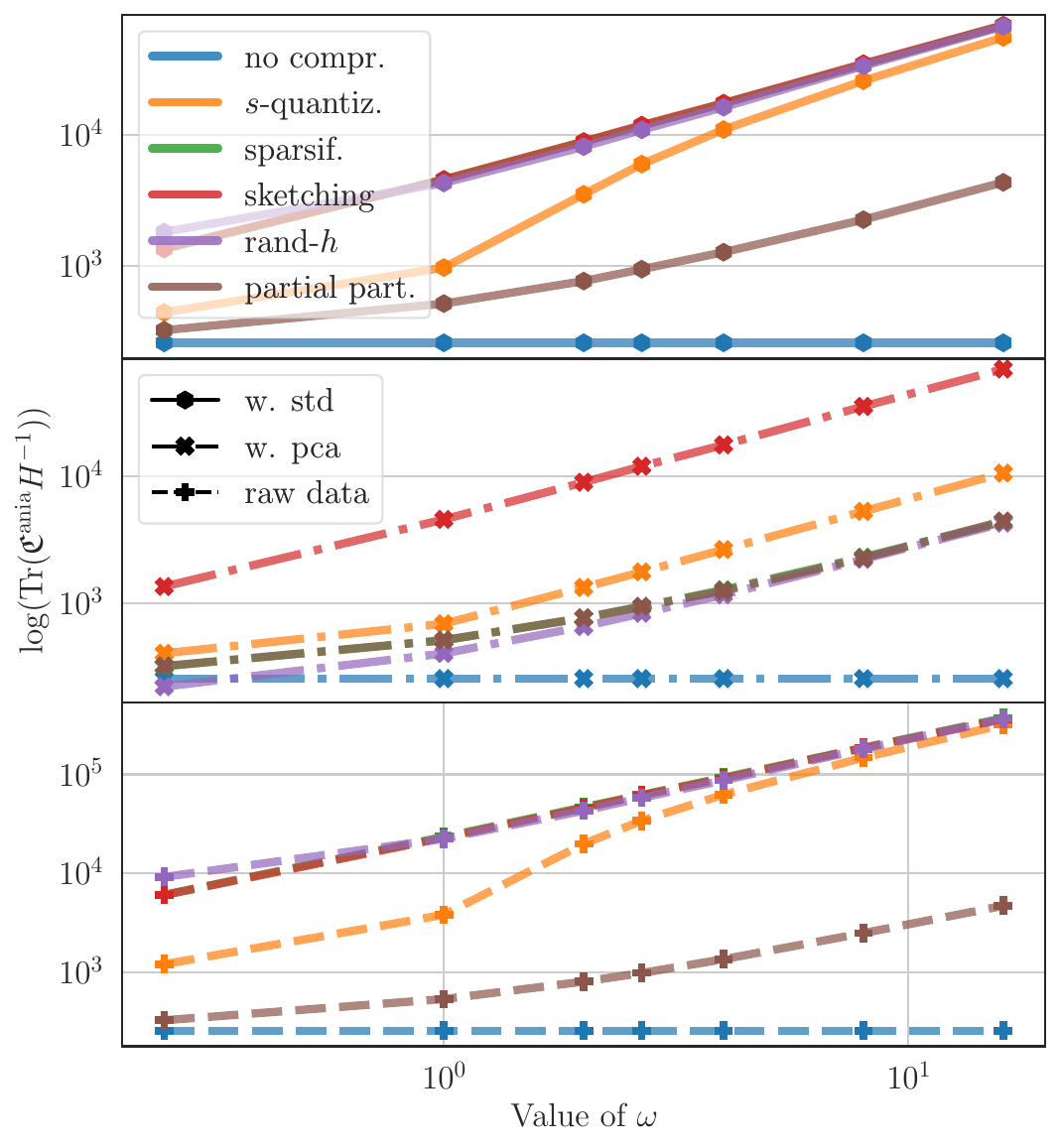}
        \caption{cifar10 (images)}
    \end{subfigure}
    \caption{$\Tr{\compCov[][][M] M^{-1} }$ w.r.t the level of $\omgC$ for quantum and cifar10. X/Y-axis are in log scale. Note that the plots may have different magnitudes.}
    \label{fig:real_dataset}
\end{figure}

Lastly, we perform the same experiments on $\Tr{\compCov[][][M] M^{-1}}$, but on non-simulated datasets, namely \texttt{quantum} \citep{caruana_kdd-cup_2004} and \texttt{cifar-10} \citep{krizhevsky_learning_2009}: in \Cref{fig:real_dataset} we plot $\Tr{\compCov[][][M] M^{-1} }$ w.r.t.~the worst-case-variance-level $\omgC$ of the compression in three scenarios: \textbf{(top-row)}---with data standardization, thus $\Diag{M}$ is constant equal to $1$; \textbf{(middle-row)}---with a PCA, thus with a diagonal covariance $M$ (note that this is for illustration purpose:  performing a PCA would be more expensive computationally than running \Cref{ex:cent_comp_LMS}); and \textbf{(bottom-row)}---without any data transformation.
As a pre-processing, we have resized images of the \texttt{cifar-10} dataset to a $16 \times 16$ dimension.  We adjust $h\in\Crandh,\Csketch$, and  $p\in\Cpp, \Cspars$ to make $\omgC$ vary. Besides, in order to also adjust $\omgC$ for quantization, we use the $s$-quantization (\Cref{def:s-quantization}) schema which generalizes $1$-quantization.

\begin{definition}[$s$-quantization operator]
\label[definition]{def:s-quantization}
Given $z \in \WW$, the $s$-quantization operator $\C_s$ is defined by $\C_s(z):= \mathrm{sign}(z) \times \| z \|_2 \times \frac{\chi}{s}$.
$\chi \in \WW$ is a random vector with $j$-th element defined~as: $ 
\chi:=~\left\{
    \begin{array}{ll}
        l + 1 & \mbox{with probability } s \frac{|z_j|}{\|z\|_2}  - l \,,\\
        l & \mbox{otherwise}\,
    \end{array}
\right.
$
where the level $l$ is such that $\ffrac{s|z_j|}{\lnrm z \rnrm_2} \in \left[l, l+1 \right[$.
\end{definition}

The $s$-quantization scheme verifies Assumption~\ref{item:urvb} with $\omgC= \min(d/s^2, \sqrt{d}/s)$. Proof can be found in \cite[][see Appendix A.1]{alistarh_qsgd_2017}. We do not compute $\boundMultPrime, \boundMult$ and the covariance~$\aniac$.

\textbf{Interpretation.} 
\textbf{(Top-row):} with standardization, the order predicted from \Cref{prop:particular_cases}.\ref{item:ineq_trace_diag} (large $\omega$), and  \Cref{prop:particular_cases_quant}.\ref{item:ineq_trace_diag_quantiz_sparsif} (low $\omega$) is obtained for both \texttt{quantum} and \texttt{cifar-10}: $ \Cpp \leq \Cquant \le  \Cspars  \simeq \Crandh \simeq \Csketch$.
For quantization, we observe two regimes: 1) when $\omgC$ tends to zero, quantization and PP outperform sketching, sparsification, and rand-$h$, that are equivalent.
2) when $\omgC$ increases, quantization changes from scaling as PP to scaling as the second group. 
\textbf{(Middle-row):}  in the diagonal regime, comments made for \Cref{fig:trace}-l are still valid. 
\textbf{(Bottom-row):} We observe that for a generic matrix $M$ (obtained from raw-data) there is no systematic order between compression schemes. This is un-avoidable as  the order for a ``$M$ diagonal''   and  ``$M$ with constant-diagonal'' is \textit{not} the same. We observe that: 
\begin{itemize}[noitemsep,topsep=0pt]
    \item for \texttt{quantum}, \ \  $ \Cpp \leq  \Cspars  \lesssim  \Crandh  \ll \Cquant \ll   \Csketch $ 
\item for \texttt{cifar-10}, $ \Cpp \ll \Cquant \ll  \Cspars  \simeq \Crandh \simeq \Csketch$.
\end{itemize}
We also observe that $\Csketch$, which is the only operator to always induce an isotropic component, may be much worse than all other compressors (e.g., on \texttt{quantum}). Ultimately, the order depends on the covariance matrix $M$. Here we observe that the raw-data behavior is close for \texttt{cifar-10} to the standardized version, while for \texttt{quantum} the order between compressors is the same for raw-data and diagonal (although the ratios are different). In \Cref{app:subsec:plot_cov_matrix} (\Cref{app:fig:cov_aniac}), we provide an illustration of the covariance matrices, that supports such interpretation. 

\subsection{Numerical experiments on Algorithm~\ref{ex:cent_comp_LMS}}
\label{subsec:numerical_experiments}

In this section, we run~\Cref{ex:cent_comp_LMS} on both synthetic and real datasets  to illustrate the combined theoretical results of \Cref{sec:theoretical_analysis,sec:application_compressed_LSR}.
In \Cref{fig:sgd}, we compare the compression operators to the baseline of no-compression.
We plot the excess loss of the Polyak-Ruppert iterate $F(\overline{w}_k) - F(w)$, versus the index in log/log scale.  Each experiment is conducted 5 times, with a new dataset generated from a new seed. The standard deviation of $\log_{10}(F (\overline{w}_k) - F(w))$ is indicated by the shadow-area.

\textbf{Setting:} (a) \textit{Synthetic dataset generation:} The dataset is generated using \Cref{model:centralized} with $K=10^7$, $\sigma^2 = 1$, an optimal point $\ws$ set as a constant vector of ones and a geometric eigenvalues decay of $D_1 = \Diag{(1/i^4)_{i=1}^d}$ (resp. $D_2 = \Diag{(1/i)_{i=1}^d}$).
For $i\in \{1,2\}$, the covariance matrix is $\xCov_{\{i\}} = Q D_{\{i\}} Q^T$, where $Q$ is either orthogonal matrix, or $Q = \Id_d$ in the case of a diagonal features' matrix. 
(b)~\textit{Real datasets processing:} We  resize images of the \texttt{cifar-10} dataset to a $16 \times 16$ dimension, and then for both datasets, we apply standardization. To compute the optimal point (and so to compute the excess loss), we run SGD over $200$ passes on the whole dataset and consider the last Polyak-Ruppert average as the optimal point $w_*$.
(c)~\textit{\Cref{ex:cent_comp_LMS}:}
We take a constant step-size $\gamma = 1/ (2 (\omgC + 1) R^2)$ with $R^2$ the trace of the features' covariance,
and  $w_0 = 0$ as initial point. We set the batch-size $b=1$ and the compressor variance $\omgC = 10$ for synthetic datasets. 
For \texttt{cifar-10} and \texttt{quantum}, we run \Cref{ex:cent_comp_LMS} for $5\times10^6$ iterations (it corresponds to $100$ passes on the whole dataset) with a batch-size $b = 16$, and using a $s$-quantization (\Cref{def:s-quantization}). We set $s=16$ for \texttt{cifar-10} (factor 2 compression) and $s=8$ for \texttt{quantum} (factor 4 compression), the compressor variance is therefore $\omgC \approx 1$ for both datasets. These settings are summarized in \Cref{tab:settings_exp,tab:settings_exp_real_dataset} in \Cref{app:subsec:settings_xp}.
Additionally, to illustrate \Cref{cor:bm2013_with_nonlinear_operator_compression}, we plot on \Cref{fig:sgd_ortho_horizon} the final excess loss after running \Cref{ex:cent_comp_LMS} with an horizon-dependent step-size $\gamma = K^{-2/5}$, computed for seven values of $K\in\{10^i, i \in \llbracket 1, 7 \rrbracket\}$.

\textbf{Interpretation---$\xCov$  diagonal} (\Cref{fig:sgd_diag}). 
For sparsification, rand-$h$, and PP  (linear compressors), the rate of convergence is given by \Cref{thm:bm2013_with_linear_operator_compression}. As stated by \Cref{prop:covariance_formula_diagonal_case}, the covariance $\aniac$ is proportional to $\epsiXCov$ leading to a $O(1/K)$ rate. We indeed observe in \Cref{fig:sgd_diag} that excess loss is linear in a log/log scale.

For non-linear compression operators, the rate is given by \Cref{thm:bm2013_with_nonlinear_operator_compression}. 
On the one hand, $1$-quantization results in a slower eigenvalues' decay, leading to a larger $\Tr{\aniac \xCov^{-1}}$, thus a slower convergence than linear compressors. On the other hand, for sketching, covariance has a purely isotropic part scaling with $\Id_d$, which causes $\Tr{\aniac \xCov^{-1}}$ to strongly depend on the strong-convexity coefficient $\mu$ resulting in an extremely  large constant. Both behaviors  are observed  in \Cref{fig:sgd_diag}.

\textbf{Interpretation---$\xCov$ not diagonal} (\Cref{fig:sgd_ortho,fig:sgd_ortho_p1}). In the case of the high eigenvalues' decay of $\xCov_1$ ($\mu = 10^{-8}$), the only compressor that shows in \Cref{fig:sgd_ortho} a linear rate of convergence in the log/log scale is PP.
All others exhibit a saturation phenomenon after a certain number of iterations. This is again due to the unstructured part of the noise for all other compressors, as given by~\Cref{prop:covariance_formula}.
Besides, we also note an increase of the excess loss after some iterations that  is likely  caused by the accumulation of noise on axis onto which the curvature of $H$ is weak (but the isotropic noise is not). 
However, taking the optimal horizon-dependent step-size given by \Cref{cor:bm2013_with_nonlinear_operator_compression}, we recover on \Cref{fig:sgd_ortho_horizon} for all compressor $\C$ the sub-linear convergence rate of PP shown at \Cref{fig:sgd_ortho}, reducing by a factor $100$ the excess loss w.r.t. to the scenario where $\gamma = 1 / 2 (\omgC + 1) R^2)$. While using a small step-size is slightly worse for SGD, it reduces the second and third term of the variance in \Cref{thm:bm2013_with_nonlinear_operator_compression} that depends on $\mu$ for other compressors. 
And in the scenario of a slow eigenvalues' decay ($\mu = 10^{-2}$),  we observe   on \Cref{fig:sgd_ortho_p1} that all compressors reach the sub-linear rate (same slope -1 on the log/log plot), but with different  constants. This illustrates \Cref{thm:bm2013_with_linear_operator_compression,thm:bm2013_with_nonlinear_operator_compression} in the case of moderate coefficient $\mu$ where we expect the second and third parts of the variance term to be negligible.

\begin{figure}[t]
\resizebox{\linewidth}{!}{
\begin{tabular}{ccc}
\toprule
\multicolumn{2}{c}{Synthetic dataset} & Real datasets \\
\cmidrule(lr){1-2}\cmidrule(l){3-3}
 
    \centering     
    \begin{subfigure}{0.32\linewidth}
        \includegraphics[width=\linewidth]{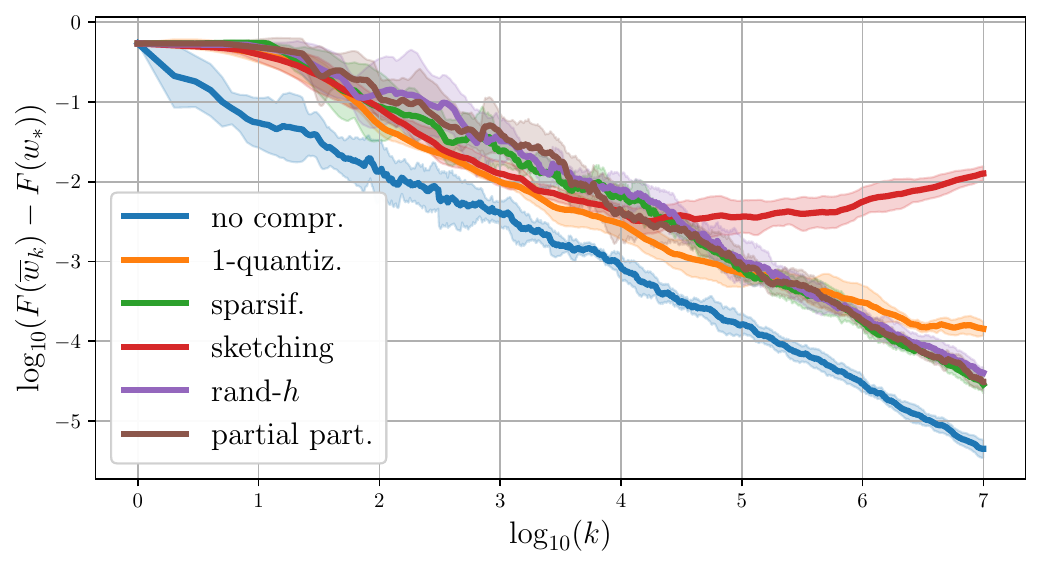}
        \caption{$\xCov_1$ diagonal. \label{fig:sgd_diag}}
    \end{subfigure} &
    \begin{subfigure}{0.32\linewidth}
        \includegraphics[width=\linewidth]{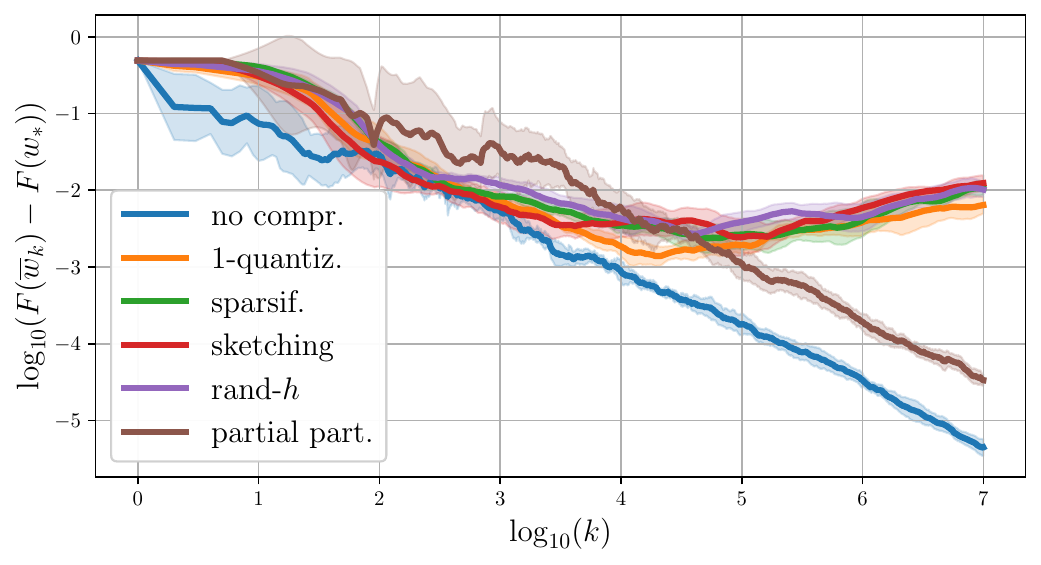}
        \caption{ $\xCov_1$ not diagonal. \label{fig:sgd_ortho}}
    \end{subfigure} &
    \begin{subfigure}{0.32\linewidth}
        \includegraphics[width=\linewidth]{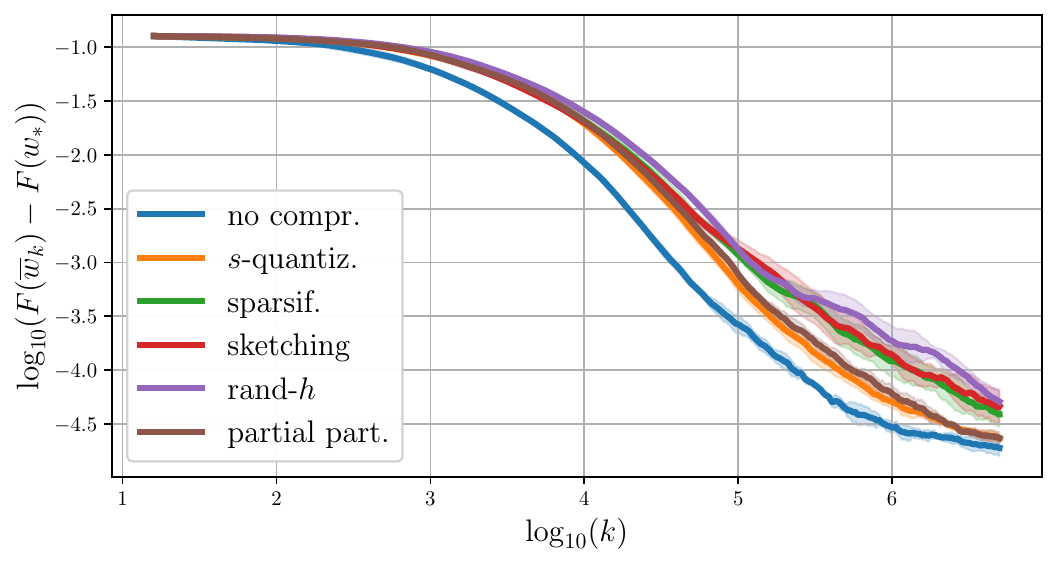}
        \caption{\texttt{quantum} \label{fig:sgd_quantum}}
    \end{subfigure} \\
    \begin{subfigure}{0.32\linewidth}
        \includegraphics[width=\linewidth]{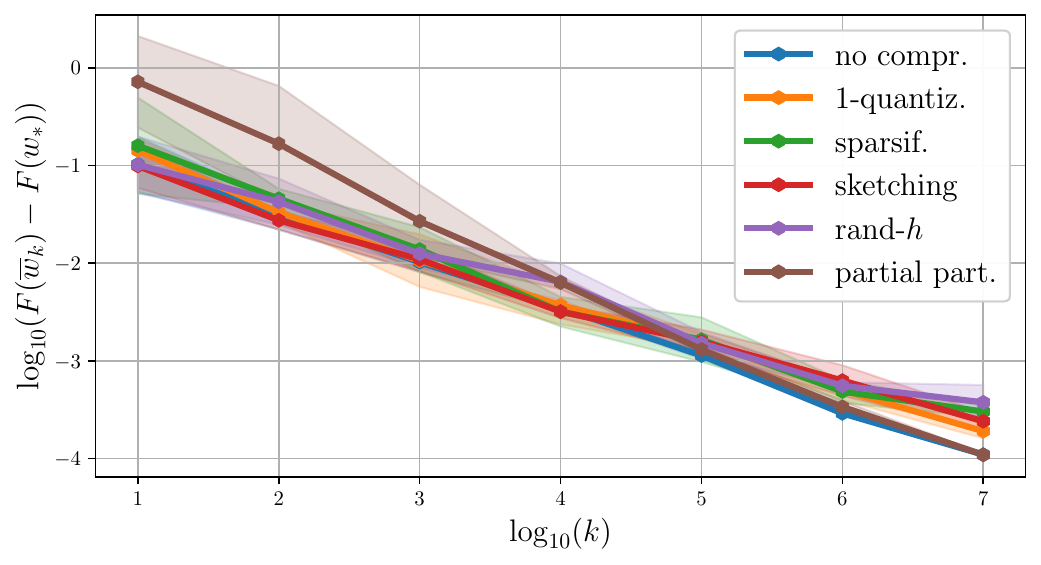}
        \caption{ $\xCov_1$ not diagonal, $\gamma = K^{-2/5}$. \label{fig:sgd_ortho_horizon}}
    \end{subfigure} &
    \begin{subfigure}{0.32\linewidth}
        \includegraphics[width=\linewidth]{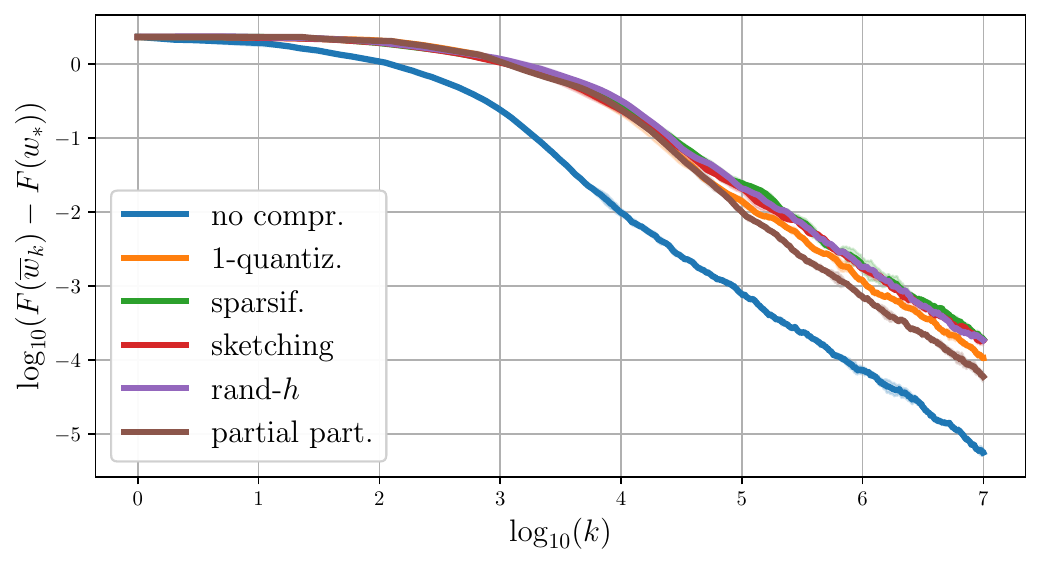}
        \caption{$\xCov_2$ not diagonal. \label{fig:sgd_ortho_p1}}
    \end{subfigure} &
    \begin{subfigure}{0.32\linewidth}
        \includegraphics[width=\linewidth]{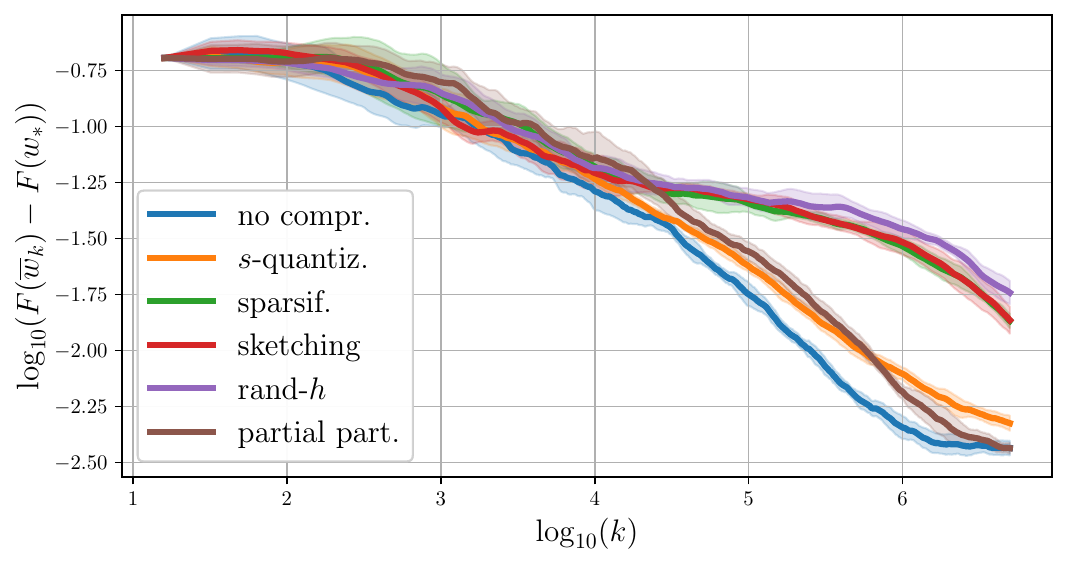}
    \caption{\texttt{cifar10} \label{fig:sgd_cifar10}}
    \end{subfigure} \\
    \bottomrule
\end{tabular}}
    \caption{Logarithm excess loss of the Polyak-Ruppert iterate for a single client ($N=1$). \label{fig:sgd}}
\end{figure}

\textbf{Interpretation - real datasets, $\xCov$ with constant diagonal} (\Cref{fig:sgd_quantum,fig:sgd_cifar10}). As we use $s$-quantization, this experience is going beyond \Cref{prop:covariance_formula,prop:particular_cases_quant} which only apply to $1$-quantization. In the case of covariance with constant diagonal, \Cref{prop:particular_cases_quant} states that $1$-quantization is better than projection-based compressors and comparable to partial participation. In practice, we observe that $s$-quantization performs competitively with PP and outperforms all other compressors.  
Besides, the asymptotic behavior is consistent with \Cref{fig:real_dataset} (top-row) for $\omega=1$, where the order $\Cpp \simeq \Cquant \ll \Cspars\simeq \Crandh\simeq \Csketch$ is observed.

\subsection{Conclusion}
\label{subsec:ccl}
In this section, we investigated how the compression scheme choice impacts the convergence rate, first by showing that quantization-based  and projection-based methods respectively satisfy  \Cref{thm:bm2013_with_nonlinear_operator_compression} and \Cref{thm:bm2013_with_linear_operator_compression}, resulting in different non-asymptotic behaviors. In the asymptotic regime, in both cases, the averaged excess loss scales as $\Tr{\xCov^{-1}\aniac}/K$. We then analyzed the impact of the most-used schemes on this limit rate. Overall, it appears that all compression schemes typically generate an \textit{unstructured-noise}, which covariance does not scale with $\xCov$, contrarily to the classical un-compressed \Cref{ex:LMS}. The one exception is PP, which corresponds (on a single worker) to performing fewer iterations. For other compression schemes, we show the impact of the covariance $\xCov$: depending on the correlation between features ($\xCov$ diagonal or not) and on the pre-processing (e.g., standardization for which $\xCov$ has diagonal constant), the ordering between compression scheme varies.
In many cases, this highlights the need for an additional regularisation when running \Cref{ex:cent_comp_LMS}: all compression schemes (but PP) result in a significant noise that accumulates along the low curvature directions. Our results can be extended to  the ridge (a.k.a., Tikhonov) regularized case~\citep[see][]{dieuleveut_harder_2017}, which creates an additional bias but changes the rate $\Tr{\xCov^{-1}\aniac}/K$ into $\Tr{(\xCov+\lambda \Id)^{-1} \aniac}/K$.  The theoretical optimal choice for $\lambda$ depending on $\xCov$ and the compression scheme could be obtained from our analysis but is left as future work.

We now turn to the distributed/federated case, which motivates the study of compression schemes for practical applications. 
\section{Application to Federated Learning}
\label{sec:federated_learning}

In this section, we consider \Cref{ex:dist_comp_LMS} under \Cref{model:fed}, which corresponds to heterogeneous Federated Learning on a network composed of $N$ clients. We hereafter consider two particular cases naturally raising from \Cref{model:fed}: covariate-shift and optimal-point-shift. Note this results can easily be extended to the case of a  heterogeneous level of noise by clients.

First, in \Cref{subsec:fl_sigma},  the \textit{covariate-shift} case, i.e., \Cref{model:fed} with $\ws^i =\ws$ for all~$i$ (thus the  distribution of $y^i$ conditional to $x^i$ does not change between workers), but on the other hand, the features' marginal distributions are different, in particular, $\xCov_i \neq \xCov_j$. Second,  in \Cref{subsec:fl_wstar}, the \textit{optimal-point-shift} case, i.e., for each client $i, j \in \OneToN$, their optimal points are different $\ws^i \neq \ws^j$, but $\xCov_i = \xCov_j$.
In the rest of the section, we  denote  $ \overline{\xCov} := \frac{1}{N} \sum_\iN \xCov_i $, $ \overline{R}^2 := \frac{1}{N} \sum_\iN R^2_i $, and we have $F(w_k) - F(\ws) = \frac{1}{2} \PdtScl{\etakm}{\overline{\xCov} \etakm}$.

\subsection{Heterogeneous covariance}
\label{subsec:fl_sigma}

In this section, we first show that  \Cref{thm:bm2013_with_linear_operator_compression,thm:bm2013_with_nonlinear_operator_compression} on \eqref{eq:LSA} from \Cref{sec:theoretical_analysis} can be applied to the Federated Learning case within the scenario of covariate-shift. \Cref{cor:fl_heter_cov} is proved in \Cref{app:subsec:validity_asu_random_fields_fl_sigma}. 

\begin{corollary}[\Cref{ex:dist_comp_LMS} with covariate-shift]
\label[corollary]{cor:fl_heter_cov}
Consider \Cref{ex:dist_comp_LMS} under \Cref{model:fed} with $\ws^i = \ws^j$ (and potentially $\xCov_i \neq \xCov_j$).
\begin{enumerate}[topsep=2pt,itemsep=0pt,leftmargin=0.5cm]
\item For a compressor $\C \in \{ \Cquant, \Cstabquant, \Crandh, \Cspars,$ $\Csketch, \Cpp \}$, 
\Cref{thm:bm2013_with_nonlinear_operator_compression} holds, with $\Fhess = \overline{\xCov}$, $\Ftrace^2 = \overline{R}^2$, $\boundAdd = (\omgC +1) \overline{R}^2 \sigma^2 / N$, $\boundMult = (\omgC + 1) \max_{i \in  \OneToN}(R_i^2) / N$, $\boundMultPrime = \omgCOne \sigma 
 \max_{i \in  \OneToN}(R_i^2)  / N$. 
\item Moreover for any linear compressor $\C \in \{\Crandh, \Cspars,$ $\Csketch, \Cpp \}$,  \Cref{thm:bm2013_with_linear_operator_compression} holds,  with  the same constants and $\ShaA = \sigma^2 \max_{i \in \OneToN} (\Sha_{\xCov_i}) / N$ and $\ShaM = \max_{i \in \OneToN} (R_i^2 \Sha_{\xCov_i}) /N$, with~$(\Sha_{\xCov_i})_\iN$ given in \Cref{cor:value_constants_in_thm}.
\end{enumerate}
\end{corollary}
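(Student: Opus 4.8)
The plan is to recast \Cref{ex:dist_comp_LMS} into the \eqref{eq:LSA} form, read off the induced noise field, and then verify each assumption by reducing every computation to the per-client quantities already controlled in the centralized \Cref{cor:value_constants_in_thm} (via \Cref{lem:compressor}), and aggregating over clients. Matching \eqref{eq:LSA} with $w_k = \wkm - \frac{\gamma}{N}\sum_\iN \C_k^i(\g_k^i(\wkm))$ gives, for $\eta = w-\ws$, the field $\xi_k(\eta) = \nabla F(\ws+\eta) - \frac1N\sum_\iN \C_k^i(\g_k^i(\ws+\eta))$. Since $\nabla F(\ws)=0$ and, under covariate-shift ($\ws^i=\ws$), $\g_k^i(\ws) = (\PdtScl{x_k^i}{\ws}-y_k^i)x_k^i = -\varepsilon_k^i x_k^i$, the additive noise is $\xikstaru = \frac1N\sum_\iN \C_k^i(\varepsilon_k^i x_k^i) = \frac1N\sum_\iN \varepsilon_k^i\,\C_k^i(x_k^i)$ using the scaling property of the operators. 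The structural fact used throughout is that the triplets $(x^i,\varepsilon^i,\C^i)$ are \emph{independent across clients} with centered summands, so every second moment of a $\frac1N\sum_\iN(\cdot)$ collapses to $\frac1{N^2}\sum_\iN \E[\SqrdNrm{\cdot}]$; this is the source of the $1/N$ gains in all constants. Note also that $\Fhess = \overline{\xCov}$ and $\Ftrace^2 = \Tr{\overline{\xCov}} = \overline{R}^2$.

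For \Cref{asu:main:bound_add_noise}, independence gives $\E[\SqrdNrm{\xikstaru}] = \frac1{N^2}\sum_\iN \E[\SqrdNrm{\C^i(\varepsilon^i x^i)}]$; bounding each term with unbiasedness-plus-relative-variance (\Cref{item:urvb}), namely $\E[\SqrdNrm{\C(z)}]\le \omgCTwo \SqrdNrm{z}$, and using $\E[(\varepsilon^i)^2]=\sigma^2$, $\E[\SqrdNrm{x^i}]=R_i^2$, yields $\E[\SqrdNrm{\xikstaru}]\le \frac{\omgCTwo\sigma^2}{N^2}\sum_\iN R_i^2 = \omgCTwo\overline{R}^2\sigma^2/N$, i.e.\ $\boundAdd$ as stated. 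The same independence together with $\C(\varepsilon x)=\varepsilon\C(x)$ gives $\aniac = \frac{\sigma^2}{N^2}\sum_\iN \mathfrak{C}(\C, p_{\xCov_i})$.

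The heart of the argument is \Cref{asu:main:second_moment_noise} for $\xi^{\mathrm{mult}}(\eta) = \frac1N\sum_\iN(\xCov_i\eta - D^i)$, with $D^i := \C^i(\g^i(\ws+\eta))-\C^i(\g^i(\ws))$ and $\g^i(\ws+\eta)-\g^i(\ws)=\PdtScl{x^i}{\eta}x^i$. Independence reduces this to $\E[\SqrdNrm{\xi^{\mathrm{mult}}(\eta)}]=\frac1{N^2}\sum_\iN \E[\SqrdNrm{D^i-\E[D^i]}]\le \frac1{N^2}\sum_\iN \E[\SqrdNrm{D^i}]$. Conditioning on $(x^i,\varepsilon^i)$ and applying the Hölder bound \Cref{item:holder_compressor} with common compression randomness (so the controlled quantity is exactly $\mathcal{W}_2(\C(z),\C(z'))^2$), with $\min(\|z\|,\|z'\|)\le|\varepsilon^i|\|x^i\|$, $\|z-z'\|=|\PdtScl{x^i}{\eta}|\|x^i\|$ and the moment bound $\E[\SqrdNrm{x^i}\,x^i\otimes x^i]\preccurlyeq R_i^2\xCov_i$ from \Cref{model:fed}, gives per client $\E[\SqrdNrm{D^i}]\le \omgCOne\sigma R_i^2\|\xCov_i^{1/2}\eta\| + 3\omgCTwo R_i^2\|\xCov_i^{1/2}\eta\|^2$. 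The quadratic term aggregates \emph{exactly}: $\sum_\iN \|\xCov_i^{1/2}\eta\|^2 = N\|\Fhess^{1/2}\eta\|^2$ by definition of $\overline{\xCov}$, producing $3\boundMult\|\Fhess^{1/2}\eta\|^2$ with $\boundMult=\omgCTwo\max_i R_i^2/N$. The main obstacle is the \emph{linear} term: since the individual covariances are \emph{not} Loewner-dominated by their average ($\xCov_i\not\preccurlyeq\overline{\xCov}$ in general), one cannot bound $\|\xCov_i^{1/2}\eta\|$ by $\|\Fhess^{1/2}\eta\|$ term by term. The resolution is concavity of the square root: $\frac1N\sum_\iN\|\xCov_i^{1/2}\eta\| = \frac1N\sum_\iN\sqrt{\eta^\top\xCov_i\eta}\le \sqrt{\frac1N\sum_\iN\eta^\top\xCov_i\eta} = \|\Fhess^{1/2}\eta\|$ by Jensen, which after $R_i^2\le\max_j R_j^2$ yields exactly $\boundMultPrime=\omgCOne\sigma\max_i R_i^2/N$ and establishes \Cref{asu:main:bound_mult_noise_holder}. \Cref{asu:main:bound_mult_noise} follows in parallel by replacing the Hölder bound with the relative-variance bound \Cref{item:urvb} on $D^i$ and the same exact aggregation.

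For the linear case (part 2), the operators satisfy $\C^i(z)=\mathbf{M}^i z$ for a random matrix $\mathbf{M}^i$, so $D^i = \mathbf{M}^i\PdtScl{x^i}{\eta}x^i$ and $\xi^{\mathrm{mult}}(\eta)=\Xi_1\eta$ with $\Xi_1=\frac1N\sum_\iN(\xCov_i-\mathbf{M}^i x^i(x^i)^\top)$, giving \Cref{asu:main:bound_mult_noise_lin} (the bound $\boundMult\|\Fhess^{1/2}\eta\|^2$ following from $\E[\SqrdNrm{\mathbf{M}^i w}]\le\omgCTwo\SqrdNrm{w}$ and the exact quadratic aggregation). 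For \Cref{asu:main:baniac_lin}, I reuse the centralized per-client Loewner bounds $\mathfrak{C}(\C,p_{\xCov_i})\preccurlyeq\Sha_{\xCov_i}\xCov_i$ and $\E[(\xCov_i-\mathbf{M}^i x^i(x^i)^\top)(\xCov_i-\mathbf{M}^i x^i(x^i)^\top)^\top]\preccurlyeq R_i^2\Sha_{\xCov_i}\xCov_i$ from \Cref{cor:value_constants_in_thm}. By independence and centering, the cross terms in $\E[\Xi_1\Xi_1^\top]$ vanish, so both $\aniac$ and $\E[\Xi_1\Xi_1^\top]$ are averages of per-client p.s.d.\ bounds; dominating each scalar prefactor by its maximum over clients and using $\frac1N\sum_\iN\xCov_i=\overline{\xCov}=\Fhess$ gives $\aniac\preccurlyeq(\sigma^2\max_i\Sha_{\xCov_i}/N)\Fhess$ and $\E[\Xi_1\Xi_1^\top]\preccurlyeq(\max_i R_i^2\Sha_{\xCov_i}/N)\Fhess$, i.e.\ $\ShaA$ and $\ShaM$ as claimed. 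Applying \Cref{thm:bm2013_with_nonlinear_operator_compression} (resp.\ \Cref{thm:bm2013_with_linear_operator_compression}) then concludes.
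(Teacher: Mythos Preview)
Your proof is correct and follows essentially the same approach as the paper's: decompose the noise field into per-client pieces, exploit independence across clients so that second moments collapse to $\tfrac{1}{N^2}\sum_\iN$ of the centralized bounds, invoke the single-client results from \Cref{cor:value_constants_in_thm}/\Cref{lem:compressor} on each term, and aggregate. In particular, your use of Jensen (concavity of the square root) to pass from $\tfrac{1}{N}\sum_\iN \|\xCov_i^{1/2}\eta\|$ to $\|\overline{\xCov}^{1/2}\eta\|$ for the Hölder term is exactly the step the paper uses in the verification of \Cref{asu:main:bound_mult_noise_holder}.
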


The Hessian of the objective function is now $\overline{H}$, and \Cref{thm:bm2013_with_nonlinear_operator_compression,thm:bm2013_with_linear_operator_compression} still hold.  The proof consists in showing that with \Cref{lem:compressor}, \Cref{asu:main:second_moment_noise,asu:main:bound_add_noise,asu:main:bound_mult_noise_lin,asu:main:baniac_lin} on the resulting random field~$(\xi_k)_{k \in \N^*}$ are valid, with the constants given above.

In order to understand the impact of the compressor on the limit convergence rate, we establish a formula for $\aniac$ similar to \Cref{eq:aniac_to_C}.
In the setting of covariate-shift, we have for any clients $i,j\in\OneToN$, $\ws^i = \ws^j$, thus 
\begin{align*}
\xikstaru \overset{\text{def.}~\ref{def:add_mult_noise}}{=} \xi_k(0)  & \overset{\text{algo}~\ref{ex:dist_comp_LMS}}{=} \nabla F (\ws) -   \frac{1}{N} \sum_\iN \C_k^i(\g_k^i(\ws)) \\
&\overset{\text{eq.}~\ref{eq:def_oracle}}{=}   - \frac{1}{N} \sum_\iN \C_k^i((\PdtScl{x_k^i}{\ws} - y_k^i) x_k^i  ) \overset{\text{model}~\ref{model:fed} }{\underset{\text{ with }   \ws^i = \ws^j}{=}} \frac{1}{N} \sum_\iN \C_k^i(\varepsilon_k^i x_k^i)\,. 
\end{align*}
Next for all operators under consideration we have $\C^i_k(\varepsilon_k^i x_k^i) \overset{\text{a.s.}}{=} \varepsilon_k^i \C_k^i(x_k^i)$, thus, with $p_{\xCov_i}$ denoting the distribution of $ x_k^i $ with covariance $\xCov_i$, we have: 
\begin{align}
    \aniac &= \E\left[(\xikstaru)^{\otimes 2}\right] = \FullExpec{\bigpar{\frac{1}{N} \sum_\iN \C_k^i(\varepsilon_k^i x_k^i)}^{\kcarre}} \overset{\text{indep. of } (\C_k^i)_{i=1}^d}{=} \frac{1}{N^2} \sum_\iN \FullExpec{\C_k^i(\varepsilon_k^i x_k^i)^{\kcarre}} \notag \\ & = \frac{\sigma^2}{N^2} \sum_\iN \FullExpec{\C_k^i( x_k^i)^{\kcarre}}
    \overset{\mathrm{Def.~\ref{def:cov_of_compression}}}{=}\frac{\sigma^2}{N^2} \sum_\iN \compCov[i][k] \overset{\text{notation}}{=:} \frac{\sigma^2}{N} \overline{\mathfrak{C}((\C^i, p_{\epsiXCov_i})_\iN)} \,. \label{eq:aniac_to_C_dist}
\end{align} 
The operator $\overline{\mathfrak{C}((\C^i, p_{\epsiXCov_i})_\iN)}$ generalizes the notion of \emph{compressor's covariance} (\Cref{def:cov_of_compression}) to the case of multiple clients, and \Cref{eq:aniac_to_C_dist} corresponds to \Cref{eq:aniac_to_C}.

\begin{remark}[All clients use the same \textit{linear} compressor]
\label[remark]{rem:lin_FL_compcov}
If for all $i\in \OneToN$, $\C^i \overset{(d)}{=} \C$ and  $\C \in \{\Cpp, \Cspars,\Crandh, \Csketch\}$, leveraging \Cref{rem:composition_compressors}, we have
$$ \overline{\mathfrak{C}((\C^i, p_{\epsiXCov_i})_\iN)}  = \mathfrak{C}(\C, {\overline{H}})\,.$$ 
The analysis of \eqref{eq:LSA} on a single worker made in \Cref{sec:application_compressed_LSR} is still valid in this setting with now the Hessian of the problem being equal to the average of covariance $\overline{\xCov}$. \Cref{cor:fl_heter_cov} and \Cref{eq:aniac_to_C_dist} prove  that the case of covariate-shift is identical to the centralized setting with a variance  reduced by a factor $N$.
\end{remark}

\begin{remark}[Varying compressor/compression-level, or non-linear compression]
In most other cases, the computation of $ \frac{\sigma^2}{N} \overline{\mathfrak{C}((\C^i, p_{\epsiXCov_i})_\iN)} =\frac{\sigma^2}{N^2} \sum_\iN \compCov[i][i]  $ is possible  using the results of \Cref{subsec:impact_cov_on_additive_noise}
\end{remark}

Overall, in the covariate-shift case, most insights from the centralized case remain valid, especially, client sampling (i.e., PP) is the safest way to limit the impact of compression. Moreover, the trade-offs and ordering between compressors remain preserved, especially regimes in which quantization outperforms other competitors.

\subsection{Heterogeneous optimal point}
\label{subsec:fl_wstar}

Hereafter, we focus on the case of heterogeneous optimal points and consider  that all clients share the same covariance matrix, i.e. for any $i,j\in\OneToN$,  $\xCov_i = \xCov$, but potentially $\ws^i \neq \ws^j$. This can be seen as a case of \textit{concept-shift} \citep{kairouz_advances_2019}, and we also refer to the situation as \textit{optimal-point-shift}. This setting could eventually be combined with the covariate-shift case.  Similarly, \Cref{thm:bm2013_with_linear_operator_compression,thm:bm2013_with_nonlinear_operator_compression} on \eqref{eq:LSA} from \Cref{sec:theoretical_analysis} can be applied.

\begin{corollary}[\Cref{ex:dist_comp_LMS} with concept-shift]
\label[corollary]{cor:fl_heter_wstar}
Consider \Cref{ex:dist_comp_LMS} under \Cref{model:fed} with $\xCov_i = \xCov_j$ (and potentially $\ws^i \neq \ws^j$).
\begin{enumerate}[topsep=2pt,itemsep=0pt,leftmargin=0.5cm]
\item For a compressor $\C \in \{ \Cquant, \Cstabquant, \Crandh, \Cspars,$ $\Csketch, \Cpp \}$, 
\Cref{thm:bm2013_with_nonlinear_operator_compression} holds, with $\Fhess = \xCov$, $\Ftrace^2 = R^2$, $\boundAdd = \frac{R^2 (\omgC +1)}{N} (\kappa \Tr{H \Cov{ W_*}}  + \sigma^2 )$ with $W_* \sim \mathrm{Unif}(\{\ws^i, i\in \OneToN \})$, $\boundMult = (\omgC + 1)^2 / N$, and $\boundMultPrime = \omgCOne R^2 \sigma/ N$. 
    \item Moreover for any linear compressor $\C \in \{\Crandh, \Cspars,$ $\Csketch, \Cpp \}$,  \Cref{thm:bm2013_with_linear_operator_compression} holds,  with  the same constants and $\ShaA = \sigma^2 \Sha_{\xCov} / N$ and $\ShaM = R^2 \Sha_{\xCov} / N$, with $\Sha_{\xCov}$ given in \Cref{cor:value_constants_in_thm}.
\end{enumerate}
\end{corollary}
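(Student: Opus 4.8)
The plan is to verify that, under concept-shift, the random field $(\xi_k)$ arising from \Cref{ex:dist_comp_LMS} satisfies \Cref{asu:main:bound_add_noise,asu:main:second_moment_noise} (and, for linear compressors, \Cref{asu:main:bound_mult_noise_lin,asu:main:baniac_lin}) with the announced constants, following the template already used for the centralized case in \Cref{cor:value_constants_in_thm} and for covariate-shift in \Cref{cor:fl_heter_cov}. The decisive new ingredient is that the local oracle no longer vanishes in expectation at $\ws$. Writing $\delta^i \eqdef \ws - \ws^i$, a direct computation as in \Cref{eq:aniac_to_C_dist} gives, for $w=\ws+\eta$, $\g_k^i(\ws+\eta) = (x_k^i\otimes x_k^i)(\eta+\delta^i) - \varepsilon_k^i x_k^i$, so that $\g_k^i(\ws)=s_k^i\, x_k^i$ is a scalar multiple of $x_k^i$ with $s_k^i \eqdef \PdtScl{x_k^i}{\delta^i} - \varepsilon_k^i$. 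The scaling property $\C_k^i(s x)=s\,\C_k^i(x)$ (which holds a.s.\ for all operators of \Cref{def:operators_compression}, linear or quantization-based) then yields the clean expression $\xikstaru = -\tfrac1N\sum_\iN s_k^i\,\C_k^i(x_k^i)$, i.e.\ on top of the usual label-noise term it now carries a heterogeneity-bias term through $\PdtScl{x_k^i}{\delta^i}$. First I would record that, since $\xCov_i=\xCov$, the first-order condition $\nabla F(\ws)=0$ forces $\ws=\tfrac1N\sum_\iN \ws^i$, hence $\sum_\iN\delta^i=0$; together with the unbiasedness of each $\C_k^i$ this gives $\E[\xikstaru]=0$, as required by \Cref{def:class_of_algo}.

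The core of the argument is the bound on $\boundAdd=\E[\|\xikstaru\|^2]$. Expanding the square and using that $(\C_k^i)_\iN$ and $(x_k^i,\varepsilon_k^i)_\iN$ are independent across clients, the cross terms reduce to $\sum_{i\neq j}\PdtScl{\xCov\delta^i}{\xCov\delta^j}=\|\xCov\sum_\iN\delta^i\|^2-\sum_\iN\|\xCov\delta^i\|^2=-\sum_\iN\|\xCov\delta^i\|^2\le 0$ by the centering above, so they may be discarded. This leaves $\boundAdd\le\tfrac1{N^2}\sum_\iN\E[(s_k^i)^2\|\C_k^i(x_k^i)\|^2]$, and I would bound each summand by $(\omgC+1)\E[(s_k^i)^2\|x_k^i\|^2]$ using unbiasedness and \Cref{item:urvb}. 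Integrating over $\varepsilon_k^i$ (independent of $x_k^i$, centered, variance $\sigma^2$) splits $\E[(s_k^i)^2\|x_k^i\|^2]$ into a label part $\sigma^2 R^2$ and a heterogeneity part $\E[\PdtScl{x_k^i}{\delta^i}^2\|x_k^i\|^2]$, which the fourth-moment hypothesis of \Cref{model:fed} ($\E[\|x^i\|^2 x^i\otimes x^i]\preccurlyeq R^2\xCov$) controls by $R^2\PdtScl{\delta^i}{\xCov\delta^i}$, up to the kurtosis constant $\kappa$. Summing and identifying $\tfrac1N\sum_\iN\delta^i(\delta^i)^\top=\Cov{W_*}$ for $W_*\sim\mathrm{Unif}(\{\ws^i\})$ turns $\sum_\iN\PdtScl{\delta^i}{\xCov\delta^i}$ into $N\,\Tr{\xCov\Cov{W_*}}$, giving $\boundAdd$ of the announced form $\tfrac{R^2(\omgC+1)}{N}(\kappa\Tr{H\Cov{W_*}}+\sigma^2)$.

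It remains to treat the multiplicative part and, in the linear case, the covariance assumptions. Here I would exploit that the multiplicative field sees $\ws^i$ only through the fixed offset $\delta^i$ inside the oracle: $\xi_k^{\mathrm{mult}}(\eta)=\xCov\eta-\tfrac1N\sum_\iN[\C_k^i(\g_k^i(\ws)+(x_k^i\otimes x_k^i)\eta)-\C_k^i(\g_k^i(\ws))]$, whose increments $(x_k^i\otimes x_k^i)\eta$ and random-matrix form (for linear $\C$) are \emph{identical} to those of the homogeneous covariate-shift case. Consequently \Cref{asu:main:second_moment_noise}, and in the linear case \Cref{asu:main:bound_mult_noise_lin,asu:main:bound_cov_mult_noise_lin}, follow from \Cref{cor:value_constants_in_thm} specialized to $\xCov_i=\xCov$, with $\boundMult,\boundMultPrime$ and $\ShaM=R^2\Sha_\xCov/N$ as stated. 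For \Cref{asu:main:bound_cov_add_noise_lin} I would use the scaling identity once more to write $\aniac\preccurlyeq\tfrac1{N^2}\sum_\iN\E[(\PdtScl{x_k^i}{\delta^i}^2+\sigma^2)\,\C_k^i(x_k^i)^{\otimes 2}]$ and split it: the label part reproduces $\sigma^2\mathfrak{C}(\C,\xCov)\preccurlyeq\sigma^2\Sha_\xCov\xCov$ exactly as in \Cref{cor:value_constants_in_thm}, while the heterogeneity part is dominated, via $\PdtScl{x}{\delta^i}^2\le\|\delta^i\|^2\|x\|^2$ and the a.s.-bounded features of \Cref{rem:as_bounded_features}, by $\|\delta^i\|^2 R^2\,\mathfrak{C}(\C,\xCov)$, hence stays of the \emph{structured} form $\propto\xCov$; thus $\aniac\preccurlyeq\ShaA\xCov$ with $\ShaA$ retaining the dependence on $\Sha_\xCov$, and \Cref{thm:bm2013_with_linear_operator_compression} applies.

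I expect the main obstacle to be the bookkeeping of the heterogeneity-bias term: guaranteeing that it is centered (which hinges on $\ws=\tfrac1N\sum_\iN\ws^i$), that its cross-client contributions are non-positive so they can be dropped rather than bounded crudely, and---most delicately---that in the linear case its covariance contribution remains structured ($\preccurlyeq c\,\xCov$) rather than isotropic, so that \Cref{asu:main:bound_cov_add_noise_lin} survives. Isolating this bias cleanly from the label noise, so that the entire multiplicative-noise analysis can be imported unchanged from the homogeneous-covariance setting, is precisely what makes the constants collapse to those of \Cref{cor:fl_heter_cov} plus the single extra additive term $\kappa\Tr{H\Cov{W_*}}$.
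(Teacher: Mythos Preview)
Your approach is essentially the one the paper indicates: the proof sentence in the text says only that the corollary ``can be proved reusing computation made for \Cref{cor:fl_heter_cov} and using \Cref{prop:impact_clients_hetero}'', and your outline does precisely that---centering via $\sum_i\delta^i=0$, dropping the non-positive cross term, reducing the multiplicative part to the equal-covariance instance of \Cref{cor:fl_heter_cov}, and feeding the heterogeneity into $\boundAdd$ through the second moment of $\gwkstar^i$.

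One precision is worth flagging. The $\kappa$ in $\boundAdd$ does \emph{not} come from the fourth-moment clause of \Cref{model:fed} (that clause alone would give $\E[\langle x,\delta^i\rangle^2\|x\|^2]\le R^2\langle\delta^i,\xCov\delta^i\rangle$ with constant~$1$). The paper introduces the separate kurtosis \Cref{ass:kurtosis} and derives \Cref{prop:impact_clients_hetero}, $\tfrac1N\sum_i\Theta_i\preccurlyeq(\kappa\Tr{H\Cov{W_*}}+\sigma^2)H$, precisely to obtain a \emph{matrix} bound $\E[\langle x,\delta^i\rangle^2\,x^{\otimes2}]\preccurlyeq\kappa\langle\delta^i,H\delta^i\rangle H$ (via Cauchy--Schwarz from $\E[\langle z,x\rangle^4]\le\kappa\langle z,Hz\rangle^2$). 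This is what makes the structured-noise observation of \Cref{rem:structnoiseFL} go through and is also the natural route to $\ShaA$ in the linear case: $\aniac\preccurlyeq\tfrac1N\mathfrak{C}(\C,\tfrac1N\sum_i\Theta_i)\preccurlyeq\tfrac{\Sha_\xCov}{N}(\kappa\Tr{H\Cov{W_*}}+\sigma^2)H$. Your Cauchy--Schwarz step $\langle x,\delta^i\rangle^2\le\|\delta^i\|^2\|x\|^2$ works but loses this structure and yields a cruder constant. Relatedly, the claim that the multiplicative analysis is ``identical'' to the homogeneous case is slightly optimistic for $\boundMultPrime$: in the H\"older term the factor $\min(\|z\|,\|z'\|)\le\|\gwkstar^i\|$ now has $\E\|\gwkstar^i\|^2=\Tr{\Theta_i}$ rather than $\sigma^2R^2$, so the heterogeneity creeps in there too; the paper's one-line proof and its stated constant gloss over this, but if you write things out you will see it.
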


\Cref{cor:fl_heter_wstar} can be proved reusing computation made for \Cref{cor:fl_heter_cov} and using below \Cref{prop:impact_clients_hetero}. We next aim at computing the additive noise covariance. We note $\gwkstar^i = \g_k^i(\ws)$ the local stochastic gradient evaluated at optimal point $\ws$. We have, in \Cref{model:fed},  for any $w\in\R^d$, $F_i(w) := \E(\langle{x_k^i},{w - \ws^i}\rangle - x_k^i \varepsilon_k^i)^2 $,  thus $\nabla F(w) = \frac{1}{N} \sum_\iN \xCov (w - \ws^i) $, and $\ws = \sum_{i=1}^N \ws^i / N$. The setting of \Cref{def:class_of_algo} is verified with $ \Fhess = \xCov$, and for any $w\in\R^d$, that the random field $\xi_k$ can be computed as:
\begin{align*}
    \xi_k(w - \ws) &\overset{\text{Def. } \ref{def:class_of_algo} \& \mathrm{Alg.} \ref{ex:dist_comp_LMS}}{=}  \Fhess (w - \ws) - \frac{1}{N} \sum_\iN \C^i(\g_k^i(w)), \text{ thus }     \xikstaruk \overset{\text{Def. } \ref{def:add_mult_noise}}{=} - \frac{1}{N} \sum_\iN \C^i(\gwkstar^i),
\end{align*}
with $\gwkstar^i = (x_k^i \otimes x_k^i) (\ws - \ws^i) + x_k^i \varepsilon_k^i$. We  thus have, for  any $k\in \N$:
\begin{eqnarray}
 \aniac& = &  \E \left[(\xikstaruk )^{\kcarre} \right] \overset{\nabla F(\ws)=0}{=} \E\left[\left(\frac{1}{N} \sum_\iN \C^i(\gwkstar^i) - \nabla F_i(\ws) \right)^{\kcarre}\right] \nonumber \\
    &\overset{\forall i\neq j,~\C_k^i \perp \C_k^j}{\underset{\E \C^i_k(\gwkstar^i) = \nabla F_i(\ws)}{=}} &\frac{1}{N^2} \sum_\iN \E \left[\left(\C^i_k(\gwkstar^i) -  \nabla F_i(\ws)\right)^{\kcarre}\right] \nonumber \\
    &=& \frac{1}{N^2} \sum_\iN \left(\E [\C^i_k(\gwkstar^i)^{\kcarre}] - \nabla F_i(\ws)^{\kcarre}\right) \, \nonumber \\ 
    &=& \frac{\sigma^2}{N^2} \sum_\iN \compCov[i][][\Theta] - \ffrac{1}{N^2} H \sum_\iN (\ws -\ws^i)^\kcarre H  \preccurlyeq    \frac{\sigma^2}{N}\overline{\mathfrak{C}((\C^i, p_{\Theta_i})_\iN)} \,, \nonumber 
\end{eqnarray}
where $p_{\Theta_i}$ is the distribution of $\gwkstar^i$ (for any $k$). In the last inequality, we simply discarded the non-positive  term $- H \sum_\iN (\ws -\ws^i)^\kcarre H$. For linear compressors,  by \Cref{prop:covariance_formula}, $\aniac$ is a linear function of $\frac{1}{N}\sum_\iN \Theta_i$---the averaged second-order moment of the local gradients $(\gwkstar^i)_\iN$. In order to bound this quantity, following~\cite{dieuleveut_harder_2017}, we make the following assumption.

\begin{assumption} \label{ass:kurtosis}
The \textit{kurtosis}
for the projection of the covariates $x^i_1$ (or equivalently $x^i_k$ for any $k$)  is bounded on any direction $z \in \R^d$, i.e., there exists $\kappa>0$, such that:
$$
\forall i \in \OneToN, \ \forall z \in \R^d, \quad \mathbb{E}\left[\left\langle z, x^i_1\right\rangle^4 \right]\leq \kappa\langle z, H z\rangle^2
$$
\end{assumption}
For instance, it is verified for Gaussian vectors with $\kappa=3$.
By Cauchy-Schwarz inequality, it implies that $\mathbb{E}[\left\langle z, x^i_1\right\rangle^2 \ (x^i_1)^\kcarre ] \preccurlyeq \kappa \langle z, H z\rangle H$ for all $z \in \R^d$. We obtain the following proposition.

\begin{proposition}[Impact of client-heterogeneity]
\label[proposition]{prop:impact_clients_hetero}
Let $W_*$ be a  random variable uniformly distributed over~{$\{\ws^i, i\in \OneToN \}$}, thus such that, $\Cov{W_*}= \frac{1}{N} \sum_\iN(\ws - \ws^i)^\kcarre$, then:
$$ \frac{1}{N} \sum_\iN  \Theta_i \preccurlyeq  ( \kappa \Tr{H \Cov{ W_*}}  + \sigma^2 )\  H\,. $$ 
\end{proposition}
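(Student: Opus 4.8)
The plan is to compute the second-order moment $\Theta_i = \FullExpec{(\gwkstar^i)^{\kcarre}}$ of each local gradient evaluated at the global optimum in closed form, bound it in the Loewner order by a multiple of $H$ using \Cref{ass:kurtosis}, and then average over the clients. Throughout I would use that in the concept-shift setting all clients share the same covariance, so $\E[(x_k^i)^{\kcarre}] = H$ for every $i$.

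First I would substitute the explicit expression $\gwkstar^i = (x_k^i \otimes x_k^i)(\ws - \ws^i) + x_k^i \varepsilon_k^i = \bigpar{\PdtScl{x_k^i}{\ws - \ws^i} + \varepsilon_k^i}\, x_k^i$ into $\Theta_i$ and condition on $x_k^i$. Since $\varepsilon_k^i$ is centered, of variance $\sigma^2$, and independent of $x_k^i$, the cross term vanishes and $\Expec{\bigpar{\PdtScl{x_k^i}{\ws-\ws^i} + \varepsilon_k^i}^2}{x_k^i} = \PdtScl{x_k^i}{\ws-\ws^i}^2 + \sigma^2$. This yields the decomposition
\begin{equation*}
\Theta_i = \FullExpec{\PdtScl{x_k^i}{\ws - \ws^i}^2 (x_k^i)^{\kcarre}} + \sigma^2 H \,,
\end{equation*}
which cleanly separates a ``signal'' term driven by the optimal-point shift $\ws - \ws^i$ from the intrinsic noise term $\sigma^2 H$.

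Next I would control the signal term with the consequence of \Cref{ass:kurtosis} recalled just above the proposition, namely $\FullExpec{\PdtScl{z}{x_1^i}^2 (x_1^i)^{\kcarre}} \preccurlyeq \kappa \PdtScl{z}{Hz}\, H$, applied with $z = \ws - \ws^i$. This gives, for each $i$, the bound $\Theta_i \preccurlyeq \bigpar{\kappa \PdtScl{\ws - \ws^i}{H(\ws - \ws^i)} + \sigma^2}\, H$.

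Finally, averaging this Loewner inequality over $i \in \OneToN$ (the order $\preccurlyeq$ being preserved under nonnegative combinations) and recognizing, via $\PdtScl{\delta}{H\delta} = \Tr{H \delta^{\kcarre}}$ together with $\Cov{W_*} = \frac{1}{N}\sum_\iN (\ws - \ws^i)^{\kcarre}$ (which uses $\E[W_*] = \frac1N\sum_\iN \ws^i = \ws$), that $\frac{1}{N}\sum_\iN \PdtScl{\ws - \ws^i}{H(\ws-\ws^i)} = \Tr{H \Cov{W_*}}$, would deliver the claimed bound. There is no genuine obstacle here: the only points requiring care are the vanishing of the noise cross term (immediate from centering) and checking that the scalar kurtosis hypothesis upgrades to the stated matrix inequality through Cauchy--Schwarz, which is precisely the implication already established in the text.
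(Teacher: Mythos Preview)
Your proposal is correct and follows essentially the same approach as the paper: decompose $\Theta_i$ into the signal term $\FullExpec{\PdtScl{x_k^i}{\ws-\ws^i}^2 (x_k^i)^{\kcarre}}$ and the noise term $\sigma^2 H$ using independence of $\varepsilon_k^i$ and $x_k^i$, bound the signal term via the matrix consequence of \Cref{ass:kurtosis}, then average over clients and rewrite $\PdtScl{\ws-\ws^i}{H(\ws-\ws^i)}=\Tr{H(\ws-\ws^i)^{\kcarre}}$ to recover $\Tr{H\Cov{W_*}}$. The paper's proof is simply a terser version of the same argument.
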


\begin{proof}
We have:
\begin{align*}
    \Theta_i &= \E[((x_k^i \otimes x_k^i) (\ws - \ws^i) + x_k^i \varepsilon_k^i)^\kcarre] \overset{(\varepsilon_k^i)\perp (x_k^i)}{=} \E[(x_k^i \otimes x_k^i) (\ws - \ws^i)^\kcarre  (x_k^i \otimes x_k^i) ]+ \sigma^2 H \nonumber
   \\ 
   &\overset{\text{Ass. }\ref{ass:kurtosis}}{\preccurlyeq} \kappa \PdtScl{\ws - \ws^i}{H(\ws - \ws^i)} H + \sigma^2 H = \kappa \Tr{H (\ws - \ws^i)^\kcarre} H + \sigma^2 H \,.
\end{align*}
\end{proof}
In words, we have the following two main observations.
\begin{remark}[\textbf{Structured noise before compression}]
\label[remark]{rem:structnoiseFL}
Before compression is possibly applied, the noise remains \textit{structured}, i.e., with covariance proportional to $H$, in the case of concept-shift. As a consequence, the rate for un-compressed \Cref{eq:LSA} will remain independent of the smallest eigenvalue of $H$. This remark extends to the case where $\Cpp$ is applied.
\end{remark}

\begin{remark}[\textbf{Heterogeneous vs homogeneous case.}] Compared to the homogeneous case, in which $\Theta_i=\sigma^2 H_i$ and $\aniac = \frac{\sigma^2}{N}\overline{\mathfrak{C}((\C^i, p_{H_i})_\iN)}$, the averaged second-order moment increases from $\sigma^2 \xCov$ to $(\kappa \Tr{H \Cov{ W_*}}  + \sigma^2 )\xCov$, showing the impact of the dispersion of the  optimal points $(\ws^i)_\iN$. This corresponds to the typical variance increase in the compressed heterogeneous SGD case~\citep{mishchenko_distributed_2019,philippenko_artemis_2020}.
\end{remark}

Concept-shift thus hinders the limit convergence rate.  To limit this effect, a solution is to introduce a control-variate term $(h^i_k)_{k\in \N^*, i\in \OneToN}$, that is subtracted to the gradient before compression and asymptotically approximate $\nabla F_i(\ws)$ for any $i\in \OneToN$ \citep[see][]{mishchenko_distributed_2019}. We explore this direction in \Cref{app:subsec:fl_wstar}.

\subsection{Numerical experiments}

We support the theoretical results from \Cref{subsec:fl_sigma,subsec:fl_wstar} by performing experiments in the FL framework that extend the ones from  \Cref{sec:application_compressed_LSR}.

On figures \Cref{fig:distributed_sgd}, we present the results of the excess loss of the Polyak-Ruppert iterate $F(\overline{w}_k) - F(w_*)$ versus the number of iterations in log/log scale. The experiments were run 5 times, each time with different datasets (dispersion is shown by shaded area). 

\textbf{Settings.} (a) \textit{Synthetic dataset generation:} The dataset is generated using \Cref{model:fed} with $N=10$, $K=10^6$ on each client, $\sigma^2 / {N} = 1$. For any clients $i$ in $\OneToN$, the covariance matrix is $\xCov_i = Q_i D_i Q_i^T$, where $Q_i$ is an orthogonal matrix.
For heterogeneous clients, the dataset generation is as follows.
 \textit{Covariate shift:} The rotation matrix $Q_i$ is sampled independently for each client and the diagonal matrix $D_i$  is $\Diag{(1/j^{\beta_i})_{j=1}^d}$ where $\beta_i \sim \mathrm{Unif}(\{3,4,5,6\})$. \textit{Concept-shift:} The optimal models of the clients $i \in \OneToN$ were drawn from a zero-centered normal distribution with a variance of $100\Id_d$, that is, $\ws^i \sim \mathcal{N}(0,  100 \Id_d)$. We also take for all client $i$ in $\OneToN$, $\xCov_i = Q D Q^T$, with $D = \Diag{(1/j})_{j=1}^d$.
(b)~\textit{Real-dataset and {covariate-shift}:} To simulate non-i.i.d.~clients, we split the dataset in heterogeneous
groups (with equal number of points) using a $K$-nearest neighbors clustering on the TSNE
representations \citep[defined by][]{maaten_visualizing_2008}. Thus,
the marginal feature distribution 
significantly varies between clients, providing a covariate-shift, while keeping the same distribution for the output conditional to the features on all clients.
(c)~\textit{\Cref{ex:dist_comp_LMS}:}
We take a constant step-size $\gamma = 1/ (2 (\omgC + 1) R^2)$ with $R^2 = \Tr{\xCov}$ and  $w_0 = 0$ as initial point. We set the batch-size $b=1$ for synthetic datasets and $b=16$ for real datasets, the compressor variance is $\omgC = 10$.
(d) \textit{\Cref{ex:dist_comp_LMS}} vs \textit{\Cref{ex:dist_comp_LMS_artemis}}: We take a bigger constant step-size $\gamma = (2 R^2)^{-1}$ in order to emphasize the difference between the case w./w.o.~control variate, we set $w_0 = 0$ as initial point and the compressor variance is $\omgC = 10$. We set the batch-size $b=32$ for \Cref{fig:dist_wstar_artemis} and $b=K$ for \Cref{fig:dist_wstar_artemis_full}. 

\begin{figure}
\label{fig:fl}
\resizebox{\linewidth}{!}{
\begin{tabular}{ccc}
\toprule
\multicolumn{2}{c}{Covariate-shift} & Concept-shift \\
\cmidrule(lr){1-2}\cmidrule(l){3-3}
 Synthetic dataset & Real datasets & Synthetic dataset \\
 \cmidrule(l){1-1}\cmidrule(l){2-2}\cmidrule(l){3-3}
 
\begin{subfigure}{0.32\linewidth}
\includegraphics[width=\linewidth]{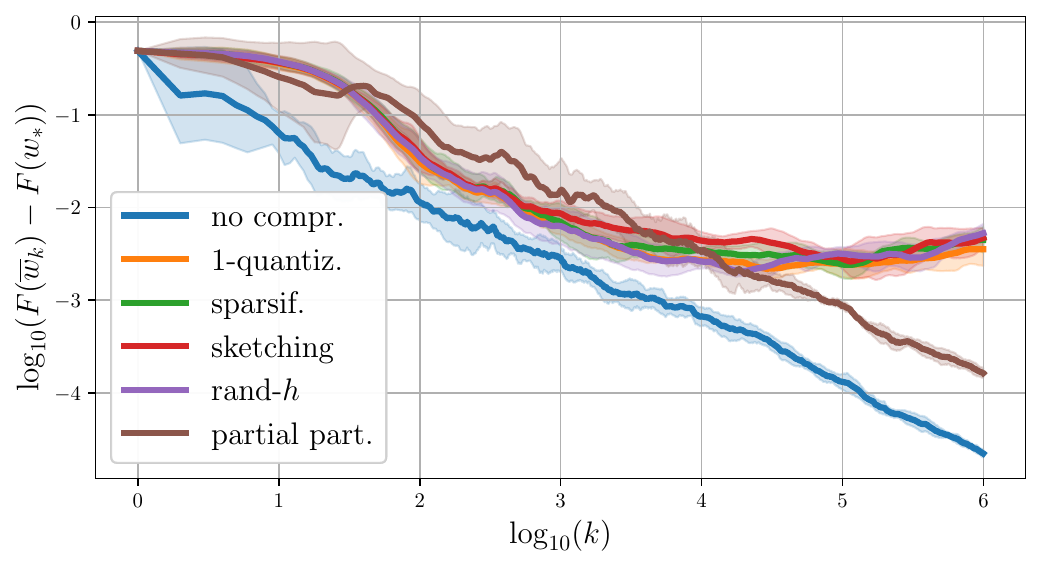}
\caption{No shift: $\forall i \in \OneToN, \xCov_i = \xCov$ \label{fig:dist_homog}}
\end{subfigure}
&\begin{subfigure}{0.32\linewidth}
\includegraphics[width=\linewidth]{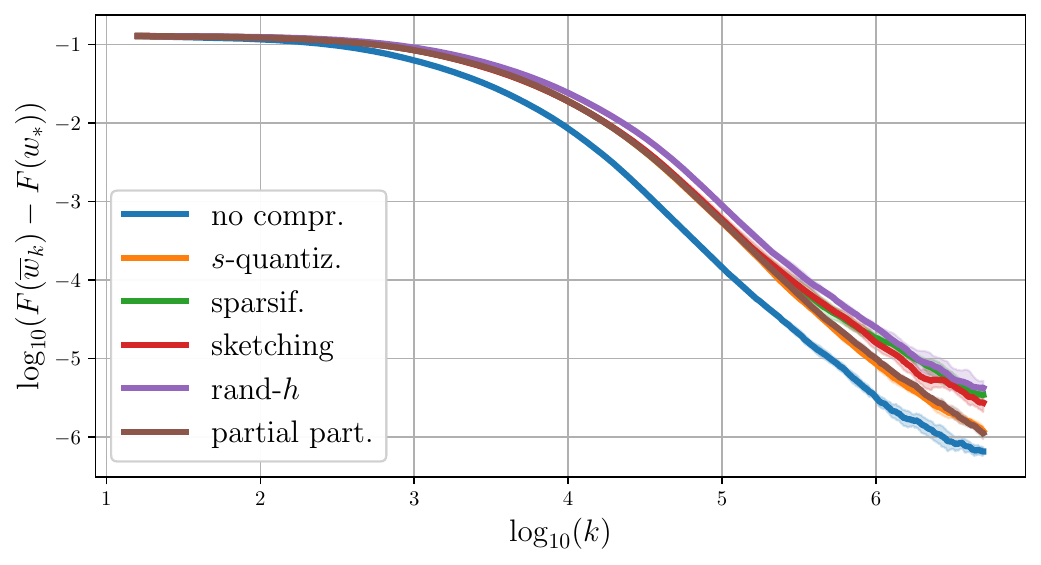}
\caption{\texttt{quantum}\label{fig:dist_quantum}}
\end{subfigure}
&
\begin{subfigure}{0.32\linewidth}
\includegraphics[width=\linewidth]{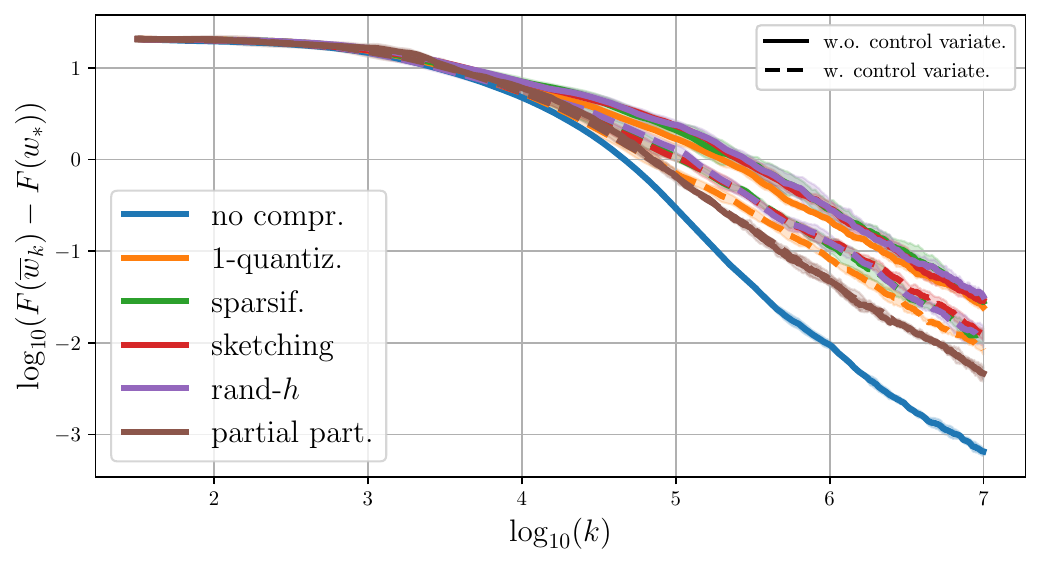}
\caption{Batch stochastic gradient 
\label{fig:dist_wstar_artemis}}
\end{subfigure}
\\
\begin{subfigure}{0.32\linewidth}
\includegraphics[width=\linewidth]{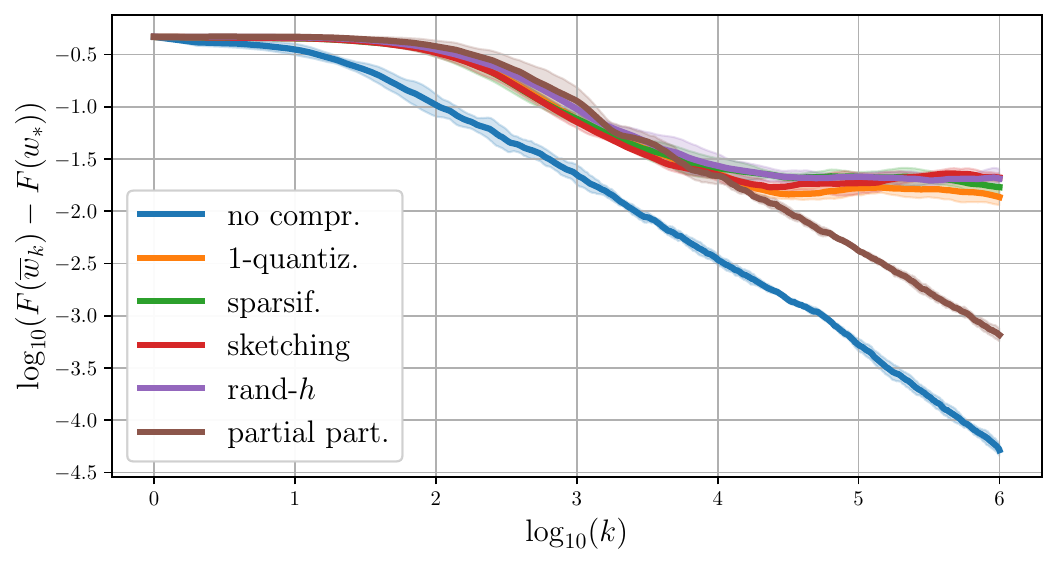}
\caption{$\forall i,j \in \OneToN, \xCov_i {\neq} \xCov_j$ \label{fig:dist_sigma}}
\end{subfigure}
&
\begin{subfigure}{0.32\linewidth}
\includegraphics[width=\linewidth]{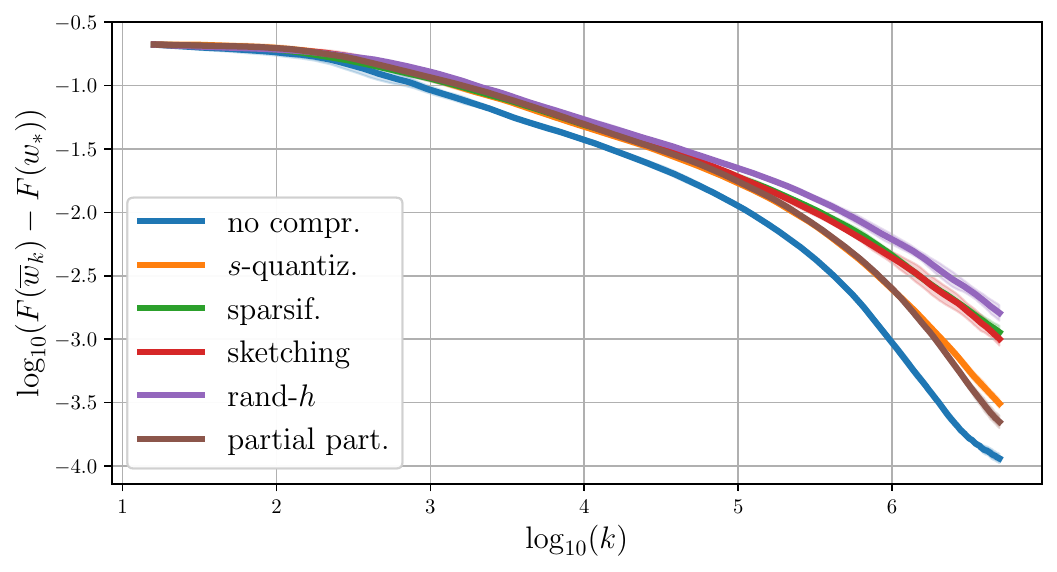}
\caption{\texttt{cifar-10} \label{fig:dist_cifar10}}
\end{subfigure}
&
\begin{subfigure}{0.32\linewidth}
\includegraphics[width=\linewidth]{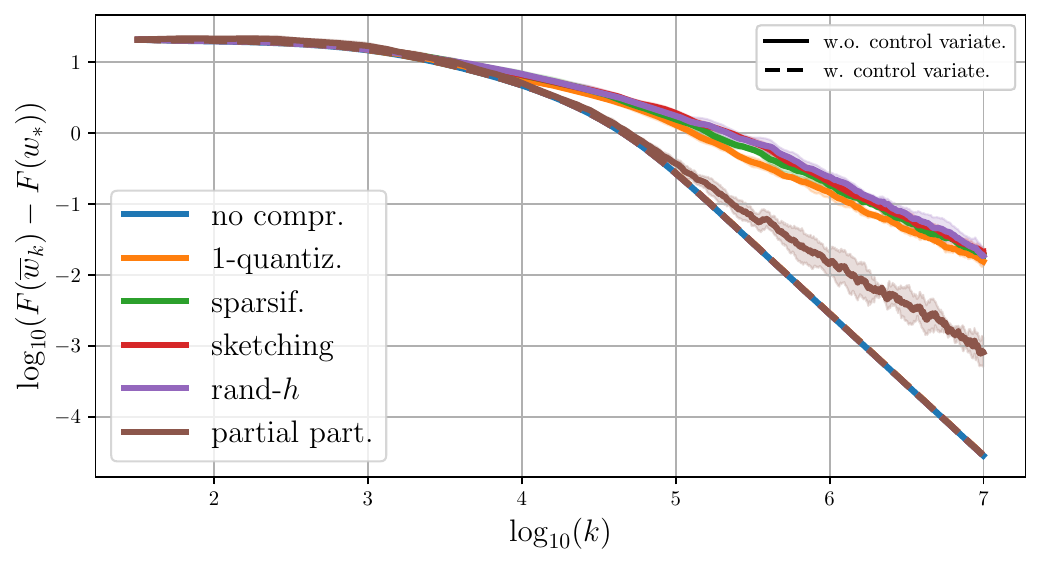}
\caption{True gradient $\g_k^i= \nabla F_i$
\label{fig:dist_wstar_artemis_full}}
\end{subfigure}
\\
\bottomrule
\end{tabular}}
\caption{Logarithm excess loss of the Polyak-Ruppert iterate iterations for $N=10$ clients.   }
\label{fig:distributed_sgd}
\end{figure}

\textbf{Interpretation---homogeneous case and covariate-shift case} (\Cref{fig:dist_homog,fig:dist_sigma,fig:dist_quantum,fig:dist_cifar10}). These experiments extend those presented in \Cref{subsec:numerical_experiments} in the case of a single client. The observations made in 
the centralized case (\Cref{fig:sgd}), especially on the impact of the compressor choice on the convergence and the ordering between limit convergence rates remain valid. This illustrates  \Cref{cor:fl_heter_cov} and  \Cref{rem:lin_FL_compcov}:~\Cref{thm:bm2013_with_linear_operator_compression,thm:bm2013_with_nonlinear_operator_compression} hold in the case of homogeneous client or in the case of heterogeneous covariance and the only compressor that ensures that the noise is structured is client sampling (partial participation). On the real datasets, quantization is also competitive.

\textbf{Interpretation---concept-shift case} (\Cref{fig:dist_wstar_artemis,fig:dist_wstar_artemis_full}). These experiments extend those presented on \Cref{fig:sgd_ortho_p1} (slow eigenvalues' decay with $\mu = 10^{-2}$) to the scenario of concept shift. First, we observe on \Cref{fig:dist_wstar_artemis} 
that for all compressors the convergence rate remains in $O(1/K)$, (though vanilla SGD converges faster during the first iterations). Second, we observe that control-variates improve convergence for compressors inducing un-structured noise ; this is predicted by theory, see \Cref{thm:asymptotic_normality_artemis}.
Third, on \Cref{fig:dist_wstar_artemis_full}, at each iteration $k\in\OneToK$, we use deterministic gradients $g_k^i  =\nabla F_i $ which leads to having a.s. $\xi_k^\mathrm{add} = 0$, and in the absence of compression, we obtain a  $O(1/K^2)$ convergence rate for $\overline w_K$ which corresponds in \Cref{thm:bm2013_with_nonlinear_operator_compression} to the case where the dependency on the initial condition is dominated by $\frac{\|\Fhess^{-1/2}\eta_0 \|^2}{\gamma^2 K^2}$. Overall, these experiments illustrate and support our theoretical insights.

\section{Conclusion and open directions}
\label{sec:conclusion}
\textbf{Conclusion.} In short, we investigate the impact of the choice of compression scheme on the convergence of the Polyak-Ruppert averaged iterate. By analysing the  case of  compressed least-squares regression, we shed light on the interplay between the Hessian of the optimization problem $\Fhess$, the features' distribution,  the additive noise's covariance $\aniac$, and the compression scheme. This shows fundamental differences between compression that deemed equivalent under the classical worst-case-variance assumption. We extend our analysis to the case of heterogeneous federated learning, a setting in which compression is widely used and its impact not fully understood.

More precisely, first, the analysis of the generic stochastic approximation algorithm \eqref{eq:LSA} provides (1) the fact that projection based compressions achieve a faster convergence rate than quantization based, and that yet,  their asymptotic rate is similar; (2) the analysis of quantization-based compression  requires introducing a new Hölder-type regularity assumption for the analysis of the stochastic approximation  scheme, and showing that such an assumption is satisfied for quantization. 

Second, the computation of the additive noise's covariance $\aniac$ reveals the impact of the compression scheme and the data distribution on the asymptotically dominant term. We obtain that (1) partial participation (i.e., client sampling in the federated case) is the only method that systematically ensures a convergence without a dependency on the strong-convexity constant; (2) other compressors may all induce an un-structured noise, with covariance scaling with $\Id$ or $\sqrt{H}$, that strongly hinders convergence by accumulating noise on low curvature directions; (3) the relative performance or various schemes changes depending on the pre-processing applied to the data, making quantization the best method (apart from PP) when standardization is applied, but one of the worst (with random Gaussian projection) when the features are independent and the eigenvalues of the covariance decay rapidly (4) in that particular last setting, all projection based methods (but Gaussian projection) behave similarly.

Third, we discuss how these results  apply to the federated case, that  corresponds to the initial motivation. We show that we encompass two particular heterogeneity situations and how  our analysis applies.
Overall, these results are a step towards a better understanding of the impact of a widely  used tool.

\textbf{Open directions.} This analysis can be extended to include various aspects that are beyond the scope of this work. First, one natural improvement for application in FL would be to consider also the scenario where each client runs several \textit{local iterations} \citep{mcmahan_communication-efficient_2017,karimireddy_scaffold_2020} before sending their updates, reducing further the cost of communication. 
Similar approach can be used, although the additive noise field would be more complicated, which potentially implies a different additive noise's covariance. 
Second, as mentioned in \Cref{subsec:ccl}, our analysis could also be extended to the case of stochastic approximation with ridge regularization \citep[e.g., following ][]{dieuleveut_harder_2017} which in practice is helpful to mitigate the impact of the lack of strong convexity. 
Third, an obvious direction is to extend  beyond quadratic functions and considering other objective functions, such as logistic regression or even shallow neural networks. Several results in the literature can be leveraged to tackle non quadratic but self-concordant losses~\citet{bach_self-concordant_2010,gadat_optimal_2023}.
Fourth, our analysis still only relies on second moments (variance and covariance) of the stochastic field. One major drawback of partial participation is to induce a significant increase on  higher order moments. Incorporating higher order bounds may also bring novel insights to the use of compression in FL.
Finally, all our analysis is made in finite dimension and our asymptotic focuses  on $K\to \infty$: further works should analyze the case of infinite dimension: within the reproducing kernel Hilbert space \citep{dieuleveut_nonparametric_2016} framework or within the overparametrized setting \citep{belkin_reconciling_2019}. 
\section{Acknowledgements}
{
We would like to thank Richard Vidal, Laeticia Kameni from Accenture Labs (Sophia Antipolis, France), and Eric Moulines from École Polytechnique for insightful discussions. We also thank Baptiste Goujaud for his great help in computing the covariance of sketching, and JM for her help on \Cref{fig:chart_summary_paper}. This research was supported by the \emph{SCAI: Statistics and Computation for AI} ANR Chair of research and teaching in artificial intelligence, by \textit{Hi!Paris}, and by \emph{Accenture Labs} (Sophia Antipolis, France).

Constantin Philippenko thanks his sister Ania, whose covariance has greatly helped to write the theorems.

} 
\bibliography{main}

\newpage	
\onecolumn
\appendix

\begin{center}
	{\Large{\bf Supplementary material}}
	\end{center}
	
\renewcommand{\theequation}{S\arabic{equation}}
	\renewcommand{\thefigure}{S\arabic{figure}}
	\renewcommand{\thetheorem}{S\arabic{theorem}}
	\renewcommand{\thelemma}{S\arabic{lemma}}
	\renewcommand{\theproposition}{S\arabic{proposition}}
	\renewcommand{\thecorollary}{S\arabic{corollary}}
 \renewcommand{\thedefinition}{S\arabic{definition}}
 \renewcommand{\theproperty}{S\arabic{property}}
 \renewcommand{\theremark}{S\arabic{remark}}
	\renewcommand{\thetable}{S\arabic{table}}
	
	In this appendix, we provide additional information to supplement our work.
	In \Cref{app:sec:technical_results}, we begin by detailing technical results, by introducing an auxiliary lemma and by proving \Cref{prop:asymptotic_normality} which gives a CLT for \eqref{eq:LSA}. 
	Secondly, in respectively \Cref{app:sec:nonlinear_bm} and \Cref{app:sec:linear_bm}, we give the proof of \Cref{thm:bm2013_with_nonlinear_operator_compression,thm:bm2013_with_linear_operator_compression}. 
Thirdly, in \Cref{app:subsec:validity_asu_random_fields}, we verify that the setting of single-client compressed LSR fulfills the setting presented in \Cref{sec:theoretical_analysis}. In \Cref{app:sec:covariance} we prove that \Cref{lem:compressor} hold and compute the compressors' covariance to establish \Cref{prop:covariance_formula,prop:covariance_formula_diagonal_case}. Finally, in \Cref{app:sec:bm_fl}, we provide demonstrations for the federated learning case, including verifying assumptions (covariate-shift scenario) on random fields in \Cref{app:subsec:validity_asu_random_fields_fl_sigma}, and proving a Central Limit \Cref{thm:asymptotic_normality_artemis} in \Cref{app:subsec:fl_wstar} for the concept-shift scenario. 

\paragraph{Additional notations.}
We use the Frobenius norm $\| A \|^2 := \Tr{A^\top A}$, which is the same notation as the vector Euclidean norm (no ambiguity in general), $J_r$ to denote the $d\times d$ diagonal matrix whose $r$ first diagonal elements are equal to one and all the other matrix's coefficients equal to zero,~$\mathcal{S}^{++}_d(\R)$ the cone of positive definite symmetric matrices, and $\mathcal{L}^p(\Omega, \mathcal{A}, \mathbb{P})$ the set of
random vectors defined on the probability space $(\Omega, \mathcal{A}, \mathbb{P})$ such that $\E[\|X\|^p] < \infty$. We define also the operator norm $\vertiii{A} := \sqrt{\max \eig{A^\top A}}$.

	\hypersetup{linkcolor = black}
	\setlength\cftparskip{2pt}
	\setlength\cftbeforesecskip{2pt}
	\setlength\cftaftertoctitleskip{3pt}
	\addtocontents{toc}{\protect\setcounter{tocdepth}{2}}
	\setcounter{tocdepth}{1}
	\tableofcontents
	
\addtocontents{toc}{
    \protect\thispagestyle{empty}} 
    \thispagestyle{empty} 
    
    \hypersetup{linkcolor=blue}

\setlength{\parindent}{0pt}

\section{Technical results}
\label{app:sec:technical_results}

\subsection{Settings of experiments}
\label{app:subsec:settings_xp}

In \Cref{tab:settings_exp,tab:settings_exp_real_dataset}, we summarize the settings of experiments presented in \Cref{subsec:numerical_experiments}. 

\begin{table}[h!]
\caption{Settings of experiments for a single client ($N=1$) on synthetic data (\Cref{fig:sgd_diag,fig:sgd_ortho}).}
\label{tab:settings_exp}
\centering
\begin{tabular}{lccccccccc}
\hline 
\addlinespace[1ex]
Parameter & $K$ & $d$ & $ \eig{\xCov}_{i}$ & $\ws$ & $\sigma^2$ & $\omgC$ & $\gamma^{-1}$ & $w_0$ &\#\textit{runs} \\
\addlinespace[1ex]
Values & $10^7$ & $100$ & $1 / i^4$ & $(1)_{i=1}^d$ & $1$ & $10$ & $2(\omgC + 1) R^2$ & 0 & 5 \\
\addlinespace[1ex]
\hline 
\end{tabular}
\end{table}

\begin{table}[h!]
\caption{Settings of experiments for a single client ($N=1$) on real data (\Cref{fig:sgd_cifar10,fig:sgd_quantum}).}
\label{tab:settings_exp_real_dataset}
\centering
\begin{tabular}{lccccccccc}
\hline 
\addlinespace[1ex]
Dataset & $d$ & standardization & $b$ & $\omgC$ & $\gamma^{-1}$ & $w_0$ &\#\textit{runs} & reference\\
\addlinespace[1ex]
\texttt{quantum} & $65$ & \multirow{2}{*}{\cmark} & \multirow{2}{*}{$16$} & \multirow{2}{*}{$1$} & \multirow{2}{*}{$2(\omgC + 1) R^2$} & \multirow{2}{*}{0} & \multirow{2}{*}{5} & \citepalias{caruana_kdd-cup_2004}\\
\texttt{cifar-10} & $256$ & &  &  & & & & \citepalias{krizhevsky_learning_2009}\\
\addlinespace[1ex]
\hline 
\end{tabular}
\end{table}

\subsection{Useful identities and inequalities}
\label{sect:useful_identies}

In this Subsection, we recall some classical inequalities and results.

\begin{inequality}
\label{app:lem:two_inequalities}
Let $N \in \N$ and $d \in \N$. For any sequence of vector $(a_i)_{i=1}^N \in \mathbb{R}^d$, we have the following inequalities:
\[
\lnrm \sum_{i=1}^N a_i \rnrm^2 \leq \left( \sum_{i=1}^N \lnrm a_i \rnrm \right)^2 \leq N \sum_{i=1}^N \lnrm a_i \rnrm^2 \,.
\]
\end{inequality}
The first part of the inequality corresponds to the triangular inequality, while the second part is Cauchy's inequality.

\begin{inequality}
\label{app:lem:ineq_subordinate_norm}
Let $x$ in $\R^d$ and $A$ in $\mathcal{M}_{d,d}(\R)$, then we have $\| A x \| \leq \vertiii{A} \| x\| $.
\end{inequality}

Below, we recall Minkowski's and Jensen's  inequalities. Additionally, we recall the Cauchy-Schwarz inequality for conditional expectations.

Let a probability space $(\Omega, \mathcal{A},\mathbb{P})$ with $\Omega$ a sample space, $\mathcal{A}$ a $\sigma$-algebra, and $\mathbb{P}$ a probability measure. 

\paragraph{Minkowski's inequality.}
Let $p>1$ and suppose that $X, Y$ are two random variables in $\mathcal{L}^p(\Omega, \mathcal{A},\mathbb{P})$ (i.e. their $p^{\mathrm{th}}$ moment is bounded), we have the following triangular inequality:
\begin{align}
\label{app:eq:minkowski}
\E[\| X + Y \|^p]^{1/p} \leq \E[\|X \|^p]^{1/p} + \E[\| Y \|^p]^{1/p}\,.
\end{align}

\paragraph{Jensen's inequality.}
Suppose that $X: \Omega \longrightarrow \R^d$ is a random variable, then for any convex function $f: \R^d \longrightarrow \R$ we have:
\begin{align}
\label{app:basic_ineq:jensen}
f\bigpar{\E(X)} \leq \E f(X) \,.
\end{align}

\paragraph{Cauchy-Schwarz's inequality for conditional expectations.}
Suppose that $X, Y$ are two random variables in $\mathcal{L}^2(\Omega, \mathcal{A},\mathbb{P})$ (i.e. their second moment is bounded), then for any $\sigma$-algebra $\mathcal{F} \subset  \mathcal{A}$ we have a.s.:
\begin{align}
\label{app:basic_ineq:cauchy_schwarz_cond}
\Expec{XY}{\mathcal{F}}^2 \leq \Expec{X^2}{\mathcal{F}} \Expec{Y^2}{\mathcal{F}}\,.
\end{align}
 
\paragraph{Convergence in $L^p$-norm.} Suppose that $(X_n)_{n \in \N}$ is a sequence of random variables in $\mathcal{L}^p(\Omega, \mathcal{A},\mathbb{P})$, and that $X$ is a random variable in $\mathcal{L}^p(\Omega, \mathcal{A},\mathbb{P})$. We say that $(X_n)_{n \in \N}$ converges in $L^p$-norm towards $X$ if $\E( \|X_n - X\|^p ) \xrightarrow[n \to +\infty]{} 0 $, it is denoted by: $
X_n\xrightarrow[n \to +\infty]{L^p} X \,.$

In \Cref{subsec:impact_cov_on_additive_noise}, we use ellipses to visual quadratic functions, therefore we provide in \Cref{def:covariance_ellipse} the mathematical definition.
\begin{definition}[Representing positive matrices through ellipsoids]
\label[definition]{def:covariance_ellipse}
Any symmetric positive definite matrix $M$ in $\mathcal{S}_d^{++}(\R)$ defines an ellipsoid
$\mathcal{E}_M =\{ x \in \R^d, x^\top M^{-1} x = 1 \}$
centered around zero. The eigenvectors of $M$ are the principal axes of the ellipsoid, and the squared root of the eigenvalues are the half-lengths of these axes. 
The ellipse corresponds to the sphere of radius 1 associated with the norm $N_{M^{-1}} = \sqrt{x^\top M^{-1} x}$.
\end{definition}

\subsection{An auxiliary inequality}

In this Section, we provide an auxiliary lemma that is specific to the framework considered in \Cref{sec:theoretical_analysis}. It will be used in the proof of \Cref{thm:bm2013_with_nonlinear_operator_compression} and corresponds to an adaptation of Lemma 1 from \cite{bach_non-strongly-convex_2013}. 

\fbox{
\begin{minipage}{0.97\textwidth}
\begin{lemma}[Auxiliary inequality on $ \sum_{k=1}^K \fullexpec{\| \Fhess^{1/2} \eta_k \|^2}/K$]
\label[lemma]{app:lem:from_bm2013}
Under \Cref{asu:main:bound_add_noise,asu:main:bound_mult_noise}, for any $K$ in $\N^*$ and any step-size $\gamma \in \R^+$ s.t. $\gamma ( \Ftrace^2 + 2 \boundMult) \leq 1$, the sequence $(w_k)_{k\in\N^*}$ produced by a setting such as in \Cref{def:class_of_algo}, verifies the following bound:
\begin{align*}
\frac{1}{K} \sum_{k=0}^{K-1}   \fullexpec{ \sqrdnrm{\Fhess^{1/2} (w_k - \ws)}} \leq \ffrac{\SqrdNrm{\eta_0}}{2\gamma K (1 - \gamma ( \Ftrace^2 + 2\boundMult))} + \ffrac{5\boundAdd \gamma }{1 -\gamma ( \Ftrace^2 + 2\boundMult)} \,.
\end{align*}
\end{lemma}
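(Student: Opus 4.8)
The plan is to run a Lyapunov (energy) argument on the squared iterates $\sqrdnrm{\eta_k}$, where $\eta_k = w_k - \ws$, following the structure of Lemma~1 in \citet{bach_non-strongly-convex_2013} but feeding in the second-moment control of \Cref{asu:main:bound_mult_noise} in place of a Lipschitz-in-squared-expectation hypothesis. Starting from the equivalent recursion $\eta_k = (\Id - \gamma\Fhess)\etakm + \gamma\xi_k(\etakm)$ of \Cref{def:class_of_algo}, I would expand $\sqrdnrm{\eta_k}$ into the deterministic contraction term $\sqrdnrm{(\Id-\gamma\Fhess)\etakm}$, a cross term, and the noise term $\gamma^2\sqrdnrm{\xi_k(\etakm)}$.

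First I take the conditional expectation given $\Flast_{k-1}$. Since $\etakm$ is $\Flast_{k-1}$-measurable while $\xi_k$ is independent of $\Flast_{k-1}$ and zero-centered, one has $\Expec{\xi_k(\etakm)}{\Flast_{k-1}} = 0$, so the cross term vanishes. For the noise term I decompose $\xi_k(\etakm) = \xi_k^{\mathrm{add}} + \xi_k^{\mathrm{mult}}(\etakm)$, apply $\sqrdnrm{a+b} \leq 2\sqrdnrm{a} + 2\sqrdnrm{b}$, and invoke \Cref{asu:main:bound_add_noise} together with \Cref{asu:main:bound_mult_noise} to get $\Expec{\sqrdnrm{\xi_k(\etakm)}}{\Flast_{k-1}} \leq 4\boundMult\SqrdNrm{\Fhess^{1/2}\etakm} + 10\boundAdd$. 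For the contraction term, expanding $(\Id-\gamma\Fhess)^2$ and using $\Fhess \preccurlyeq \Ftrace^2\Id$ (hence $\Fhess^2 \preccurlyeq \Ftrace^2\Fhess$, so that $\sqrdnrm{\Fhess\etakm} \leq \Ftrace^2\SqrdNrm{\Fhess^{1/2}\etakm}$) yields $\sqrdnrm{(\Id-\gamma\Fhess)\etakm} \leq \sqrdnrm{\etakm} - (2\gamma - \gamma^2\Ftrace^2)\SqrdNrm{\Fhess^{1/2}\etakm}$.

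Combining these, the coefficient multiplying $-\SqrdNrm{\Fhess^{1/2}\etakm}$ is $2\gamma - \gamma^2\Ftrace^2 - 4\gamma^2\boundMult$, and the key bookkeeping step is to verify it is bounded below by $2\gamma(1 - \gamma(\Ftrace^2 + 2\boundMult))$; indeed their difference is exactly $\gamma^2\Ftrace^2 \geq 0$. This is the one place where the condition $\gamma(\Ftrace^2 + 2\boundMult) \leq 1$ is used, guaranteeing a nonnegative effective contraction factor. I then pass to full expectation to obtain the one-step inequality $2\gamma(1-\gamma(\Ftrace^2+2\boundMult))\,\fullexpec{\SqrdNrm{\Fhess^{1/2}\etakm}} \leq \fullexpec{\sqrdnrm{\etakm}} - \fullexpec{\sqrdnrm{\eta_k}} + 10\gamma^2\boundAdd$, sum over $k$ from $1$ to $K$ so that the $\fullexpec{\sqrdnrm{\cdot}}$ terms telescope (discarding the nonnegative $\fullexpec{\sqrdnrm{\eta_K}}$ and using that $\eta_0$ is deterministic), and divide by $2\gamma K(1-\gamma(\Ftrace^2+2\boundMult))$ to recover the stated bound, the $10\gamma^2\boundAdd$ term producing the $5\boundAdd\gamma$ numerator.

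I expect the main obstacle to be purely the constant bookkeeping: isolating the coefficient $10\boundAdd$ from the noise decomposition and checking that the effective contraction coefficient dominates $2\gamma(1-\gamma(\Ftrace^2+2\boundMult))$, so that the \emph{same} denominator $1-\gamma(\Ftrace^2+2\boundMult)$ governs both terms. Beyond the elementary operator inequality $\Fhess^2 \preccurlyeq \Ftrace^2\Fhess$ and the vanishing of the martingale cross term, no delicate analytic estimate is required.
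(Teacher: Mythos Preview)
Your proposal is correct and follows essentially the same Lyapunov argument as the paper's proof, with only a cosmetic difference in grouping: you expand $\sqrdnrm{(\Id-\gamma\Fhess)\etakm + \gamma\xi_k}$ so that the cross term vanishes directly, whereas the paper expands $\sqrdnrm{\etakm - \gamma(\nabla F(\wkm)-\xi_k)}$ and handles the cross term as $\sqrdnrm{\Fhess^{1/2}\etakm}$. Both routes land on the same one-step inequality $\Expec{\sqrdnrm{\eta_k}}{\Flast_{k-1}} \leq \sqrdnrm{\etakm} - 2\gamma(1-\gamma(\Ftrace^2+2\boundMult))\sqrdnrm{\Fhess^{1/2}\etakm} + 10\boundAdd\gamma^2$ and conclude by telescoping; your version even yields a marginally sharper contraction coefficient before you relax it to match the stated denominator.
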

\end{minipage}
}

\begin{proof}
Let $k$ in $\N^*$, we start writing that by \Cref{def:class_of_algo}, we have $w_k = \wkm - \gamma \nabla F(\wkm) + \gamma \xik$. Thus taking the squared norm and developing it, gives:
\begin{align}
\label{app:eq:first_line_aux_lemma}
    \SqrdNrm{\eta_k} &= \SqrdNrm{\etakm} - 2 \gamma \PdtScl{\etakm}{\nabla F(\wkm) - \xik} + \gamma^2 \SqrdNrm{\nabla F(\wkm) - \xik}\,.
\end{align}

We need to bound the last term. By \Cref{def:add_mult_noise}, we have $\xik = \xikmultFN[\etakm] + \xikstaru$, hence using \Cref{app:lem:two_inequalities}, we have:
\begin{align*}
   \SqrdNrm{\nabla F(\wkm) - \xik} &\leq 2\sqrdnrm{\nabla F(\wkm) - \xikmultFN[\etakm]} +
   2\sqrdnrm{\xikstaru} \,,
\end{align*}
taking expectation w.r.t the $\sigma$-algebra $\Flast_{k-1}$, developping $\SqrdNrm{\nabla F(\wkm) - \xikmultFN[\etakm]}$ and because $\Expec{\xikmultFN[\etakm]}{\Flast_{k-1}} = 0$ (the random fields $(\xi_k)_{k\in \N^*}$ are zero-centered, see \Cref{def:class_of_algo}), we have:
\begin{align*}
   &\Expec{\SqrdNrm{\nabla F(\wkm) - \xik}}{\Flast_{k-1}} \\
   &\qquad\leq 2\Expec{\sqrdnrm{\nabla F(\wkm)}}{\Flast_{k-1}} + 2\Expec{\sqrdnrm{\xikmultFN[\etakm]}}{\Flast_{k-1}} +2\Expec{\sqrdnrm{\xikstaru}}{\Flast_{k-1}} \,.
\end{align*}

Now, we use \Cref{def:class_of_algo} and \Cref{asu:main:bound_add_noise,asu:main:bound_mult_noise}: it leads to:
\begin{align*}
   &\Expec{\SqrdNrm{\nabla F(\wkm) - \xik}}{\Flast_{k-1}} \\
   &\qqquad\leq 2  \Ftrace^2 \sqrdnrm{\Fhess^{1/2} \etakm} 
   + 4 \boundMult \sqrdnrm{\Fhess^{1/2} \etakm} + 8 \boundAdd  + 2 \boundAdd \\
   &\qqquad\leq 2 ( \Ftrace^2 + 2\boundMult) \sqrdnrm{\Fhess^{1/2} \etakm} + 10\boundAdd \,.
\end{align*}

Because the sequence of random field $(\xi_k)_{k\in \N^*}$ is zero-centered (\Cref{def:class_of_algo}), we have:
\[
\Expec{\PdtScl{\etakm}{\nabla F(\wkm) - \xik}}{\Flast_{k-1}} = \PdtScl{\etakm}{\Fhess\etakm} = \sqrdnrm{\Fhess^{1/2} \etakm} \,,
\]
hence back to \Cref{app:eq:first_line_aux_lemma}, we obtain:
\begin{align}
\label{app:eq:to_bound_eta_K}
\begin{split}
    \Expec{\SqrdNrm{\eta_k}}{\Flast_{k-1}} &\leq \SqrdNrm{\etakm} - 2 \gamma (1 - \gamma ( \Ftrace^2 + 2\boundMult)) \sqrdnrm{\Fhess^{1/2} \etakm} +  10 \boundAdd \gamma^2  \,.
\end{split}
\end{align}

It follows that if $\gamma ( \Ftrace^2 + 2 \boundMult) \leq 1$, summing the previous bound and taking full expectation gives:
\begin{align*}
\frac{1}{K} \sum_{k=1}^{K}   \FullExpec{\sqrdnrm{\Fhess^{1/2} \etakm}} \leq \ffrac{\SqrdNrm{\eta_0} - \E\SqrdNrm{\eta_K}}{2\gamma K (1 - \gamma ( \Ftrace^2 + 2\boundMult))} + \ffrac{5\boundAdd \gamma }{1 - \gamma ( \Ftrace^2 + 2\boundMult)} \,,
\end{align*}
which allows concluding.
\end{proof}

\subsection{Asymptotic results: central limit theorem for \texorpdfstring{\eqref{eq:LSA}}{LSA}}
\label{app:subsec:clt}

The demonstration of \Cref{prop:asymptotic_normality} uses the following theorem from \citet{polyak_acceleration_1992} guaranteeing the asymptotic normality of the Polyak-Ruppert iterate.

\begin{theorem}\textbf{From \citet[see Theorem 1]{polyak_acceleration_1992}.}
\label{app:thm:polyak_juditsky_92}

For $k$ in $\N^*$, we denote $\eta_k = w_k - \ws$ and we define $w_k = w_{k-1} - \gamma_k \nabla F(w_{k-1}) + \gamma_k \xi(\eta_{k-1})$.
If we assume that:
\begin{itemize}
    \item $\gamma_{k} \xrightarrow[k \rightarrow+\infty]{} 0$ and $\gamma_{k}^{-1}(\gamma_{k}-\gamma_{k+1})=\underset{k \to +\infty}{o}(\gamma_{k})$,
    \item $F$ is strongly convex and $\left\|\nabla^{2} F \right\|_{\infty}<\infty$,
    \item the convergence in probability of the conditional covariance to a matrix $\Sigma$ holds, i.e., we have a.s.~$ \expec{\xi(\eta_{k-1}) \xi(\eta_{k-1})^\top}{\Flast_{k-1}} \xrightarrow[k \rightarrow+\infty]{\mathbb{P}} \Sigma \,.$
\end{itemize}
Then for any $K$ in $\N^*$, we have the asymptotic normality of $(\sqrt K \eta_{K-1})_{K\in\N^*}$:
$$
\sqrt{K} \etabarkm \xrightarrow[K \rightarrow+\infty]{\mathcal{L}} \mathcal{N}(0,  \Sigma^{*}) \text { with } \Sigma^{\star}=\left\{\nabla^{2} F\left(\ws\right)\right\}^{-1} \Sigma \left\{\nabla^{2} F\left(\ws\right)\right\}^{-1} .
$$
\end{theorem}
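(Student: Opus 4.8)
The plan is to \emph{linearize} the dynamics around $\ws$ and exploit the averaging to reduce $\sqrt K\,\etabarkm$ to a single normalized martingale sum, up to terms that vanish in probability. Write $H := \nabla^2 F(\ws)$, invertible by strong convexity, and split the mean field as $\nabla F(w_{k-1}) = H\eta_{k-1} + \Delta_k$ with $\Delta_k := \nabla F(w_{k-1}) - H\eta_{k-1}$. By $\|\nabla^2 F\|_\infty<\infty$ and continuity at $\ws$, one has $\|\Delta_k\| \le \bigpar{\sup_{w\in[\ws,w_{k-1}]}\|\nabla^2 F(w)-H\|}\|\eta_{k-1}\| = o(\|\eta_{k-1}\|)$ (and $\Delta_k\equiv0$ in the quadratic case relevant to \eqref{eq:LSA}). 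Substituting into $\eta_k = \eta_{k-1} - \gamma_k\nabla F(w_{k-1}) + \gamma_k\xi(\eta_{k-1})$, isolating $H\eta_{k-1}$, summing over $k=1,\dots,K$, dividing by $K$ and left-multiplying by $\sqrt K\,H^{-1}$ gives
\begin{equation*}
\sqrt K\,\etabarkm = H^{-1}\bigpar{T_1^{(K)} + T_2^{(K)} - T_3^{(K)}},
\end{equation*}
where
\begin{equation*}
T_1^{(K)} = \frac{1}{\sqrt K}\sum_{k=1}^{K}\frac{\eta_{k-1}-\eta_k}{\gamma_k},\quad
T_2^{(K)} = \frac{1}{\sqrt K}\sum_{k=1}^{K}\xi(\eta_{k-1}),\quad
T_3^{(K)} = \frac{1}{\sqrt K}\sum_{k=1}^{K}\Delta_k .
\end{equation*}
The claim then reduces to $T_1^{(K)}, T_3^{(K)} \xrightarrow{\mathbb P} 0$ and $T_2^{(K)}\xrightarrow{\mathcal L}\mathcal N(0,\Sigma)$, after which Slutsky's lemma and the continuous map $x\mapsto H^{-1}x$ yield $\sqrt K\,\etabarkm\xrightarrow{\mathcal L}\mathcal N(0, H^{-1}\Sigma H^{-1})$.

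The indispensable preliminary is an $L^2$ rate for the non-averaged iterate: a Lyapunov argument on $\FullExpec{\SqrdNrm{\eta_k}}$ using strong convexity and $\Expec{\xi(\eta_{k-1})}{\Flast_{k-1}}=0$ gives $\FullExpec{\SqrdNrm{\eta_k}} = O(\gamma_k)$, hence $\E\|\eta_k\| = O(\sqrt{\gamma_k})$. Granting this, $T_3^{(K)}$ is negligible: in the quadratic setting of \eqref{eq:LSA} it vanishes identically, and in the general smooth case $\|\Delta_k\|=O(\SqrdNrm{\eta_{k-1}})$ together with $\frac{1}{\sqrt K}\sum_k \FullExpec{\SqrdNrm{\eta_{k-1}}} = O(\sqrt K\,\gamma_K)\to 0$ (in the relevant step-size range) gives $T_3^{(K)}\xrightarrow{L^1}0$.

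For $T_1^{(K)}$ I use Abel summation,
\begin{equation*}
\sum_{k=1}^{K}\frac{\eta_{k-1}-\eta_k}{\gamma_k} = \frac{\eta_0}{\gamma_1} - \frac{\eta_K}{\gamma_K} + \sum_{k=1}^{K-1}\bigpar{\gamma_{k+1}^{-1}-\gamma_k^{-1}}\eta_k .
\end{equation*}
The two boundary terms, scaled by $K^{-1/2}$, vanish in $L^2$ because $\E\SqrdNrm{\gamma_K^{-1}\eta_K}=O(\gamma_K^{-1})$ while $K\gamma_K\to\infty$ for $\gamma_k=k^{-\alpha}$, $\alpha\in]0,1[$; the remaining sum is tamed by the step-size regularity $\gamma_k^{-1}(\gamma_k-\gamma_{k+1})=o(\gamma_k)$, which forces $\gamma_{k+1}^{-1}-\gamma_k^{-1}=o(1)$ and, combined with $\E\|\eta_k\|=O(\sqrt{\gamma_k})$, makes $K^{-1/2}\sum_k(\gamma_{k+1}^{-1}-\gamma_k^{-1})\,\E\|\eta_k\|\to 0$. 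Hence $T_1^{(K)}\xrightarrow{\mathbb P}0$.

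It remains to apply a martingale central limit theorem to $T_2^{(K)}$, whose summands form a martingale-difference array for $(\Flast_{k-1})$ since $\Expec{\xi(\eta_{k-1})}{\Flast_{k-1}}=0$. Hypothesis (iii) supplies exactly the convergence of the predictable quadratic variation, $\frac1K\sum_{k=1}^K\Expec{\xi(\eta_{k-1})\xi(\eta_{k-1})^\top}{\Flast_{k-1}}\xrightarrow{\mathbb P}\Sigma$ (the Cesàro average inherits the termwise limit), and a Lindeberg/conditional-uniform-integrability condition — which holds under a uniform bound on $\E[\|\xi(\eta_{k-1})\|^{2+\delta}]$, itself following from the second-moment control of the additive noise plus the vanishing of the multiplicative part as $\eta_k\to0$ — yields $T_2^{(K)}\xrightarrow{\mathcal L}\mathcal N(0,\Sigma)$. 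Combining the three limits through Slutsky gives the result. I expect the delicate step to be the $T_1^{(K)}$ bookkeeping: this is where the averaging ``miracle'' takes place and the only place where the fine condition $\gamma_k^{-1}(\gamma_k-\gamma_{k+1})=o(\gamma_k)$ is genuinely used; the $L^2$ rate feeding it and the Lindeberg verification for the martingale CLT are the other two load-bearing, if more routine, ingredients.
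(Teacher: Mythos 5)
The paper never actually proves this statement: it is imported verbatim from \citet{polyak_acceleration_1992} and used as a black box (in the proof of \Cref{app:prop:asymptotic_normality} and again for \Cref{thm:asymptotic_normality_artemis}), so your proposal is a reconstruction of the classical averaging proof rather than a competitor to anything in the paper. Your skeleton is the right one — writing $\sqrt{K}\,\nabla^2 F(\ws)\,\etabarkm = T_1^{(K)}+T_2^{(K)}-T_3^{(K)}$, killing the drift term by Abel summation, a martingale CLT for $T_2^{(K)}$, then Slutsky — and your treatment of the boundary terms and of $T_3^{(K)}$ (identically zero in the quadratic case the paper cares about, $O(K^{1/2-\alpha})$ hence requiring $\alpha>1/2$ otherwise) is sound.

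The genuine gap sits exactly in the step you yourself flag as delicate. From $(\gamma_k-\gamma_{k+1})/\gamma_k=o(\gamma_k)$ you correctly deduce $\gamma_{k+1}^{-1}-\gamma_k^{-1}=o(1)$, but combined with $\E\|\eta_k\|=O(\sqrt{\gamma_k})$ this only yields
\begin{equation*}
\frac{1}{\sqrt K}\sum_{k=1}^{K-1}\bigpar{\gamma_{k+1}^{-1}-\gamma_k^{-1}}\,\E\|\eta_k\| \;=\; o\bigpar{\frac{1}{\sqrt K}\sum_{k=1}^{K}k^{-\alpha/2}} \;=\; o\bigpar{K^{(1-\alpha)/2}},
\end{equation*}
a \emph{diverging} envelope for every $\alpha\in\,]0,1[$: the claimed limit does not follow from $o(1)$ weights. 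What closes the argument for the steps the paper actually uses ($\gamma_k=k^{-\alpha}$) is the quantitative rate $\gamma_{k+1}^{-1}-\gamma_k^{-1}=(k+1)^\alpha-k^\alpha=O(k^{\alpha-1})$, which gives $K^{-1/2}\sum_{k\le K}k^{\alpha-1}\cdot k^{-\alpha/2}=O(K^{(\alpha-1)/2})\to 0$; under the abstract step condition alone, Polyak and Juditsky control this term through a dedicated lemma on averaged matrix-product weights converging to $\{\nabla^2F(\ws)\}^{-1}$, not through the naive bound. Two secondary soft spots: your Lindeberg verification via a uniform $2+\delta$ moment ``following from the second-moment control'' is a non sequitur (second moments never imply $2+\delta$ moments); in the paper's application the additive noises $\xi_k^{\mathrm{add}}$ are i.i.d.\ with finite second moment, so uniform integrability of $\SqrdNrm{\xi_k^{\mathrm{add}}}$ is automatic and the conditional Lindeberg condition can be checked directly, the multiplicative part vanishing in Cesàro $L^2$. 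Likewise, both your preliminary rate $\E\SqrdNrm{\eta_k}=O(\gamma_k)$ and the passage from hypothesis (iii) to convergence of the Cesàro-averaged conditional covariances require a uniform bound on the conditional noise variance that (iii) alone does not supply — though this mirrors the abbreviated form in which the paper recalls the theorem, the original hypotheses of \citet{polyak_acceleration_1992} containing precisely such a bound.
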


Below we present our CLT that gives the asymptotic normality of $(\sqrt K \eta_{K-1})_{K\in\N^*}$ in the case of strongly-convex case and decreasing step size.

\fbox{
\begin{minipage}{0.97\textwidth}
\begin{proposition}[CLT for \eqref{eq:LSA}---strongly convex-case, deacreasing step-size]
\label[proposition]{app:prop:asymptotic_normality}
Under \Cref{asu:main:bound_add_noise,asu:main:second_moment_noise}, consider a sequence $(w_k)_{k\in\N^*}$ produced in the setting of \Cref{def:class_of_algo} using a step-size $(\gamma_k)_{k\in \N^*}$ s.t. $\gamma_k =k^{-\alpha}$, $\alpha\in (0,1)$. Then  $(\eta_K)_{K\geq0}$ converges in $L^2$-norm to 0, i.e.~$\eta_K \xrightarrow[K \to +\infty]{L^2} 0$.

Furthermore, $(\sqrt{K} \etabarkm)_{K\geq0}$ is asymptotically normal with mean zero and covariance such that:
\[
\sqrt{K}\etabarkm \xrightarrow[K \to +\infty]{\mathcal{L}} \mathcal{N}(0, \Fhess^{-1} \aniac \Fhess^{-1}).
\]
\end{proposition}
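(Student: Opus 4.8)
The plan is to reduce the statement to the Polyak--Juditsky theorem (\Cref{app:thm:polyak_juditsky_92}), so that it suffices to verify its three hypotheses and to identify the limiting conditional covariance $\Sigma$ with $\aniac$. Two of the hypotheses are immediate. For the step-sizes $\gamma_k = k^{-\alpha}$ with $\alpha \in (0,1)$, clearly $\gamma_k \to 0$, and a first-order expansion gives $\gamma_k^{-1}(\gamma_k - \gamma_{k+1}) = k^{\alpha}\bigpar{k^{-\alpha} - (k+1)^{-\alpha}} \sim \alpha k^{-1}$, which is $o(k^{-\alpha}) = o(\gamma_k)$ exactly because $\alpha < 1$. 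Strong convexity holds since $F$ is quadratic with smallest eigenvalue $\mu > 0$ (the strongly convex regime of the statement), and $\|\nabla^2 F\|_\infty = \vertiii{\Fhess} < \infty$ is automatic for a fixed matrix.

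\textbf{$L^2$-convergence of $\eta_k$.} I would first establish $\eta_K \xrightarrow{L^2} 0$, which is both the first assertion and a prerequisite for the covariance step. The one-step inequality derived in the proof of \Cref{app:lem:from_bm2013} holds verbatim with $\gamma$ replaced by the current $\gamma_k$:
\[
\Expec{\SqrdNrm{\eta_k}}{\Flast_{k-1}} \leq \SqrdNrm{\eta_{k-1}} - 2\gamma_k\bigpar{1 - \gamma_k(\Ftrace^2 + 2\boundMult)}\sqrdnrm{\Fhess^{1/2}\eta_{k-1}} + 10\boundAdd\gamma_k^2 .
\]
Using strong convexity $\sqrdnrm{\Fhess^{1/2}\eta_{k-1}} \geq \mu\SqrdNrm{\eta_{k-1}}$ and $\gamma_k \to 0$, for $k$ large this yields $\E[\SqrdNrm{\eta_k}] \leq (1 - \gamma_k \mu)\E[\SqrdNrm{\eta_{k-1}}] + 10\boundAdd\gamma_k^2$. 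Since $\sum_k \gamma_k = \infty$ and $10\boundAdd\gamma_k^2 / (\gamma_k \mu) = 10\boundAdd\gamma_k/\mu \to 0$, a standard deterministic recursion lemma (of Chung type) forces $\E[\SqrdNrm{\eta_k}] \to 0$.

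\textbf{Conditional covariance.} Because $\xi_k$ is independent of $\Flast_{k-1}$ while $\eta_{k-1}$ is $\Flast_{k-1}$-measurable, the conditional covariance equals the deterministic map $\Psi(\eta_{k-1})$ with $\Psi(z) := \E[\xi_k(z)^{\kcarre}]$. Writing $\xi_k(z) = \xi_k^{\mathrm{add}} + \xi_k^{\mathrm{mult}}(z)$ and expanding,
\[
\Psi(z) - \aniac = \E[\xi_k^{\mathrm{add}} \otimes \xi_k^{\mathrm{mult}}(z)] + \E[\xi_k^{\mathrm{mult}}(z) \otimes \xi_k^{\mathrm{add}}] + \E[\xi_k^{\mathrm{mult}}(z)^{\kcarre}] .
\]
I would bound the last term (in Frobenius norm) by $\E[\sqrdnrm{\xi_k^{\mathrm{mult}}(z)}]$ and the two cross terms, via Cauchy--Schwarz and \Cref{asu:main:bound_add_noise}, by $\sqrt{\boundAdd}\,\sqrt{\E[\sqrdnrm{\xi_k^{\mathrm{mult}}(z)}]}$; then the Hölder bound \Cref{asu:main:bound_mult_noise_holder}, $\E[\sqrdnrm{\xi_k^{\mathrm{mult}}(z)}] \leq \boundMultPrime\|\Fhess^{1/2}z\| + 3\boundMult\sqrdnrm{\Fhess^{1/2}z} \to 0$ as $z \to 0$, shows $\Psi$ is continuous at $0$ with $\Psi(0) = \aniac$. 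Combining this with the $L^2$- (hence in-probability) convergence $\eta_{k-1} \to 0$ through the continuous mapping theorem gives $\Psi(\eta_{k-1}) \xrightarrow{\mathbb{P}} \aniac$, which is the third hypothesis with $\Sigma = \aniac$. \Cref{app:thm:polyak_juditsky_92} then delivers $\sqrt{K}\etabarkm \xrightarrow{\mathcal{L}} \mathcal{N}(0, \Fhess^{-1}\aniac\Fhess^{-1})$.

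The genuinely delicate point---and the reason the result is stated in the strongly convex regime---is the identification of the limit covariance as $\aniac$ alone: the full noise field $\xi_k(\eta_{k-1})$ carries a nonvanishing multiplicative component, and it is precisely the conjunction of $\eta_{k-1} \to \ws$ (which needs $\mu > 0$) and the Hölder regularity \Cref{asu:main:bound_mult_noise_holder} (which forces the multiplicative second moment to vanish at the optimum) that collapses the conditional covariance onto the additive-noise covariance $\aniac$. The remaining steps are routine verifications.
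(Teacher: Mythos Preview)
Your proposal is correct and follows essentially the same route as the paper: reduce to the Polyak--Juditsky theorem (\Cref{app:thm:polyak_juditsky_92}), invoke the classical $L^2$-convergence of $\eta_k$ under decreasing step-sizes, and then show the conditional covariance converges in probability to $\aniac$ by splitting $\xi_k(\eta_{k-1})$ into its additive and multiplicative parts, bounding the multiplicative square via \Cref{asu:main:bound_mult_noise_holder} and the cross terms via Cauchy--Schwarz. Your explicit verification of the step-size condition, the Chung-type argument for $L^2$-convergence (where the paper simply cites external references), and your framing of the covariance step through continuity of $\Psi$ plus the continuous mapping theorem are organizational refinements rather than substantive differences.
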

\end{minipage}
}

\begin{proof}

\textbf{First, we have that in the case of decreasing step size s.t. for any $k$ in $\N$, $\gamma_k = k^{-\alpha}$, we have: $\eta_K \xrightarrow[K \to +\infty]{L^2} 0$.} This is a classical computation for SGD with bounded variance (\Cref{asu:main:bound_add_noise,asu:main:bound_mult_noise}.).
Detailed computations are  for instance given in lectures notes of \citet[][pages 164-167 and 182]{bach_lecturesnotes_2022}, and \cite{kushner_stochastic_2003}. To apply Theorem 1 from \citet[][recalled in \Cref{app:thm:polyak_juditsky_92}]{polyak_acceleration_1992}, which gives the desired result, it suffices to prove the convergence in probability of the covariance of the noise $\xik$ towards $\aniac$, as $k\to \infty$.

\textbf{In the following, we show that  $\underset{k \to +\infty}{\lim} \Expec{ \xik \xik^\top}{\Flast_{k-1}} \stackrel{\mathbb{P}}{=}\aniac  $.}
We start writing:
\begin{align*}
    \xik \xik^\top &= (\xikstaruk - \xikmultFN[\etakm]) (\xikstaruk - \xikmultFN[\etakm])^\top \\
    &= (\xikstaruk)^{\kcarre} - \xikstaruk \xikmultFN[\etakm]^\top - \xikmultFN[\etakm] (\xikstaruk)^\top + \xikmultFN[\etakm]^{\kcarre} \,.
\end{align*} 

(i) First, from \Cref{def:add_mult_noise}, it flows that $\Expec{\xikstaruk \otimes \xikstaruk}{\Flast_{k-1}} = \aniac$.

(ii) Second, we show that $\expec{\xikmultFN[\etakm]^{\kcarre}}{\Flast_{k-1}}$ converges to $0$ in probability: it is sufficient to show that: $\expec{\| \xikmultFN[\etakm]^{\kcarre} \|_F}{\Flast_{k-1}} \xrightarrow[k \to +\infty]{} 0 \,.$
To do so, we use the fact that 
$ \| \xikmultFN[\etakm]^{\kcarre} \|_F  = \|\xikmultFN[\etakm]\|_2^2,  \,$ 
then with \Cref{asu:main:bound_mult_noise_holder}:
$\expec{\sqrdnrm{\xi_k^\mathrm{mult}(w - \ws)}}{\Flast_{k-1}} \leq \boundMultPrime \|H^{1/2} \etakm \| + \boundMult \|H^{1/2} \etakm \|^2$. And we have the result as we showed that $\etakm \xrightarrow[k \to +\infty]{L^2} 0$.

(iii) Third, it remains to show that $\expec{\xikmultFN[\etakm] (\xikstaruk)^\top}{\Flast_{k-1}} \xrightarrow[k \to +\infty]{L^1} 0$. We use the Cauchy-Schwarz inequality's \ref{app:basic_ineq:cauchy_schwarz_cond} for conditional expectation:
\begin{align*}
    \Expec{ \|\xikmultFN[\etakm] (\xikstaruk)^\top \|_F}{\Flast_{k-1}}^2 &= \Expec{ \|\xikmultFN[\etakm]\|_2\| (\xikstaruk)^\top \|_2}{\Flast_{k-1}}^2 \\
    &\leq \Expec{ \|\xikmultFN[\etakm]\|^2_2}{\Flast_{k-1}} \Expec{ \|\xikstaruk\|_2}{\Flast_{k-1}} \,.
\end{align*}

The sequence of random vectors $(\xikstaruk)_{k \in \N^*}$ is i.i.d., and moreover  we have shown previously that  
$\expec{\sqrdnrm{\xikmultFN[\etakm]}}{\Flast_{k-1}}$ tends to 0, hence  $\expec{\xikmultFN[\etakm] (\xikstaruk)^\top}{\Flast_{k-1}}$ converges to $0$ in probability.
Consequently, we can state that
$\expec{\xik^{\kcarre}}{\Flast_{k-1}}\xrightarrow[k \to +\infty]{\mathbb{P}} \aniac  \,.$

\end{proof}

\section{Generalization of Bach and Moulines (2013).}
\label{app:sec:nonlinear_bm}

In this section, we give the demonstration of \Cref{thm:bm2013_with_nonlinear_operator_compression} which extends Theorem 1 from \cite{bach_non-strongly-convex_2013}; the demonstration is close to the original one.

\subsection{Proof principle}
\label{app:subsec:proof_principale_gen}
For $k$ in $\N^*$, the proof relies (1) on decomposing $\E[\|\Fhess^{1/2} \etabarkm\|^2]$ in two terms using the Minkowski inequality \ref{app:eq:minkowski} to make appear a recursion $(\eta_k^0)_{k\in\N^*}$ without multiplicative noise, and another $(\alpha_k)_{k\in\N^*}$ without additive noise, (2) on an expansion of $\eta_k^0$ and $\overline{\eta}_k^0$ as polynomials in $\gamma$, and (3) on using the Hölder-type \Cref{asu:main:bound_mult_noise_holder} to bound $\alpha_k$. We define the sequence $(\eta_k^0)_{k\in\N^*}$ such that it involves only an additive noise:
\begin{align}
\label{app:eq:def_eta0}   
\eta_k^0 = (\Id_d - \gamma \Fhess)\etakm^0 + \gamma \xikstaruk\,.
\end{align}

Then, we decompose $\fullexpec{\|\Fhess^{1/2} \etabarkm\|^2}$ in the following way using Minkowski inequality \ref{app:eq:minkowski}:
\begin{align}
\label{app:eq:minkowski_decomposition_in_intro}
    \FullExpec{\|\Fhess^{1/2} \etabarkm\|^2} \leq \bigpar{\FullExpec{\|\Fhess^{1/2} \etabarkm^0\|^2}^{1/2} + \FullExpec{\|\Fhess^{1/2}(\etabarkm - \etabarkm^0)\|^2}^{1/2} }^2\,.
\end{align}
The goal is then to establish a bound for the two above quantities.

\textbf{1. Bounding $\E [\|\Fhess^{1/2} \etabarkm^0\|^2]$.}

The bound on $\E [\|\Fhess^{1/2} \etabarkm^0\|^2]$ is given in \Cref{app:lem:bound_on_eta_k_0}.
For $k$ in $\N^*$, the proof relies on an expansion of $\eta_k^0$ and $\overline{\eta}_k^0$ as polynomials in $\gamma$. The recursion defining the sequence $(\eta_k^0)_{k\in\N^*}$ is $\eta_k^0 = (\Id_d - \gamma \Fhess) \etakm^0 + \gamma \xi_k^{\mathrm{add}}$.
If we denote $M_i^k = \bigpar{\Id_d - \gamma \Fhess}^{k-i}$ and $M_i^{i-1} = \Id_d$, we have:
$$
\eta_k^0 = M_1^k \eta_0^0 + \gamma \sum_{i=1}^k M_{i+1}^k \xi_k^{\mathrm{add}}\,.
$$
For $K$ in $\N^*$, it leads to $\overline{\eta}_{K-1}^0 = \frac{1}{K} \sum_{k=0}^{K-1} M_1^k \eta_0^0 + \frac{\gamma}{K} \sum_{k=1}^{K-1} \bigpar{ \sum_{i=k}^K M_{k+1}^i} \xi_k^{\mathrm{add}}$, and with Minkowski inequality \ref{app:eq:minkowski} to:
\begin{align}
\label{app:eq:polynomial_exp_in_gamma}
    \FullExpec{\sqrdnrm{\Fhess^{1/2} \overline{\eta}_{K-1}^0}}^{1/2}\leq \FullExpec{\SqrdNrm{\frac{\Fhess^{1/2}}{K} \sum_{k=0}^{K-1} M_1^k \eta_0^0}}^{1/2} + \FullExpec{\SqrdNrm{  \frac{\gamma \Fhess^{1/2}}{K} \sum_{k=1}^{K-1}\sum_{i=k}^K M_{k+1}^i\xi_k^{\mathrm{add}}}}^{1/2}.
\end{align}

The left term depends only on initial conditions $\eta_0^0$ ($=\eta_0$) and the right term depends only on the additive noise. This is why, in the proof, we expend $\eta_{k-1}^0$ and $\overline{\eta}_{k-1}^0$ separately for the noise process (i.e., when assuming $\eta_0  = 0$) and for the noise-free process
that depends only on the initial conditions (i.e. when assuming that the additive noise $(\xi_k^{\mathrm{add}})_{k\in\N^*}$ is uniformly equal to zero). In the end, the two bounds computed separately may be added.

\textbf{2. Bounding $\E [ \|\Fhess^{1/2}(\etabarkm - \etabarkm^0)\|^2]$.}

The bound on $\E [ \|\Fhess^{1/2}(\etabarkm - \etabarkm^0)\|^2]$ is given in \Cref{app:lem:tight_bound_on_eta_k_minus_eta_k_0}.
For $k$ in $\N^*$, the demonstration is based on an exact expression of $\alpha_k = \eta_k - \eta_k^0$ and $\overline{\alpha}_k$ computed by unrolling the recursion from $\alpha_k$ to $\alpha_0$. Because $\alpha_0 = 0$ and because there is no additive noise involved in $\alpha_k$, we obtain for $K$ in $\N^*$, an expression of $\overline{\alpha}_{K-1}$ that depends only on the multiplicative noise at iteration $k$ in $\{1, \cdots, K\}$:
\begin{align*}
    \overline{\alpha}_{K-1} &= \frac{\gamma}{K} \sum_{k=1}^{K-1} (\Id_d - (\Id_d - \gamma \Fhess)^{K-k}) (\gamma \Fhess)^{-1} \xi_k^{\mathrm{mult}}(\etakm) \,.
\end{align*}
We then show (\Cref{app:eq:ligne1}) that bounding $\E [ \|\Fhess^{1/2}(\etabarkm - \etabarkm^0)\|^2]$ leads to bound the following sum $\frac{1}{K^2} \sum^{K-1}_{k=1} \E [\sqrdnrm{\Fhess^{-1/2}\xi_k^{\mathrm{mult}}(\etakm)} \mid \Flast_{k-1}]$, and this bound is established using the Hölder-type \Cref{asu:main:bound_mult_noise_holder}; which concludes this part of the proof.

\begin{figure}
    \centering
    \includegraphics[width=0.9\linewidth]{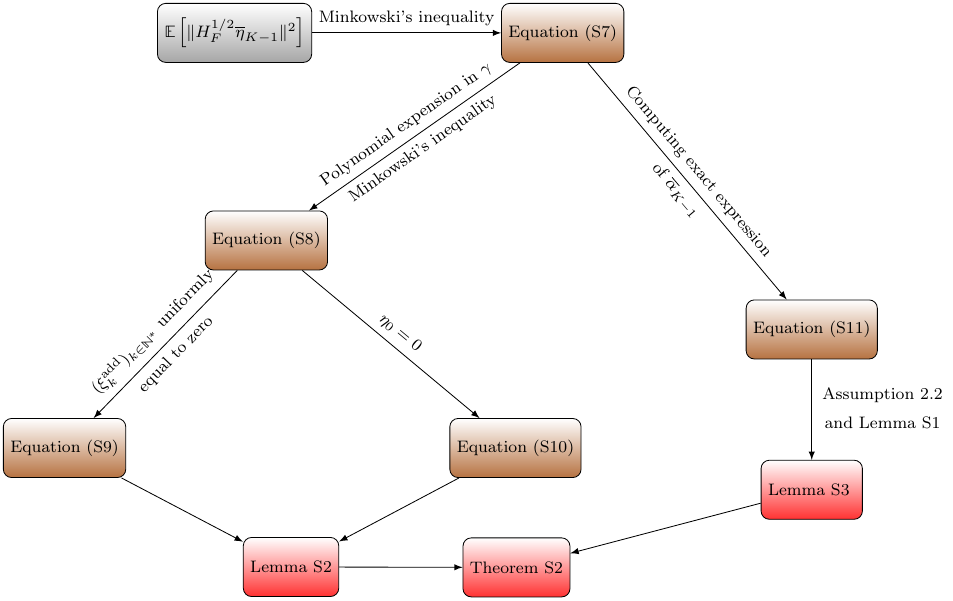}
    \vspace{-0.25cm}
    \caption{Proof principle of \Cref{app:thm:bm2013_with_nonlinear_operator_compression}}
    \label{fig:grap_of_proofs_nonlin}
    \vspace{-0.3cm}
\end{figure}

\subsection{Two bounds}
\label{app:subsec:noise_process_generalized}

In this subsection, we give two lemmas that provide a bound on $\E[\|\Fhess^{1/2} \etabarkm^0\|^2]$ and $\E[\|\Fhess^{1/2}(\etabarkm - \etabarkm^0)\|^2]$. 

These bounds are required due to the decomposition of $\E[\|\Fhess^{1/2} \etabarkm\|^2]$ done in \Cref{app:eq:minkowski_decomposition_in_intro}.
\begin{itemize}[topsep=2pt,itemsep=1pt,leftmargin=0.5cm,noitemsep]
    \item The bound on $\E[\|\Fhess^{1/2}\overline{\eta}_k^0\|^2]$ is given in \Cref{app:lem:bound_on_eta_k_0}. It is established by decomposing the noise process and the noise-free process. The bound on the noise process comes from Lemma 2 \citep{bach_non-strongly-convex_2013} and involves the additive noise's covariance $\aniac$.
    \item The bound on $\E[\|\Fhess^{1/2} (\overline{\eta}_K - \overline{\eta}_K^0)\|^2]$ is established in \Cref{app:lem:tight_bound_on_eta_k_minus_eta_k_0}.
\end{itemize} 

Note that in order to demonstrate \Cref{app:lem:tight_bound_on_eta_k_minus_eta_k_0}, we need to bound $ \sum_{k=1}^K\| \Fhess^{1/2} \eta_k \|^2/K$. This is done in \Cref{app:lem:from_bm2013} which is an adaptation of Lemma 1 from \cite{bach_non-strongly-convex_2013} to random mechanisms. This auxiliary lemma holds for any kind of multiplicative noise---linear or non-linear. 

Below lemma provides a bound on $\E[\|\Fhess^{1/2}\overline{\eta}_k^0\|^2]$. 

\fbox{
\begin{minipage}{0.97\textwidth}
\begin{lemma}[Bound on $\fullexpec{\|\Fhess^{1/2}\overline{\eta}_k^0\|^2}$]
\label[lemma]{app:lem:bound_on_eta_k_0}
Under the setting considered in \Cref{def:class_of_algo}, under \Cref{asu:main:bound_add_noise}, for any $K$ in $\N^*$ and any step-size $\gamma \in \R^+$ s.t. $\gamma  \Ftrace^2 \leq 1$, 
the sequence $(\eta_k^0)_{k\in\N^*}$ defined in \Cref{app:eq:def_eta0} verifies the following bound:
\begin{align*}
    \FullExpec{\|\Fhess^{1/2} \etabarkm^0 \|^2}^{1/2} \leq &\frac{1}{\sqrt{K}} \bigpar{ \ffrac{\| \Fhess^{-1/2} \eta_0 \|}{\gamma \sqrt{K}} \wedge \frac{\|\eta_0 \|}{\sqrt{\gamma}} + \sqrt{\Tr{\aniac \Fhess^{-1} }} } \,.
\end{align*}

\end{lemma}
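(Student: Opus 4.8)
The plan is to exploit the linearity of the recursion~\eqref{app:eq:def_eta0} together with the decomposition already displayed in~\eqref{app:eq:polynomial_exp_in_gamma}, which writes $\FullExpec{\SqrdNrm{\Fhess^{1/2}\etabarkm^0}}^{1/2}$ as the sum of a deterministic \emph{noise-free} contribution involving only $\eta_0$ and a \emph{noise} contribution involving only the i.i.d.\ additive noise $(\xi_i^{\mathrm{add}})_{i}$. Unrolling~\eqref{app:eq:def_eta0} gives $\eta_k^0 = (\Id_d-\gamma\Fhess)^k\eta_0 + \gamma\sum_{i=1}^k(\Id_d-\gamma\Fhess)^{k-i}\xi_i^{\mathrm{add}}$, and averaging over $k\in\{0,\dots,K-1\}$ isolates the coefficient of each $\xi_i^{\mathrm{add}}$. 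I would bound the two contributions separately and recombine them through Minkowski's inequality~\eqref{app:eq:minkowski}.

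For the noise-free term I would use the geometric summation $\frac{1}{K}\sum_{k=0}^{K-1}(\Id_d-\gamma\Fhess)^k = \frac{1}{\gamma K}\Fhess^{-1}\bigpar{\Id_d-(\Id_d-\gamma\Fhess)^K}$, which is valid since the condition $\gamma\Ftrace^2\leq 1$ forces $0\preccurlyeq\gamma\Fhess\preccurlyeq\Id_d$ (as $\Fhess\preccurlyeq\Ftrace^2\Id_d$). After left-multiplying by $\Fhess^{1/2}$, the resulting matrix is a function of $\Fhess$, so the operator bound reduces to a scalar bound over the eigenvalues $\lambda$ of $\Fhess$; writing $x=\gamma\lambda\in[0,1]$, two competing estimates appear. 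Using $\vertiii{\Id_d-(\Id_d-\gamma\Fhess)^K}\leq 1$ yields the bound $\frac{\|\Fhess^{-1/2}\eta_0\|}{\gamma K}$, while the elementary inequality $1-(1-x)^K\leq\sqrt{Kx}$---proved by distinguishing $Kx\leq 1$ (where it follows from Bernoulli's inequality $1-(1-x)^K\leq Kx\leq\sqrt{Kx}$) and $Kx>1$ (where the right-hand side exceeds $1$)---yields $\frac{\|\eta_0\|}{\sqrt{\gamma K}}$. Taking the minimum produces the first term of the claimed bound.

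For the noise term, independence and zero mean of the $(\xi_i^{\mathrm{add}})_i$ kill the cross terms, so the squared $\Fhess$-norm becomes $\sum_{i=1}^{K-1}\Tr{C_i\Fhess C_i\aniac}$, where $C_i=\frac{1}{K}\Fhess^{-1}\bigpar{\Id_d-(\Id_d-\gamma\Fhess)^{K-i}}$ is the (symmetric) coefficient of $\xi_i^{\mathrm{add}}$ in $\etabarkm^0$. Since $C_i$ commutes with $\Fhess$, this equals $\frac{1}{K^2}\Tr{\Fhess^{-1}S\aniac}$ with $S:=\sum_{i=1}^{K-1}\bigpar{\Id_d-(\Id_d-\gamma\Fhess)^{K-i}}^2$. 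Each summand satisfies $\bigpar{\Id_d-(\Id_d-\gamma\Fhess)^{K-i}}^2\preccurlyeq\Id_d$, so $S\preccurlyeq(K-1)\Id_d\preccurlyeq K\Id_d$, and the target second term $\frac{1}{\sqrt{K}}\sqrt{\Tr{\aniac\Fhess^{-1}}}$ follows after taking the square root, in the spirit of Lemma~2 of~\cite{bach_non-strongly-convex_2013}.

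The one genuinely non-routine step is the trace inequality $\Tr{\Fhess^{-1}S\aniac}\leq K\Tr{\Fhess^{-1}\aniac}$, which is delicate because $\aniac$ and $\Fhess$ need not commute. I would handle it by rewriting $\Fhess^{-1}S=\Fhess^{-1/2}S\Fhess^{-1/2}=:P$ (legitimate as $S$ is a function of $\Fhess$), observing that $S\preccurlyeq K\Id_d$ gives $P\preccurlyeq K\Fhess^{-1}$ by conjugation, and then concluding from $\Tr{(K\Fhess^{-1}-P)\aniac}\geq 0$, which holds because the trace of a product of two positive semidefinite matrices is nonnegative. This PSD trace manipulation, together with the scalar inequality $1-(1-x)^K\leq\sqrt{Kx}$, are the only points requiring care; the remainder parallels the averaged-SGD computation of~\cite{bach_non-strongly-convex_2013}.
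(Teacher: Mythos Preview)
Your proposal is correct and structurally matches the paper's proof: same noise-free/noise split via Minkowski, same geometric-sum bound $\|\Fhess^{-1/2}\eta_0\|/(\gamma K)$ for the first noise-free estimate, and the same Lemma~2 of \cite{bach_non-strongly-convex_2013} for the noise contribution (which you essentially re-derive, including the careful PSD trace step).

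The one genuine difference is how you obtain the second noise-free bound $\|\eta_0\|/\sqrt{\gamma K}$. The paper does \emph{not} use your scalar inequality $1-(1-x)^K\leq\sqrt{Kx}$; instead it expands $\|\eta_k^0\|^2 = \|\eta_{k-1}^0\|^2 - 2\gamma\|\Fhess^{1/2}\eta_{k-1}^0\|^2 + \gamma^2\|\Fhess\eta_{k-1}^0\|^2$, uses $\Fhess\preccurlyeq\Ftrace^2\Id_d$ together with $\gamma\Ftrace^2\leq 1$ to telescope $\sum_k\|\Fhess^{1/2}\eta_{k-1}^0\|^2 \leq \|\eta_0\|^2/\gamma$, and then invokes convexity of the quadratic to pass from the average of $\|\Fhess^{1/2}\eta_k^0\|^2$ to $\|\Fhess^{1/2}\etabarkm^0\|^2$. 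Your eigenvalue-wise argument is more direct and self-contained (it does not use convexity of the loss or the telescoping identity), while the paper's argument is the classical averaged-SGD Lyapunov computation. Both yield the identical bound.
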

\end{minipage}
}

\begin{proof}

The proof relies on the proof presented by \cite{bach_non-strongly-convex_2013} and is done separately for the noise process and for the noise-free process that depends only on the initial condition. The bounds may then be added (see the discussion in \Cref{app:subsec:proof_principale_gen}).

\textbf{Noise-free process.}

As in section A.3 from \cite{bach_non-strongly-convex_2013}, we assume in this section that the random fields $(\xi_k^\mathrm{add})_{k\in \N^*}$ is uniformly equal to zero and that $\gamma  \Ftrace^2 \leq 1$. We thus have for any $k$ in $\N^*$ that~$\eta_k^0 = (\Id_d - \gamma \Fhess) \etakm^0 $.

\textit{First inequality.}
By recursion, we have $\eta_k^0 = (\Id_d - \gamma \Fhess)^k \eta_0^0 $, averaging over $K$ in $\N^*$ and computing the resulting geometric sum, we have:
\begin{align*}
    \overline{\eta}_{K-1}^0 = \ffrac{1}{K} \sum_{k=0}^{K-1} (\Id_d - \gamma \Fhess)^{k} \eta_0^0 = \frac{1}{K} (\Id_d - (\Id_d - \gamma \Fhess)^{K-1}) (\gamma \Fhess)^{-1} \eta_0^0 \preccurlyeq \ffrac{1}{\gamma K} \Fhess^{-1} \eta_0^0.
\end{align*}

And because $\eta_0^0 = \eta_0$, it gives $\FullExpec{\PdtScl{\etabarkm^0}{\Fhess \etabarkm^0}}  \leq \frac{\|\Fhess^{1/2}\eta_0\|^2}{\gamma^2 K^2}$.

\textit{Second inequality.} From the expression of $\eta_k^0$ flows:
\[
\fullexpec{ \sqrdnrm{\eta_k^0}} = \fullexpec{ \sqrdnrm{\etakm^0}} - 2 \gamma \PdtScl{\etakm^0}{\Fhess \etakm^0} + \gamma^2\PdtScl{\etakm^0}{\Fhess^2 \etakm^0} \,.
\]
Considering that $\Fhess \preccurlyeq \Tr{\Fhess} \Id_d \preccurlyeq  \Ftrace^2 \Id_d$ (\Cref{def:class_of_algo}) and that $\gamma  \Ftrace^2 \leq 1$, 
because $\eta_0^0 = \eta_0$, by convexity we have: $\fullexpec{\sqrdnrm{\Fhess^{1/2} \etabarkm^0}}  \leq \frac{1}{K} \sum_{k=1}^{K} \fullexpec{\sqrdnrm{\Fhess^{1/2} \etakm^0}} \leq \frac{\SqrdNrm{\eta_0}}{\gamma K}\,.$

\textit{Putting things together.}

In the end, we take the minimum of the two above bounds and obtain that:
\begin{align}
\label{app:eq:noise_unif_equal_to_zero}
    \fullexpec{\sqrdnrm{\Fhess^{1/2} \etabarkm^0}} \leq \ffrac{\|\Fhess^{-1/2} \eta_0\|^2}{\gamma^2 K^2} \wedge \ffrac{\SqrdNrm{\eta_0}}{\gamma K} \,.
\end{align}

\textbf{Noise process.}

We assume in this part that $\eta_0^0 = \eta_0 = 0$. We apply Lemma 2 from \citet{bach_non-strongly-convex_2013} to $\etakm^0$. This sequence of iterates has an i.i.d. noise process $(\xikstaruk)_{k\in\N^*}$ which is such that~$\FullExpec{\xikstaruk \otimes \xikstaruk} =~\aniac$ (existence guaranteed by \Cref{asu:main:bound_add_noise}). Therefore we have:
\begin{align}
\label{app:eq:no_initial_bias}
    \fullexpec{\sqrdnrm{\Fhess^{1/2} \etabarkm^0}} \leq \ffrac{ \Tr{\aniac \Fhess^{-1} }}{K} \,.
\end{align}

\textbf{Putting things together.}
We now take results derived from the part without noise and the part with noise, and we get from Minkowski inequality:
\begin{align*}
    \FullExpec{\|\Fhess^{1/2} \etabarkm^0 \|^2}^{1/2} \leq &\frac{1}{\sqrt{K}} \bigpar{ \ffrac{\| \Fhess^{-1/2} \eta_0 \|}{\gamma \sqrt{K}} \wedge \frac{\|\eta_0 \|}{\sqrt{\gamma}} + \sqrt{\Tr{\aniac \Fhess^{-1} }} } \,.
\end{align*}
\end{proof}

Below lemma provides a bound on $\fullexpec{ \|\Fhess^{1/2} (\overline{\eta}_K - \overline{\eta}_K^0)\|^2}$. 

\fbox{
\begin{minipage}{0.97\textwidth}
\begin{lemma}[Bound on $\fullexpec{\|\Fhess^{1/2} (\overline{\eta}_K - \overline{\eta}_K^0)\|^2}$]
\label[lemma]{app:lem:tight_bound_on_eta_k_minus_eta_k_0}
Under the setting considered in \Cref{def:class_of_algo} with $\mu> 0$, under \Cref{asu:main:bound_add_noise} , under \Cref{asu:main:bound_mult_noise,asu:main:bound_mult_noise_holder}, for any $K$ in $\N^*$ and any step-size $\gamma \in \R^+$ s.t. $\gamma ( \Ftrace^2 + 2 \boundMult) < 1$, 
the sequence  $(\overline{\eta}_k - \overline{\eta}_k^0)_{k\in\N^*}$ verifies the following bound:
\begin{align*}
\FullExpec{\|\Fhess^{1/2} (\overline{\eta}_K - \overline{\eta}_K^0)\|^2}^{1/2}] &\leq \frac{1}{\sqrt{K}} \Bigg( \sqrt{\boundMultPrime \mu^{-1}} \bigpar{\ffrac{5\boundAdd \gamma}{1 - \gamma( \Ftrace^2 + 2 \boundMult)}}^{1/4} \\
&\qqquad+ \sqrt{\boundMult\mu^{-1}} \bigpar{\ffrac{15\boundAdd \gamma }{1 - \gamma( \Ftrace^2 + 2 \boundMult)}}^{1/2} \Bigg) \,.
\end{align*}

\end{lemma}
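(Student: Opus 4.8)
The plan is to follow the three-step route sketched in \Cref{app:subsec:proof_principale_gen}: obtain the closed form for $\overline{\eta}_{K-1}-\overline{\eta}_{K-1}^0=\overline{\alpha}_{K-1}$, exploit the martingale structure of its summands to turn the mean-square of a sum into a sum of conditional second moments, and then feed in the Hölder-type \Cref{asu:main:bound_mult_noise_holder} together with the averaged bound of \Cref{app:lem:from_bm2013}. Starting from $\alpha_k=(\Id-\gamma\Fhess)\alpha_{k-1}+\gamma\,\xi_k^{\mathrm{mult}}(\etakm)$ with $\alpha_0=0$, unrolling the recursion, averaging over $k$ and swapping the order of summation produces the telescoped expression for $\overline{\alpha}_{K-1}$ already recorded in the proof principle. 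Multiplying by $\Fhess^{1/2}$ and using that $\Fhess^{1/2}$, $(\Id-\gamma\Fhess)$ and $\Fhess^{-1}$ commute, the factor $\gamma$ cancels and I obtain $\Fhess^{1/2}\overline{\alpha}_{K-1}=\frac1K\sum_{k=1}^{K-1}A_k\,\Fhess^{-1/2}\xi_k^{\mathrm{mult}}(\etakm)$ with $A_k:=\Id-(\Id-\gamma\Fhess)^{K-k}$.

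The key step is to notice that $(\xi_k^{\mathrm{mult}}(\etakm))_k$ are martingale increments: the field is zero-centered and $\etakm$ is $\Flast_{k-1}$-measurable, so $\Expec{\xi_k^{\mathrm{mult}}(\etakm)}{\Flast_{k-1}}=0$; hence for $i<k$ every cross term vanishes once conditioned on $\Flast_{k-1}$ (the earlier factor being $\Flast_{k-1}$-measurable and $A_i,A_k,\Fhess^{-1/2}$ deterministic). The mean-square therefore decouples, $\FullExpec{\sqrdnrm{\Fhess^{1/2}\overline{\alpha}_{K-1}}}=\frac1{K^2}\sum_{k=1}^{K-1}\FullExpec{\sqrdnrm{A_k\Fhess^{-1/2}\xi_k^{\mathrm{mult}}(\etakm)}}$. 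Bounding each diagonal term with the operator norm, the step-size condition $\gamma(\Ftrace^2+2\boundMult)<1$ forces the eigenvalues of $\gamma\Fhess$ into $[0,1)$, so $\vertiii{A_k}\le1$, while $\vertiii{\Fhess^{-1/2}}^2=\mu^{-1}$ (this is where $\mu>0$ is needed), reducing the bound to $\frac{\mu^{-1}}{K^2}\sum_k\FullExpec{\sqrdnrm{\xi_k^{\mathrm{mult}}(\etakm)}}$.

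Next I would apply \Cref{asu:main:bound_mult_noise_holder} after conditioning on $\Flast_{k-1}$ and using the tower rule, giving $\FullExpec{\sqrdnrm{\xi_k^{\mathrm{mult}}(\etakm)}}\le \boundMultPrime\,\FullExpec{\|\Fhess^{1/2}\etakm\|}+3\boundMult\,\FullExpec{\sqrdnrm{\Fhess^{1/2}\etakm}}$. Writing $S:=\frac1K\sum_{k=0}^{K-1}\FullExpec{\sqrdnrm{\Fhess^{1/2}\eta_k}}$, the quadratic ($\boundMult$) sum is bounded by $S$ directly, whereas the linear ($\boundMultPrime$) sum is handled by two successive applications of Jensen/Cauchy--Schwarz, $\frac1K\sum_k\FullExpec{\|\Fhess^{1/2}\etakm\|}\le\bigpar{\frac1K\sum_k\FullExpec{\sqrdnrm{\Fhess^{1/2}\etakm}}}^{1/2}\le\sqrt S$; this passage is exactly what creates the mismatched exponents $1/4$ and $1/2$ in the statement. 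Invoking \Cref{app:lem:from_bm2013} controls $S$ by the noise term $B:=\frac{5\boundAdd\gamma}{1-\gamma(\Ftrace^2+2\boundMult)}$, so $\FullExpec{\sqrdnrm{\Fhess^{1/2}\overline{\alpha}_{K-1}}}\le\frac1K\bigpar{\boundMultPrime\mu^{-1}\sqrt B+3\boundMult\mu^{-1}B}$, and taking square roots with $\sqrt{a+b}\le\sqrt a+\sqrt b$ together with $\sqrt{3B}=\bigpar{15\boundAdd\gamma/(1-\gamma(\Ftrace^2+2\boundMult))}^{1/2}$ yields precisely the claimed bound.

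The main obstacle is conceptual rather than computational: recognizing and justifying the martingale orthogonality that converts the square of the sum into a sum of squares. Without it, a crude triangle inequality would lose a factor of order $K$ and destroy the $1/K$ rate; it is the sole reason the additive-noise level $\boundAdd$ enters only to a fractional power. The remaining care is bookkeeping — deriving $\vertiii{A_k}\le1$ from the step-size constraint and tracking the $1/4$ versus $1/2$ exponents produced by the non-squared Hölder term. I also note that \Cref{app:lem:from_bm2013} contributes an initial-condition term $\propto\|\eta_0\|^2/(\gamma K)$; after the $\mu^{-1}/K$ prefactor and the square root this is of lower order (it scales as $1/K$ rather than $1/\sqrt K$), and it is absorbed into the initial-condition term of the $\overline{\eta}^0$ bound (\Cref{app:lem:bound_on_eta_k_0}) in the final theorem, which is why it is not displayed in this lemma.
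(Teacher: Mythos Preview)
Your argument is correct and follows essentially the same route as the paper's proof: unrolling the $\alpha_k$ recursion, using martingale orthogonality to reduce to a sum of conditional second moments, bounding via $\vertiii{A_k}\le 1$ and $\vertiii{\Fhess^{-1/2}}^2=\mu^{-1}$, applying \Cref{asu:main:bound_mult_noise_holder}, Jensen on the linear term, and then \Cref{app:lem:from_bm2013}. Your treatment is in fact slightly more careful than the paper's on one point: the paper simply invokes \Cref{app:lem:from_bm2013} ``with $\eta_0=0$'' without comment, whereas you correctly note that the initial-condition contribution is of lower order in $K$ and is absorbed into the $\etabarkm^0$ bound in the final theorem.
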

\end{minipage}
}

\begin{remark}
    To demonstrate \Cref{app:lem:tight_bound_on_eta_k_minus_eta_k_0}, we use the Hölder-type \Cref{asu:main:bound_mult_noise_holder}. This is why we obtain a term with a square root in the bound.
\end{remark}

\begin{proof}

Let $k$ in $\N^*$, we denote $\alpha_k = \eta_k - \eta_k^0$, with $\eta_k = (\Id_d - \gamma \Fhess)\etakm + \gamma \xik$ and $\eta_k^0 = (\Id_d - \gamma \Fhess)\etakm^0 + \gamma \xikstaruk$. First, we write the exact expression of $\alpha_{k-1}$:
\begin{align*}
    \alpha_k &= (\Id_d - \gamma \Fhess) \alpha_{k-1} + \gamma (\xik - \xikstaruk) \\
    &= (\Id_d - \gamma \Fhess)^k \alpha_0 + \gamma \sum_{i=1}^k (\Id_d - \gamma \Fhess)^{k-i} (\xi_i(\eta_{i-1}) - \xi_i^{\mathrm{add}}) \,,
\end{align*}    
and because $\eta_0^0 = \eta_0$, it follows that $\alpha_0 = \eta_0 - \eta_0^0 = 0$. Averaging over $K$ in $\N^*$, we have the exact expression of $\overline{\alpha}_{K-1}$:
\begin{align*}
    \overline{\alpha}_{K-1} &= \frac{\gamma}{K} \sum_{k=0}^{K-1} \sum_{i=1}^k (\Id_d - \gamma \Fhess)^{k-i} (\xi_i(\eta_{i-1}) - \xi_i^{\mathrm{add}})) \\
    &= \frac{\gamma}{K} \sum_{i=1}^{K-1} \bigpar{\sum_{k=i}^{K-1} (\Id_d - \gamma \Fhess)^{k-i}} (\xi_i(\eta_{i-1}) - \xi_i^{\mathrm{add}}))   \,.
\end{align*}

Computing the geometric sum results in:
\begin{align*}
    \overline{\alpha}_{K-1} &= \frac{\gamma}{K} \sum_{k=1}^{K-1} (\Id_d - (\Id_d - \gamma \Fhess)^{K-k}) (\gamma \Fhess)^{-1} (\xik  - \xikstaruk) \,.
\end{align*}

And because for any $k$ in $\N$, $0 \preccurlyeq (\Id_d - \gamma  \Fhess)^k \preccurlyeq \Id_d $, we obtain: 
$$\overline{\alpha}_{K-1}\preccurlyeq \frac{1}{K} \sum_{k=1}^{K-1}  \Fhess^{-1} (\xik  - \xikstaruk)\,,$$
hence $\sqrdnrm{\Fhess^{1/2} \overline{\alpha}_{K-1}} = \sqrdnrm{\frac{1}{K} \sum_{k=1}^{K-1}  \Fhess^{-1/2} (\xik  - \xikstaruk)}$. We take full expectation, because for any $k$ in $\N^*$, by \Cref{def:class_of_algo,def:add_mult_noise}, $\xikmultFN[\etakm] = \xik  - \xikstaruk$ is $\Flast_k$-measurable and $\Expec{\xikmultFN[\etakm]}{\Flast_{k-1}} = 0 $, we can unroll the sum and we have in the end that the variance of the sum is the sum of variances:
\begin{align}
\label{app:eq:ligne1}
    \FullExpec{\SqrdNrm{\Fhess^{1/2} \overline{\alpha}_{K-1}}} \leq \frac{1}{K^2} \sum^{K-1}_{k=1} \Expec{\SqrdNrm{\Fhess^{-1/2}\xikmultFN[\etakm]}}{\Flast_{k-1}}\,.
\end{align}
Computing $\expec{\sqrdnrm{\Fhess^{-1/2}\xikmultFN[\etakm]}}{\Flast_{k-1}}$ for $k$ in $\N$, we first have:
\begin{align*}
\sqrdnrm{\Fhess^{-1/2}\xikmultFN[\etakm]} \leq \vertiii{\Fhess^{-1/2}}^2\sqrdnrm{\xikmultFN[\etakm]}\,,
\end{align*}
where we used \Cref{app:lem:ineq_subordinate_norm}. Because $\Fhess$ is a symmetric semi-positive matrix, we have $\vertiii{\Fhess^{-1/2}}^2 = 1/\mu$, hence:
$\sqrdnrm{\Fhess^{-1/2}\xikmultFN[\etakm]} \leq \mu^{-1}\sqrdnrm{\xikmultFN[\etakm]}$.
Taking expectation conditionally to the $\sigma$-algebra $\Flast_{k-1}$ and invoking \Cref{asu:main:bound_mult_noise_holder} gives:
\begin{align}
\begin{split}
\label{app:eq:ligne2}
    \expec{\sqrdnrm{\Fhess^{-1/2}\xikmultFN[\etakm]}}{\Flast_{k-1}} \leq \mu^{-1} (\boundMultPrime \|\Fhess^{1/2} \etakm \| + 3 \boundMult \|\Fhess^{1/2} \etakm \|^2 )\,.
\end{split}
\end{align}

Combining equations~\ref{app:eq:ligne1} and~\ref{app:eq:ligne2}, we obtain:
\begin{align*}
    \fullexpec{\sqrdnrm{\Fhess^{1/2} \overline{\alpha}_{K-1}}} &\leq \frac{\boundMultPrime}{\mu K^2} \sum^{K-1}_{k=1}  \fullexpec{\|\Fhess^{1/2} \etakm \|} + \frac{3\boundMult}{\mu K^2} \sum^{K-1}_{k=1} \fullexpec{ \|\Fhess^{1/2} \etakm \|^2} \,.
\end{align*}

Now using Jensen's inequality for concave function allows us to write:
\begin{align*}
    \frac{1}{K} \sum_{k=1}^K \fullexpec{\|\Fhess^{1/2} (w - \ws) \|} \leq \frac{1}{K} \sum_{k=1}^K \sqrt{\fullexpec{ \|\Fhess^{1/2} (w - \ws) \|^2}} \leq \sqrt{\frac{1}{K}\sum_{k=1}^K \fullexpec{ \|\Fhess^{1/2} (w - \ws) \|^2}} \,,
\end{align*}
thus we have:
\begin{align*}
    \fullexpec{\sqrdnrm{\Fhess^{1/2} \overline{\alpha}_{K-1}}} &\leq \frac{\boundMultPrime}{\mu K} \sqrt{ \frac{1}{K} \sum^{K-1}_{k=1} \fullexpec{ \|\Fhess^{1/2} \etakm \|^2}} + \frac{3\boundMult}{\mu K^2} \sum^{K-1}_{k=1} \fullexpec{\|\Fhess^{1/2} \etakm \|^2} \,.
\end{align*}

Using \Cref{app:lem:from_bm2013} (with $\eta_0 = 0$), we finally obtain:
\begin{align*}
    \fullexpec{\sqrdnrm{\Fhess^{1/2} \overline{\alpha}_{K-1}}} &\leq \frac{1}{K} \bigpar{\boundMultPrime \mu^{-1} \sqrt{\ffrac{5\boundAdd \gamma}{1 - \gamma( \Ftrace^2 + 2 \boundMult)}} +   \ffrac{15\boundAdd \gamma \boundMult \mu^{-1}}{1 - \gamma( \Ftrace^2 + 2 \boundMult)} } \,.
\end{align*}

In the end, we take the square root (and use that for any $a,b$ in $\R_+$, $\sqrt{a + b} \leq \sqrt{a} + \sqrt{b}$) which allows concluding:
\begin{align*}
\FullExpec{\|\Fhess^{1/2} (\overline{\eta}_K - \overline{\eta}_K^0)\|^2}^{1/2} &\leq \frac{1}{\sqrt{K}} \Bigg( \sqrt{\boundMultPrime \mu^{-1}} \bigpar{\ffrac{5\boundAdd \gamma}{1 - \gamma( \Ftrace^2 + 2 \boundMult)}}^{1/4} \\
&\qqquad+ \sqrt{\boundMult\mu^{-1}} \bigpar{\ffrac{15\boundAdd \gamma }{1 - \gamma( \Ftrace^2 + 2 \boundMult)}}^{1/2} \Bigg) \,.
\end{align*}
\end{proof}

\subsection{Final theorem}
\label{app:subsec:final_thm_bm2013_with_nonlinear_comp}

In this section, we gather the pieces of proof required to demonstrate \Cref{thm:bm2013_with_nonlinear_operator_compression}.  

\fbox{
\begin{minipage}{0.97\textwidth}
\begin{theorem}[Non-linear multiplicative noise]
\label{app:thm:bm2013_with_nonlinear_operator_compression}
Under \Cref{asu:main:second_moment_noise,asu:main:bound_add_noise}, considering any constant step-size $\gamma$ such that $ \gamma( \Ftrace^2 + 2 \boundMult) \leq 1/2$, then for any $K$ in $\N^*$, the sequence $(w_k)_{k\in\N^*}$ produced by a setting such as in \Cref{def:class_of_algo} verifies the following bound:
\begin{align*}
    \E[ F(\overline{w}_{K-1}) - F(\ws) ] &\leq \frac{1}{2K} \Bigg(\ffrac{\| \Fhess^{-1/2} \eta_0 \|}{\gamma \sqrt{K}} \wedge \frac{\|\eta_0 \|}{\sqrt{\gamma}} + \sqrt{\Tr{\aniac \Fhess^{-1} }}  \\ 
    &\qqquad+ \bigpar{10\boundAdd \gamma}^{1/4}\sqrt{\boundMultPrime \mu^{-1}}  
     + \bigpar{30\boundAdd \gamma}^{1/2} \sqrt{\boundMult\mu^{-1}} \Bigg)^2 \,.
\end{align*}
\end{theorem}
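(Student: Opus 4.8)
The plan is to reduce the excess-risk bound to a control of $\FullExpec{\sqrdnrm{\Fhess^{1/2}\etabarkm}}$ and then split this quantity into an ``additive-noise'' part and a ``multiplicative-noise'' part. Since $F$ is quadratic with Hessian $\Fhess$, one has the exact identity $F(\overline{w}_{K-1}) - F(\ws) = \tfrac12\sqrdnrm{\Fhess^{1/2}\etabarkm}$, so it suffices to bound $\tfrac12\FullExpec{\sqrdnrm{\Fhess^{1/2}\etabarkm}}$. First I would introduce the auxiliary recursion $(\eta_k^0)$ of \eqref{app:eq:def_eta0}, driven only by the additive noise $\xikstaruk$, and set $\alpha_k = \eta_k - \eta_k^0$, so that $\etabarkm = \etabarkm^0 + \overline{\alpha}_{K-1}$. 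Applying Minkowski's inequality \eqref{app:eq:minkowski} to $\Fhess^{1/2}\etabarkm = \Fhess^{1/2}\etabarkm^0 + \Fhess^{1/2}\overline{\alpha}_{K-1}$ gives
\[
\FullExpec{\sqrdnrm{\Fhess^{1/2}\etabarkm}}^{1/2} \le \FullExpec{\sqrdnrm{\Fhess^{1/2}\etabarkm^0}}^{1/2} + \FullExpec{\sqrdnrm{\Fhess^{1/2}\overline{\alpha}_{K-1}}}^{1/2},
\]
which is exactly \eqref{app:eq:minkowski_decomposition_in_intro}.

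Next I would invoke the two bounds established just above. \Cref{app:lem:bound_on_eta_k_0} controls the first (additive-noise) term by $K^{-1/2}\bigpar{\tfrac{\|\Fhess^{-1/2}\eta_0\|}{\gamma\sqrt K}\wedge\tfrac{\|\eta_0\|}{\sqrt\gamma} + \sqrt{\Tr{\aniac\Fhess^{-1}}}}$, reproducing the initial-condition and asymptotic-variance terms, while \Cref{app:lem:tight_bound_on_eta_k_minus_eta_k_0} controls the second (multiplicative-noise) term by $K^{-1/2}$ times the two Hölder contributions. Both lemmas only need $\gamma(\Ftrace^2 + 2\boundMult) < 1$, which is implied by the theorem's hypothesis; the stronger bound $\gamma(\Ftrace^2 + 2\boundMult)\le 1/2$ is used precisely to clear the denominators, since it forces $1 - \gamma(\Ftrace^2 + 2\boundMult) \ge 1/2$ and hence $\bigpar{\tfrac{5\boundAdd\gamma}{1-\gamma(\Ftrace^2+2\boundMult)}}^{1/4} \le (10\boundAdd\gamma)^{1/4}$ and $\bigpar{\tfrac{15\boundAdd\gamma}{1-\gamma(\Ftrace^2+2\boundMult)}}^{1/2} \le (30\boundAdd\gamma)^{1/2}$. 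Adding the two square-root bounds, squaring, and multiplying by $\tfrac12$ then yields the stated inequality term by term.

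The genuinely delicate part is not this final assembly but \Cref{app:lem:tight_bound_on_eta_k_minus_eta_k_0}. The multiplicative field enters $\overline{\alpha}_{K-1}$ through $\Fhess^{-1/2}\xi_k^{\mathrm{mult}}(\etakm)$, and under only the Hölder-type \Cref{asu:main:bound_mult_noise_holder} its conditional second moment is bounded by $\mu^{-1}\bigpar{\boundMultPrime\|\Fhess^{1/2}\etakm\| + 3\boundMult\|\Fhess^{1/2}\etakm\|^2}$, with the \emph{non-squared} norm appearing. Controlling $\tfrac1K\sum_k\fullexpec{\|\Fhess^{1/2}\etakm\|}$ then requires a Jensen step that converts the second-moment bound of \Cref{app:lem:from_bm2013} into a square root; this is the origin both of the unusual $1/4$ exponent in the third term and of the factor $\vertiii{\Fhess^{-1/2}}^2 = \mu^{-1}$, which explains why — unlike the linear case of \Cref{thm:bm2013_with_linear_operator_compression} — a dependence on the strong-convexity constant $\mu>0$ appears to be unavoidable here, the multiplicative noise being controlled only in a neighborhood of $\ws$.
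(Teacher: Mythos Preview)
Your proposal is correct and follows essentially the same route as the paper: the Minkowski split into $\etabarkm^0$ and $\overline\alpha_{K-1}$, the two lemmas \Cref{app:lem:bound_on_eta_k_0} and \Cref{app:lem:tight_bound_on_eta_k_minus_eta_k_0}, and the use of $\gamma(\Ftrace^2+2\boundMult)\le 1/2$ to replace $1/(1-\gamma(\Ftrace^2+2\boundMult))$ by $2$ in the constants are exactly what the paper does. Your closing paragraph also captures precisely the mechanism behind the $1/4$ exponent and the $\mu^{-1}$ factor in \Cref{app:lem:tight_bound_on_eta_k_minus_eta_k_0}.
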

\end{minipage}
}

\begin{proof}

As explained in the discussion in \Cref{app:subsec:proof_principale_gen} (\Cref{app:eq:minkowski_decomposition_in_intro}), we define the sequence $(\eta_k^0)_{k\in\N^*}$ which involves only an additive noise $\eta_k^0 = (\Id_d - \gamma \Fhess)\etakm^0 + \gamma \xikstaruk$. Then, we decompose $\fullexpec{\|\Fhess^{1/2} \etabarkm\|}$ using Minkowski's inequality \ref{app:eq:minkowski}:
\begin{align}
\label{app:eq:minkowski_decomposition}
        \FullExpec{\|\Fhess^{1/2} \etabarkm\|^2} \leq \bigpar{\FullExpec{\|\Fhess^{1/2} \etabarkm^0\|^2}^{1/2} + \FullExpec{\|\Fhess^{1/2}(\etabarkm - \etabarkm^0)\|^2}^{1/2} }^2\,.
\end{align}

\textit{First term.}

To bound the first term, we use \Cref{app:lem:bound_on_eta_k_0} which gives:
\begin{align*}
    \FullExpec{\|\Fhess^{1/2} \etabarkm^0 \|^2}^{1/2} \leq &\frac{1}{\sqrt{K}} \bigpar{ \ffrac{\| \Fhess^{-1/2} \eta_0 \|}{\gamma \sqrt{K}} \wedge \frac{\|\eta_0 \|}{\sqrt{\gamma}} + \sqrt{\Tr{\aniac \Fhess^{-1} }} } \,.
\end{align*}

\textit{Second term.}

From \Cref{app:lem:tight_bound_on_eta_k_minus_eta_k_0}, we have:
\begin{align*}
 \FullExpec{ \|\Fhess^{1/2} (\overline{\eta}_K - \overline{\eta}_K^0)\|^2}^{1/2} &\leq \frac{1}{\sqrt{K}} \Bigg( \sqrt{\boundMultPrime \mu^{-1}} \bigpar{\ffrac{5\boundAdd \gamma}{1 - \gamma( \Ftrace^2 + 2 \boundMult)}}^{1/4} \\
&\qqquad+ \sqrt{\boundMult\mu^{-1}} \bigpar{\ffrac{15\boundAdd \gamma }{1 - \gamma( \Ftrace^2 + 2 \boundMult)}}^{1/2} \Bigg) \,.
\end{align*}

\textbf{Final bound.}
Hence, back to \Cref{app:eq:minkowski_decomposition}, we get:
\begin{align*}
    \FullExpec{\|\Fhess^{1/2} \etabarkm \|^2}^{1/2}
    \leq &\frac{1}{\sqrt{K}} \Bigg( \ffrac{\| \Fhess^{-1/2} \eta_0 \|}{\gamma \sqrt{K}} \wedge \frac{\|\eta_0 \|}{\sqrt{\gamma}} + \sqrt{\Tr{\aniac \Fhess^{-1} }}  \\
    &\qquad+\sqrt{\boundMultPrime \mu^{-1}} \bigpar{\ffrac{5\boundAdd \gamma}{1 - \gamma( \Ftrace^2 + 2 \boundMult)}}^{1/4} \\
    &\qquad+ \sqrt{\boundMult\mu^{-1}} \bigpar{\ffrac{15\boundAdd \gamma }{1 - \gamma( \Ftrace^2 + 2 \boundMult)}}^{1/2} \Bigg) \,,
\end{align*}
and considering $ \gamma( \Ftrace^2 + 2 \boundMult) \leq 1/2$, it concludes the proof because $\E[ F(\overline{w}_{K-1}) - F(\ws) ] = \E[\|\Fhess^{1/2} \etabarkm \|^2] / 2$:
\begin{align*}
    \E[ F(\overline{w}_{K-1}) - F(\ws) ] &\leq \frac{1}{2K} \Bigg(\ffrac{\| \Fhess^{-1/2} \eta_0 \|}{\gamma \sqrt{K}} \wedge \frac{\|\eta_0 \|}{\sqrt{\gamma}} + \sqrt{\Tr{\aniac \Fhess^{-1} }}  +\bigpar{10\boundAdd \gamma}^{1/4} \sqrt{\boundMultPrime \mu^{-1}}  \\
    &\qqquad+ \bigpar{30\boundAdd \gamma}^{1/2}\sqrt{\boundMult\mu^{-1}}   \Bigg)^2 \,.
\end{align*}

\end{proof}

\section{Generalisation of Bach and Moulines (2013) for linear multiplicative noise.}
\label{app:sec:linear_bm}

In this Section, we give the demonstration of \Cref{thm:bm2013_with_linear_operator_compression} which extends Theorem 1 from \cite{bach_non-strongly-convex_2013} to the case of linear multiplicative noise. The demonstration follows the same steps as the one given by \citet{bach_non-strongly-convex_2013}. The minor differences lie in the generality of the form of the multiplicative noise in our approach. \citet{bach_non-strongly-convex_2013} only analyse LMS algorithm, while we here consider \eqref{eq:LSA} with assumptions on the linear multiplicative noise process. Moreover, our theorem decomposes into 3 terms instead of 2.

\subsection{Proof principle}
\label{app:subsec:proof_principale_lin}

For $k$ in $\N^*$, the proof relies on an expansion of $\eta_k$ and $\overline{\eta}_k$ as polynomials in $\gamma$. Because we consider a linear multiplicative noise, there exists a matrix $\Xi_k$ in $\R^{d\times d}$ s.t. for any $z$ in $\R^d$, $\xi_k^{\mathrm{mult}}(z) = \Xi_k z$ (\Cref{asu:main:bound_mult_noise_lin}); hence the recursion defined in \Cref{def:class_of_algo} can be rewritten as:
\[
\eta_k = \etakm - \gamma \nabla F(\etakm) + \gamma \xi_k^{\mathrm{mult}}(\etakm) + \gamma \xi_k^{\mathrm{add}} = (\Id_d - \gamma\Fhess + \gamma \Xi_k)\etakm + \gamma \xi_k^{\mathrm{add}} \,.
\]
We denote $M_i^k = \bigpar{\Id_d - \gamma \Fhess + \gamma \Xi_k} \cdots \bigpar{\Id_d - \gamma \Fhess + \gamma \Xi_i}$ and $M_i^{i-1} =\Id_d$, then we have that $\eta_k = M_1^k \eta_0 + \gamma \sum_{i=1}^k M_{i+1}^k \xi_k^{\mathrm{add}}.$

For $K$ in $\N^*$, it leads to $\overline{\eta}_{K-1} = \frac{1}{K} \sum_{k=0}^{K-1} M_1^k \eta_0 + \frac{\gamma}{K} \sum_{k=1}^{K-1} \bigpar{ \sum_{i=k}^K M_{k+1}^i} \xi_k^{\mathrm{add}}$, and with Minkowski's inequality \ref{app:eq:minkowski} to:
\begin{align}
\label{app:eq:polynomial_expension_linear}
    \sqrt{\FullExpec{\left\| \Fhess^{1/2} \overline{\eta}_{K-1} \right\|^2}} \leq \FullExpec{\left\| \frac{\Fhess^{1/2}}{K} \sum_{k=0}^{K-1} M_1^k \eta_0\right\|^2}^{1/2} + \FullExpec{\left\|  \frac{\gamma \Fhess^{1/2}}{K} \sum_{k=1}^{K-1}    \sum_{i=k}^K M_{k+1}^i \xi_k^{\mathrm{add}}\right\|^2}^{1/2}.
\end{align}

The left term depends only on initial conditions and the right term depends only on the noise process. 

This is why, in the proof, we expend $\eta_{k-1}$ and $\overline{\eta}_{k-1}$ separately for the noise process (i.e., when assuming $\eta_0 = 0$) and for the noise-free process
that depends only on the initial conditions (i.e. when assuming that the additive noise $(\xi_k^{\mathrm{add}})_{k\in\N^*}$ is uniformly equal to zero). In the end, the two bounds computed separately may be added.

To study the noise process, inspiring from \citet{bach_non-strongly-convex_2013}, we define the following sequence:
\begin{equation}
\label{app:eq:double_sequence_eta}
\left\{\begin{aligned}
& \eta_k^0 =  (\Id_d - \gamma \Fhess) \etakm^0 + \gamma \xikstaruk \\
& \eta_k^r = (\Id_d - \gamma \Fhess) \etakm^r + \gamma \xikmultFN[\etakm^{r-1}] \qquad \text{with} \qquad \forall r \geq 0\,, \eta_0^r = 0 \,. 
\end{aligned}\right.
\end{equation}
Then, we decompose $\fullexpec{\|\Fhess^{1/2} \etabarkm\|^2}$ in the following way using Minkowski's inequality \ref{app:eq:minkowski}:
\begin{align*}
    \sqrt{\FullExpec{\|\Fhess^{1/2} \etabarkm\|^2}} \leq\E[\|\Fhess^{1/2} \sum_{i=0}^r \etabarkm^i\|^2]^{1/2} + \E[\|\Fhess^{1/2}(\etabarkm - \sum_{i=0}^r \etabarkm^i)\|^2]^{1/2}.
\end{align*}
The goal is then to establish a bound for the two above quantities.

\begin{figure}
    \centering
    \includegraphics[width=0.9\linewidth]{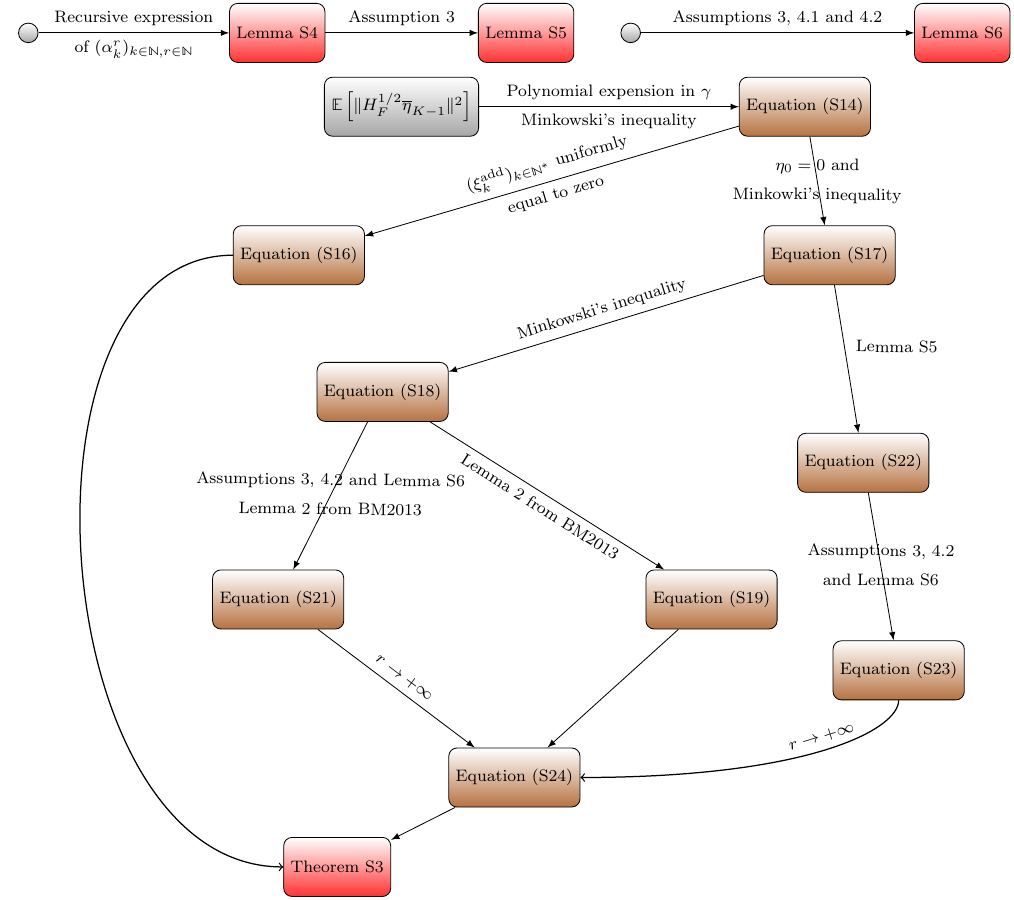}
    \vspace{-0.25cm} 
    \caption{Proof principle of \Cref{app:thm:bm2013_with_linear_operator_compression}. }
    \label{fig:grap_of_proofs_lin}
    \vspace{-0.3cm}
\end{figure}

\subsection{Lemmas for the noise process}
\label{app:subsec:noise_process}

In this Subsection, we provide lemmas for the noise process, and thus we suppose that $\eta_0 = 0$. The noise-free process is later considered in \Cref{app:subsec:final_thm_bm2013_with_linear_comp} and puts together with the results of the coming Subsection. The sketch of the proof relies on establishing two bounds. 
\begin{itemize}[topsep=2pt,itemsep=1pt,leftmargin=0.5cm,noitemsep]
    \item For $r,k$ in $\N \times \N^*$, noting $\alpha_k^r= \eta_k - \sum_{i=0}^{r} \eta_k^i$, the first one is a bound on $\E[\|\Fhess^{1/2} \overline{\alpha}_{K-1}^r\|^2]$ that tends to zero when $r$ tends to $+\infty$.
    \item The second one is on $\sum_{i=0}^{r} \E[\|\Fhess^{1/2} \etabarkm^i\|^2]$ and is established using Lemma 2 from \citep{bach_non-strongly-convex_2013}. It will correspond to the final variance term and it involves the additive noise's covariance $\aniac$. 
\end{itemize}

In the following, we provide \Cref{app:lem:bound_cov_eta_from_bm2013,app:lem:bounding_rec,app:lem:recurrence_from_bm2013}. Let $r,k $ in $\N \times \N^*$. 
\begin{itemize}[topsep=2pt,itemsep=1pt,leftmargin=0.5cm,noitemsep]
    \item \Cref{app:lem:recurrence_from_bm2013} builds a recursive expression of $\alpha_{k}^r = \eta_k - \sum_{i=0}^{r} \eta_k^i$.
    \item \Cref{app:lem:bounding_rec} provides a bound on~$\E[\|\Fhess^{1/2}\overline{\alpha}_{K-1}^r\|^2]$ which involves $\E \|\xikmultFN[\etakm^r]\|^2$.
    \item \Cref{app:lem:bound_cov_eta_from_bm2013} bounds the covariance of $\etakm^r$, this result will be necessary when computing
    the expectation of $\xikmultFN[\etakm^{r}]^{\kcarre}$.
\end{itemize}

Below, we provide the lemma that builds a recursive expression of $\eta_k - \sum_{i=0}^{r} \eta_k^i$, with $k,r $ in $\N^*$.

\fbox{
\begin{minipage}{0.97\textwidth}
\begin{lemma}[A recursion on $\eta_k - \sum_{i=0}^{r} \eta_k^i$]
\label[lemma]{app:lem:recurrence_from_bm2013}
Under the setting given in \Cref{def:class_of_algo}, considering that $\xikmultFN[\cdot]$ is linear (\Cref{asu:main:bound_mult_noise_lin}), for any $k$ in $\N^*$ and any step-size $\gamma>0$, considering $(\eta_k^r)_{r\in \N}$ as given by \Cref{app:eq:double_sequence_eta}, denoting for $r$ in $\N$, $\alpha_k^r = \eta_k - \sum_{i=0}^{r} \eta_k^i$, we have the following recursive expression for the sequence of iterate $(\alpha_k^r)_{r\in \N}$:
\begin{align*}
\forall r \geq 0, \alpha_k^r = (\Id_d - \gamma \Fhess  )\alpha_{k-1}^r + \xikmultFN[\alpha_{k-1}^r] +  \gamma \xikmultFN[\etakm^r]  \,.
\end{align*}
\end{lemma}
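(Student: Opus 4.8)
The plan is to obtain the identity by a direct algebraic manipulation rather than by any induction: I would write the one-step \eqref{eq:LSA} update for $\eta_k$ in the linear-noise regime, sum the defining equations of the auxiliary processes $\eta_k^0,\dots,\eta_k^r$, subtract the two, and then invoke the linearity of $\xikmultFN[\cdot]$ (granted by \Cref{asu:main:bound_mult_noise_lin}) to regroup the multiplicative contributions. Since the claim is a termwise relation between $\alpha_k^r$ and $\alpha_{k-1}^r$ for each fixed pair $(k,r)$, no recursion over $r$ or $k$ is actually required; the initialization $\alpha_0^r=0$ plays no role in establishing the identity itself.

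Concretely, under \Cref{asu:main:bound_mult_noise_lin} the recursion of \Cref{def:class_of_algo} reads $\eta_k = (\Id_d - \gamma \Fhess)\etakm + \gamma \xikmultFN[\etakm] + \gamma \xikstaruk$. Summing the equations of \eqref{app:eq:double_sequence_eta} over $i=0,\dots,r$ and re-indexing the multiplicative terms by $j=i-1$ gives
\begin{equation*}
\sum_{i=0}^r \eta_k^i = (\Id_d - \gamma \Fhess)\sum_{i=0}^r \etakm^i + \gamma \xikstaruk + \gamma \sum_{j=0}^{r-1} \xikmultFN[\etakm^j] \,,
\end{equation*}
and by linearity the last sum equals $\gamma\,\xikmultFN[\sum_{j=0}^{r-1}\etakm^j]$. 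Subtracting this from the update of $\eta_k$ cancels the additive term $\gamma\xikstaruk$, so that with $\alpha_k^r := \eta_k - \sum_{i=0}^r \eta_k^i$ one is left with
\begin{equation*}
\alpha_k^r = (\Id_d - \gamma \Fhess)\alpha_{k-1}^r + \gamma\,\xikmultFN[\etakm - \sum_{j=0}^{r-1}\etakm^j] \,.
\end{equation*}

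The only step requiring care — and the crux of the book-keeping — is identifying the argument of the remaining multiplicative term. I would observe that $\etakm - \sum_{j=0}^{r-1}\etakm^j = \alpha_{k-1}^{r-1} = \alpha_{k-1}^r + \etakm^r$, the second equality amounting to adding and subtracting $\etakm^r$; a final use of linearity then splits $\xikmultFN[\alpha_{k-1}^r + \etakm^r] = \xikmultFN[\alpha_{k-1}^r] + \xikmultFN[\etakm^r]$, producing at once the self-referential term $\gamma\,\xikmultFN[\alpha_{k-1}^r]$ and the driving term $\gamma\,\xikmultFN[\etakm^r]$, which is exactly the stated recursion. This index shift between $r$ and $r-1$ is the single genuinely delicate point, and it is where I would double-check the boundary case $r=0$ (the empty sum making $\xikmultFN[\cdot]$ act on $0$); everything else is routine algebra relying only on the linearity of the multiplicative field.
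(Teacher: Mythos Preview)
Your argument is correct. The paper establishes the identity by induction on $r$: it checks $r=0$ by hand (expanding $\eta_k-\eta_k^0$ and then splitting $\xikmultFN[\etakm]=\xikmultFN[\etakm-\etakm^0]+\xikmultFN[\etakm^0]$ by linearity), and passes from $r$ to $r+1$ by subtracting the defining equation of $\eta_k^{r+1}$ from the induction hypothesis. You instead sum the auxiliary recursions once and for all and subtract from the update of $\eta_k$, which is a more direct route: it dispenses with the two-stage base/inductive structure and makes the cancellation of $\gamma\,\xikstaruk$ explicit. Both proofs rest on the same ingredient---linearity of $\xikmultFN[\cdot]$---and the pivotal algebraic step is the same identity $\etakm-\sum_{j\le r-1}\etakm^j=\alpha_{k-1}^r+\etakm^r$ (which you rightly flag as the delicate index shift, and which your boundary check $r=0$ handles). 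As a side remark, your computation naturally produces a factor $\gamma$ in front of $\xikmultFN[\alpha_{k-1}^r]$; this is the correct coefficient (it is the one actually used downstream in \Cref{app:lem:bounding_rec}), and the displayed statement is simply missing it.
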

\end{minipage}
}

\begin{proof}
Let $k$ in $\N^*$, the proof is done by recursion. For $r=0$, by \Cref{def:class_of_algo,def:add_mult_noise}, we have $\eta_k = \etakm - \gamma \nabla F(\wkm) + \gamma \xi_k(\etakm) = (\Id_d -\gamma \Fhess) \etakm + \gamma \xikstaruk + \gamma \xikmultFN[\etakm]$, which gives:
\begin{align*}
    \alpha_k^0 = \eta_k - \eta_k^0 &= \bigg\{ (\Id_d -\gamma \Fhess) \etakm + \gamma \xikstaruk + \gamma \xikmultFN[\etakm]  \bigg\} - \bigg\{ (\Id_d -\gamma \Fhess) \etakm^0 + \gamma \xikstaruk \bigg\} \\
    &= (\Id_d - \gamma \Fhess) (\etakm - \etakm^0) + \gamma \xikmultFN[\etakm] \\
    &= (\Id_d - \gamma \Fhess) (\etakm - \etakm^0) + \gamma \xikmultFN[\etakm - \etakm^0] + \gamma \xikmultFN[\etakm^0]  \,,
\end{align*}
which is possible because $\xi_k^{\mathrm{mult}}$ is linear (\Cref{asu:main:bound_mult_noise_lin}).
To go from $r$ to $r+1$, we have $
    \alpha_k^{r+1} = \eta_k - \sum_{i=0}^{r+1} \eta_k^i = \eta_k - \sum_{i=0}^{r} \eta_k^i - \eta_k^{r+1}$.
Then by definition of $\eta_k^{r+1}$ and using the hypothesis:
\begin{align*}
    \alpha_k^{r+1} &= (\Id_d - \gamma \Fhess) \bigpar{ \etakm - \sum_{i=0}^r \etakm^i } + \xikmultFN[\etakm - \sum_{i=0}^r \etakm^i] + \gamma \xikmultFN[\etakm^r] \\
    &\qquad -  (\Id_d -\gamma \Fhess) \etakm^{r+1} - \gamma \xikmultFN[\etakm^{r}]  \\
    &= (\Id_d - \gamma \Fhess) \bigpar{ \etakm - \sum_{i=0}^{r+1} \etakm^i } + \xikmultFN[\etakm - \sum_{i=0}^{r+1} \etakm^i] + \gamma \xikmultFN[\etakm^{r+1}]\,,
\end{align*}
again by linearity. This concludes the proof.
\end{proof}

The next lemma is the adaptation to our settings of Lemma 1 from \cite{bach_non-strongly-convex_2013}. We give a bound on~$\fullexpec{\|\Fhess^{1/2}\overline{\alpha}_{K-1}^r\|^2}$ with a quantity that tends to $0$. This result will be used in the final demonstration of \Cref{app:thm:bm2013_with_linear_operator_compression}.

\fbox{
\begin{minipage}{0.97\textwidth}
\begin{lemma}[Bound on $\eta_K - \sum_{i=0}^{r} \eta_K^i$]
\label[lemma]{app:lem:bounding_rec}
Under the setting given in \Cref{def:class_of_algo}, considering that $\xi_k^{\mathrm{mult}}$ is linear (\Cref{asu:main:bound_mult_noise_lin}), for any $r,K$ in $\N \times \N^*$ and any step-size $\gamma$ s.t. $\gamma ( \Ftrace^2 + \boundMult) \leq 1$, the recursion $\alpha_K^r= \eta_K - \sum_{i=0}^{r} \eta_K^i$ verifies the following bound:
\begin{align*}
\forall r \geq 0, (1- \gamma ( \Ftrace^2 + \boundMult))\E \PdtScl{\overline{\alpha}_{K-1}^r}{\Fhess \overline{\alpha}_{K-1}^r} \leq \frac{\gamma}{K} \sum_{k=1}^K \E \|\xikmultFN[\etakm^r]\|^2  \,.
\end{align*}
\end{lemma}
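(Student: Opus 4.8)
The plan is to analyze the recursion for $\alpha_k^r$ derived in \Cref{app:lem:recurrence_from_bm2013} in the same spirit as the auxiliary \Cref{app:lem:from_bm2013} treats the \eqref{eq:LSA} recursion, but treating the \emph{source} term $\xi_k^{\mathrm{mult}}(\eta_{k-1}^r)$ as the analogue of the additive noise and, crucially, keeping it un-bounded so that it survives as the right-hand side. Writing $\beta_k := \alpha_k^r$ and using that the multiplicative noise is linear (\Cref{asu:main:bound_mult_noise_lin}), i.e. $\xi_k^{\mathrm{mult}}(\cdot)=\Xi_k(\cdot)$ with $\Xi_k$ conditionally centered, the recursion reads $\beta_k = (\Id_d-\gamma\Fhess)\beta_{k-1} + \gamma\Xi_k(\beta_{k-1}+\eta_{k-1}^r)$, with $\beta_0 = \alpha_0^r = 0$ since $\eta_0 = 0$ for the noise process.

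First I would expand $\E[\,\|\beta_k\|^2 \mid \Flast_{k-1}]$. Because $\Xi_k$ is conditionally centered, the cross term between the drift $(\Id_d-\gamma\Fhess)\beta_{k-1}$ and the noise vanishes, leaving $\|(\Id_d-\gamma\Fhess)\beta_{k-1}\|^2 + \gamma^2 (\beta_{k-1}+\eta_{k-1}^r)^\top B\,(\beta_{k-1}+\eta_{k-1}^r)$, where $B := \E[\Xi_k^\top\Xi_k]$. The contraction term is handled exactly as in \Cref{app:lem:from_bm2013} via $\Fhess\preccurlyeq\Ftrace^2\Id_d$, giving $\|(\Id_d-\gamma\Fhess)\beta_{k-1}\|^2 \le \|\beta_{k-1}\|^2 - 2\gamma\|\Fhess^{1/2}\beta_{k-1}\|^2 + \gamma^2\Ftrace^2\|\Fhess^{1/2}\beta_{k-1}\|^2$.

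The crux — and the step I expect to be the main obstacle to obtaining the correct constants — is the quadratic form in $B$. Its expansion produces a self term $\beta_{k-1}^\top B\beta_{k-1} = \E[\,\|\xi_k^{\mathrm{mult}}(\beta_{k-1})\|^2\mid\Flast_{k-1}] \le \boundMult\|\Fhess^{1/2}\beta_{k-1}\|^2$ (\Cref{asu:main:bound_mult_noise_lin}), a source term $(\eta_{k-1}^r)^\top B\,\eta_{k-1}^r$, and a cross term $2\beta_{k-1}^\top B\,\eta_{k-1}^r$. The naive move would be to bound the source via the same assumption, but that turns it into $\|\Fhess^{1/2}\eta_{k-1}^r\|^2$, which points the wrong way. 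Instead, the key observations are that, since $B\succeq 0$, AM--GM gives $2\beta_{k-1}^\top B\,\eta_{k-1}^r \le \beta_{k-1}^\top B\beta_{k-1} + (\eta_{k-1}^r)^\top B\,\eta_{k-1}^r$, and that the source quadratic form is \emph{exactly} $(\eta_{k-1}^r)^\top B\,\eta_{k-1}^r = \E[\,\|\xi_k^{\mathrm{mult}}(\eta_{k-1}^r)\|^2\mid\Flast_{k-1}]$, precisely the quantity appearing on the right-hand side of the Lemma. The subtlety is that the self and source multiplicative noises share the matrix $\Xi_k$ and are therefore correlated, so this cross term is genuinely present; it is this exact identity (rather than a bound through the assumption) that lets the source noise be preserved. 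Collecting terms yields $\E[\,\|\beta_k\|^2\mid\Flast_{k-1}] \le \|\beta_{k-1}\|^2 - 2\gamma\|\Fhess^{1/2}\beta_{k-1}\|^2 + \gamma^2(\Ftrace^2+2\boundMult)\|\Fhess^{1/2}\beta_{k-1}\|^2 + 2\gamma^2\,\E[\,\|\xi_k^{\mathrm{mult}}(\eta_{k-1}^r)\|^2\mid\Flast_{k-1}]$.

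Finally I would take full expectation, sum over $k=1,\dots,K$ and telescope; using $\beta_0=0$ and $\E\|\beta_K\|^2\ge 0$ this produces $(2-\gamma(\Ftrace^2+2\boundMult))\sum_{k}\E\|\Fhess^{1/2}\beta_{k-1}\|^2 \le 2\gamma\sum_{k}\E\|\xi_k^{\mathrm{mult}}(\eta_{k-1}^r)\|^2$, hence, dividing by two, $(1-\tfrac{\gamma}{2}(\Ftrace^2+2\boundMult))\sum_{k}\E\|\Fhess^{1/2}\beta_{k-1}\|^2 \le \gamma\sum_{k}\E\|\xi_k^{\mathrm{mult}}(\eta_{k-1}^r)\|^2$. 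The factor-two slack left by the AM--GM step is absorbed by the elementary gap $1-\gamma(\Ftrace^2+\boundMult) \le 1-\tfrac{\gamma}{2}(\Ftrace^2+2\boundMult)$, which is nonnegative under the hypothesis $\gamma(\Ftrace^2+\boundMult)\le 1$. Since $x\mapsto\|\Fhess^{1/2}x\|^2$ is convex, Jensen gives $\|\Fhess^{1/2}\overline{\alpha}_{K-1}^r\|^2 \le \tfrac1K\sum_{k}\|\Fhess^{1/2}\beta_{k-1}\|^2$; multiplying through by the nonnegative constant $1-\gamma(\Ftrace^2+\boundMult)$, taking expectation and dividing by $K$ then delivers $(1-\gamma(\Ftrace^2+\boundMult))\,\E\langle\overline{\alpha}_{K-1}^r,\Fhess\overline{\alpha}_{K-1}^r\rangle \le \tfrac{\gamma}{K}\sum_{k=1}^K\E\|\xi_k^{\mathrm{mult}}(\eta_{k-1}^r)\|^2$, which is the claim.
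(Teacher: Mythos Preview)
Your argument is correct and follows essentially the same route as the paper: expand $\|\alpha_k^r\|^2$, use that $\Xi_k$ is conditionally centered to kill the cross term, control the multiplicative self-noise via \Cref{asu:main:bound_mult_noise_lin}, telescope using $\alpha_0^r=0$, and finish with Jensen. The only cosmetic difference is in how the squared increment is split: the paper groups $\Fhess\alpha_{k-1}^r-\xi_k^{\mathrm{mult}}(\alpha_{k-1}^r)$ together and applies $\|a+b\|^2\le 2\|a\|^2+2\|b\|^2$, landing directly on the constant $1-\gamma(\Ftrace^2+\boundMult)$, whereas you separate drift from noise and use AM--GM on the $B$-quadratic, obtaining the (slightly better) intermediate constant $1-\tfrac{\gamma}{2}(\Ftrace^2+2\boundMult)$ which you then relax to the stated one.
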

\end{minipage}
}

\begin{proof}
Let $r,k$ in $\N \times \N^*$, we denote $\alpha_k^r= \eta_k - \sum_{i=0}^{r} \eta_k^i$, then we have shown in \Cref{app:lem:recurrence_from_bm2013} that: \begin{align*}
    \alpha_k^r= (\Id_d - \gamma \Fhess) \alpha_{k-1}^r+ \xikmultFN[\alpha_{k-1}^r] + \gamma \xikmultFN[\etakm^r] \,.
\end{align*}
Taking the squared norm and developing it:
\begin{align*}
    \SqrdNrm{\alpha_k^r} &= \SqrdNrm{\alpha_{k-1}^r} + 2 \gamma \PdtScl{\alpha_{k-1}^r}{\xikmultFN[\alpha_{k-1}^r] + \xikmultFN[\etakm^r] - \Fhess \alpha_{k-1}^r}  \\
    &\qquad+ \gamma^2 \sqrdnrm{\xikmultFN[\alpha_{k-1}^r] + \xikmultFN[\etakm^r] - \Fhess \alpha_{k-1}^r} \,,
\end{align*}

and developing the last term with \Cref{app:lem:two_inequalities} leads to:
\begin{align*}
   \SqrdNrm{\alpha_k^r} &\leq \SqrdNrm{\alpha_{k-1}^r} + 2 \gamma \PdtScl{\alpha_{k-1}^r}{\xikmultFN[\alpha_{k-1}^r] + \xikmultFN[\etakm^r] - \Fhess \alpha_{k-1}^r} \\
    &\qquad + 2 \gamma^2 \bigg\{\sqrdnrm{\xikmultFN[\etakm^r]} + \sqrdnrm{\Fhess \alpha_{k-1}^r- \xikmultFN[\alpha_{k-1}^r]} \bigg\} \,.
\end{align*}

Because $\alpha_{k-1}^r$ is $\Flast_{k-1}$-measurable and $\expec{\xikmultFN[\alpha_{k-1}^r]}{\Flast_{k-1}} = 0$ (expectation of $\xikmultFN[\cdot]$ is zero, see \Cref{def:add_mult_noise,def:class_of_algo}), taking expectation w.r.t. the $\sigma$-algebra $\Flast_{k-1}$, using \Cref{asu:main:bound_mult_noise_lin} and again \Cref{def:class_of_algo} gives:
\begin{align*}
    \expec{\sqrdnrm{\Fhess \alpha_{k-1}^r- \xikmultFN[\alpha_{k-1}^r][u_k]}}{\Flast_{k-1}} 
    &= \expec{\sqrdnrm{\Fhess \alpha_{k-1}^r}}{\Flast_{k-1}} \\
    &\qquad+ \expec{\sqrdnrm{ \xikmultFN[\alpha_{k-1}^r][u_k]}}{\Flast_{k-1}} \\
    &\leq ( \Ftrace^2 + \boundMult) \sqrdnrm{\Fhess^{1/2} \alpha_{k-1}^r} \,.
\end{align*}
Hence:
\begin{align*}
    \expec{\sqrdnrm{\alpha_k^r}}{\Flast_{k-1}} &\leq \sqrdnrm{\alpha_{k-1}^r} - 2 \gamma (1 - \gamma ( \Ftrace^2 + \boundMult))  \PdtScl{\alpha_{k-1}^r}{\Fhess \alpha_{k-1}^r}  \\
    &\qquad+ 2 \gamma^2 \expec{\sqrdnrm{\xikmultFN[\etakm^r]}}{\Flast_{k-1}}\,,
\end{align*}
which gives when taking full expectation and averaging over $K$ in $\N^*$:
\begin{align*}
    (1- \gamma ( \Ftrace^2 + \boundMult)) \frac{1}{K} \sum_{k=1}^K \E \PdtScl{\alpha_{k-1}^r}{\Fhess \alpha_{k-1}^r} &\leq \frac{1}{2 \gamma}(\SqrdNrm{\alpha_0^r} - \SqrdNrm{\alpha_{k-1}^r}) \\
    &\qquad+ \frac{\gamma}{K} \sum_{k=1}^K \fullexpec{\sqrdnrm{\xikmultFN[\etakm^r]} }\,,
\end{align*}
and by convexity $\left\langle\overline{\alpha}_{K-1}^r, H \overline{\alpha}_{K-1}^r\right\rangle \leqslant \frac{1}{K} \sum_{k=1}^{K}\left\langle\alpha_{k-1}^r, \Fhess \alpha_{k-1}^r\right\rangle$, which allows to conclude as $\alpha_0^r = 0$.
\end{proof}

In below lemma, we bound $\FullExpec{\etakm^r \otimes \etakm^r}$ for $r, k$ in $\N \times \N^*$. It is required because we will use Lemma 2 from \cite{bach_non-strongly-convex_2013} and apply it to the sequence $(\etakm^r)_{k \in \N^*, r\in \N}$. The noise process of this sequence is equal to $\xikmultFN[\etakm^{r-1}]$; and computing the expectation of its covariance involves knowing $\FullExpec{\etakm^r \otimes \etakm^r}$.

\fbox{
\begin{minipage}{0.97\textwidth}
\begin{lemma}[Bounding the covariance of $\etakm^r$]
\label[lemma]{app:lem:bound_cov_eta_from_bm2013} 
Under the setting in \Cref{def:class_of_algo}, under \Cref{asu:main:bound_mult_noise_lin,asu:main:baniac_lin,asu:main:bound_add_noise}, i.e. considering that $\xikmultFN[\cdot]$ is linear, for any $K$ in $\N^*$, any step-size $\gamma>0$,
and for any $r \geq 0$, we have the following bound on the covariance of $\etakm^r$:
\begin{align*}
\FullExpec{\etakm^r \otimes \etakm^r} \preccurlyeq \gamma^{r+1} \ShaA \ShaM^{r} \Id_d \,.
\end{align*}
\end{lemma}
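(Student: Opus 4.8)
The plan is to prove the bound by a double induction: an outer induction on the level $r$ and, inside each level, an inner induction on the time index $k$. Throughout I would write $C_k^r := \FullExpec{(\eta_k^r)^{\kcarre}}$ for the (deterministic) covariance at level $r$ and time $k$, recalling that $\eta_0^r = 0$ for every $r$, so $C_0^r = 0$. The first task is to derive a closed recursion for $C_k^r$. For $r\geq 1$, using $\eta_k^r = (\Id_d-\gamma\Fhess)\eta_{k-1}^r + \gamma\Xi_k\eta_{k-1}^{r-1}$ (linearity, \Cref{asu:main:bound_mult_noise_lin}), I would expand the outer product and take $\Expec{\cdot}{\Flast_{k-1}}$. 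The two cross terms each carry a single factor $\Xi_k$, which is independent of $\Flast_{k-1}$ and centered ($\E[\Xi_k]=0$, since the field is zero-centered), whereas $\eta_{k-1}^r,\eta_{k-1}^{r-1}$ are $\Flast_{k-1}$-measurable; hence the cross terms vanish in conditional expectation. Taking full expectation yields
\begin{equation*}
C_k^r = (\Id_d-\gamma\Fhess)\,C_{k-1}^r\,(\Id_d-\gamma\Fhess) + \gamma^2 D_k^r, \qquad D_k^r := \FullExpec{\Xi_k\,(\eta_{k-1}^{r-1})^{\kcarre}\,\Xi_k^\top},
\end{equation*}
the analogous identity at $r=0$ holding with $D_k^0$ replaced by $\aniac$ (by \Cref{def:ania} and the independence of $\xi_k^{\mathrm{add}}$ from $\Flast_{k-1}$).

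The key step — and the part that must be handled carefully to keep the bound dimension-free — is the control of $D_k^r$. Since $\Xi_k$ is independent of $\Flast_{k-1}$ and $\eta_{k-1}^{r-1}$ is $\Flast_{k-1}$-measurable, I would first \emph{integrate out} $\eta_{k-1}^{r-1}$: for independent $\Xi$ and random $A$ one has $\FullExpec{\Xi A\Xi^\top} = \E_\Xi[\Xi\,\FullExpec{A}\,\Xi^\top]$, so that $D_k^r = \E_{\Xi_k}[\Xi_k\,C_{k-1}^{r-1}\,\Xi_k^\top]$ with the \emph{deterministic} matrix $C_{k-1}^{r-1}$. Then, for any fixed symmetric PSD matrix $M$ and any $v$, $v^\top\Xi M\Xi^\top v = (\Xi^\top v)^\top M(\Xi^\top v)\leq \vertiii{M}\,\|\Xi^\top v\|^2$, whence $\E_\Xi[\Xi M\Xi^\top]\preccurlyeq \vertiii{M}\,\FullExpec{\Xi_1\Xi_1^\top}\preccurlyeq \vertiii{M}\,\ShaM\Fhess$ by \Cref{asu:main:bound_cov_mult_noise_lin}. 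Applying this with $M=C_{k-1}^{r-1}$ and the outer inductive hypothesis $C_{k-1}^{r-1}\preccurlyeq \gamma^{r}\ShaA\ShaM^{r-1}\Id_d$ (so $\vertiii{C_{k-1}^{r-1}}\leq \gamma^{r}\ShaA\ShaM^{r-1}$) gives $D_k^r\preccurlyeq \gamma^{r}\ShaA\ShaM^{r}\Fhess$, uniformly in $k$. The crucial point is that integrating out $\eta$ \emph{before} applying the operator norm uses the top eigenvalue of $C_{k-1}^{r-1}$ rather than its trace; applying the operator norm directly to the random rank-one matrix $(\eta_{k-1}^{r-1})^{\kcarre}$ would produce $\E[\|\eta_{k-1}^{r-1}\|^2]=\mathrm{Tr}(C_{k-1}^{r-1})$ and hence a spurious factor $d$. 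I expect this to be the main obstacle, and the independence identity is exactly what removes it.

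It then remains to run the inner induction on $k$. Both the base level $r=0$ (forcing $\gamma^2\aniac\preccurlyeq\gamma^2\ShaA\Fhess$ by \Cref{asu:main:bound_cov_add_noise_lin}) and the step $r\geq 1$ (forcing $\gamma^2 D_k^r\preccurlyeq \gamma^2 b\,\Fhess$) reduce to the same elementary claim with $b := \gamma^{r}\ShaA\ShaM^{r}$: if $C_0=0$ and $C_k\preccurlyeq (\Id_d-\gamma\Fhess)C_{k-1}(\Id_d-\gamma\Fhess)+\gamma^2 b\,\Fhess$, then $C_k\preccurlyeq \gamma b\,\Id_d$ for all $k$. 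This follows by induction on $k$: assuming $C_{k-1}\preccurlyeq \gamma b\,\Id_d$, I would bound $C_k\preccurlyeq \gamma b\,[(\Id_d-\gamma\Fhess)^2+\gamma\Fhess]$ and use the identity $(\Id_d-\gamma\Fhess)^2+\gamma\Fhess = \Id_d-\gamma\Fhess(\Id_d-\gamma\Fhess)\preccurlyeq \Id_d$, valid whenever $\gamma\Fhess\preccurlyeq\Id_d$. The latter holds because $\Fhess\preccurlyeq\Ftrace^2\Id_d$ and $\gamma\Ftrace^2\leq 1$ under the operative step-size condition. Taking $b=\gamma^{r}\ShaA\ShaM^{r}$ gives $C_k^r\preccurlyeq \gamma^{r+1}\ShaA\ShaM^{r}\Id_d$, which closes the outer induction and establishes the lemma.
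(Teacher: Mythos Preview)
Your proof is correct and follows essentially the same approach as the paper: both hinge on the independence of $\Xi_k$ from $\eta_{k-1}^{r-1}$ to replace the random inner matrix by its deterministic covariance $C_{k-1}^{r-1}\preccurlyeq\gamma^{r}\ShaA\ShaM^{r-1}\Id_d$, then invoke $\FullExpec{\Xi_k\Xi_k^\top}\preccurlyeq\ShaM\Fhess$. The only cosmetic difference is that the paper first unrolls $\eta_k^{r+1}$ into the closed-form sum $\gamma\sum_{i=1}^k(\Id_d-\gamma\Fhess)^{k-i}\Xi_i\eta_{i-1}^r$ and bounds the resulting geometric series $\sum_i(\Id_d-\gamma\Fhess)^{2(k-i)}\Fhess\preccurlyeq\gamma^{-1}\Id_d$, whereas you keep the one-step recursion and run an inner induction on $k$ via the identity $(\Id_d-\gamma\Fhess)^2+\gamma\Fhess\preccurlyeq\Id_d$; these are equivalent, and both tacitly rely on $\gamma\Fhess\preccurlyeq\Id_d$ even though the lemma is nominally stated ``for any $\gamma>0$''.
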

\end{minipage}
}

\begin{proof}

\textbf{Let $r > 0$, we first prove by recursion that we have:}
\[
\forall k > 0\,,\eta_k^{r+1} = \gamma \sum_{i=1}^k (\Id_d - \gamma \Fhess) ^{k-i} \xikmultiplicatif_i(\eta_{i-1}^r)\,.
\]

For $k=0$, we indeed have $\eta^{r+1}_0 = 0$.
To go from $k$ to $k+1$:
\begin{align*}
    \eta_{k+1}^{r+1} &= (\Id_d -\gamma \Fhess) \eta^{r+1}_{k} + \gamma \xikmultiplicatif_{k+1}(\eta_k^{r}) \qquad \text{by definition,}\\
    &= \gamma \sum_{i=1}^k (\Id_d - \gamma \Fhess) ^{k-i} \xikmultiplicatif_i(\eta_{i-1}^r) + \gamma (\Id_d - \gamma \Fhess)^{(k+1) - (k+1)} \xi_{k+1}^{\mathrm{mult}}(\eta_k^{r}) \,,
\end{align*}
by hypothesis, which allows concluding.

\textbf{We now prove by recursion the main result of the lemma.}

\textit{Initialization.}
For $r=0$, by definition, we have $\eta_k^0 = (\Id_d - \gamma \Fhess) \etakm^0 + \gamma \xikstaruk$, unrolling the sum gives $
\eta_k^0 = (\Id_d - \gamma \Fhess)^{k} \eta_{0}^0 + \gamma \sum_{i=1}^k (\Id_d - \gamma \Fhess)^{k - i}\xi_i^{\mathrm{add}}$. Because we consider $\eta_0^0 =0$ and given that the sequence of noise $(\xi_i^{\mathrm{add}})_{i \in \llbracket 1, k \rrbracket}$ is independent at each iterations, we have:
\begin{align*}
    \FullExpec{\eta_k^0 \otimes \eta_k^0} &= \gamma^2 \sum_{i=1}^k (\Id_d - \gamma \Fhess)^{k - i} \FullExpec{\xi_i^{\mathrm{add}}\otimes \xi_i^{\mathrm{add}}} (\Id_d - \gamma \Fhess)^{k - i} \,.
\end{align*}
Because the sequence of additive noise $(\xi_i^{\mathrm{add}})_{i\in \N^*}$is i.i.d., for any $i$ in $\{1, \cdots, k\}$, we have that $\FullExpec{\xi_i^{\mathrm{add}}\otimes \xi_i^{\mathrm{add}}} = \aniac \preccurlyeq \ShaA \Fhess$ (\Cref{asu:main:bound_cov_add_noise_lin}), hence: 
\begin{align*}
    \FullExpec{\eta_k^0 \otimes \eta_k^0} &\preccurlyeq \gamma^2 \sum_{i=1}^k (\Id_d - \gamma \Fhess)^{k - i} \ShaA \Fhess (\Id_d - \gamma \Fhess)^{k - i} \,. 
\end{align*}
These matrices commute:
\begin{align*}
    \FullExpec{\eta_k^0 \otimes \eta_k^0} &\preccurlyeq \gamma^2 \ShaA \sum_{i=1}^k (\Id_d - \gamma \Fhess)^{2k - 2i}  \Fhess \,,~\text{and because it is a geometric sum:}\\
    &\preccurlyeq \gamma^2  \ShaA  \bigpar{\Id_d - (\Id_d - \gamma \Fhess)^{2k - 2}}\bigpar{\Id_d - (\Id_d - \gamma \Fhess)^2}^{-1} \Fhess  \\
    &\preccurlyeq \gamma^2 \ShaA \bigpar{\Id_d - (\Id_d - \gamma \Fhess)^{2k - 2}}\bigpar{2 \gamma \Fhess - \gamma^{2} \Fhess^2}^{-1} \Fhess  \\
    &\preccurlyeq \gamma \ShaA  \Fhess^{-1}  \Fhess\qquad  \text{because $\gamma \Fhess \preccurlyeq \Id_d$,} \\
    &\preccurlyeq \gamma \ShaA \Id_d \,.
\end{align*}

\textbf{Recursion.} Let $r\geq 0$, to go from $r$ to $r+1$, we start writing:
\begin{align*}
    \eta_k^{r+1} \otimes \eta_k^{r+1} &=   \gamma^2 \sum_{i=1}^{k} (\Id_d - \gamma \Fhess)^{k-1-i} \xikmultiplicatif_i(\eta_{i-1}^{r}) \otimes \xikmultiplicatif_i(\eta_{i-1}^{r})  (\Id_d - \gamma \Fhess)^{k-1-i} \,.
\end{align*}

Now we use linearity of the multiplicative noise (\Cref{asu:main:bound_mult_noise_lin}), thus there exists a matrix $\Xi_k$ in $\R^{d\times d}$ s.t. for any $z$ in~$\R^d$, we have $\xi_k^\mathrm{mult}(z) = \Xi_k z$, and it leads to:
\begin{align*}
    \eta_k^{r+1} \otimes \eta_k^{r+1} &= \gamma^2 \sum_{i=1}^{k} (\Id_d - \gamma \Fhess)^{k-i} \Xi_i(\eta_{i-1}^{r} \otimes \eta_{i-1}^{r}) \Xi_i^\top  (\Id_d - \gamma \Fhess)^{k-i} \,.
\end{align*}

Taking full expectation, we have:
\begin{align*}
    \FullExpec{\eta_k^{r+1} \otimes \eta_k^{r+1}} &= \gamma^2 \sum_{i=1}^{k} (\Id_d - \gamma \Fhess)^{k-i} \FullExpec{\Expec{\Xi_i (\eta_{i-1}^{r} \otimes \eta_{i-1}^{r}) \Xi_i^\top }{\sigma(\Xi_i)}} (\Id_d - \gamma \Fhess)^{k-i} \\
    &= \gamma^2 \sum_{i=1}^{k} (\Id_d - \gamma \Fhess)^{k-i} \FullExpec{\Xi_i \expec{\eta_{i-1}^{r} \otimes \eta_{i-1}^{r}}{\sigma(\Xi_i)} \Xi_i^\top } (\Id_d - \gamma \Fhess)^{k-i} \,,
\end{align*}
and because for any $i$ in $\{1, \cdots, k\}$, $\eta_{i-1}^{r}$ is independent of $\Xi_i $, we have $\Expec{\eta_{i-1}^{r} \otimes \eta_{i-1}^{r}}{\sigma(\Xi_i)} =  \FullExpec{\eta_{i-1}^{r} \otimes \eta_{i-1}^{r}} \preccurlyeq \gamma^{r+1} \ShaA \ShaM^{r} \Id_d $, where we use the hypothesis for $r$.
We have in the end:
\begin{align*}
    \FullExpec{\eta_k^{r+1} \otimes \eta_k^{r+1}}
    &\preccurlyeq  \gamma^{r + 3} \ShaA \ShaM^{r} \sum_{i=1}^{k} (\Id_d - \gamma \Fhess)^{k-i} \FullExpec{\Xi_i \Xi_i^\top} (\Id_d - \gamma \Fhess)^{k-i} \,.
\end{align*}

Furthermore, by \Cref{asu:main:bound_cov_mult_noise_lin} we have $\FullExpec{\Xi_i \Xi_i^\top} \preccurlyeq \ShaM \Fhess$, thus: 
\begin{align*}
    \FullExpec{\eta_k^{r+1} \otimes \eta_k^{r+1}} &\preccurlyeq \gamma^{r + 3} \ShaA \ShaM^{r+1} \sum_{i=1}^{k} (\Id_d - \gamma \Fhess)^{2k-2-2i} \Fhess \\
    &\preccurlyeq \gamma^{r + 3} \ShaA \ShaM^{r+1} \gamma^{-1} \Fhess^{-1} \Fhess \,,
\end{align*}
because $\sum_{i=1}^{k} (\Id_d - \gamma \Fhess)^{2k-2-2i} = \bigpar{\Id_d - (\Id_d - \gamma \Fhess)^{2k}} \bigpar{2  \gamma \Fhess - \gamma^2 \Fhess^2}^{-1} \preccurlyeq \gamma^{-1} \Fhess^{-1}$. 
In the end, we have $\fullexpec{\eta_k^{r+1} \otimes \eta_k^{r+1}} \preccurlyeq \gamma^{r+2} \ShaA \ShaM^{r+1} \Id_d$,
which concludes the proof.

\end{proof}

\subsection{Final theorem}
\label{app:subsec:final_thm_bm2013_with_linear_comp}

In this section, we gather the pieces of proof required to demonstrate \Cref{thm:bm2013_with_linear_operator_compression}. As done in \Cref{app:sec:nonlinear_bm}, we consider separately the noise process and the noise-free process, then put them together to obtain the final result.

\fbox{
\begin{minipage}{0.97\textwidth}
\begin{theorem}[Linear multiplicative noise, convex case]
\label{app:thm:bm2013_with_linear_operator_compression}
Under \Cref{asu:main:bound_add_noise}, under \Cref{asu:main:bound_mult_noise_lin,asu:main:baniac_lin} i.e. with a linear multiplicative noise, considering any constant step-size $\gamma$ such that $\gamma ( \Ftrace^2 + \boundMult) \leq 1$ and $4 \gamma \ShaM \Ftrace^2 \leq 1$, then for any $K$ in $\N^*$, the sequence $(w_k)_{k\in\N^*}$ produced by a setting such as in \Cref{def:class_of_algo}, verifies the following bound:
\begin{align*}
    \E[ F(\overline{w}_{K-1}) - F(\ws) ]\leq \frac{1}{2K} \bigpar{\frac{\|\eta_0 \|}{\sqrt{\gamma}} + \sqrt{\Tr{\aniac \Fhess^{-1}}} + \ffrac{ \bigpar{ \gamma d  \ShaA \ShaM}^{1/2}}{1-\sqrt{\gamma \ShaM}}}^2 \,.
\end{align*}
\end{theorem}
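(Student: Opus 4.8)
The plan is to bound $\E[\|\Fhess^{1/2}\etabarkm\|^2]$ (recall that $\E[F(\overline{w}_{K-1})-F(\ws)]=\tfrac12\E[\|\Fhess^{1/2}\etabarkm\|^2]$) by splitting the iterate, exactly as in \Cref{app:subsec:proof_principale_lin}, into a \emph{noise-free process} carrying the initial condition and a \emph{noise process} started at $\eta_0=0$. Since the recursion $\eta_k=(\Id-\gamma\Fhess+\gamma\Xi_k)\etakm+\gamma\xikstaruk$ is affine in $(\eta_0,(\xikstaruk)_k)$ for each realization of the transition matrices, $\etabarkm$ is the sum of these two contributions, and Minkowski's inequality \eqref{app:eq:minkowski} reduces the task to bounding each one separately; the two bounds are then added, multiplied by $\sqrt{K}$, and squared.

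For the noise-free process $\tilde\eta_k=(\Id-\gamma\Fhess)\tilde\eta_{k-1}+\gamma\xikmultFN[\tilde\eta_{k-1}]$ (i.e.\ $\xikstaruk\equiv0$, $\tilde\eta_0=\eta_0$), I would run a Lyapunov/telescoping argument: conditioning on $\Flast_{k-1}$ kills the cross term, $\SqrdNrm{(\Id-\gamma\Fhess)\tilde\eta_{k-1}}\le\SqrdNrm{\tilde\eta_{k-1}}-(2\gamma-\gamma^2\Ftrace^2)\sqrdnrm{\Fhess^{1/2}\tilde\eta_{k-1}}$, while \Cref{asu:main:bound_mult_noise_lin} controls $\gamma^2\expec{\sqrdnrm{\xikmultFN[\tilde\eta_{k-1}]}}{\Flast_{k-1}}\le\gamma^2\boundMult\sqrdnrm{\Fhess^{1/2}\tilde\eta_{k-1}}$. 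Using $\gamma(\Ftrace^2+\boundMult)\le1$ to absorb the quadratic terms gives $\expec{\SqrdNrm{\tilde\eta_k}}{\Flast_{k-1}}\le\SqrdNrm{\tilde\eta_{k-1}}-\gamma\sqrdnrm{\Fhess^{1/2}\tilde\eta_{k-1}}$; summing, telescoping, and Jensen yield $\E[\sqrdnrm{\Fhess^{1/2}\overline{\tilde\eta}_{K-1}}]\le\SqrdNrm{\eta_0}/(\gamma K)$, i.e.\ the first term $\|\eta_0\|/\sqrt{\gamma K}$.

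For the noise process ($\eta_0=0$) I would use the polynomial-in-$\gamma$ expansion $\eta_k=\sum_{i=0}^r\eta_k^i+\alpha_k^r$ with the sequences $(\eta_k^i)$ of \eqref{app:eq:double_sequence_eta} and $\alpha_k^r=\eta_k-\sum_{i=0}^r\eta_k^i$, so that Minkowski gives $\E[\|\Fhess^{1/2}\etabarkm\|^2]^{1/2}\le\sum_{i=0}^r\E[\|\Fhess^{1/2}\overline{\eta}_{K-1}^i\|^2]^{1/2}+\E[\|\Fhess^{1/2}\overline{\alpha}_{K-1}^r\|^2]^{1/2}$, and then let $r\to\infty$. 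The $i=0$ term is handled as in the noise part of \Cref{app:lem:bound_on_eta_k_0} (Lemma 2 of \citet{bach_non-strongly-convex_2013} with additive-noise covariance $\aniac$), giving $\sqrt{\Tr{\aniac\Fhess^{-1}}/K}$. For $i\ge1$, $\eta_k^i=(\Id-\gamma\Fhess)\eta_{k-1}^i+\gamma\xikmultFN[\eta_{k-1}^{i-1}]$ is an additive-noise recursion driven by the martingale difference $\zeta_k^i=\Xi_k\eta_{k-1}^{i-1}$, whose covariance obeys $\E[\zeta_k^i\otimes\zeta_k^i]=\E[\Xi_k\,\E[\eta_{k-1}^{i-1}\otimes\eta_{k-1}^{i-1}]\,\Xi_k^\top]\preccurlyeq\gamma^i\ShaA\ShaM^{i-1}\E[\Xi_k\Xi_k^\top]\preccurlyeq\gamma^i\ShaA\ShaM^i\Fhess$, combining the covariance bound \Cref{app:lem:bound_cov_eta_from_bm2013}, \Cref{asu:main:bound_cov_mult_noise_lin}, and the fact that the congruence $A\mapsto\Xi A\Xi^\top$ preserves $\preccurlyeq$. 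The same Lemma 2 argument (or, more explicitly, the averaged representation $\overline{\eta}_{K-1}^i=\tfrac{\gamma}{K}\sum_j G_{K-j}\zeta_j^i$ with $G_m=(\Id-(\Id-\gamma\Fhess)^m)(\gamma\Fhess)^{-1}$, whose martingale orthogonality turns the sum into a trace) then gives $\E[\|\Fhess^{1/2}\overline{\eta}_{K-1}^i\|^2]\le\Tr{\gamma^i\ShaA\ShaM^i\Fhess\cdot\Fhess^{-1}}/K=\gamma^i\ShaA\ShaM^i d/K$, hence $\E[\|\Fhess^{1/2}\overline{\eta}_{K-1}^i\|^2]^{1/2}\le\frac{\sqrt{d\ShaA}}{\sqrt K}(\sqrt{\gamma\ShaM})^i$. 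Summing this geometric series over $i\ge1$ (it converges because the step-size hypotheses force $\gamma\ShaM<1$) produces exactly the third term $\frac{(\gamma d\ShaA\ShaM)^{1/2}}{1-\sqrt{\gamma\ShaM}}/\sqrt K$.

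Finally I would dispatch the remainder: by \Cref{app:lem:bounding_rec}, $\E[\|\Fhess^{1/2}\overline{\alpha}_{K-1}^r\|^2]\le\frac{\gamma}{(1-\gamma(\Ftrace^2+\boundMult))K}\sum_k\E\sqrdnrm{\xikmultFN[\eta_{k-1}^r]}$, and $\E\sqrdnrm{\xikmultFN[\eta_{k-1}^r]}\le\boundMult\Tr{\Fhess}\,\gamma^{r+1}\ShaA\ShaM^r$ (via \Cref{asu:main:bound_mult_noise_lin} together with \Cref{app:lem:bound_cov_eta_from_bm2013}), so the remainder vanishes as $r\to\infty$, which legitimizes passing to the limit in the Minkowski bound. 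Adding the noise-free and noise contributions, multiplying by $\sqrt{K}$, and squaring yields the stated inequality. I expect the main obstacle to be the rigorous control of the infinite expansion: establishing the clean per-order covariance propagation $\eta^{i-1}\mapsto\eta^i$ that converts \Cref{asu:main:baniac_lin} into the geometric factor $\sqrt{\gamma\ShaM}$, while simultaneously ensuring that $\overline{\alpha}_{K-1}^r$ decays — this is precisely where the second step-size condition controlling $\gamma\ShaM$ enters, and where the non-symmetry of $\Xi_k$ (so that one must carefully keep $\E[\Xi_k\Xi_k^\top]$ rather than $\E[\Xi_k^\top\Xi_k]$) requires attention.
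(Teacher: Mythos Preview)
Your proposal is correct and follows essentially the same route as the paper's proof: the same noise-free/noise split via linearity of the recursion (\Cref{app:eq:polynomial_expension_linear}), the same Lyapunov telescoping for the bias term, the same polynomial expansion $\eta_k=\sum_{i\le r}\eta_k^i+\alpha_k^r$ with \Cref{app:lem:bound_cov_eta_from_bm2013} feeding Lemma~2 of \citet{bach_non-strongly-convex_2013} at each level to produce the geometric series in $\sqrt{\gamma\ShaM}$, and \Cref{app:lem:bounding_rec} to kill the remainder. The only cosmetic difference is that you bound $\E\sqrdnrm{\xikmultFN[\eta_{k-1}^r]}$ via \Cref{asu:main:bound_mult_noise_lin} and then the trace of $\Fhess\,\E[\eta_{k-1}^r\otimes\eta_{k-1}^r]$, whereas the paper takes the trace of the covariance bound \eqref{app:eq:bound_mult_noise_cov_i} directly; both yield a $O((\gamma\ShaM)^{r})$ term that vanishes.
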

\end{minipage}
}

\begin{proof}
Let $K$ in $\N^*$, the proof relies on the proof presented by \cite{bach_non-strongly-convex_2013} and is done separately for the noise process and for the noise-free process that depends only on the initial condition. The bounds may then be added (see the discussion in \Cref{app:subsec:proof_principale_lin}).

\textbf{Noise-free process.}
As in section A.3 from \cite{bach_non-strongly-convex_2013}, we assume here that the additive noise $(\xi_k^{\mathrm{add}})_{k\in \N^*}$ is uniformly equal to zero and that $\gamma ( \Ftrace^2 + \boundMult) \leq 1$. Using \Cref{def:add_mult_noise,def:class_of_algo}, we thus have for any $k$ in $\N^*$ that $\eta_k = \etakm - \gamma \Fhess \etakm + \gamma \xi_k^{\mathrm{mult}}(\etakm)$, it flows:
\begin{align*}
\fullexpec{\sqrdnrm{\eta_k}} &= \fullexpec{\sqrdnrm{\etakm}} - 2 \gamma\E[ \PdtScl{\etakm}{\Fhess \etakm}] + \gamma^2\E [\|\Fhess \etakm - \xi_k^{\mathrm{mult}}(\etakm)\|^2 ]\\
&= \fullexpec{\sqrdnrm{\etakm} }- 2 \gamma\E[ \PdtScl{\etakm}{\Fhess \etakm}] + \gamma^2\fullexpec{\sqrdnrm{\Fhess \etakm } }+ \gamma^2\E [\|\xi_k^{\mathrm{mult}}(\etakm)\|^2 ]\,.
\end{align*}
Considering that $\Fhess \preccurlyeq \Tr{\Fhess}\Id_d \preccurlyeq \Ftrace^2\Id_d$ and using \Cref{asu:main:bound_mult_noise_lin}, we obtain:
\begin{align*}
\fullexpec{\sqrdnrm{\eta_k}} &\leq \fullexpec{\sqrdnrm{\etakm}} - 2 \gamma\E[\|\Fhess^{1/2} \etakm \|^2] +\gamma^2( \Ftrace^2 + \boundMult) \E[\|\Fhess^{1/2} \etakm \|^2] \,.
\end{align*}
Because the step-size $\gamma$ is s.t. $\gamma ( \Ftrace^2 + \boundMult) \leq 1$, 
we recover that in the absence of noise, we have:
\begin{align}
\label{app:eq:bound_without_noise_linear}
\fullexpec{\sqrdnrm{\Fhess^{1/2} \etabarkm}} \leq \frac{\SqrdNrm{\eta_0}}{\gamma K} \,.
\end{align}

\textbf{Noise process.}
Now, all the following results comes from \Cref{app:subsec:noise_process} where we assume that $\eta_0 = w_0 - \ws = 0$, we start using Minkowski's inequality \ref{app:eq:minkowski}:
\begin{align}
\label{app:eq:part_noise_minkowski_linear_thm}
    \FullExpec{\|\Fhess^{1/2} \etabarkm\|^2}^{1/2} \leq \FullExpec{\|\Fhess^{1/2} \sum_{i=0}^r \etabarkm^i\|^2}^{1/2} + \FullExpec{\|\Fhess^{1/2}(\etabarkm - \sum_{i=0}^r \etabarkm^i)\|^2}^{1/2}\,.
\end{align}

\textit{First term.}

Let $r \in \N$, again using Minkowski's inequality \ref{app:eq:minkowski}, we have 
\begin{align}
\label{app:eq:split_ania_sha_shaprime}
\E [\|\Fhess^{1/2} \sum_{i=0}^r \etabarkm^i\|^2]^{1/2} &\leq \sum_{i=0}^r  \E [ \|\Fhess^{1/2} \etabarkm^i\|^2 ]^{1/2} \nonumber \\
&= \E [ \|\Fhess^{1/2} \etabarkm^0\|^2 ]^{1/2} + \sum_{i=1}^r  \E [ \|\Fhess^{1/2} \etabarkm^i\|^2 ]^{1/2}\,.
\end{align}

By \Cref{app:eq:double_sequence_eta}, we have $\eta_k^0 =  (\Id_d - \gamma \Fhess) \etakm^0 + \gamma \xikstaruk$, hence to bound the first term, we have to apply Lemma 2 from \citet{bach_non-strongly-convex_2013} to the sequence $(\etakm^0)_{k\in\N^*}$ and we obtain 
\begin{align}
    \E [ \|\Fhess^{1/2} \etabarkm^0\|^2 ] \leq \Tr{\aniac \Fhess^{-1}} / K \,.
\end{align}

Let $i$ in $\{1, \cdots, r\}$, to bound the second term, we have to apply Lemma 2 from \citet{bach_non-strongly-convex_2013} to the sequence $(\etakm^i)_{k \in\N^*}$.
To do so, we bound the covariance of the noise which is here equal to  $\xikmultFN[\etakm^{i-1}]$ (by definition of $\etakm^i$, see \Cref{app:eq:double_sequence_eta}).

Because the multiplicative noise is linear, using \Cref{asu:main:bound_mult_noise_lin}, there exists a matrix $\Xi_k$ in $\R^{d\times d}$ s.t. $\xikmultFN[\etakm^{i-1}] = \Xi_k\etakm^{i-1}$.
It follows that taking the expectation w.r.t to the $\sigma$-algebra $\sigma(\Xi_k)$, and because $\eta_{k-1}^{i-1}$ is independent of it, using \Cref{app:lem:bound_cov_eta_from_bm2013}, we have: 
\[\Expec{\eta_{k-1}^{i-1} \otimes \eta_{k-1}^{i-1}}{\sigma(\Xi_k)} =  \FullExpec{\eta_{k-1}^{i-1} \otimes \eta_{k-1}^{i-1}} \preccurlyeq \gamma^{i} \ShaA \ShaM^{i-1} \Id_d \,.
\]

Thus, the noise $\xikmultFN[\etakm^{i-1}]$ is such that:
\begin{align*}
    \expec{\xikmultFN[\etakm^{i-1}] \otimes \xikmultFN[\etakm^{i-1}]}{\sigma(\Xi_k)} &= \Xi_k \FullExpec{\etakm^{i-1} \otimes \etakm^{i-1}} \Xi_k^\top \preccurlyeq \gamma^{i} \ShaA \ShaM^{i-1} \Xi_k \Xi_k^\top \,.
\end{align*}
Taking full expectation, we furthermore consider \Cref{asu:main:bound_cov_mult_noise_lin} which gives that:
$\FullExpec{\Xi_i \Xi_i^\top} \preccurlyeq  \ShaM \Fhess$, 
hence:
\begin{align}
\label{app:eq:bound_mult_noise_cov_i}
    \FullExpec{\xikmultFN[\etakm^{i-1}] \otimes  \xikmultFN[\etakm^{i-1}]} &\leq \gamma^i \ShaA \ShaM^{i} \Fhess\,.
\end{align}
Using Lemma 2 from \citet{bach_non-strongly-convex_2013} results to:
\begin{align}
\label{app:eq:lemma2_from_bm2013}
\sum_{i=1}^r  \E [ \|\Fhess^{1/2} \etabarkm^i\|^2 ]^{1/2} \leq \sum_{i=1}^r  \gamma^i \ShaA \ShaM^i  \Tr{\Fhess \Fhess^{-1} }/ K \,.
\end{align}

In the end, we obtain from \Cref{app:eq:split_ania_sha_shaprime}:
\begin{align*}
\E [\|\Fhess^{1/2} \sum_{i=0}^r \etabarkm^i\|^2]^{1/2} &\leq \ffrac{\sqrt{\Tr{\aniac \Fhess^{-1}}}}{\sqrt{K}} + \ffrac{\sqrt{d \ShaA }}{\sqrt{K}} \sum_{i=1}^r \gamma^{i/2} \ShaM^{i/2} \\
&\leq \ffrac{\sqrt{\Tr{\aniac \Fhess^{-1} } }}{\sqrt{K}} + \frac{\sqrt{\gamma d \ShaA \ShaM} \bigpar{1-(\gamma \ShaM)^{r/2}}}{\sqrt{K} \bigpar{1-\sqrt{\gamma \ShaM}}}\,.
\end{align*}

\textit{Second term.}

If $\gamma ( \Ftrace^2 + \boundMult) \leq 1$, \Cref{app:lem:bounding_rec} gives:
\begin{align}
\label{app:eq:from_LemmaS5_linear}
\E \PdtScl{\etabarkm - \sum_{i=0}^r \etabarkm^i}{H (\etabarkm - \sum_{i=0}^r \etabarkm^i)} \leq \frac{\gamma}{(1- \gamma ( \Ftrace^2 + \boundMult)) K} \sum_{k=1}^K \FullExpec{ \sqrdnrm{\xikmultFN[\etakm^r]}} \,.
\end{align}
Furthermore, $\SqrdNrm{\xikmultFN[\etakm^r]} =\Tr{\xikmultFN[\etakm^r]^{\kcarre}}$, by reusing what has been written in the previous paragraph (\Cref{app:eq:bound_mult_noise_cov_i}), we obtain:
\begin{align*}
    \sqrdnrm{\xikmultFN[\etakm^r]} &\leq \gamma^{r+1} \ShaA \ShaM^{r+1} \Tr{\Fhess} \\
    &\leq \gamma^{r+1} \ShaA \ShaM^{r+1} \Ftrace^{2} \qquad \text{(\Cref{def:class_of_algo}).} 
\end{align*}
It follows that we have:
\begin{align}
\label{app:eq:bounds_with_sha_linear}
\E \PdtScl{\etabarkm - \sum_{i=0}^r \etabarkm^i}{H (\etabarkm - \sum_{i=0}^r \etabarkm^i)} \leq \frac{\gamma^{r+2} \ShaA \ShaM^{r+1} \Ftrace^{2}}{(1- \gamma ( \Ftrace^2 + \boundMult))} \,. 
\end{align}

\textit{Putting things together.}
In the end, from the Minkowski decomposition done in \Cref{app:eq:part_noise_minkowski_linear_thm}, we combine the two terms and it leads to:
\begin{align*}
    \FullExpec{\PdtScl{\etabarkm}{\Fhess \etabarkm}}^{1/2} &\leq \bigpar{\frac{\gamma^{r+2} \ShaA \ShaM^{r+1} \Ftrace^{2}}{(1- \gamma ( \Ftrace^2 + \boundMult))}  }^{1/2} + \ffrac{ \sqrt{\Tr{\aniac \Fhess^{-1} }}}{\sqrt{K}}  \\
    &\qquad + \frac{\sqrt{\gamma d \ShaA \ShaM} \bigpar{1-(\gamma \ShaM)^{r/2}}}{\sqrt{K} \bigpar{1-\sqrt{\gamma \ShaM}}}\,.
\end{align*}
This implies that for any $\gamma \ShaM \leq 1$, we obtain, by letting $r$ tend to $+\infty$:
\begin{align}
\label{app:eq:r_tends_to_infty}
    \FullExpec{\PdtScl{\etabarkm}{\Fhess \etabarkm}}^{1/2} &\leq \ffrac{1}{\sqrt{K}} \bigpar{ \sqrt{\Tr{\aniac \Fhess^{-1} }} + \ffrac{ \bigpar{ \gamma d \ShaA \ShaM}^{1/2}}{1-\sqrt{\gamma \ShaM}} } \,.
\end{align}

\textbf{Final bound.}
We now take results derived from the part without noise, and the part with noise, to get:
\begin{align*}
    \E [\PdtScl{\etabarkm}{\Fhess \etabarkm}]^{1/2} \leq \frac{1}{\sqrt{K}} \bigpar{\frac{\|\eta_0 \|}{\sqrt{\gamma}} + \sqrt{\Tr{\aniac \Fhess^{-1}}} + \ffrac{ \bigpar{ \gamma d \ShaA \ShaM}^{1/2}}{1-\sqrt{\gamma \ShaM}}}\,, 
\end{align*}
which leads to the desired result considering that $4 \gamma \ShaM \leq 1$. 
\end{proof}

\section{Validity of the assumptions made on the random fields}
\label{app:subsec:validity_asu_random_fields}

In this section, we verify that all the assumptions on the random fields done in \Cref{subsec:def_ania_asu_field} are fulfilled in the setting of compressed least-squares regression analyzed in \Cref{sec:application_compressed_LSR}.
To do so, we first need to define the filtrations considered in this section.

For $k$ in $\N^*$, we note $u_k$ the noise that controls the randomization $\C_k(\cdot)$ at round $k$.
In \Cref{sec:theoretical_analysis}, we have denoted by $\Flast_k$ the $ \sigma$-algebra generated by $(x_1, \varepsilon_1, u_1, \cdots, x_k, \varepsilon_k)$. In particular, $w_k$ and $\overline{w}_k$ are $\Flast_k$-measurable.
We also consider the following filtrations.
\begin{definition}
\label[definition]{app:def:filtrations}
We note $(\Fx_k)_{k \in \N}$ the filtration associated with the features noise, $(\Fy_k)_{k \in \N}$ the filtration associated with the output noise, and $(\Fsto_k)_{k \in \N}$ the filtration associated with the stochastic gradient noise, which is the union of the two previous filtrations. Thus, we define $\Flast_{0} = \{ \varnothing \}$ and for $k \in \N^*$:
\begin{align*}
    \Fx_k &= \sigma \bigpar{\Flast_{k-1} \cup \{x_k\} } \\
    \Fy_k &= \sigma \bigpar{\Flast_{k-1} \cup \{\varepsilon_k\} } \\
    \Fsto_k &= \sigma \bigpar{\Flast_{k-1} \cup \{ x_k, \varepsilon_k\}} \\
    \Flast_k &= \sigma \bigpar{\Flast_{k-1} \cup \{x_k, \varepsilon_k, u_k\} } \,.
\end{align*}

Note that there are two filtrations $\Fx$ and $\Fy$ for the two independent noises that are both involved to compute the stochastic gradient. This will help us to compute the scalar product of these two quantities.  

\end{definition}

We start by providing a bound on the distance between two compressions, this lemma will be used to prove \Cref{prop:mult_noise}.

\fbox{
\begin{minipage}{0.97\textwidth}
\begin{lemma}
\label[lemma]{app:lem:dependent_compr}
For any compressor $\C$ in $\mathbb{C}$ verifying \Cref{lem:compressor}, for all $x, y$ in $\R^d$, we have:
$$\fullexpec{\SqrdNrm{\C(x) - \C(y)}} \leq 2(\omgC + 1) \SqrdNrm{x}  + 2 (\omgC + 1)\SqrdNrm{y} \,.$$ 
\end{lemma}
\end{minipage}
}

\begin{proof}
Let a compressor $\C$ in $\mathbb{C}$ and $x, y$ in $\R^d$, using \Cref{app:lem:two_inequalities}, we have that:
\begin{align*}
\SqrdNrm{\C(x) - \C(y)} &\leq 2\SqrdNrm{\C(x)} + 2\SqrdNrm{\C(y)} \,.
\end{align*}

Taking expectation and using \Cref{lem:compressor} allows to conclude:
\begin{align*}
\FullExpec{\SqrdNrm{\C(x) - \C(y)}} &\leq 2 (\omgC +1)  \SqrdNrm{x} + 2 (\omgC +1) \SqrdNrm{y} \,.
\end{align*}
\end{proof}

Now we prove that all the assumptions done in \Cref{sec:theoretical_analysis} are correct.

\begin{property}[Validity of the setting presented in \Cref{def:class_of_algo}]
\label{prop:zero_centered_noise}
Consider the \Cref{ex:cent_comp_LMS} in the context of \Cref{model:centralized}, we have that the setting presented in \Cref{def:class_of_algo} is verified.
\end{property}

\begin{proof}
From~\Cref{ex:cent_comp_LMS}, we have for any $k$ in $\N^*$ and any $w$ in $\R^d$ $\xi_k(w - \ws) = \nabla F(w) - \C_k(\g_k(w))$.
Because $(\g_k)_{k\in\N^*}$ and $(\C_k)_{k\in\N^*}$ are by definition two sequences of i.i.d. random fields (\Cref{ex:cent_comp_LMS}), it follows that their composition is also i.i.d., \emph{hence $(\xi_k)_{k\in\N^*}$ is a sequence of i.i.d. random fields.}

Taking expectation w.r.t. the $\sigma$-algebra $\Fsto_k$, we have $\Expec{\C_k(\g_k(w))}{\Fsto_k} = \g_k(w)$ (\Cref{lem:compressor}), next with the $\sigma$-algebra $\Flast_{k-1}$, we have $\Expec{\g_k(w)}{\Flast_{k-1}} = \nabla F(w)$ (Equation \ref{eq:def_oracle}).
\emph{Hence, the random fields are zero-centered.}

From \Cref{model:centralized}, we have for any $k$ in $\N^*$ and any $w$ in $\R^d$ that:
\begin{align*}
    F(w) &= \frac{1}{2} \FullExpec{(\PdtScl{x_k}{w} - y_k)^2} = \frac{1}{2} \FullExpec{(w - w_*)^\top (x_k \otimes x_k) (w - w_*) - 2 \varepsilon_k \PdtScl{x_k}{w - w_*} + \varepsilon_k^2} \\
    &=  \frac{1}{2} ((w - \ws)^\top \xCov (w - \ws) + \sigma^2)\,,
\end{align*}
hence $F$ is quadratic with Hessian equal to $\xCov$ whose trace is equal to $R^2$.
\end{proof}

\begin{property}[Validity of \Cref{asu:main:bound_add_noise}]
\label{prop:add_noise}
Considering \Cref{ex:cent_comp_LMS} under the setting of \Cref{model:centralized} with \Cref{lem:compressor}, for any iteration $k$ in $\N^*$, the second moment of the additive noise $\xi_k^{\mathrm{add}}$ can be bounded by~$(\omgC +1)  R^2 \sigma^2 $, i.e., \Cref{asu:main:bound_add_noise} is verified.
\end{property}

\begin{proof}
Let $k$ in $\N^*$. Because we consider \Cref{ex:cent_comp_LMS}, with \Cref{def:class_of_algo,def:add_mult_noise}, we first have~$\xi_k^{\mathrm{add}} = - \C_k(\gwkstar)$, then with \Cref{lem:compressor} we obtain $\expec{\SqrdNrm{\C_k(\gwkstar)}}{\Fsto_k} \leq (\omgC + 1) \SqrdNrm{\gwkstar}$.
Next, we first have from \Cref{model:centralized} and \Cref{eq:def_oracle} that $\gwkstar = \varepsilon_k x_k$, secondly because $\bigpar{(\varepsilon_k)_{k\in \OneToK}}$ is independent from $\bigpar{(x_k)_{k\in\OneToK}}$ (\Cref{model:centralized}), we have that $\E[\|\varepsilon_k x_k\|^2] \leq \sigma^2  R^2$, hence it results to:
\begin{align*}
    \expec{\|\xi_k^{\mathrm{add}}\|^2}{\Flast_{k-1}} = \E[\|\xi_k^{\mathrm{add}}\|^2] \leq (\omgC + 1) \sigma^2  R^2\,.
\end{align*}
\end{proof}
\begin{property}[Validity of \Cref{asu:main:bound_mult_noise}]
\label{prop:mult_noise}
Considering \Cref{ex:cent_comp_LMS}, under the setting of \Cref{model:centralized} with \Cref{lem:compressor}, for any iteration $k$ in $\N^*$, the second moment of the multiplicative noise $\xi_k^{\mathrm{mult}}(w)$ can be bounded for any $w$ in $\R^d$ by $2(\omgC +1) R^2 \SqrdNrm{\xCov^{1/2} (w - \ws)} + 4 (\omgC + 1)  \sigma^2 R^2  $, i.e., \Cref{asu:main:bound_mult_noise} is verified.
\end{property}

\begin{proof}
Let $k$ in $\N^*$, we note $\eta = w - \ws$. First, because we consider \Cref{ex:cent_comp_LMS}, with \Cref{def:class_of_algo,def:add_mult_noise}, we have $\xi_k(\eta) = \nabla F(w) - \C_k(\g_k(w))$ and $\xikstaru = - \C_k(\gwkstar)$, hence:
$$\xi^\mathrm{mult}_k(\eta) = \xi_k(\eta) - \xikstaru = \nabla F(w) - \C_k(\g_k(w)) + \C_k(\gwkstar)\,,$$ 
thus developing the squared-norm of $\xi^\mathrm{mult}_k(\eta)$ gives:
\begin{align*}
    \| \xi^\mathrm{mult}_k(\eta) \|^2 &= \SqrdNrm{\nabla F(w)} + 2 \PdtScl{\nabla F(w)}{\C_k(\gwkstar) - \C_k(\g_k(w))} + \SqrdNrm{\C_k(\gwkstar) - \C_k(\g_k(w))} \,.
\end{align*}

On the first side we have $\Expec{\Expec{\C_k(\gwkstar) - \C_k(\g_k(w))}{\Fsto_k}}{\Flast_{k-1}}= - \nabla F(w_{k-1})$.
On the second side, we use \Cref{app:lem:dependent_compr}; this allows us to write:
\begin{align*}
    \Expec{\SqrdNrm{\C_k(\gwkstar) - \C_k(\g_k(w))}}{\Fsto_k} &\leq 2(\omgC + 1) \SqrdNrm{\g_k(w)} + 2(\omgC + 1) \SqrdNrm{\gwkstar} \,.
\end{align*}
Note that this bound is far from being optimal when $\g_k(w)=\gwkstar$ or if $\C$ is the identity.
Next, we decompose as follows:
\begin{align*}
    \Expec{\SqrdNrm{\C_k(\gwkstar) - \C_k(\g_k(w))}}{\Fsto_k} &\leq 2(\omgC + 1) \SqrdNrm{\g_k(w) - \gwkstar} \\
    &\quad+ 4(\omgC + 1) \PdtScl{\g_k(w) - \gwkstar}{\gwkstar} + 4(\omgC + 1) \SqrdNrm{\gwkstar}\,.
\end{align*}
Taking expectation w.r.t. the $\sigma$-algebra $\Fx_k$, recalling that $\g_k(w) - \gwkstar$ is $\Fx_k$-measurable (\Cref{app:def:filtrations}) and considering \Cref{model:centralized} allows to write:
\begin{align*}
\Expec{\SqrdNrm{\C_k(\gwkstar) - \C_k(\g_k(w)}}{\Fx_k} &\leq 2(\omgC + 1) \SqrdNrm{\gwkstar - \g_k(w)} + 4(\omgC+1) \sigma^2  R^2 \\
&\leq 2 (\omgC + 1) \SqrdNrm{(x_k \otimes x_k) \etakm} + 4(\omgC + 1) \sigma^2 R^2 \,,
\end{align*}
and now taking expectation w.r.t the $\sigma$-algebra $\Flast_{k-1}$ concludes the proof:
\begin{align*}
\expec{\SqrdNrm{\C_k(\gwkstar) - \C_k(\g_k(w))}}{\Flast_{k-1}} \leq 2(\omgC + 1)  R^2 \sqrdnrm{\xCov^{1/2} (w_k - \ws)} + 4(\omgC + 1) \sigma^2  R^2\,.
\end{align*}
\end{proof}

\begin{property}[Validity of \Cref{asu:main:bound_mult_noise_holder}]
\label{prop:mult_noise_holder}
Considering \Cref{ex:cent_comp_LMS}, under the setting of \Cref{model:centralized} with \Cref{lem:compressor}, for any iteration $k$ in $\N^*$, the second moment of the multiplicative noise $\xi_k^{\mathrm{mult}}(w)$ can be bounded for any $w$ in $\R^d$ by $\omgCOne R^2 \sigma\|\xCov^{1/2} (w - \ws) \| + 3 \omgCTwo  R^2 \|\xCov^{1/2} (w - \ws) \|^2$, i.e. \Cref{asu:main:bound_mult_noise_holder} is verified.
\end{property}

\begin{proof}
Let $k$ in $\N^*$, we note $\eta = w - \ws$. Because we consider \Cref{ex:cent_comp_LMS}, with \Cref{def:class_of_algo,def:add_mult_noise}, we have the following decomposition:
\begin{align*}
    \xikmultFN[\eta] &= \SqrdNrm{\nabla F(w)} + 2 \PdtScl{\nabla F(w)}{\C_k(\gwkstar) - \C_k(\g_k(w))} + \SqrdNrm{\C_k(\gwkstar) - \C_k(\g_k(w))} \,.
\end{align*}

We take expectation w.r.t. the $\sigma$-algebra $\Fsto_k$ and use \Cref{item:holder_compressor} of \Cref{lem:compressor}:
\begin{align*}
    \Expec{\xikmultFN[\eta]}{\Fsto_{k}} &\leq \SqrdNrm{\nabla F(w)} + 2 \PdtScl{\nabla F(w)}{\gwkstar - \g_k(w)} \\
    &\qquad+ \omgCOne \min(\|\gwkstar \|,  \| \g_k(w) \|) \| \gwkstar - \g_k(w) \| + 3 \omgCTwo \| \gwkstar - \g_k(w) \|^2 \,.
\end{align*}

Then, we have $\min(\|\gwkstar \|,  \| \g_k(w) \|) \| \gwkstar - \g_k(w) \| \leq \|\gwkstar \| \| \gwkstar - \g_k(w) \|$, taking expectation conditionally to the $\sigma$-algebra $\Flast_{k-1}$, applying the Cauchy-Schwarz's \Cref{app:basic_ineq:cauchy_schwarz_cond} and considering \Cref{model:centralized}, we have:
\begin{align*}
\expec{\|\gwkstar \| \| \gwkstar - \g_k(w) \|}{\Flast_{k-1}}^2 &\leq \expec{\|\gwkstar \|^2 }{\Flast_{k-1}} \expec{ \| \gwkstar - \g_k(w) \|^2}{\Flast_{k-1}} \\
&\leq \sigma^2 R^4 \|\xCov^{1/2} (w - \ws) \|^2\,.
\end{align*}
Therefore, we can conclude:
\begin{align*}
    \Expec{\xikmultFN[\eta]}{\Flast_{k-1}} &\leq - \SqrdNrm{\nabla F(w)} + \sigma R^2 \omgCOne \|\xCov^{1/2} (w - \ws) \| + 3\omgCTwo  R^2 \|\xCov^{1/2} (w - \ws) \|^2 \,.
\end{align*}
\end{proof}

\begin{property}[Validity of \Cref{asu:main:bound_mult_noise_lin}]
\label{prop:mult_noise_lin}
Considering \Cref{ex:cent_comp_LMS}, under the setting of \Cref{model:centralized} with \Cref{lem:compressor}, if the compressor $\C$ is linear, then for any iteration $k$ in $\N^*$, the multiplicative noise $\xi_k^{\mathrm{mult}}$ is linear, thus there exist a matrix $\Xi_k$ in $\R^{d\times d}$ such that for any $w$ in $\R^d$, $\xi_k^{\mathrm{mult}}(w) = \Xi_k w$. Furthermore, the second moment of the multiplicative noise can be bounded for any $w$ in $\R^d$ by $(\omgC +1)  R^2 \SqrdNrm{\xCov^{1/2} (w - \ws)}$, hence \Cref{asu:main:bound_mult_noise_lin} is verified.
\end{property}

\begin{proof}
Let $k$ in $\N^*$, we note $\eta = w - \ws$. First, because we consider \Cref{ex:cent_comp_LMS}, with \Cref{def:class_of_algo,def:add_mult_noise}, we have $\xi_k(\eta) = \nabla F(w) - \C_k(\g_k(w))$ and $\xikstaru = - \C_k(\gwkstar)$, hence:
\begin{align*}
\xikmultFN[\eta] = \xi_k(\eta) - \xikstaru = \nabla F(w) - \C_k(\g_k(w)) + \C_k(\gwkstar)\,.
\end{align*}
Because the random mechanism $\C_k$ is linear, there exists a random matrix $\Pi_k$ in $\R^{d \times d}$  such that for any $z$ in $\R^d$, we have $\C_k(z) = \Pi_k z$, it follows that:
\begin{align*}
\xikmultFN[\eta] = \nabla F(w) + \C_k(\gwkstar - \g_k(w)) = (\xCov - \Pi_k (x_k \otimes x_k)) \eta\,.
\end{align*}

Hence, the first part of \Cref{asu:main:bound_mult_noise_lin} is verified with $\Xi_k = \xCov - \Pi_k (x_k \otimes x_k)$. Now, we compute the second moment of the multiplicative noise. We start by developing its squared norm:
\begin{align*}
    \sqrdnrm{\xikmultFN[\eta]} &= \SqrdNrm{\nabla F(w)} + 2 \PdtScl{\nabla F(w)}{\C_k(\gwkstar - \g_k(w))} + \SqrdNrm{\C_k(\gwkstar - \g_k(w))} \,.
\end{align*}

Taking expectation conditionally to the $\sigma$-algebra $\Fsto_k$, and using \Cref{lem:compressor} gives:
\begin{align*}
    \Expec{\|\xikmultFN[\eta])\|^2}{\Fsto_k} &= \SqrdNrm{\nabla F(w)} + 2 \PdtScl{\nabla F(w)}{\gwkstar - \g_k(w)} + (\omgC + 1)\SqrdNrm{\gwkstar - \g_k(w)} \,.
\end{align*}

Finally, with $\sigma$-algebra $\Flast_{k-1}$ and considering \Cref{model:centralized} we have:
\begin{align*}
    \Expec{\|\xikmultFN[\eta]\|^2}{\Flast_{k-1}} &= - \SqrdNrm{\nabla F(w)} + (\omgC + 1) R^2 \sqrdnrm{\xCov^{1/2} (w - \ws)} \,,
\end{align*}
which allows to conclude.
\end{proof}

\begin{property}[Validity of \Cref{asu:main:baniac_lin}]
\label{prop:baniac_lin}
Considering \Cref{ex:cent_comp_LMS} under the setting of \Cref{model:centralized} with \Cref{rem:as_bounded_features,lem:compressor}, if the compressor $\C$ is linear, then 
for any $k$ in $\N^*$, there exists a constant $\Sha_H>0 $ s.t. $\aniac \preccurlyeq \sigma^2 \Sha_\xCov \Fhess$ and $\FullExpec{\Xi_k \Xi_k^\top} \preccurlyeq  R^2 \Sha_\xCov \xCov$; \Cref{asu:main:baniac_lin} is thus verified.
\end{property}

\begin{proof}

Let $k$ in $\N^*$, we note $\eta = w - \ws$.
We first need to compute $\Sha_\xCov$ in $\R^d$ for each compressor $\C$ in $\{ \Cquant, \Cstabquant, \Crand, \Cspars, \Csketch$, $ \Cpp \}$, it comes from \Cref{prop:covariance_formula} which results having a constant $\Sha_\xCov$~s.t.: 
\begin{align}
\label{app:eq:bound_compressor_cov}
    \compCov[][][] = \E_{E \sim p_\xCov}[\C(E)^{\kcarre}] \preccurlyeq \Sha_\xCov \xCov \,.
\end{align} Indeed, $\Diag{\xCov}$ car be bounded by $\Tr{\xCov} \Id_d$, and then $\Id_d$ by $\mu^{-1}\xCov$. This constant $\Sha_\xCov$ can be computed from \Cref{prop:covariance_formula} for any compressor:
\begin{center}
    \begin{tabular}{lllll}
    \toprule
Compressor & $\Crandh$ & $\Cspars$ & $\Cpp$ & $\Csketch$ \\
\midrule
    $\Sha_\xCov$ & $\frac{h-1}{p(d-1)} + (1- \frac{h-1}{d-1}) \frac{\tau}{p}$&  $1 + \frac{(1- p) \tau}{p}$ &$\frac{1}{p}$ & $\frac{\alpha - \beta}{p} + \frac{\beta \tau}{p}$ \\ 
    $\Sha_\xCov$ (if $H$ diagonal) & $\frac{1}{p}$&  $\frac{1}{p}$  &$\frac{1}{p}$& $\frac{\alpha - \beta}{p} + \frac{\beta \tau}{p}$\\
    \bottomrule
\end{tabular}
\end{center}
Where $p = h/d$, $\tau = \Tr{\xCov} / \mu$, and for sketching $\alpha = \frac{\subdim +2}{d+2}$ and $\beta = \frac{d - \subdim}{(d - 1)(d+2)}$.

We now show that the two inequalities given in \Cref{asu:main:baniac_lin} are valid.

\textit{First inequality.} 

By \Cref{def:ania}, we have $\aniac = \FullExpec{\xikstaruk \otimes \xikstaruk} = \FullExpec{\C_k(\varepsilon_k x_k)^{\kcarre}}$, because $\bigpar{(\varepsilon_k)_{k\in \OneToK}}$ is independent from $\bigpar{(x_k)_{k\in\OneToK}}$ (\Cref{model:centralized}) and using compressor linearity and \Cref{app:eq:bound_compressor_cov}, it gives:~$
\aniac = \sigma^2 \FullExpec{\C_k(x_k)^{\kcarre}} = \sigma^2 \compCov[][][] \preccurlyeq \sigma^2 \Sha_\xCov \xCov \,.
$

\textit{Second inequality.}

Using \Cref{prop:baniac_lin}, because the compressor $\C$ is linear, there exists two matrices $\Pi_k, \Xi_k$ in $\R^{d\times d}$ s.t. for any $z$ in~$\R^d$, we have $\C_k(z) = \Pi_k z$ and $\xi_k^\mathrm{mult}(z) = \Xi_k z$, which gives that $\Xi_k = \xCov - \Pi_k (x_k \otimes x_k)$.
It follows that:
\begin{align*}
    \Xi_k \Xi_k^\top =\xCov \xCov^\top - \xCov \Pi_k (x_k \otimes x_k) - \Pi_k (x_k \otimes x_k) \xCov + \Pi_k (x_k \otimes x_k)  (x_k \otimes x_k) \Pi_k^\top \,.
\end{align*}

Given that the compression is unbiased (\Cref{lem:compressor}) we have $\Expec{\Pi_k}{\Fsto_k} = \Id_d$, hence:
\begin{align*}
    \Expec{\Xi_k \Xi_k^\top}{\Fsto_k} &= \xCov \xCov^\top - \xCov (x_k \otimes x_k) - (x_k \otimes x_k) \xCov + \Expec{\Pi_k (x_k \otimes x_k)  (x_k \otimes x_k) \Pi_k^\top}{\Fsto_k} \,,
\end{align*}
and now taking expectation w.r.t the $\sigma$-algebra $\Flast_{k-1}$:
\begin{align*}
    \Expec{\Xi_k \Xi_k^\top}{\Flast_{k-1}} &= - \xCov \xCov^\top + \Expec{\Pi_k (x_k \otimes x_k)  (x_k \otimes x_k) \Pi_k^\top}{\Flast_{k-1}} \,.
\end{align*}

In the end, we have that $\Expec{\Xi_k \Xi_k^\top}{\Flast_{k-1}} \preccurlyeq \Expec{\Pi_k (x_k \otimes x_k)  (x_k \otimes x_k) \Pi_k^\top}{\Flast_{k-1}}$,
and if we consider that the second moment of the features $(x_k)_{k\in \N^*}$ is almost surely bounded (\Cref{rem:as_bounded_features}), we obtain:
\begin{align}
\label{app:eq:using_as_bounded_features}
   \Expec{\Xi_k \Xi_k^\top}{\Flast_{k-1}} &\preccurlyeq  R^2 \Expec{\Pi_k (x_k \otimes x_k) \Pi_k^\top}{\Flast_{k-1}} 
   \preccurlyeq R^2 \Expec{\C_k(x_k)^{\kcarre}}{\Flast_{k-1}} \,.
\end{align}
Thus, using \Cref{app:eq:bound_compressor_cov}, we can state that $\Expec{\Xi_k \Xi_k^\top}{\Flast_{k-1}} \preccurlyeq R^2 \Sha_H \xCov$,
which concludes the second part of the verification of \Cref{asu:main:baniac_lin}.

\end{proof} 
\section{Compression operators}
\label{app:sec:covariance}

In this Section, we provide additional details about compression operators. First, we prove in \Cref{app:subsec:var_cov_compression} that \Cref{lem:compressor} hold and compute the compressor's covariance given in \Cref{prop:covariance_formula}. The specific computations for sketching are given separately in \Cref{app:subsec:sketching_cov} because they are more complex. Third, it allows to prove \Cref{prop:particular_cases,prop:particular_cases_quant} in \Cref{app:subsec:proof_particular_case}.  And finally, in \Cref{app:subsec:plot_cov_matrix}, we plot the covariance matrix induced by quantization and sparsification for \texttt{quantum} and \texttt{cifar-10}.

\subsection{Computation of the variance and covariance of the compression operators }
\label{app:subsec:var_cov_compression}

In this Subsection, we first prove \Cref{lem:compressor}. \Cref{item:urvb} is frequently established in the literature and corresponds to the worst-case assumption, see the introduction for references. On the other hand, \Cref{item:holder_compressor} is the Hölder-type bound, which is not used in the literature up to our knowledge. Next, we compute the compressors' covariances that have been given in \Cref{prop:covariance_formula}.

\begin{lemma}
\label[lemma]{app:lem:compressor}
For any compressor $\C \in \{ \Cquant, \Cstabquant, \Crandh, \Cspars, \Csketch, \Cpp \}$, there exists constants $\omgC, \omgCOne \in \R^*_+$, such that the random operator $\C$ satisfies the following properties for all $z, z'\in\R^d$.
\begin{enumerate}[label={\textbf{L.\arabic*:}},ref={L.\arabic*},noitemsep]
\item \label{app:item:urvb} $\E [\C(z)] = z$ and $\E [ \| \C(z) - z\|^2] \leq \omgC \|z\|^2$ (unbiasedness and variance relatively bounded),
\item \label{app:item:holder_compressor} $\E [ \| \C(z) - \C(z')\|^2] \leq  \omgCOne  \min(\|z\|, \|z'\|) \|z - z'\|+ 3 \omgCTwo \|z- z' \|^2 \text{(Hölder-type bound),}$
\end{enumerate}
with $\omgC = \sqrt{d}$ and $\omgCOne
 = 12 \sqrt{d}$ (resp.  $\omgC = (1-p)/p$ and $\omgCOne
 = 0$) for $\Cquant$ and $\Cstabquant$ (resp. $\Crandh, \Cspars$, $\Csketch, \Cpp$). 
\end{lemma}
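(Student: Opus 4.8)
The plan is to verify the two properties by separating the four \emph{linear} (projection-based) operators from the two \emph{quantization-based} ones, since the former admit a trivial Hölder bound while the latter require a genuine coupling argument. For the unbiasedness and variance property \ref{item:urvb}, I would run the direct moment computations operator by operator. Unbiasedness follows from $\E[\chi_i]=|z_i|/\|z\|$ for $\Cquant$, from $\E[B_i]=p$, $\Proba{i\in S}=h/d$, and $\E[\Phi^\dagger\Phi]=(h/d)\Id$ (the row space of a Gaussian $\Phi$ is a uniformly random $h$-dimensional subspace, so its orthogonal projector has expectation $(h/d)\Id$) for the linear operators, and from orthogonal invariance for $\Cstabquant$. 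The variance bound comes from evaluating $\E[\SqrdNrm{\C(z)}]$: one gets $\E[\SqrdNrm{\Cquant(z)}]=\|z\|\,\|z\|_1\le\sqrt d\,\|z\|^2$, whence $\omgC=\sqrt d$ (the same after rotation for $\Cstabquant$), while each linear operator yields $\E[\SqrdNrm{\C(z)}]=p^{-1}\|z\|^2$ (using idempotence of the projector for $\Csketch$), hence $\omgC=(1-p)/p$.

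For property \ref{item:holder_compressor} in the linear case, I would note that $\Crandh,\Cspars,\Cpp,\Csketch$ all have the form $\C(z)=Mz$ for a random matrix $M$ (diagonal, scalar, or the scaled projector). Coupling the two compressions through the \emph{same} realization of $M$ gives $\C(z)-\C(z')=\C(z-z')$, so by \ref{item:urvb}, $\E[\SqrdNrm{\C(z)-\C(z')}]=\E[\SqrdNrm{\C(z-z')}]\le(\omgC+1)\SqrdNrm{z-z'}\le 3\,\omgCTwo\SqrdNrm{z-z'}$, which is exactly the claim with $\omgCOne=0$.

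The heart of the proof is property \ref{item:holder_compressor} for quantization. I would write $\Cquant(z)=\|z\|\,s(z)$ with $s(z)_i=\sign(z_i)\chi_i$, and couple the two compressions by shared uniforms $U_i$ (the monotone coupling of $\chi_i\sim\Bern(|z_i|/\|z\|)$ with $\chi_i'\sim\Bern(|z_i'|/\|z'\|)$), adding a shared rotation $U$ in the stabilized case. Assuming w.l.o.g. $\|z'\|\le\|z\|$, the decomposition $\Cquant(z)-\Cquant(z')=\|z'\|\,(s(z)-s(z'))+(\|z\|-\|z'\|)\,s(z)$ together with $\SqrdNrm{a+b}\le2\SqrdNrm{a}+2\SqrdNrm{b}$ splits the bound into two pieces. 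The second piece is controlled by $(\|z\|-\|z'\|)^2\le\SqrdNrm{z-z'}$ and $\E[\SqrdNrm{s(z)}]=\|z\|_1/\|z\|\le\sqrt d$, giving $\le 2\sqrt d\,\SqrdNrm{z-z'}$. For the first piece I would carry out a coordinatewise analysis of the coupling: writing $\widehat z=z/\|z\|$, coordinates where $z_i,z_i'$ share a sign contribute $\E[|s(z)_i-s(z')_i|^2]=\big||\widehat z_i|-|\widehat{z'}_i|\big|=|\widehat z_i-\widehat{z'}_i|$, while opposite-sign coordinates contribute $a_i+b_i+2\min(a_i,b_i)\le2(a_i+b_i)=2|\widehat z_i-\widehat{z'}_i|$; summing yields $\E[\SqrdNrm{s(z)-s(z')}]\le 2\|\widehat z-\widehat{z'}\|_1\le 2\sqrt d\,\|\widehat z-\widehat{z'}\|$. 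Combining this with the unit-vector estimate $\|\widehat z-\widehat{z'}\|\le 2\|z-z'\|/\max(\|z\|,\|z'\|)$ and $\|z'\|^2/\|z\|\le\|z'\|=\min(\|z\|,\|z'\|)$ bounds the first piece by $8\sqrt d\,\min(\|z\|,\|z'\|)\,\|z-z'\|$. Adding the two pieces gives the bound with $\omgCOne=12\sqrt d$ (with room to spare), and orthogonal invariance ($\|Uz\|=\|z\|$, $\|Uz-Uz'\|=\|z-z'\|$) transfers it verbatim, uniformly in $U$, to $\Cstabquant$.

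The main obstacle is precisely this last paragraph: handling the sign flips inside the coupling and, above all, routing the $\sqrt d$ factors so that the leading term carries $\min(\|z\|,\|z'\|)$ rather than $\max(\|z\|,\|z'\|)$. This requires separating the norm-difference term before coupling and using the unit-vector distance estimate; a careless triangle-inequality split would instead produce terms in $\|z\|^2$ and $\|z'\|^2$ and miss the Hölder scaling in $\|z-z'\|$ entirely.
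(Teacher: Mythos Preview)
Your proof is correct. For \ref{item:urvb} and for the linear operators in \ref{item:holder_compressor} you do exactly what the paper does. For quantization in \ref{item:holder_compressor}, both you and the paper use the same shared-uniform (monotone) coupling of the Bernoullis, but the algebraic decomposition differs. The paper telescopes through two intermediate states,
\[
\Cquant(x)-\Cquant(y)=A+B+C,
\]
where $A$ changes only the normalizing $\|x\|$ in the Bernoulli parameter to $\|y\|$, $B$ then changes the sign and numerator, and $C$ changes the outer scalar norm; each piece is bounded separately and combined via $\|A+B+C\|^2\le 3(\|A\|^2+\|B\|^2+\|C\|^2)$, yielding $12\sqrt d\min(\|x\|,\|y\|)\|x-y\|+3\sqrt d\,\|x-y\|^2$. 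You instead factor out the scalar norm once, $\Cquant(z)-\Cquant(z')=\|z'\|(s(z)-s(z'))+(\|z\|-\|z'\|)s(z)$, and absorb both the normalization change and the sign/numerator change into the single estimate $\E\|s(z)-s(z')\|^2\le 2\sqrt d\,\|\hat z-\hat{z'}\|$ together with the unit-vector bound $\|\hat z-\hat{z'}\|\le 2\|z-z'\|/\max(\|z\|,\|z'\|)$. Your route is a bit cleaner and even yields sharper constants ($8\sqrt d$ and $2\sqrt d$ in the two terms), comfortably inside the stated $\omgCOne=12\sqrt d$ and $3\omgCTwo$; the paper's three-step telescoping is more elementary in that it avoids the unit-vector lemma, at the cost of one extra intermediate term to track.
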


\begin{proof}

\paragraph{Value of $\omgC$ (\Cref{item:urvb} of \Cref{lem:compressor}).}

For projection-based compressors, the proof is straightforward, for quantization-based, the proof can be found in \citet{alistarh_qsgd_2017}.

\paragraph{Value of $\omgCOne$ (\Cref{item:holder_compressor} of \Cref{lem:compressor}).}

For linear compressors, it is straightforward to obtain $\omgCOne = 0$.

For quantization, we take $x, y$ in $\R^d$, we note $(u_i)_{i=1}^d$ the vector controlling the randomness of compression, and we write $\Cquant(x) - \Cquant(y) = A + B + C$, with:
\begin{enumerate}[topsep=2pt,itemsep=1pt,leftmargin=0.5cm,noitemsep]
    \item $A := \|x \| \sign(x) \Bern(\frac{|x|}{\|x\|}) - \|x \| \sign(x) \Bern(\frac{|x|}{\|y\|})$
    \item $B := \|x \| \sign(x) \Bern(\frac{|x|}{\|y\|}) - \|x \| \sign(y) \Bern(\frac{|y|}{\|y\|})$
    \item $C := \|x \| \sign(y) \Bern(\frac{|y|}{\|y\|})-\|y \| \sign(y) \Bern(\frac{|y|}{\|y\|})$.
\end{enumerate}

We note $\| \cdot \|$ the 2-norm and $\| \cdot \|_1$ the 1-norm. By symmetry, we suppose that $\sqrdnrm{y} \geq \sqrdnrm{x}$.

\textit{First term.} We have $\sqrdnrm{A} = \sqrdnrm{x} \sum_{i = 1}^d (\mathbb{1}_{u_i \leq \frac{|x_i|}{\|x\|}} - \mathbb{1}_{u_i \leq \frac{|x_i|}{\|y\|}})^2 = \sqrdnrm{x} \sum_{i = 1}^d \mathbb{1}_{\frac{|x_i|}{\|y\|} \leq u_i \leq \frac{|x_i|}{\|x\|}}^2$ because $\sqrdnrm{y} \geq \sqrdnrm{x}$. Taking expectation, it gives $\fullexpec{\sqrdnrm{A}} = \sqrdnrm{x} \sum_{i = 1}^d \frac{|x_i|}{\|x\|} - \frac{|x_i|}{\|y\|} = \sqrdnrm{x} \| x \|_1 \frac{\|y\| - \|x\|}{\|y\|\|x\|}$. Now with triangular inequality, we have:
\[
\fullexpec{\sqrdnrm{A}} \leq \frac{\|x \|}{\|y\|} \| x \|_1 \|y - x\| \leq \| x \|_1 \|y - x\| \leq \sqrt{d} \| x \| \|y - x\| \,,
\]
and by symmetry $\fullexpec{\sqrdnrm{A}} \leq \sqrt{d} \min(\| x \|, \| y \|) \|y - x\|$.

\textit{Second term.}

We have $\sqrdnrm{B} = \|x\|^2  \sum_{i = 1}^d (\sign(x_i) \mathbb{1}_{u_i \leq \frac{|x_i|}{\|y\|}} - \sign(y_i) \mathbb{1}_{u_i \leq \frac{|y_i|}{\|y\|}} )^2 $. Let $i$ in $[d]$, if $\sign(x_i) = \sign(y_i)$, then:
\[
\FullExpec{\sqrdnrm{B}} = \|x\|^2  \sum_{i = 1}^d \FullExpec{\mathbb{1}_{\frac{\min(|x_i|, |y_i|)}{\|y\|} \leq u_i \leq \frac{\max(|x_i|, |y_i|)}{\|y\|}}^2} =  \frac{\|x\|^2}{\|y\|}  \sum_{i = 1}^d |y_i - x_i | \leq \| x\| \|x - y\|_ 1 \,. 
\]

If $\sign(x_i) \neq \sign(y_i)$, developping $(\sign(x_i) \mathbb{1}_{u_i \leq \frac{|x_i|}{\|y\|}} - \sign(y_i) \mathbb{1}_{u_i \leq \frac{|y_i|}{\|y\|}} )^2$, we have:
\begin{align*}
    \FullExpec{\| B \|^2} &= \|x\|^2  \sum_{i = 1}^d \frac{| x_i |}{\|y\|} + \frac{| y_i |}{\|y\|} - 2 \sign(x_i) \sign(y_i) \frac{\min(|x_i|, |y_i |)}{\|y\|} \\
    &= \frac{\|x\|^2}{\|y\|}  \sum_{i = 1}^d \max(|x_i|, |y_i |) + 3 \min(|x_i|, |y_i |) \,.
\end{align*}
Next, we have $\max(|x_i|, |y_i |) +  \min(|x_i|, |y_i |) = |x_i| + |y_i | \overset{\sign(x_i) \neq \sign(y_i)}{=} |x_i - y_i |$, which results to $\FullExpec{\| B \|^2} \leq 3 \frac{\|x\|^2}{\|y\|}  \sum_{i = 1}^d |y_i - x_i | \leq 3 \|x\| \| x -y \|_1  \leq 3\sqrt d \|x\| \| x -y \|$.

\textit{Third term.} We have $\sqrdnrm{C} = ( \|x \|- \|y \| )^2 \sum_{i = 1}^d \mathbb{1}_{u_i \leq \frac{|y_i|}{\|y\|}}^2$, taking expectation, it gives: 
\[
\fullexpec{\sqrdnrm{C}} = ( \|x \|- \|y \| )^2 \sum_{i = 1}^d \frac{|y_i|}{\|y\|} \leq  \sqrdnrm{x-y} \frac{\|y\|_1}{\|y\|} \leq \sqrt d \sqrdnrm{x-y}\,.
\]

\textit{Overall}, using \Cref{app:lem:two_inequalities}, we have:
\begin{align*}
    \fullexpec{\sqrdnrm{\Cquant(x) - \Cquant(y) }} &\leq 12 \sqrt{d} \min(\|x\|, \|y\|) \|x-y\|  + 3 \sqrt{d} \|x-y\|^2 \,,
\end{align*}
which allows to conclude as for $1$-quantization, we have $\omgC = \sqrt{d}$.
\end{proof}

We now compute the compressors' covariance given in \Cref{prop:covariance_formula,prop:covariance_formula_diagonal_case}. However, sketching requires more involved computations, they are provided in \Cref{app:subsec:sketching_cov}.

\fbox{
\begin{minipage}{0.97\textwidth}
\begin{proposition}[Structure of the compressor's covariance]
\label[proposition]{app:prop:covariance_formula}
The following formulas of compressors' covariance hold:
\begin{itemize}[topsep=2pt,itemsep=1pt,noitemsep]
    \item $\compCov[][\emptyset][M] = M$
    \item $\compCov[][q][M] \preccurlyeq M + \sqrt{\Tr M } \sqrt{\Diag{M}} - \Diag{M}$
    \item $\compCov[][s][M] = M + \frac{1 - p}{p} \Diag{M}$
    \item $\compCov[][\Phi][M] = \frac{1}{p} \bigpar{(\alpha - \beta) M + \beta \Tr{M} \Id_d}$ with $\alpha = \frac{\subdim +2}{d+2}$ and $\beta = \frac{d - \subdim}{(d - 1) (d + 2)}$
    \item $\compCov[][\mathrm{rd}h][M] = \frac{d(h-1)}{h (d-1)} M + \bigpar{\frac{d}{h} - \frac{d(h-1)}{h (d-1)}}  \Diag{M}$
    \item $\compCov[][\mathrm{PP}][M]  = \frac{1}{p} M$\,.
\end{itemize}
\end{proposition}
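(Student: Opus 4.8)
The plan is to exploit the integral representation of \Cref{def:cov_of_compression}, $\mathfrak{C}(\C,p_M)=\int_{\R^d}\E_{\C}[\C(e)^{\kcarre}]\,\mathrm{d}p_M(e)$, where $\E_\C$ denotes expectation over the compression randomness only. For each operator I would first compute the \emph{per-point} second moment $\E_\C[\C(e)^{\kcarre}]$ for a fixed deterministic $e\in\R^d$, obtaining an expression that is affine in $e\otimes e$ and $\Diag{e\otimes e}$ (with an extra $\|e\|$ factor only for quantization), and then integrate in $e$ using $\int e\otimes e\,\mathrm{d}p_M=M$ and $\int\Diag{e\otimes e}\,\mathrm{d}p_M=\Diag{M}$. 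The cases $\C_{\Id_d}$ and $\Cpp$ are immediate: $\C_{\Id_d}(e)=e$ gives $M$, while $\Cpp(e)=\tfrac{b_0}{p}e$ with $b_0\sim\Bern(p)$ gives $\E_\C[\Cpp(e)^{\kcarre}]=\tfrac{\E[b_0^2]}{p^2}\,e\otimes e=\tfrac1p\,e\otimes e$ since $b_0^2=b_0$, hence $\tfrac1p M$ after integration.

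For the masking operators $\Cspars$ and $\Crandh$ I would write $\C(e)_i=c\,B_i e_i$, so that the $(i,j)$ entry of $\C(e)^{\kcarre}$ is $c^2 B_iB_j e_ie_j$; everything then reduces to the two moments $\E[B_i^2]$ (diagonal) and $\E[B_iB_j]$ for $i\ne j$ (off-diagonal). For sparsification ($c=1/p$, the $B_i\sim\Bern(p)$ independent) one has $\E[B_i^2]=p$ and $\E[B_iB_j]=p^2$, so the diagonal entries become $\tfrac1p e_i^2$ and the off-diagonal ones stay $e_ie_j$, giving $\E_\C[\Cspars(e)^{\kcarre}]=e\otimes e+\tfrac{1-p}{p}\Diag{e\otimes e}$ and hence $M+\tfrac{1-p}{p}\Diag{M}$. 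For $\Crandh$ ($c=d/h$, $S$ a uniform $h$-subset) the sampling-without-replacement moments $\E[B_i^2]=\tfrac hd$ and $\E[B_iB_j]=\tfrac{h(h-1)}{d(d-1)}$ produce the coefficients $\tfrac dh$ on the diagonal and $\tfrac{d(h-1)}{h(d-1)}$ off-diagonal, which is the claimed formula.

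For quantization the per-point moment is exact but its integral requires one inequality. With $\Cquant(e)_i=\|e\|\sign(e_i)\chi_i$ and $\chi_i\sim\Bern(|e_i|/\|e\|)$ independent, the $(i,j)$ entry of $\Cquant(e)^{\kcarre}$ is $\|e\|^2\sign(e_i)\sign(e_j)\chi_i\chi_j$; using $\E[\chi_i^2]=\E[\chi_i]=|e_i|/\|e\|$ and, for $i\ne j$, $\E[\chi_i\chi_j]=|e_i||e_j|/\|e\|^2$, the off-diagonal entries collapse exactly to $e_ie_j$ while the diagonal entries become $\|e\|\,|e_i|$. Thus $\E_\C[\Cquant(e)^{\kcarre}]=e\otimes e-\Diag{e\otimes e}+\mathrm{Diag}(|e_1|,\dots,|e_d|)\,\|e\|$, and integrating, the only non-linear contribution is the diagonal matrix with $i$-th entry $\E[\|E\|\,|E_i|]$. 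Bounding this by Cauchy--Schwarz, $\E[\|E\|\,|E_i|]\le\sqrt{\E[\|E\|^2]}\sqrt{\E[E_i^2]}=\sqrt{\Tr{M}}\sqrt{M_{ii}}$, and comparing the resulting diagonal matrices entrywise yields $\mathfrak{C}(\Cquant,p_M)\preccurlyeq M-\Diag{M}+\sqrt{\Tr{M}}\sqrt{\Diag{M}}$; the bound is an equality exactly when this Cauchy--Schwarz step is tight on every coordinate, in particular when $\|E\|$ is a.s.\ constant so that the compression reduces to the sign pattern.

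Finally, for sketching I would first observe that, $\Phi$ being Gaussian, $P:=\Phi^\dagger\Phi=\Phi^\top(\Phi\Phi^\top)^{-1}\Phi$ is a.s.\ the orthogonal projector onto the uniformly distributed $h$-dimensional row space of $\Phi$, so $\Csketch(e)=\tfrac1p Pe$ and $\E_\C[\Csketch(e)^{\kcarre}]=\tfrac1{p^2}\E[P(e\otimes e)P]$. Since the law of $P$ is invariant under conjugation by any fixed orthogonal matrix, the linear map $A\mapsto\E[PAP]$ on symmetric matrices is $O(d)$-equivariant and therefore of the form $A\mapsto(a+b)A+b\,\Tr{A}\,\Id_d$ for scalars $a=a(d,h)$ and $b=b(d,h)$. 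I would pin down $a,b$ using that $P$ is a rank-$h$ idempotent, so $\Tr{P}=h$ and $\Tr{P^2}=\Tr{P}=h$ almost surely; taking expectations of $\Tr{P^2}=\sum_{i,j}P_{ij}^2$ and $(\Tr{P})^2=\sum_{i,j}P_{ii}P_{jj}$ expanded in entries gives the linear system $da+d(d+1)b=h$ and $d^2a+2db=h^2$, whose solution is $b=\tfrac{h(d-h)}{d(d-1)(d+2)}$ and $a+b=\tfrac{h(dh+d-2)}{d(d-1)(d+2)}$. Substituting into $\tfrac1{p^2}[(a+b)M+b\Tr{M}\Id_d]$ with $p=h/d$ recovers $\tfrac1p[(\alpha-\beta)M+\beta\Tr{M}\Id_d]$ with the stated $\alpha=\tfrac{h+2}{d+2}$ and $\beta=\tfrac{d-h}{(d-1)(d+2)}$. \textbf{The main obstacle is this sketching step}: establishing the $O(d)$-equivariant two-parameter form of $A\mapsto\E[PAP]$ and correctly evaluating the fourth-order moments $\E[P_{ij}P_{kl}]$ of the random projector; this is why the full computation is deferred to \Cref{app:subsec:sketching_cov}.
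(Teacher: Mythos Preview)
Your argument is correct. For $\C_{\Id_d}$, $\Cpp$, $\Cspars$, $\Crandh$ and $\Cquant$ you proceed exactly as the paper does: compute the per-point second moment, separate diagonal from off-diagonal via the Bernoulli moments (or the without-replacement moments for $\Crandh$), and for quantization apply Cauchy--Schwarz to the single non-affine term $\E[\|E\|\,|E_i|]$.

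The genuine difference is in the sketching step. The paper's route in \Cref{app:subsec:sketching_cov} is computational: it parametrizes $pC_\Phi$ as $P^\top J_h P$ with $P$ Haar-orthogonal, reduces $y^\top C_\Phi(x)$ to quantities involving $X=Px/\|x\|$ uniform on the sphere and a vector $u$ uniform on the orthogonal complement, and then evaluates $\E[X_1^4]$ and $\E[u_1^2\mid X]$ explicitly via Wallis integrals and a geometric argument about the intersection of a sphere with an affine hyperplane. Your route is structural: you recognize that the law of the projector $P=\Phi^\dagger\Phi$ is invariant under $O(d)$-conjugation, so $A\mapsto\E[PAP]$ must be an $O(d)$-equivariant linear map on symmetric matrices, hence of the form $c_1A+c_2\Tr{A}\Id_d$; you then pin down $c_1,c_2$ from the two \emph{deterministic} trace identities $\Tr P=h$ and $\Tr P^2=h$, which after taking expectations give the stated $2\times2$ system. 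I checked that your system indeed solves to $b=\tfrac{h(d-h)}{d(d-1)(d+2)}=p\beta$ and $a+b=\tfrac{h(dh+d-2)}{d(d-1)(d+2)}=p(\alpha-\beta)$, so the final covariance matches. Your approach is shorter and avoids all the spherical-integral machinery; the paper's approach, on the other hand, yields the intermediate moments $\E[X_1^4]$ and $\E[u_1^2\mid X]$ as by-products, which could be reused elsewhere but are not needed for the proposition itself.
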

\end{minipage}
}

\begin{proof}

In this proof, we denote $\FF$ the $\sigma$-field generated by the random sampling of $E \sim p_M \in \mathcal{P}_M$, and $\GG$  the $\sigma$-field generated by the noise from the compression process.
Let $E \sim p_M \in \mathcal{P}_M$.

\textbf{Quantization.}
By definition, we have $\C_q(E) = \normeucl{E} \sign(E) \odot \chi$, with $\chi = \bigpar{\mathrm{Bern}(\frac{|E_i|}{\normeucl{E}})}_{i=1}^d$.
It follows that $\C_q(E)^\kcarre = \normeucl{E}^2 \sign(E)^\kcarre \odot \chi^\kcarre$.

Because:
\begin{align*}
\Expec{\chi^\kcarre}{\FF} = \left\{ 
    \begin{array}{l}
       \ffrac{|E_i |}{\normeucl{E}} \quad \text{if} \quad i=j \vspace{1em} \\ 
       \ffrac{|E_i |~|E_j |}{\normeucl{E}^2} \quad \text{else,} 
    \end{array}  \right.
\end{align*}
and considering that $\sign(E)^\kcarre = \begin{pmatrix}
1 &  & \sign(E_i) \sign(E_j) \\
& \ddots &  \\
\sign(E_i) \sign(E_j) &  & 1 
\end{pmatrix}$\,,
we have:
\begin{align*}
\Expec{\C_q(E)^\kcarre}{\FF} = \left\{ 
    \begin{array}{l}
       \normeucl{E} ~ |E_i | \quad \text{if} \quad i=j \,, \vspace{1em} \\ 
       E_i E_j \quad \text{else.} 
    \end{array}  \right.
\end{align*}

Taking the complete expectation gives:
\begin{align*}
\FullExpec{\C_q(E)^\kcarre} = \left\{ 
    \begin{array}{l}
       \FullExpec{\normeucl{E} ~ |E_i |} \quad \text{if} \quad i=j \vspace{1em} \\ 
       M_{ij} \quad \text{else.} 
    \end{array}  \right.
\end{align*}

Changing the diagonal to make appear $M$, we obtain:
\[
\FullExpec{\C_q(E)^\kcarre} = M + \FullExpec{\normeucl{E} \Diag{|E_i |}_{i=1}^d} - \FullExpec{\Diag{E_i^2}_{i=1}^d} \,.
\]

Furthermore, we first have that $\FullExpec{\Diag{E_i^2}_{i=1}^d} = \Diag{M}$ and secondly, by Cauchy-Schwarz \Cref{app:basic_ineq:cauchy_schwarz_cond} that:

\[
\FullExpec{\normeucl{E} \Diag{|E_i |}_{i=1}^d}^2 \preccurlyeq \FullExpec{\normeucl{E}^2 } \FullExpec{\Diag{E_i^2}_{i=1}^d} = \Tr{M} \Diag{M} \,,
\]

which finally gives $\FullExpec{\C_q(E)^\kcarre} \preccurlyeq M + \sqrt{\Tr{M}} \sqrt{\Diag{M}} - \Diag{M} \,.$

\textbf{Sparsification.}
By definition, we have $\C_s (E) = \frac{1}{p} B \odot E \in \R^d$, with $B \sim \bigpar{\text{Bern}(p)}_{i=1}^d$, thus $\C_s(E)^\kcarre = \frac{1}{p^2} B^\kcarre \odot E^\kcarre$.
Taking the expectation w.r.t. to the $\sigma$-filtration $\FF$, we have~ $\Expec{\C_s (E)^\kcarre}{\FF} = \frac{1}{p^2} P \odot E^\kcarre$ with~$P = \begin{pmatrix}
p &  & p^2 \\
& \ddots &  \\
p^2 &  & p 
\end{pmatrix}\,,$ because for all $i,j$ in $\llbracket 1, d \rrbracket$, we have $\Expec{B_i^2}{\FF} = p$ and $\Expec{B_i B_j}{\FF} = p^2$. This naturally gives: $\FullExpec{\C_s (E)^\kcarre} = \frac{1}{p^2} P \odot M$.

\textbf{Sketching.}
The proof is more complex and therefore is given separately, in \Cref{app:subsubsec:covariance_sketching}.

\textbf{Rand-$h$.}
By definition, we have $\Crandh (E) := \frac{d}{h} B(S) \odot E$ with $S\sim \mathrm{Unif}(\mathcal{P}_h([d]))$ and $B(S)_i = \mathbb{1}_{i \in S}$, thus $\Crandh(E)^\kcarre = \frac{1}{p^2} B^\kcarre \odot E^\kcarre$ ($p=h/d$).
We have that for any $i, j$ in $\{1, \dots, d\}$, $B_i$ and~$B_j$ are \textit{not} independent and that $B_i \sim \bigpar{\text{Bern}(p)}$, therefore we have that $\E[B_i^2] = p$ and that: $h^2 = \bigpar{\sum_{i=1}^d B_i}^2 = \sum_{i=1}^d B_i^2 + \sum_{i\neq j} B_i B_j$. Taking expectation, it gives $h^2 = h + d (d-1) \E[B_i B_j]$ i.e. $\E[B_i B_j] = \frac{h(h-1)}{d (d-1)}$. 
Taking the expectation w.r.t. to the $\sigma$-filtration $\FF$, we have : $$\Expec{\Crandh (E)^\kcarre}{\FF} = \frac{d(h-1)}{h (d-1)} E^{\kcarre} + \bigpar{\frac{d}{h} - \frac{d(h-1)}{h (d-1)}} \Diag{E^{\kcarre}}\,.$$

And taking full expectation allows conclusion.

\textbf{Partial Participation.} This result is straightforward.

\end{proof}

\subsection{Variance and covariance of sketching}
\label{app:subsec:sketching_cov}

In this Subsection, we compute the expectation, the variance, and the covariance of sketching. In \Cref{app:subsubsec:proof_principle_sketching}, we give the proof principle of our computation, in \Cref{app:subsubsec:expec_sketching}, we compute the expectation and the variance, and in \Cref{app:subsubsec:covariance_sketching}, we compute the covariance.

We thank Baptiste Goujaud (École polytechnique, CMAP) who greatly helped to prove the following. 

\subsubsection{Proof principle}
\label{app:subsubsec:proof_principle_sketching}

Let $y$ in $\R^d$ with $\SqrdNrm{y}=1$, and $x$ in $\R^d$. By \Cref{def:operators_compression}, for $\Phi$ in $\R^{h \times d}$, we have $\C_\Phi(x) = \frac{1}{p} \Phi^\dagger \Phi x$ with $\Phi^\dagger = \Phi^\top (\Phi \Phi^T)^{-1}$ and $p = \subdim / d$.

To compute the expectation, the variance, and the covariance of $\C_\Phi(x)$, the idea is to compute~$\E[y^\top C_\Phi(x)]$ and $\E[(y^\top C_\Phi(x))^2]$ by establishing \Cref{app:eq:formula_sketching} which allows controlling the randomness of sketching by using \Cref{app:eq:sketching_equalities_expectation}.
To establish \Cref{app:eq:formula_sketching}, first observe that~$p C_\Phi(\cdots)$ is a projector into a subspace of dimension $\subdim$, indeed we have $(p C_\Phi \odot p C_\Phi)(x) = p C_\Phi (x)$. Then there exists a random matrix $P$ in $\mathcal{O}_d$ s.t. $p C_\Phi(x) = P^\top J_{\subdim} Px$. It leads to:
\begin{align*}
y^\top C_\Phi(x) &=\frac{1}{p}y^\top P^\top J_{\subdim} P x = \frac{1}{p} (Py)^\top J_{\subdim} (Px) \,.   
\end{align*}

Now we note $X = Px/\|x\|$ and $Y = P y$, hence $y^\top C_\Phi(x) = \frac{\|x\|}{p} Y^\top J_{\subdim} X$, and because $P$ is in $\mathcal{O}_d$, we have:
\begin{align*}
\left\{
    \begin{array}{ll}
        \SqrdNrm{X} =  1 \\
        \SqrdNrm{Y} =  \SqrdNrm{y} = 1\\
        \PdtScl{X}{Y} = \PdtScl{x}{y} / \|x\| \,.
    \end{array}
\right.
\end{align*}

Furthermore, $P$ is a random projector, it follows that $X$ and $Y$ are sampled uniformly from the zero-center sphere of radius 1; i.e. $X \sim \mathrm{Unif}(\mathcal{S}_d(0,1))$ and $Y \sim \mathrm{Unif}(\mathcal{S}_d(0,1))$. However, $X$ and $Y$ are not independent, this is why, we consider that $X \sim \mathrm{Unif}(\mathcal{S}_d(0,1))$ and write $Y$ s.t. $Y = aX + bu$ with $u$ a random vector in $\R^d$ of norm $1$ orthogonal to $X$, that is to say, $u | X$ is uniformly sampled on a zero-centered hyper-sphere of radius $1$ orthogonal to the vector $X$ (see illustration on \Cref{app:fig:sphere_projecteur}). It comes that:
\begin{align}
\label{app:eq:formula_sketching}
y^\top C_\Phi(x) = \frac{\|x\|}{p} Y^\top J_{\subdim} X = \frac{\|x\|}{p} (a X^\top + b u^T) J_{\subdim} X = \frac{\|x\|}{p} (a X^\top J_{\subdim} X + b u^\top J_{\subdim} X) \,.
\end{align}

Observe that for any $i,j$ in $\{1, \cdots, d \}$, $X_i$, $X_j$ (resp. $u_i$, $u_j$) have the same law, it results to:
\begin{align}
\label{app:eq:sketching_equalities_expectation}
    \forall (i,j) \in \{1, \cdots, d \}^2,~\forall k \in \N,\qquad \E [X_i^k] = \E [X_j^k] \quad \text{and} \quad \E [u_i^k] = \E [u_j^k]\,.
\end{align}
This property is the key to compute the expectation, the variance, and the covariance of sketching. 

\begin{wrapfigure}[9]{r}{0.33\textwidth}
\vspace{-0.2cm}
  \begin{center}
    \includegraphics[width=0.7\linewidth]{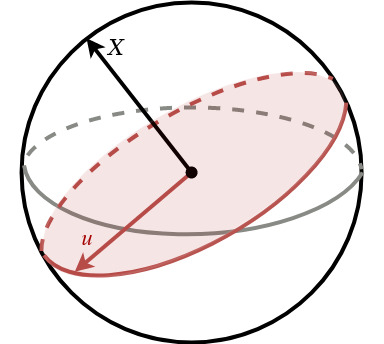}
  \end{center}
  \vspace{-0.5cm} 
  \caption{Sphere zero-center with radius 1: $X$ and $u$ are orthogonal.}
  \label{app:fig:sphere_projecteur}
\end{wrapfigure}

We now compute $a$ and $b$. First, by definition, we have:
\begin{align*}
    \frac{\PdtScl{x}{y}}{\|x\|} = \PdtScl{X}{Y} = a \SqrdNrm{X} = a \,,
\end{align*}

then we write that:
\begin{align*}
    1 = \SqrdNrm{Y} = \frac{\PdtScl{x}{y}^2}{\|x\|^4}
 \SqrdNrm{X} + b^2 \SqrdNrm{u} = \frac{\PdtScl{x}{y}^2}{\|x\|^2} + b^2 \,,
 \end{align*}
 which gives $b = \sqrt{1 - \frac{\PdtScl{x}{y}^2}{\|x\|^2}}$. 

At the end, we have: $Y =  aX + bu = \frac{\PdtScl{x}{y}}{\|x\|} X + \sqrt{1 - \frac{\PdtScl{x}{y}^2}{\|x\|^2}} u$.

\subsubsection{Expectation and variance of sketching}
\label{app:subsubsec:expec_sketching}

In this Subsection, we prove that sketching verifies \Cref{item:urvb} in \Cref{lem:compressor}; for this purpose, we show that it is unbiased, then we compute its variance. 

\begin{proposition}
Sketching is unbiased and its variance is relatively bounded, i.e., it verifies \Cref{item:urvb} in \Cref{lem:compressor} with $\omgC = (1-p)/p$ where $p = \subdim /d$.
\end{proposition}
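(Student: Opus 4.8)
The plan is to exploit the spherical representation established in the proof principle (\Cref{app:subsubsec:proof_principle_sketching}): writing $p\,\C_\Phi(x) = P^\top J_\subdim P x$ for a random orthogonal $P$, and reducing everything to the uniform sphere variables $X = Px/\|x\|$ and $Y = Py$. Both the unbiasedness and the variance bound then follow from two elementary facts already set up there, namely the marginal symmetry identity \eqref{app:eq:sketching_equalities_expectation} and the idempotence $J_\subdim^2 = J_\subdim$ of the coordinate projector. Crucially, the symmetry facts let me replace each $\E[X_i^2]$ (and each $\E[u_i X_i]$) by its average over coordinates, which I can read off from the norm and orthogonality constraints.

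For unbiasedness, I would fix $y$ with $\|y\|=1$ and start from \eqref{app:eq:formula_sketching}, which reads $y^\top \C_\Phi(x) = \frac{\|x\|}{p}\bigpar{a\, X^\top J_\subdim X + b\, u^\top J_\subdim X}$. I first compute $\E[X^\top J_\subdim X] = \sum_{i=1}^\subdim \E[X_i^2]$: since $\sum_{i=1}^d X_i^2 = \|X\|^2 = 1$ and all $\E[X_i^2]$ coincide by \eqref{app:eq:sketching_equalities_expectation}, each equals $1/d$, so $\E[X^\top J_\subdim X] = \subdim/d = p$. For the cross term, $\E[u^\top J_\subdim X] = \sum_{i=1}^\subdim \E[u_i X_i]$; the products $u_i X_i$ all share the same mean, and their sum over the $d$ coordinates is $\PdtScl{u}{X} = 0$ by the orthogonality $u \perp X$, which forces $\E[u_i X_i] = 0$ and hence $\E[u^\top J_\subdim X] = 0$. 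Substituting $a = \PdtScl{x}{y}/\|x\|$ gives $\E[y^\top \C_\Phi(x)] = \PdtScl{x}{y}$; as this holds for every unit vector $y$, I conclude $\E[\C_\Phi(x)] = x$.

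For the variance I would bypass the $y$-projection and use idempotence directly. Since $PP^\top = \Id_d$ and $J_\subdim^2 = J_\subdim$, we get $\SqrdNrm{\C_\Phi(x)} = \frac{1}{p^2} x^\top P^\top J_\subdim P x = \frac{\|x\|^2}{p^2}\, X^\top J_\subdim X$, whose expectation is $\frac{\|x\|^2}{p^2}\cdot p = \frac{\|x\|^2}{p}$ by the first-moment computation above. Combining with unbiasedness, $\E[\SqrdNrm{\C_\Phi(x) - x}] = \E[\SqrdNrm{\C_\Phi(x)}] - \SqrdNrm{x} = \frac{1-p}{p}\SqrdNrm{x}$, which is exactly \Cref{item:urvb} with $\omgC = (1-p)/p$ (indeed an equality, not merely a bound).

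These computations are essentially routine once the spherical reduction is available, so there is no genuine obstacle beyond careful bookkeeping. The one step that deserves attention is the symmetry argument pinning down $\E[X_i^2] = 1/d$ and $\E[u_i X_i] = 0$ from \eqref{app:eq:sketching_equalities_expectation} together with the constraints $\|X\| = 1$ and $u \perp X$; the idempotence of $J_\subdim$ then makes the second-moment computation collapse onto the same first-moment quantity, which is what renders the whole proof short.
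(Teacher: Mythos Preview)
Your proof is correct and follows essentially the same route as the paper: both use the spherical reduction \eqref{app:eq:formula_sketching}, the identity $\E[X_i^2]=1/d$ from symmetry, and the idempotence $J_\subdim^2=J_\subdim$ to collapse the second moment onto the first. The only minor difference is in killing the cross term: the paper conditions on $X$ and uses $\E[u_i\mid\sigma(\{X\})]=0$ (since $u\mid X$ is uniform on a sphere centered at the origin), whereas you argue marginally that all $\E[u_iX_i]$ coincide and sum to $\E[\langle u,X\rangle]=0$; note that \eqref{app:eq:sketching_equalities_expectation} as stated only covers separate moments of $X$ and $u$, so your cross-moment symmetry really rests on the permutation invariance of the joint law of $(X,u)$, which is true but not literally what that equation says.
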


\begin{proof}
Starting from \Cref{app:eq:formula_sketching}, we have $y^\top C_\Phi(x) = \frac{\|x\|}{p} (a X^\top J_{\subdim} X + b u^\top J_{\subdim} X)$. We first compute the expectation w.r.t. the $\sigma$-algebra $\sigma(\{X\})$ generated by the noise involved in the random vector $X$, it gives:
\[
\expec{y^\top C_\Phi(x)}{\sigma(\{X\})}= \frac{\|x\|}{p} 
 \sum_{i=1}^{\subdim} a X_i^2 + b X_i \Expec{u_i }{\sigma(\{X\})}\,.
\]

Because $u$ is sampled uniformly from the zero-center sphere of radius $1$ s.t. it is orthogonal to $X$, for any $i$ in $\{1, \cdots, d\}$, we have $\expec{u_i}{\sigma(\{X\})} = 0$, hence taking full expectation, we obtain:
\[
\fullexpec{y^\top C_\Phi(x)}= \frac{\|x\|}{p} 
 \sum_{i=1}^{\subdim} a \E[X_i^2]\,.
\]

Using \Cref{app:eq:sketching_equalities_expectation}, we have $\E[X_i^2] = \frac{1}{d} \sum_{j=1}^d \E[X_j^2]$, next recalling that $p=\subdim/d$ and $\|X\|^2 = 1$, it leads to $\fullexpec{y^\top C_\Phi(x)} = a \|x\| \fullexpec{\sum_{j=1}^d X_j^2} = a \|x\| \E[\|X\|^2] = a \|x\|$. And because $a = \PdtScl{x}{y} / \|x \|$, we have at the end that $\E [C_\Phi(x)] = x$.
Now we compute the variance:
\begin{align*}
    \E[C_\Phi(x)^\top C_\Phi(x)] = \frac{1}{p^2} \E[x^\top P^\top J_{\subdim} P P^\top J_{\subdim} P x] = \frac{1}{p^2} \E[x^\top P^\top J_{\subdim} P x] = \frac{\SqrdNrm{x}}{p^2} \E[X^\top J_{\subdim} X]\,.
\end{align*}

$\E[X^\top J_{\subdim} X]$ has been computed above and is equal to $p$, it results that $\E[C_\Phi(x)^\top C_\Phi(x)] = \SqrdNrm{x}/p$.
In the end, sketching verifies \Cref{lem:compressor} with $\omgC = (1-p)/p$.
\end{proof}

\subsubsection{Covariance of sketching.}
\label{app:subsubsec:covariance_sketching}

In this Subsection, we compute the covariance of sketching. For the sake of demonstration, we need to compute the $4^{\mathrm{th}}$-moment of $X_1$ and the $2^{\mathrm{nd}}$-moment of $u_1$. For any $i$ in $[d]$ and any vector $v$ in $\R^d$, we note $v_{-i} = (v_j)_{j\in[d], j\neq i}$ in $\R^{d-1}$.

\textit{Computing the $4^{\mathrm{th}}$-moment of $X_1$.}

The marginal density of $X_1$ is $f_{X_1} : x \mapsto B(\frac{d-1}{2}, \frac{1}{2})^{-1} (1 - x^2)^{(d-3)/2}$ where~$B$ is the beta function defined as~$B: x, y \mapsto \int_0^1 t^{x-1} (1-t)^{y-1} = 2 \int_{0}^{\pi/2} \sin^{2x-1}(t) \cos^{2y-1}(t) \mathrm{d} t$. This result can be obtained either by an application of the formula for the surface area of a sphere \citep{li2010concise,sidiropoulos2014n}, either by writing that $X_1 = \frac{Z_1}{\| Z \|}$ with $Z$ a Gaussian vector with $d$ components.
Therefore we have that:
\[
\E[X_1^4] = \frac{\int_{-1}^1 x^4 (1 - x^2)^{(d-3)/2} \mathrm{d} x}{2 \int_{0}^{\pi/2} \sin^{d-2}(t)  \mathrm{d} t} \stackrel{\mathrm{(i)}}{=} \frac{2\int_0^{\pi/2} \cos^4(t) \sin^{d-2}(t) \mathrm{d} t}{2 \int_{0}^{\pi/2} \sin^{d-2}(t)  \mathrm{d} t} \stackrel{\mathrm{(ii)}}{=} \frac{W_{d-2} - 2 W_{d} + W_{d+2}}{W_{d-2}} \,,
\]

where at (i) we set $x = \cos(t)$ and at (ii) we make appears the Wallis' integrals defined for any $n$ in $\N$ as $W_n = \int_0^{\pi/2} \sin^{n}(t) \mathrm{d} t$. Furthermore, we have the following recursion using integration by parts: $W_{d+2} = \frac{d+1}{d+2} W_{d}$, therefore, we have: 
\begin{align}
\label{app:eq:fourth_moment_X1}
    \E[X_1^4] = \bigpar{1- \frac{2(d-1)}{d} + \frac{(d-1)(d+1)}{d (d+2)}} = \frac{3}{d(d+2)} \,.
\end{align}

\textit{Computing the $2^{\mathrm{nd}}$-moment of $u_1$ w.r.t the $\sigma$-algebra $\sigma({X})$.}

\begin{wrapfigure}[10]{r}{0.55\textwidth}
\vspace{-0.5cm}
  \begin{center}
    \includegraphics[width=0.95\linewidth]{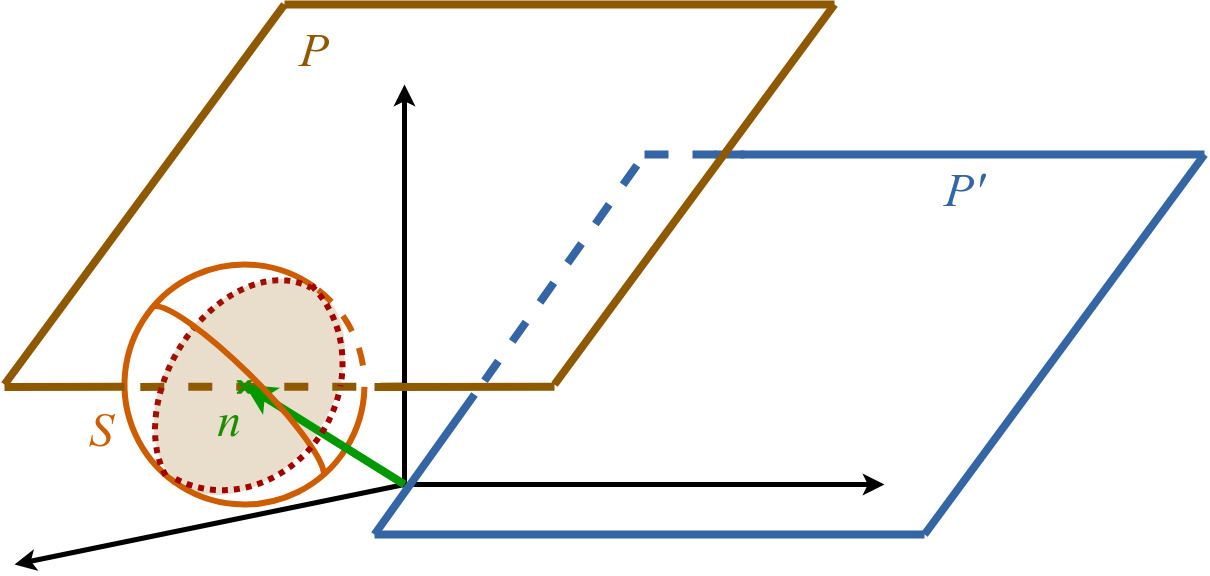}
  \end{center}
  \vspace{-0.5cm} 
  \caption{Parallel hyperplanes $P$ and $P'$ with the sphera $S$.}
  \label{app:fig:plans_and_sphere}
\end{wrapfigure}

We define three $(d-2)$--dimensional manifolds, two parallel hyperplanes $P,P'$ and a sphere~$S$, as follows:
\[
\left\{ 
    \begin{array}{l}
       P = \{\tilde{u} \in \R^{d-1} \mid \pdtscl{\tilde{u}}{X_{-i}} = - X_i u_i \} \\
       P' = \{\tilde{u} \in \R^{d-1} \mid \pdtscl{\tilde{u}}{X_{-i}} = 0 \}  \\
       S = S_{d-1}(0, \sqrt{1 - u_1^2})
    \end{array}  \right.    
\]
Obviously $u_{-i}$ is in $P \cap S$; then we decompose $u_{-i}$ in two terms $ n + v$, with~$v \sim \mathrm{Unif}(P')$ orthogonal to $X$ and independent of $u_i$: $n$ is the center of the sphere $S \cap P $ and $v$ is its radius, $n$ corresponds also to the normal vector of both $P, P'$ with norm equal to the distance between the two hyperplanes, hence $n = \frac{\pdtscl{u_{-i}}{X_{-i}}}{\|X_{-i}\|^2} X_{-i} = - \frac{u_i X_i}{\|X_{-i}\|^2} X_{-i} $.

First, because $u_{-1} \in S$, we have $\| n + v \|^2 = 1  - u_1^2$, next by Pythagorean theorem this is equivalent to $\|v\|^2 = 1 - u_1^2 - \|n\|^2 = 1 - \frac{u_1^2}{\|X_{-1}\|^2}$. Second, because $u_{-1} \in P$, we have $u_1 = \frac{-\pdtscl{u_{-1}}{X_{-1}}}{X_1}$, that is to say the probability density function of $u_1 \mid X$ is proportional to the number of possible values for $u_{-1}$, which corresponds to the surface area of the hypersphere $P \cap S$. This surface is proportional to the radius $\| v \|^{d-4} = (1 - \frac{u_1^2}{\|X_{-1}\|^2})^{(d-4)/2}$ given that $P \cap S$ is a $(d-3)$--dimensional manifold, therefore:
\begin{align*}
    \expec{u_1^2}{\sigma(\{X\})} = \frac{\int_{-\|X_{-1}\|}^{\|X_{-1}\|} x^2 \bigpar{1 - \frac{x^2}{\|X_{-i}\|^2}}^{(d-4)/2} \mathrm{d} x}{\int_{-\|X_{-1}\|}^{\|X_{-1}\|} \bigpar{1 - \frac{x^2}{\|X_{-i}\|^2}}^{(d-4)/2} \mathrm{d} x} &\stackrel{\mathrm{(i)}}{=} \frac{\|X_{-1}\|^2 \int_{-1}^1 y^2 \bigpar{1 - y^2}^{(d-4)/2} \mathrm{d} y}{\int_{-1}^1 \bigpar{1 - y^2}^{(d-4)/2} \mathrm{d} y} \\
    &\stackrel{\mathrm{(ii)}}{=} \|X_{-1}\|^2 \frac{W_{d-3} - W_{d-1}}{W_{d-3}} \,,
\end{align*}

where at (i) we set $y = \frac{x}{\|X_{-1}\|}$ and at (ii) we reuse the previous computations to make appear the Wallis' integral. In the end, we obtain:

\begin{align}
\label{app:eq:second_moment_u1}
    \expec{u_1^2}{\sigma(\{X\})} = (1 - \frac{d-2}{d-1}) \|X_{-1}\|^2 = \frac{\|X_{-1}\|^2}{d-1} \,.
\end{align}

Note that this result is consistent with the fact that $\sum_{i=1}^d \expec{u_i^2}{\sigma(\{X\})} = \frac{d - \sum_{i=1}^d X_i^2}{d-1} = 1$. Now we can compute the covariance of the sketching operator.

\begin{proposition}
Let $x$ in $p_M$, the covariance of sketching is equal to:
\[
\E[\C_\Phi(x)^{\kcarre}] = \frac{1}{p} ((\alpha - \beta) M + \beta \Tr{M} \Id_d) \,,\vspace{-0.5cm}
\]
with $\alpha = \frac{\subdim +2}{d+2}$ and $\beta = \frac{d - \subdim}{(d - 1) (d + 2)}$.
\end{proposition}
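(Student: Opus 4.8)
The plan is to compute the scalar quadratic form $\FullExpec{(y^\top \C_\Phi(x))^2}$ for an arbitrary test direction $y$ with $\SqrdNrm{y}=1$, and then recover the whole matrix $\FullExpec{\C_\Phi(x)^{\kcarre}}$ from the fact that a symmetric matrix is determined by its quadratic form on the unit sphere. I would start from the representation established in the proof principle, \Cref{app:eq:formula_sketching}, namely $y^\top \C_\Phi(x) = \frac{\|x\|}{p}\bigpar{a\, X^\top J_{\subdim} X + b\, u^\top J_{\subdim} X}$ with $a = \PdtScl{x}{y}/\|x\|$, $b = \sqrt{1 - \PdtScl{x}{y}^2/\|x\|^2}$, where $X\sim\mathrm{Unif}(\mathcal{S}_d(0,1))$ and $u$ is uniform on the unit sphere of $X^\perp$. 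Squaring produces three pieces, whose expectations I would treat separately: $a^2\FullExpec{(X^\top J_{\subdim}X)^2}$, the cross term $2ab\FullExpec{(X^\top J_{\subdim}X)(u^\top J_{\subdim}X)}$, and $b^2\FullExpec{(u^\top J_{\subdim}X)^2}$.

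The cross term vanishes: conditioning on $X$ and using $\Expec{u_j}{\sigma(\{X\})}=0$ (already noted when computing the mean of sketching) kills it immediately. For the first term, write $X^\top J_{\subdim}X = \sum_{i=1}^{\subdim}X_i^2$, so $\FullExpec{(X^\top J_{\subdim}X)^2} = \subdim\,\FullExpec{X_1^4} + \subdim(\subdim-1)\FullExpec{X_1^2X_2^2}$. I would invoke the already-derived $\FullExpec{X_1^4}=3/(d(d+2))$ from \Cref{app:eq:fourth_moment_X1}, together with the identity $\sum_{i,j}\FullExpec{X_i^2X_j^2}=\FullExpec{\SqrdNrm{X}^2}=1$, which by symmetry forces $\FullExpec{X_1^2X_2^2}=1/(d(d+2))$. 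This yields $\FullExpec{(X^\top J_{\subdim}X)^2}=\subdim(\subdim+2)/(d(d+2))$.

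The third term is the crux. I would recast \Cref{app:eq:second_moment_u1} in matrix form as the conditional covariance $\Expec{u\otimes u}{\sigma(\{X\})}=\frac{1}{d-1}(\Id_d - X\otimes X)$, whose diagonal is exactly $\|X_{-i}\|^2/(d-1)$. Then $\FullExpec{(u^\top J_{\subdim}X)^2}=\FullExpec{\sum_{i,j\le\subdim}X_iX_j\,\Expec{u_iu_j}{\sigma(\{X\})}}$, which collapses to $\frac{1}{d-1}\FullExpec{X^\top J_{\subdim}X-(X^\top J_{\subdim}X)^2}$. With $\FullExpec{X^\top J_{\subdim}X}=\subdim/d=p$ and the value of the first term above, this equals $\subdim(d-\subdim)/(d(d-1)(d+2))$.

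Finally I would assemble everything. Using $\|x\|^2a^2=\PdtScl{x}{y}^2$ and $\|x\|^2b^2=\|x\|^2-\PdtScl{x}{y}^2$, the conditional quadratic form becomes $\frac{1}{p}\bigpar{(\alpha-\beta)\PdtScl{x}{y}^2+\beta\|x\|^2}$, after checking that the two prefactors $\frac{1}{p^2}\frac{\subdim(\subdim+2)}{d(d+2)}$ and $\frac{1}{p^2}\frac{\subdim(d-\subdim)}{d(d-1)(d+2)}$ simplify (using $p=\subdim/d$) precisely to $\alpha/p$ and $\beta/p$ with $\alpha=(\subdim+2)/(d+2)$ and $\beta=(d-\subdim)/((d-1)(d+2))$. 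Averaging over $x\sim p_M$ replaces $\PdtScl{x}{y}^2$ by $y^\top M y$ and $\|x\|^2$ by $\Tr{M}$, giving $\FullExpec{(y^\top\C_\Phi(x))^2}=\frac{1}{p}\bigpar{(\alpha-\beta)y^\top M y+\beta\Tr{M}\SqrdNrm{y}}$ for every $y$; identifying the quadratic form then yields $\FullExpec{\C_\Phi(x)^{\kcarre}}=\frac{1}{p}\bigpar{(\alpha-\beta)M+\beta\Tr{M}\Id_d}$. The only genuinely delicate point is the third term, i.e.\ correctly handling the dependence between $u$ and $X$ through the conditional covariance matrix; everything else is bookkeeping of one-dimensional moments.
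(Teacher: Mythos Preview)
Your proposal is correct and follows the same overall architecture as the paper: compute $\FullExpec{(y^\top \C_\Phi(x))^2}$ via the three-term expansion obtained by squaring \Cref{app:eq:formula_sketching}, then identify the quadratic form. The cross term and the first term $\FullExpec{(X^\top J_{\subdim}X)^2}$ are handled essentially the same way (your use of $\sum_{i,j}\FullExpec{X_i^2X_j^2}=1$ to extract $\FullExpec{X_1^2X_2^2}$ directly is a minor shortcut over the paper's route through $\FullExpec{X_i^2(1-X_i^2)}$).

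The genuine difference is in the third term $\FullExpec{(u^\top J_{\subdim}X)^2}$. The paper computes $\Expec{u_j}{\sigma(\{X,u_i\})}$ explicitly by a geometric construction with two hyperplanes $P,P'$ and the sphere $S$, then chains conditional expectations to obtain $\Expec{u_iu_j}{\sigma(\{X\})}$ for $i\neq j$. You instead go straight to the conditional covariance $\Expec{u\otimes u}{\sigma(\{X\})}=\frac{1}{d-1}(\Id_d-X\otimes X)$ and reduce the whole term to $\frac{1}{d-1}\FullExpec{X^\top J_{\subdim}X-(X^\top J_{\subdim}X)^2}$. Your route is shorter and more conceptual; one small caveat is that \Cref{app:eq:second_moment_u1} alone only gives the diagonal of that matrix, so the off-diagonal entries need the rotational invariance of $u$ within $X^\perp$ (projection onto $X^\perp$ is $\Id_d-X\otimes X$, trace condition fixes the $1/(d-1)$), which you should state rather than merely ``recast'' the scalar identity. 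The paper's hyperplane argument is longer but entirely explicit and self-contained.
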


\begin{proof}

Let $x$ in $\R^d$ and $y$ in $\R^d$ with $\SqrdNrm{y}=1$, starting from \Cref{app:eq:formula_sketching}, we have:
\begin{align*}
    (y^\top C_\Phi(x))^2 &= \frac{\|x\|^2}{p^2} (a X^\top J_{\subdim} X + b u^\top J_{\subdim} X)^2 \\
    &= \frac{\|x\|^2}{p^2} \Bigg( a^2 (X^\top J_{\subdim} X)^2 + 2 ab (X^\top J_{\subdim} X u^\top J_{\subdim} X) + b^2 (u^\top J_{\subdim} X)^2 \Bigg)\,.
\end{align*}

\textbf{First term.} Taking expectation, we have $\E [(X^\top J_{\subdim} X)^2] = \sum_{i=1}^{\subdim} \bigpar{ \E[X_i^4] + \sum_{j=1, j\neq i}^{\subdim} \E[X_i^2 X_j^2] }$.
However:
\begin{align*}
    \sum_{j=1, j\neq i}^{\subdim} \E[X_i^2 X_j^2] &= \FullExpec{X_i^2 \sum_{j=1, j\neq i}^{\subdim}  X_j^2} \stackrel{\mathrm{(i)}}{=} \FullExpec{X_i^2 \sum_{j=1, j\neq i}^{\subdim}  \frac{1}{d-1} \sum_{k=1, k\neq i}^{d} X_k^2} \\
    &\stackrel{\mathrm{(ii)}}{=} \frac{\subdim - 1}{d-1} \FullExpec{X_i^2 (1 - X_i^2)} \,,
\end{align*}
where we use at line (i)  \Cref{app:eq:sketching_equalities_expectation} and at line (ii) $\sum_{i=1}^d X_i^2 = 1$. It follows that:
\begin{align*}
    \E [(X^\top J_{\subdim} X)^2] &= \sum_{i=1}^{\subdim} \bigpar{\frac{d- \subdim}{d-1}  \E[X_i^4] + \frac{\subdim - 1}{d-1} \FullExpec{X_i^2}}
    \\
    &\stackrel{\mathrm{(i)}}{=} \frac{\subdim(d- \subdim)}{d-1}  \E[X_1^4] + \frac{\subdim - 1}{d-1} \sum_{i=1}^{\subdim} \FullExpec{X_i^2} \\
    &\stackrel{\mathrm{(iii)}}{=} \frac{\subdim(d- \subdim)}{d-1}  \E[X_1^4] + \frac{\subdim(\subdim - 1)}{d(d-1)} \\
    &\stackrel{\mathrm{eq.~\ref{app:eq:fourth_moment_X1}}}{=} \frac{3 \subdim(d- \subdim)}{d(d-1)(d+2)} + \frac{\subdim(\subdim - 1)}{d(d-1)} = \frac{h (h+2)}{d(d+2)} := \alpha'\,.
\end{align*}
Where we considered at line (i) that for any $i$ in $\{1, \cdots, \subdim\}$, $\E[X_i^4] = \E[X_1^4]$, and at line (ii) that $\sum_{i=1}^{\subdim} \FullExpec{X_i^2} = \frac{\subdim}{d} \E[\|X\|^2] = \subdim /d$. 

\textbf{Second term.} We compute the expectation w.r.t. the $\sigma$-algebra $\sigma(\{X\})$ generated by the noise involved in the random vector $X$. It gives $\Expec{X^\top J_{\subdim} X u^\top J_{\subdim} X}{\sigma(\{X\})} = 0$, because $u | X$ is uniformly sampled on a zero-centered hyper-sphere, and thus for any $i$ in $\{1, \cdots, d\}$, we have~$\expec{u_i}{\sigma(\{X\})}~=~0$.

\textbf{Third term.} We have $(u^\top J_{\subdim} X)^2 = \sum_{i=1}^{\subdim} u_i^2 X_i^2 + \sum_{j=1, j\neq i}^{\subdim} u_i u_j X_i X_j$. 
On one side, we compute the expectation w.r.t. the $\sigma$-algebra $\sigma(\{X\})$ generated by the noise involved in the random vector $X$:
\begin{align*}
    \sum_{i=1}^{\subdim} \Expec{u_i^2 X_i^2}{\sigma(\{X\})} &= \sum_{i=1}^{\subdim} X_i^2 \Expec{u_i^2}{\sigma(\{X\})} \stackrel{\mathrm{eq.~\ref{app:eq:second_moment_u1}}}{=} \frac{1}{d-1} \sum_{i=1}^{\subdim} X_i^2 \sqrdnrm{X_{-i}}\,.
\end{align*}
Taking full expectation, we have $ \sum_{i=1}^{\subdim} \fullexpec{u_i^2 X_i^2} = \frac{1}{d-1}  \sum_{i=1}^{\subdim} \fullexpec{X_i^2 (1 - X_i^2)} = \frac{h}{d-1} (\frac{1}{d} - \fullexpec{X_1^4})$, because for any $i$ in $\{1, \dots, \subdim\}$, $\E[X_i^4] = \E[X_1^4]$ and $\sum_{i=1}^{\subdim} \FullExpec{X_i^2} = \frac{\subdim}{d} \E[\|X\|^2] = \subdim /d$.

Let $i$ in $[d]$, on the other side, we compute the expectation w.r.t. the $\sigma$-algebra $\sigma(\{X, u_i\})$ generated by the noise involved in the random vector $X$ and the random variable $u_i$, hence we requires to compute $\Expec{u_j}{\sigma(\{X, u_i\})}$. To do so, as before, we decompose $u_{-i}$ in two terms $ n + v$ (see \Cref{app:fig:plans_and_sphere}), with~$v \sim \mathrm{Unif}(P')$ orthogonal to $X$ and independent of $u_i$, hence $\Expec{v}{\sigma(\{X, u_i\})} = 0$. 
It gives that $\Expec{u_{-i}}{\sigma(\{X, u_i\})} = - \frac{u_i X_i}{\|X_{-i}\|^2} X_{-i} $. Thereby, replacing for any coordinate $j \neq i $ in $[d]$ the value of $u_{-i}$ and taking expectation w.r.t. the $\sigma$-algebra $\sigma(\{X\})$, we obtain:
\begin{align*}
\sum_{i=1}^{\subdim} \sum_{j=1, j\neq i}^{\subdim}  X_i X_j \Expec{u_i u_j}{\sigma(\{X\})} &= -\sum_{i=1}^{\subdim} \sum_{j=1, j\neq i}^{\subdim} \ffrac{1}{\|X_{-i} \|^2} X_i^2 X_j^2 \Expec{u_i^2}{\sigma(\{X\})} \\
&\stackrel{\mathrm{eq.~\ref{app:eq:second_moment_u1}}}{=} -\frac{1}{d-1} \sum_{i=1}^{\subdim} \sum_{j=1, j\neq i}^{\subdim} X_i^2 X_j^2 \\
&= -\frac{1}{d-1} \sum_{i=1}^{\subdim} \sum_{j=1, j\neq i}^{\subdim} X_i^2 \frac{1- X_i^2}{d-1} \,.
\end{align*}
Finally, we have: $\sum_{i=1}^{\subdim} \sum_{j=1, j\neq i}^{\subdim}  \E[X_i X_j u_i u_j] = - \frac{h(h-1)}{d(d-1)^2} (1  - \sum_{i=1}^d \E[X_i^4])$. Putting together the two terms, we have that:
\begin{align*}
    \FullExpec{(u^\top J_{\subdim} X)^2} &= \frac{h}{d-1} (\frac{1}{d} - 
    \E[X_i^4]) - \frac{h(h-1)}{d(d-1)^2} (1  - d \E[X_1^4]) \stackrel{\mathrm{eq.~\ref{app:eq:fourth_moment_X1}}}{=} \frac{h(d-h)}{d(d-1) (d+2)} := \beta'\,.
\end{align*}

\textbf{In the end,} we have $\E[ (y^\top C_\Phi(x))^2] = \frac{\|x\|^2}{p^2} (a^2 \alpha' + b^2 \beta')$.
And because $\|y\|^2 = 1$, $a = \PdtScl{x}{y} / \|x \|$ and~$b = \sqrt{1 - \PdtScl{x}{y}^2 / \|x \|^2}$, replacing them by their values gives:
\begin{align*}
     y^\top \E[ C_\Phi(x))^{\kcarre}] y &= \frac{\SqrdNrm{x}}{p^2} \bigpar{\alpha' \frac{\PdtScl{x}{y}^2}{\|x\|^2} + \beta' \bigpar{y^\top y - \frac{\PdtScl{x}{y}^2}{\|x\|^2}}} \,,
\end{align*}

hence $\E[ C_\Phi(x))^{\kcarre}] = \frac{1}{p^2} \bigpar{ (\alpha' - \beta') x x^\top + \beta' \SqrdNrm{x} \Id_d}$.
To conclude, we consider that $x$ is a random variable sampled from a distribution $p_M$, then taking expectation on this random variable we have:
$\E C_\Phi(x)^{\kcarre} = \frac{1}{p} \bigpar{ (\alpha - \beta) M  + \beta \Tr{M} \Id_d}$, with $\alpha = \frac{\alpha'}{p} = \frac{\subdim +2}{d+2}$ and $\beta = \frac{\beta'}{p} =  \frac{d - \subdim}{(d-1)(d +2)}$.
\vspace{-1cm}
\end{proof}

\subsection{Proof of Propositions~\ref{prop:particular_cases} and \ref{prop:particular_cases_quant}}
\label{app:subsec:proof_particular_case}

In this Subsection, we give the proof of \Cref{prop:particular_cases,prop:particular_cases_quant} which provides generic comparisons between the asymptotic convergence rate of compressors. 
We first give a lemma resulting from the Cauchy-Schwarz's inequality necessary to establish these proofs.

\begin{lemma}[Cauchy-Schwarz's inequality on matrices' traces]
\label[lemma]{app:lem:cs_matrix}
    For any matrix $M$ in $\R^{d \times d}$, we have~$\Tr{M} \Tr{M^{-1}}~\geq~d^2$, with strict inequalities if $M$ is not proportional to $\Id_d$. And if $M$ is with constant diagonal equal to $c$ in $\R$, we have $c\Tr{M^{-1}} \geq d$.
\end{lemma}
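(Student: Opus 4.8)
The plan is to reduce both statements to the eigenvalues of $M$ and then invoke the classical Cauchy-Schwarz inequality. Throughout, $M$ is implicitly symmetric positive definite (as for the covariance matrices to which the lemma is applied), so that $M^{-1}$ is well-defined and shares the same orthonormal eigenbasis as $M$. Writing $\lambda_1, \dots, \lambda_d > 0$ for the eigenvalues of $M$, I would first record that $\Tr{M} = \sum_{i=1}^d \lambda_i$ and $\Tr{M^{-1}} = \sum_{i=1}^d \lambda_i^{-1}$, since the trace is basis-independent and the eigenvalues of $M^{-1}$ are exactly $(\lambda_i^{-1})_{i=1}^d$.

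For the first inequality, I would apply Cauchy-Schwarz to the vectors $(\sqrt{\lambda_i})_{i=1}^d$ and $(1/\sqrt{\lambda_i})_{i=1}^d$ in $\R^d$:
\[
d^2 = \left( \sum_{i=1}^d \sqrt{\lambda_i}\cdot \frac{1}{\sqrt{\lambda_i}} \right)^2 \leq \left( \sum_{i=1}^d \lambda_i \right)\left( \sum_{i=1}^d \frac{1}{\lambda_i} \right) = \Tr{M}\,\Tr{M^{-1}} \,.
\]
The equality case of Cauchy-Schwarz holds if and only if the two vectors are collinear, i.e. if and only if $\lambda_i$ is constant over $i$; since $M$ is symmetric this is exactly the condition that $M$ be proportional to $\Id_d$, which yields the claimed strict inequality whenever $M$ is not proportional to $\Id_d$.

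For the second statement, if $M$ has constant diagonal equal to $c$ then $\Tr{M} = dc$, so dividing the first inequality by $d$ gives $c\,\Tr{M^{-1}} = \frac{\Tr{M}}{d}\Tr{M^{-1}} \geq \frac{d^2}{d} = d$. There is no genuine obstacle here: the only points requiring a line of care are noting that $M$ must be (symmetric) positive definite for $M^{-1}$ and the eigenvalue description to be meaningful, and correctly matching the Cauchy-Schwarz equality case with proportionality to the identity.
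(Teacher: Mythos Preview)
Your proof is correct and follows essentially the same approach as the paper: both apply Cauchy--Schwarz to the decomposition $d = \sum_i \sqrt{\lambda_i}\cdot 1/\sqrt{\lambda_i}$ (the paper writes this as $\Tr{\Id_d} = \Tr{M^{1/2}M^{-1/2}}$ and uses the Frobenius inner product, which in the eigenbasis is exactly your computation). You are in fact slightly more complete, since you spell out the equality case and the constant-diagonal consequence, both of which the paper leaves implicit.
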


\begin{proof}
    Let $M$ in $\R^{d \times d}$, using the Cauchy-Schwarz inequality, we have: $$d^2 = \Tr{\Id_d}^2 = \Tr{M^{1/2} M^{-1/2}}^2 \stackrel{\mathrm{C.S}}{\leq} \Tr{M} \Tr{M^{-1}}\,,$$ and we have equality if $M$ is proportional to $\Id_d$.
\end{proof}

Now we give the demonstration of  \Cref{prop:particular_cases,prop:particular_cases_quant}. On \Cref{app:fig:compression_scatter_plot_std}, we complete the numerical illustration provided in \Cref{subsubsec:illustration_dim_2} by illustrating the scenario of standardized features, i.e., when the diagonal of $M$ is the identity.

\begin{figure}
\vspace{-1cm}
  \begin{center}
    \includegraphics[width=1\linewidth]{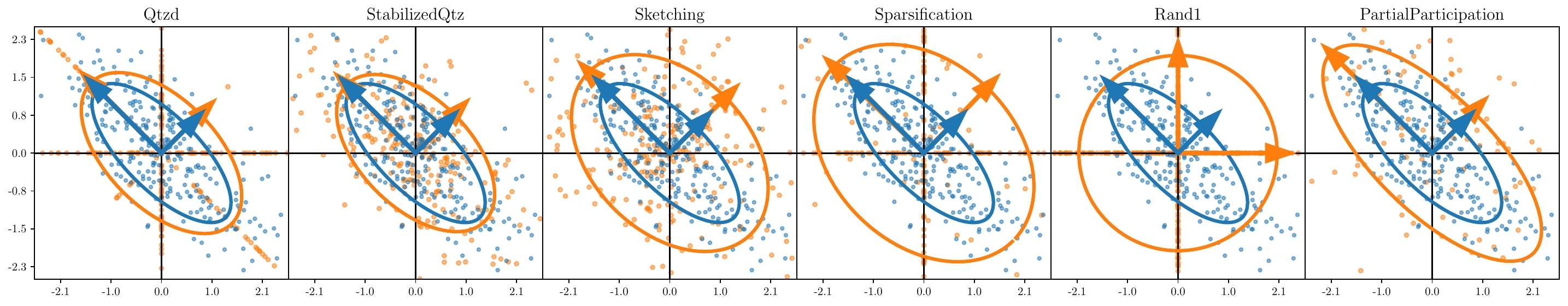}
  \end{center}
  \vspace{-0.5cm} 
  \caption{$\xCov$ not diagonal, scenario using features standardization. Scatter plot of \textcolor{tabblue}{$(x_k)_{i=1}^K$}/ \textcolor{taborange}{$(\C(x_k))_{i=1}^K$} with its ellipse \textcolor{tabblue}{$\mathcal{E}_{\Cov {x_k}}$}/\textcolor{taborange}{$\mathcal{E}_{\Cov {\C (x_k)}}$}.}
  \label{app:fig:compression_scatter_plot_std}
\end{figure}

\begin{proposition}[Comparison between ${\Cpp, \Cspars, \Crandh, \Csketch}$, $\omega=d/h- 1$] 
\label[proposition]{app:prop:particular_cases} We consider $\C$ in $\{\Cpp, \Cspars$, $\Crandh, \Csketch\}$ with $p=h/d$, such that $\C$ always satisfies \Cref{lem:compressor} with $\omgC = d/h- 1$. For any matrix~$M\in\R^{d \times d}$:
\begin{enumerate}[topsep=2pt,itemsep=1pt,leftmargin=0.5cm]
    \item \label{app:item:eq_trace_diag}  If $M$ is diagonal, then:
    \begin{itemize}
        \item $\compCov[][\mathrm{PP}][M]  = \compCov[][\mathrm{s}][M]  = \compCov[][\mathrm{rd}h][M]  = \frac{d}{h} M$, 
        \item $\Tr{\compCov[][\mathrm{PP}/\mathrm{s}/\mathrm{rd}h][M] M^{-1}} \leq \Tr{\compCov[][\Phi][M] M^{-1}}$.
    \end{itemize}
    \item \label{app:item:ineq_trace_diag} Moreover,  for any matrix $M$ with a \emph{constant diagonal} (e.g., after standardization), we have:
    $$\mathrm{Tr}(\compCov[][\mathrm{PP}][M] M^{-1}) \le \mathrm{Tr}(\compCov[][\Phi][M] M^{-1}) \le \mathrm{Tr}(\compCov[][s][M] M^{-1})\le \mathrm{Tr}(\compCov[][\mathrm{rd}h][M] M^{-1}) \,,$$
    with strict inequalities if $M$ is not proportional to $\Id_d$. 
\end{enumerate}
\end{proposition}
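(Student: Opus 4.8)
The plan is to read the six covariance matrices off \Cref{prop:covariance_formula} and to reduce every trace comparison to the two Cauchy--Schwarz bounds recorded in \Cref{app:lem:cs_matrix}: the general estimate $\Tr{M}\Tr{M^{-1}}\ge d^{2}$, and the sharpened form $c\,\Tr{M^{-1}}\ge d$ valid when $\Diag{M}=c\,\Id$. The unifying remark is that, for each $\C\in\{\Cpp,\Cspars,\Crandh,\Csketch\}$, the scalar $\Tr{\mathfrak{C}(\C,p_M)\,M^{-1}}$ is an \emph{affine} function of a single invariant of $M$ once the diagonal is fixed; in the constant-diagonal regime that invariant is $T:=c\,\Tr{M^{-1}}$, which \Cref{app:lem:cs_matrix} pins to $T\ge d$ with equality exactly when $M\propto\Id$.

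For the diagonal statement I would simply substitute $\Diag{M}=M$. This collapses the three projection formulas to $\mathfrak{C}(\Cpp,p_M)=\tfrac1p M$, $\mathfrak{C}(\Cspars,p_M)=M+\tfrac{1-p}{p}M=\tfrac1p M$ and $\mathfrak{C}(\Crandh,p_M)=\tfrac dh M$, all equal to $\tfrac dh M$ because $p=h/d$, which is the first bullet and yields the common value $\Tr{\tfrac dh M\,M^{-1}}=d^{2}/h$. For the sketching comparison I compute $\Tr{\mathfrak{C}(\Csketch,p_M)\,M^{-1}}=\tfrac dh\bigpar{(\alpha-\beta)d+\beta\,\Tr{M}\Tr{M^{-1}}}$ and reduce the target inequality to $\beta\,\Tr{M}\Tr{M^{-1}}\ge d\,(1-\alpha+\beta)$. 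Here the algebraic identity $1-\alpha=\beta(d-1)$ (both sides equal $\tfrac{d-h}{d+2}$) rewrites the right-hand side as $\beta d^{2}$, so the claim becomes precisely $\Tr{M}\Tr{M^{-1}}\ge d^{2}$, i.e.\ \Cref{app:lem:cs_matrix}.

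For the constant-diagonal chain I would use the affine structure together with a shared base point. Writing each trace as $A_{\C}+B_{\C}\,T$, the computation of the previous paragraph plus the complementary identity $\alpha+\beta(d-1)=1$ shows that at $T=d$ (equivalently $M\propto\Id$) all four traces equal $d^{2}/h$; in particular the four lines coincide there. Since $T\ge d$, the whole ordering is therefore governed by the slopes alone, and I only need $B_{\Cpp}\le B_{\Csketch}\le B_{\Cspars}\le B_{\Crandh}$. Reading them off gives $B_{\Cpp}=0$, $B_{\Csketch}=\tfrac{d^{2}\beta}{h}=\tfrac{d^{2}(d-h)}{h(d-1)(d+2)}$, $B_{\Cspars}=\tfrac{d-h}{h}$ and $B_{\Crandh}=\tfrac{d(d-h)}{h(d-1)}$; after dividing through by $\tfrac{d-h}{h}>0$, the three required inequalities collapse to nonnegativity of $\beta$, to $d^{2}\le(d-1)(d+2)$, and to $d-1\le d$, all elementary.

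The step I expect to need the most care is the strictness clause. Strict inequalities need $T>d$, which the sharp form of \Cref{app:lem:cs_matrix} provides exactly when $M\not\propto\Id$, combined with strict slope gaps. The gaps $B_{\Cpp}<B_{\Csketch}$ and $B_{\Cspars}<B_{\Crandh}$ are strict whenever $h<d$, while $B_{\Csketch}<B_{\Cspars}$ reduces to $d^{2}<(d-1)(d+2)$ and so is strict precisely for $d\ge3$, degenerating to equality only in the boundary dimension $d=2$. I would therefore state the chain for $d\ge2$ and record strictness for $M\not\propto\Id$ in dimension $d\ge3$, the standardized-feature illustration being deferred to \Cref{app:fig:compression_scatter_plot_std}.
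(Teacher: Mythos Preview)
Your proof is correct and uses the same two ingredients as the paper: the explicit covariance formulas from \Cref{prop:covariance_formula} and the Cauchy--Schwarz bound $\Tr{M}\Tr{M^{-1}}\ge d^{2}$ (equivalently $c\,\Tr{M^{-1}}\ge d$ when $\Diag{M}=c\,\Id$) from \Cref{app:lem:cs_matrix}. The paper verifies each of the three inequalities in the chain by computing the pairwise differences $\Tr{(\mathfrak{C}(\Csketch,p_M)-\mathfrak{C}(\Cpp,p_M))M^{-1}}$, $\Tr{(\mathfrak{C}(\Csketch,p_M)-\mathfrak{C}(\Cspars,p_M))M^{-1}}$, and $\Tr{(\mathfrak{C}(\Crandh,p_M)-\mathfrak{C}(\Cspars,p_M))M^{-1}}$ separately and then applies the Cauchy--Schwarz bound inside each. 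Your affine parametrization in $T=c\,\Tr{M^{-1}}$ is a neater organization of exactly the same computation: the observation that all four traces coincide at $T=d$ (via $\alpha+\beta(d-1)=1$) reduces the entire chain to a single slope comparison, and the three slope inequalities you check are algebraically equivalent to the three sign computations the paper performs. Your identification of the boundary case $d=2$ for the middle inequality $B_{\Csketch}<B_{\Cspars}$ is a genuine refinement: the paper's proof also loses strictness there (the factor $(d-2)(d-h)$ appearing in its computation of $\Tr{(\mathfrak{C}(\Csketch,p_M)-\mathfrak{C}(\Cspars,p_M))M^{-1}}$ vanishes), but the paper does not flag it.
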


\begin{proof}

Let $M$ in $\R^{d \times d}$ and take $p = h/d$.

\paragraph{Proof of \Cref{item:eq_trace_diag} in \Cref{prop:particular_cases}.}
In the diagonal case, the first equalities are straightforward as we have $\compCov[][\mathrm{PP}][M]  = \compCov[][\mathrm{s}][M]  = \compCov[][\mathrm{rd}1][M]  = \frac{d}{h} M$.
Next, we have (regardless if $M$ is diagonal or not):
\begin{align*}
\Tr{\bigpar{\compCov[][\mathrm{\Phi}][M] - \compCov[][\mathrm{PP}][M]} M^{-1}} &=  (\frac{h+1}{d+2} + \delta_{hd} -1)  \frac{\Tr{\Id_d}}{p} + (1  - \frac{h-1}{d-1} ) \frac{\Tr{M} \Tr{M^{-1}}}{p(d+2)} \\
&\stackrel{\mathrm{\Cref{app:lem:cs_matrix}}}{\geq} \frac{d}{p} \bigpar{\frac{h+1}{d+2} + \delta_{hd} -1 + \frac{d}{d+2} (1  - \frac{h-1}{d-1} ) } \\
&= 0 \,.
\end{align*}

\paragraph{Proof of \Cref{item:ineq_trace_diag} in \Cref{prop:particular_cases}.}   

Suppose now that $\Diag{M} = c \Id_d$, then we have $\compCov[][\mathrm{PP}][M]  = \frac{d}{h} M $, $\compCov[][\mathrm{s}][M]  = M + (\frac{d}{h} - 1) c\Id_d $, $\compCov[][\mathrm{rd}h][M]  = \frac{d (h-1)}{h(d-1)} M + \frac{d}{h} (1 - \frac{h-1}{d-1}) c \Id_d$ and $\compCov[][\Phi][M] = \frac{d}{h} \bigpar{(\frac{h+1}{d+2} - \delta_{hd}) M + \bigpar{1 - \frac{h-1}{d-1}} \frac{\Tr{M} }{d+2}\Id_d}$.
Firstly, from previous item, we have $$ \Tr{\compCov[][\mathrm{PP}][M] M^{-1}} \leq \Tr{\compCov[][\mathrm{\Phi}][M]} M^{-1}\,.$$
Secondly, we write:
\begin{align*}
    \Tr{\bigpar{\compCov[][\Phi][M] - \compCov[][\mathrm{s}][M]} M^{-1}} &=  \frac{d}{p} \bigpar{ \frac{h+1}{d+2} + \delta_{hd} - \frac{h}{d}} \\
    &\qquad+ \frac{c \Tr{M^{-1}}}{p} \bigpar{ \frac{d}{d+2} (1 - \frac{h-1}{d-1}) - (1 - \frac{h}{d}) } \\
    &= \frac{d}{p} \bigpar{ \frac{h+1}{d+2} + \delta_{hd} - \frac{h}{d}} - \frac{c \Tr{M^{-1}}}{p} \cdot \frac{(d-2)(d-h)}{d(d-1)(d+2)} \\
    &\stackrel{\mathrm{\Cref{app:lem:cs_matrix}}}{\leq} \frac{d}{p} \bigpar{ \frac{h+1}{d+2} + \delta_{hd} - \frac{h}{d} - \frac{(d-2)(d-h)}{d(d-1)(d+2)}} = 0 \,. 
\end{align*}
Thirdly, we have:
\begin{align*}
    \Tr{\bigpar{\compCov[][\mathrm{rd}h][M] - \compCov[][\mathrm{s}][M]} M^{-1}} &=  \frac{h-d}{h(d-1)} \Tr{\Id_d} + \frac{d-h}{h(d-1)} c \Tr{M^{-1}} \\
    &\stackrel{\mathrm{\Cref{app:lem:cs_matrix}}}{\geq} \frac{d}{h} \bigpar{ \frac{h-d}{d-1} + \frac{d-h}{d-1} } = 0\,. 
\end{align*}
\end{proof}

\begin{proposition}[Comparison between ${\Cpp, \Cquant,  \Cspars}$, $\omega=\sqrt{d}$ ] 
\label[proposition]{app:prop:particular_cases_quant}

We consider $\C$ in \\ $\{\Cpp, \Cquant, \Cspars\}$ with $p=(\sqrt{d} + 1)^{-1}$, such that $\C$ always satisfies \Cref{lem:compressor} with $\omega = \sqrt{d}$. 
\begin{enumerate}[topsep=2pt,itemsep=1pt,leftmargin=0.5cm]
\item \label{app:item:ineq_trace_diag_quantiz} For any symmetric matrix $M$ diagonal, we have:
$$\Tr{\compCov[][\mathrm{PP}][M] M^{-1}} = \Tr{\compCov[][s][M] M^{-1}} \overset{\text{possib. } \ll }{\leq } \left(1 + \frac{1}{\sqrt{d}}\right)\Tr{ \widetilde{\mathfrak{C}}(\Cquant, M)M^{-1}}\,.$$
\item \label{app:item:ineq_trace_diag_quantiz_sparsif} If $M$ is not necessarily diagonal  but with a \emph{constant diagonal} (e.g., after standardization), then 
\begin{itemize}
    \item $\widetilde{\mathfrak{C}}(\Cquant, M) \preccurlyeq \compCov[][\mathrm{s}][M]$   
    \item $\Tr{\compCov[][\mathrm{PP}][M] M^{-1}} \leq  \left(1 + \frac{1}{\sqrt{d}}\right)\Tr{ \widetilde{\mathfrak{C}}(\Cquant, M)M^{-1}} $ \,.
\end{itemize}
\end{enumerate}
\end{proposition}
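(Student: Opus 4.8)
The plan is to substitute the explicit value $p=(\sqrt{d}+1)^{-1}$ into the covariance formulas of \Cref{prop:covariance_formula}, which turns the two tunable constants into $1/p=\sqrt{d}+1$ and $(1-p)/p=\sqrt{d}$, and then to reduce each claimed (in)equality to one of the two Cauchy--Schwarz trace bounds of \Cref{app:lem:cs_matrix}. Concretely I would first record
\[
\compCov[][\mathrm{PP}][M]=(\sqrt{d}+1)M,\quad \compCov[][s][M]=M+\sqrt{d}\,\Diag{M},\quad \widetilde{\mathfrak{C}}(\Cquant,M)=M+\sqrt{\Tr{M}}\sqrt{\Diag{M}}-\Diag{M}.
\]

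For item (1), $M$ is diagonal so $\Diag{M}=M$; hence $\compCov[][s][M]=(\sqrt{d}+1)M=\compCov[][\mathrm{PP}][M]$ as matrices, which gives the equality $\Tr{\compCov[][\mathrm{PP}][M]M^{-1}}=\Tr{\compCov[][s][M]M^{-1}}=(\sqrt{d}+1)d$ at once. The quantization term simplifies to $\widetilde{\mathfrak{C}}(\Cquant,M)=\sqrt{\Tr{M}}\,M^{1/2}$, so $\Tr{\widetilde{\mathfrak{C}}(\Cquant,M)M^{-1}}=\sqrt{\Tr{M}}\,\Tr{M^{-1/2}}$, and after dividing the target inequality by $\sqrt{d}+1$ it becomes $d^{3/2}\le \sqrt{\Tr{M}}\,\Tr{M^{-1/2}}$. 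I would prove this last bound by combining $\Tr{M^{1/2}}\le\sqrt{d}\,\sqrt{\Tr{M}}$ (Cauchy--Schwarz applied to $(\sqrt{\lambda_i})_i$ against $(1)_i$) with $\Tr{M^{1/2}}\Tr{M^{-1/2}}\ge d^2$ (\Cref{app:lem:cs_matrix} applied to $M^{1/2}$), which chain to $\sqrt{\Tr{M}}\,\Tr{M^{-1/2}}\ge \tfrac{1}{\sqrt{d}}\Tr{M^{1/2}}\Tr{M^{-1/2}}\ge d^{3/2}$; both steps are equalities exactly when all eigenvalues coincide, i.e.\ $M\propto\Id_d$, which also explains the "$\ll$" when $M$ is ill-conditioned.

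For item (2), I assume $\Diag{M}=c\,\Id_d$ with $c=M_{11}\ge0$ and $\Tr{M}=cd$. The two forms above become $\compCov[][s][M]=M+c\sqrt{d}\,\Id_d$ and $\widetilde{\mathfrak{C}}(\Cquant,M)=M+c(\sqrt{d}-1)\Id_d$, whose difference is exactly $c\,\Id_d\succeq0$, proving $\widetilde{\mathfrak{C}}(\Cquant,M)\preccurlyeq\compCov[][s][M]$. For the trace comparison, $\Tr{\compCov[][\mathrm{PP}][M]M^{-1}}=(\sqrt{d}+1)d$ and $\Tr{\widetilde{\mathfrak{C}}(\Cquant,M)M^{-1}}=d+c(\sqrt{d}-1)\Tr{M^{-1}}$; substituting and cancelling the common factor $(\sqrt{d}+1)$ reduces the claim to $d\le c\,\Tr{M^{-1}}$, which is precisely the constant-diagonal case of \Cref{app:lem:cs_matrix}.

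There is no genuine analytic obstacle: once the formulas are specialized, everything collapses onto \Cref{app:lem:cs_matrix}. The only points requiring care are the bookkeeping of the $\Diag{M}$ terms (checking that the $\Id_d$ coefficients combine correctly after the specialization of $p$) and, in item (1), not conflating $\sqrt{\Tr{M}}$ with $\Tr{M^{1/2}}$ — this is the single place where an extra Cauchy--Schwarz step (the power-mean bound on the square roots of the eigenvalues) is needed rather than a direct application of the trace lemma. I would finally remark that the strictness/equality discussion ($M\propto\Id_d$) transfers verbatim from the equality cases of \Cref{app:lem:cs_matrix}.
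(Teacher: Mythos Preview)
Your proposal is correct and follows essentially the same approach as the paper: specializing the covariance formulas at $p=(\sqrt d+1)^{-1}$ and reducing everything to the Cauchy--Schwarz trace bounds of \Cref{app:lem:cs_matrix}. The only cosmetic difference is in item~(1), where the paper chains Cauchy--Schwarz as $d^2=(\sum\lambda_i^{1/4}\lambda_i^{-1/4})^2\le(\sum\lambda_i^{1/2})(\sum\lambda_i^{-1/2})\le\sqrt{d\,\Tr{M}}\,\Tr{M^{-1/2}}$, which is the same pair of inequalities you use in a slightly different order; in item~(2) the paper computes the trace of the matrix difference rather than dividing through, but arrives at the same $d\le c\,\Tr{M^{-1}}$.
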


\begin{proof}

Let $M$ in $\R^{d \times d}$ and take $p = \frac{1}{1 + \sqrt{d}}$.
\paragraph{Proof of \Cref{item:ineq_trace_diag_quantiz} in \Cref{prop:particular_cases_quant}.} In the diagonal case with $p = \frac{1}{1 + \sqrt{d}}$, we have $\widetilde{\mathfrak{C}}(\Cquant, M) = \sqrt{\Tr{M} } \sqrt{M}$ and $\compCov[][\mathrm{PP}][M]  = (1 + \sqrt{d}) M $, hence $\Tr{ \widetilde{\mathfrak{C}}(\Cquant, M) M^{-1}} = \sqrt{\Tr{M}} \Tr{\sqrt{M^{-1}}} $ and $\Tr{\compCov[][\mathrm{PP}][M]  M^{-1}} = (1 + \sqrt{d}) d$. Noting $(\lambda_i)_{i \in [d]}$ the eigenvalues of $M$, and using the Cauchy-Schwarz inequality's, we have:
\begin{align*}
    d^2 &= \bigpar{\sum_{i=1}^d 1 }^2 = \bigpar{\sum_{i=1}^d \lambda_i^{1/4} \lambda_i^{-1/4} }^2 \stackrel{\mathrm{C.S}}{\leq} \bigpar{\sum_{i=1}^d \lambda_i^{1/2} } \bigpar{\sum_{i=1}^d \lambda_i^{-1/2} } \\
    &\stackrel{\mathrm{C.S}}{\leq} \sqrt{\sum_{i=1}^d \lambda_i } \sqrt{\sum_{i=1}^d 1 } \bigpar{\sum_{i=1}^d \lambda_i^{-1/2} } = \sqrt{d \Tr{M}} \Tr{M^{-1/2}} = \sqrt{d} \Tr{ \widetilde{\mathfrak{C}}(\Cquant, M) M^{-1}} \,.
\end{align*}

Which follows that $\Tr{ \widetilde{\mathfrak{C}}(\Cquant, M) M^{-1}} \geq d^{3/2} = \sqrt{d} (1 + \sqrt{d})^{-1} \Tr{\compCov[][\mathrm{PP}][M] M^{-1}}$ and it allows to conclude.

\paragraph{Proof of \Cref{item:ineq_trace_diag_quantiz_sparsif} in \Cref{prop:particular_cases_quant}.}
Suppose now that $\Diag{M} = c \Id_d$, then we have $\compCov[][\mathrm{PP}][M]  = (\sqrt{d} + 1) M $, $\widetilde{\mathfrak{C}}(\Cquant, M)  = M  + (\sqrt{d} - 1) c \Id_d$, and $\compCov[][\mathrm{s}][M]  = M + c \sqrt{d} \Id_d $.
Firstly, it follows that:
\begin{align*}
    \compCov[][\mathrm{s}][M] - \widetilde{\mathfrak{C}}(\Cquant, M) = \bigpar{M + \sqrt{d} c \Id_d} - \bigpar{M + (\sqrt{d} - 1) c \Id_d} = c \Id_d \succcurlyeq 0 \,,
\end{align*}
Secondly, we have $(1 + \frac{1}{\sqrt{d}})\widetilde{\mathfrak{C}}(\Cquant, M) - \compCov[][\mathrm{PP}][M] = - ( 1 - \frac{1}{\sqrt{d}} ) M + (\sqrt{d} - \frac{1}{\sqrt{d}}) c \Id_d$,
which gives:
\begin{align*}
    \Tr{\bigpar{ (\sqrt{d} - \frac{1}{\sqrt{d}}) \widetilde{\mathfrak{C}}(\Cquant, M) - \compCov[][\mathrm{PP}][M]} M^{-1}} &= (\sqrt{d} - \frac{1}{\sqrt{d}})  c\Tr{M^{-1}} - ( 1 - \frac{1}{\sqrt{d}} ) \Tr{\Id_d} \\
    &\geq (\sqrt{d} - \frac{1}{\sqrt{d}})  d - ( 1 - \frac{1}{\sqrt{d}} ) d \text{\quad(\Cref{app:lem:cs_matrix})} \\
    &\geq d (\sqrt{d} - 1) \geq 0 \,. 
\end{align*}
And the proof is concluded.

\end{proof}

\subsection{Empirical covariances computed on quantum and cifar10}
\label{app:subsec:plot_cov_matrix}

On \Cref{app:fig:cov_aniac}, for both \texttt{quantum} and \texttt{cifar-10}, we first plot the covariance matrix (1) without any processing and (2) with standardization. In this latter case, we then plot the covariances induced by quantization and sparsification for $\omgC = 1$ and $8$. For \texttt{quantum}, without standardization, only four points are visible; it is caused by some rows having extremely large values at features $27$ and $43$, resulting in a feature mean $100$ times greater than the others. 

Looking at the covariance induced by the compressors, we observe that for small $\omgC$, quantization better preserves the matrix structure compared to sparsification. This fact is consistent with \Cref{fig:real_dataset} where is given the trace of $\compCov[][M][] M^{-1}$ for these eight covariances: the traces for quantization are indeed smaller than for sparsification. This is also consistent with \Cref{fig:sgd_quantum,fig:sgd_cifar10} where $\omgC = 1$ and where quantization outperforms sparsification. 

\newcommand{\addCov}[2]{\includegraphics[align=c,width=0.45\textwidth]{pictures/real_dataset/cov_matrix/#1_#2_cov.pdf}}

\newcolumntype{C}{>{\centering\arraybackslash}m{0.45\textwidth}}

\newcommand{\splitCell}[2]{#1 \qqquad\qquad #2}

\begin{table}
\caption{(1) Data covariances for \texttt{quantum} and $\texttt{cifar-10}$. (2) Covariance $\compCov[][M][]$ w./w.o. standardization for quantization and sparsification; see \Cref{fig:real_dataset} to have the corresponding trace of $\compCov[][M][] M^{-1}$.}
\label{app:fig:cov_aniac}
\resizebox{\linewidth}{!}{
\begin{tabular}{m{1em}ccc}
\toprule
& $M$ & Quantization & Sparsification \\ 
 &raw-data \qqquad standardized  & \splitCell{$\omgC =1$}{$\omgC = 8$} & \splitCell{$\omgC = 1$}{$\omgC = 8$} \\ 
\cmidrule(lr){2-2}\cmidrule(l){3-3}\cmidrule(l){4-4}
\rotatebox{90}{\texttt{quantum}} &\addCov{quantum}{no_compr}& \addCov{quantum}{Qtzd} & \addCov{quantum}{Sparsification} \\ 
 \rotatebox{90}{\texttt{cifar-10}} &\addCov{cifar10}{no_compr}& \addCov{cifar10}{Qtzd} & \addCov{cifar10}{Sparsification}  \\ 
\bottomrule 
\end{tabular}}
\end{table} 	 
\section{Technical results on federated learning.}
\label{app:sec:bm_fl}

\subsection{Validity of the assumptions made on the random fields in the case of covariate-shift}
\label{app:subsec:validity_asu_random_fields_fl_sigma}

In this Subsection, we examine the setting of federated and compressed LSR under the scenario of covariate-shift (\Cref{subsec:fl_sigma}). Specifically, we consider the case where for any $i,j$ in $\llbracket 1, N \rrbracket$, we have heterogeneous covariances, i.e., $\xCov_i \neq \xCov_j$, but a unique optimal model i.e. $\ws^i = \ws$. 
We verify that all the assumptions on the random fields done in \Cref{subsec:def_ania_asu_field} are fulfilled in the setting. For this purpose, we redefine the filtration given in \Cref{app:subsec:validity_asu_random_fields} to align them with the FL setting. For $k$ in $\N^*$ and for $i$ in $\OneToN$, we note $u_k^i$ the noise that controls the compression $\C_k^i(\cdot)$ at round $k$.

\begin{definition}
We note $(\Fx_k)_{k \in \N}$ the filtration associated with the features noise, $(\Fy_k)_{k \in \N}$ the filtration associated with the label noise, and $(\Fsto_k)_{k \in \N}$ the filtration associated to the stochastic gradient noise, which is the union of the two previous filtrations. For $k \in \N^*$, we define $\Flast_{0} = \{ \varnothing \}$ and
\begin{align*}
    \Fx_k &= \sigma \bigpar{\Flast_{k-1} \cup \{(x_k^i)_\iN\} } \\
    \Fy_k &= \sigma \bigpar{\Flast_{k-1} \cup \{(\varepsilon_k^i)_\iN\} } \\
    \Fsto_k &= \sigma \bigpar{\Flast_{k-1} \cup \{ (x_k^i, \varepsilon_k^i)_\iN\}} \\
    \Flast_k &= \sigma \bigpar{\Flast_{k-1} \cup \{(x_k, \varepsilon_k^i, u_k^i)_\iN\} }\,.
\end{align*}
\end{definition}

Now we prove that all assumptions done in \Cref{sec:theoretical_analysis} are correct in this setting.

\begin{property}[Validity of the setting presented in \Cref{def:class_of_algo}]
\label{prop:zero_centered_noise_fl}
For \Cref{ex:dist_comp_LMS} in the context of \Cref{model:fed}, we have that the setting presented in \Cref{def:class_of_algo} is verified.
\end{property}

\begin{proof}
From~\Cref{ex:dist_comp_LMS}, we have for any $k$ in $\N^*$ and any $w$ in $\R^d$, $\xi_k(w - \ws) = \nabla F(w) - \frac{1}{N} \sum_\iN \C_k^i(\g_k^i(w))$.
Because $(\g_k^i)_{k\in\N^*, i\in\llbracket 1, N \rrbracket}$ and $(\C_k^i)_{k\in\N^*, i\in\llbracket 1, N \rrbracket}$ are by definition two sequences of i.i.d. random fields (\Cref{ex:dist_comp_LMS}), it follows that their composition is also i.i.d., \emph{hence $(\xi_k)_{k\in\N^*}$ is a sequence of i.i.d. random fields.}

Taking expectation w.r.t. the $\sigma$-algebra $\Fsto_k$ we have $\Expec{\C_k^i(\g_k^i(w))}{\Fsto_k} = \g_k^i(w)$ (\Cref{lem:compressor}), next with the $\sigma$-algebra $\Flast_{k-1}$, we have $\Expec{\g_k^i(w)}{\Flast_{k-1}} = \nabla F_i(w)$ (\Cref{eq:def_oracle}).
And because $\frac{1}{N} \sum_\iN \nabla F_i(w) = \nabla F(w)$, \emph{we obtain that the random fields are zero-centered.}

From \Cref{model:fed}, we have for any $k$ in $\N^*$ and any $w$ in $\R^d$ that:
\begin{align*}
    F(w) &= \frac{1}{2N} \sum_\iN \FullExpec{(\PdtScl{x_k^i}{w} - y_k^i)^2} \\
    &= \frac{1}{2N} \sum_\iN  \FullExpec{(w - w_*)^\top (x_k^i \otimes x_k^i) (w - w_*) - 2 \varepsilon_k^i \PdtScl{x_k^i}{w - w_*}+ (\varepsilon_k^i)^2} \\
    &=   \frac{1}{2N} \sum_\iN (w - \ws)^\top \xCov_i (w - \ws) + \sigma^2 =  \frac{1}{2} ((w - \ws)^\top \overline{\xCov} (w - \ws) + \sigma^2)\,.
\end{align*}

And we have from \Cref{model:fed}: $\Tr{\overline{\xCov}} = \frac{1}{N} \sum_\iN \Tr{\xCov_i} = \frac{1}{N} \sum_\iN  R_i^2 =:  \overline{R}^2$,
which concludes the verification.
\end{proof}

\begin{property}[Validity of \Cref{asu:main:bound_add_noise}]
\label{prop:add_noise_fl_sigma}
Consider \Cref{ex:dist_comp_LMS} and \Cref{model:fed} with \Cref{lem:compressor}, for any iteration $k$ in $\N^*$, the second moment of the additive noise $\xi_k^{\mathrm{add}}$ can be bounded by $(\omgC +1)  \overline{R}^2 \sigma^2 / N$ i.e. \Cref{asu:main:bound_add_noise} is verified.
\end{property}

\begin{proof}
Let $k$ in $\N^*$. Because we consider \Cref{ex:dist_comp_LMS}, with \Cref{def:class_of_algo,def:add_mult_noise}, we first have $\xi_k^{\mathrm{add}} = - \frac{1}{N} \sum_\iN \C_k^i(\gwkstar^i)$, hence taking expectation w.r.t the $\sigma$-algebra $\Fsto_k$ and because the $N$ compressions are independent (\Cref{ex:dist_comp_LMS}), using \Cref{lem:compressor}, we have that:
\begin{align*}
    \Expec{\|\xi_k^{\mathrm{add}}\|^2}{\Fsto_k} &= \frac{1}{N^2} \sum_\iN \Expec{\SqrdNrm{\C_k^i(\gwkstar^i)}}{\Fsto_k} + \frac{1}{N^2} \sum_{i \neq j} \PdtScl{\gwkstar^i}{\gwkstar^j}\\
    &\leq \frac{\omgC + 1}{N^2} \sum_\iN \SqrdNrm{\gwkstar^i} + \frac{1}{N^2} \sum_{i \neq j} \PdtScl{\gwkstar^i}{\gwkstar^j} \,.
\end{align*}

Next, we first have from \Cref{model:fed} and \Cref{eq:def_oracle} that for any $i$ in $\OneToN$, $\gwkstar^i = -\varepsilon_k^i x_k^i$, secondly because $\bigpar{(\varepsilon_k^i)_{k\in \OneToK, i\in\OneToN}}$ are independent from $\bigpar{(x_k^i)_{k\in\OneToK,i\in\OneToN}}$ (\Cref{model:fed}), we have that $\E[\|\varepsilon_k^i x_k^i\|^2] \leq \sigma^2  R_i^2$, hence $\Expec{\|\xi_k^{\mathrm{add}}\|^2}{\Flast_{k-1}} = \E[\|\xi_k^{\mathrm{add}}\|^2] = \frac{\omgC + 1}{N^2} \sum_\iN \sigma^2  R_i^2\,.$
\end{proof}

\begin{property}[Validity of \Cref{asu:main:bound_mult_noise}]
\label{prop:mult_noise_fl}
Consider \Cref{ex:dist_comp_LMS} in the context of \Cref{model:fed} with \Cref{lem:compressor}, for any iteration $k$ in $\N^*$, the second moment of the multiplicative noise $\xi_k^{\mathrm{mult}}(w)$ can be bounded for any $w$ in $\R^d$ by $2(\omgC +1)  \max_{i \in  \OneToN}(R_i^2) \SqrdNrm{\overline{\xCov}^{1/2} (w - \ws)} / N + 4 (\omgC + 1)  \overline{R}^2 \sigma^2 / N $ i.e. \Cref{asu:main:bound_mult_noise} is verified.
\end{property}

\begin{proof}
Let $k$ in $\N^*$, we note $\eta = w - \ws$.
Because we consider \Cref{ex:dist_comp_LMS}, with \Cref{def:class_of_algo,def:add_mult_noise}, we write $\xi_k^{\mathrm{mult}}(\eta) = \frac{1}{N} \sum_\iN \xi_k^{i, \mathrm{mult}}(\eta)$, where $\xi_k^{i, \mathrm{mult}}(\eta) = \xCov_i \eta - \C(\g_k^i(w)) + \C(\gwkstar^i)$ is the multiplicative noise on client $i$ in $ \OneToN$, hence developing the squared norm gives:
\begin{align*}
    \SqrdNrm{\xikmultFN[\eta]} = \SqrdNrm{\frac{1}{N} \sum_\iN \xi_k^{i, \mathrm{mult}}(\eta)} = \frac{1}{N^2} \sum_\iN \SqrdNrm{\xi_k^{i, \mathrm{mult}}(\eta)} + \frac{1}{N^2} \sum_{i \neq j} \PdtScl{\xi_k^{i, \mathrm{mult}}(\eta)}{\xi_k^{j, \mathrm{mult}}(\eta)} \,.
\end{align*}

Taking expectation w.r.t. the $\sigma$-algebra $\Flast_{k-1}$, using that the $N$ compressions are independent (\Cref{ex:dist_comp_LMS}) and that for any $i$ in $\OneToN$, $\expec{\xi_k^{i, \mathrm{mult}}(\eta)}{\Flast_{k-1}}=0$ (\Cref{lem:compressor}) results to have:
\begin{align*}
    \expec{\sqrdnrm{\xikmultFN[\eta]}}{\Flast_{k-1}} = \frac{1}{N^2} \sum_\iN \expec{\sqrdnrm{\xi_k^{i, \mathrm{mult}}(\eta)}}{\Flast_{k-1}}\,.
\end{align*}

Next, we use the result of \Cref{prop:mult_noise} for each client $i$ in $\OneToN$ and we obtain:
\begin{align*}
    \Expec{\|\xikmultFN[\eta]\|^2}{\Flast_{k-1}} &\leq \frac{1}{N^2} \sum_\iN \bigpar{2(\omgC +1)  R_i^2 \|\xCov_i^{1/2} (w - \ws)\|^2 + 4(\omgC +1)  R_i^2 \sigma^2} \\
    &\leq \frac{2(\omgC +1)  \max_{i \in  \OneToN}(R_i^2)}{N}  \|\overline{\xCov}^{1/2} (w - \ws)\|^2 + \frac{4(\omgC +1)  \overline{R}^2 \sigma^2}{N} \,,
\end{align*}
which allows concluding.

\end{proof}

\begin{property}[Validity of \Cref{asu:main:bound_mult_noise_holder}]
\label{prop:mult_noise_holder_fl}
Consider \Cref{ex:dist_comp_LMS} in the context of \Cref{model:fed} with \Cref{lem:compressor}, for any iteration $k$ in $\N^*$, the second moment of the multiplicative noise $\xi_k^{\mathrm{mult}}(w)$ can be bounded for any $w$ in $\R^d$ by $(\omgCOne \sigma  \max_{i \in  \OneToN}(R_i^2) \|\overline{\xCov}^{1/2} (w - \ws) \| + \omgCTwo  \max_{i \in  \OneToN}(R_i^2) \|\overline{\xCov}^{1/2} (w - \ws) \|^2) / N$ i.e. \Cref{asu:main:bound_mult_noise_holder} is verified.
\end{property}

\begin{proof}
Let $k$ in $\N^*$, we note $\eta = w - \ws$.
From \Cref{prop:mult_noise_fl}, taking expectation w.r.t. the $\sigma$-algebra $\Flast_{k-1}$, decomposing the multiplicative noise results to have:
\begin{align*}
    \expec{\sqrdnrm{\xikmultFN[\eta]}}{\Flast_{k-1}} = \frac{1}{N^2} \sum_\iN \expec{\sqrdnrm{\xi_k^{i, \mathrm{mult}}(\eta)}}{\Flast_{k-1}}\,.
\end{align*}
Next we use the result of \Cref{prop:mult_noise_holder} for each client $i$ in $\OneToN$ and we obtain:
\begin{align*}
    \expec{\sqrdnrm{\xikmultFN[\eta]}}{\Flast_{k-1}} &\leq \frac{1}{N^2} \sum_\iN \omgCOne R_i^2 \sigma \sqrt{ \|\xCov_i^{1/2} (w - \ws) \|^2} + \omgCTwo  R_i^2 \|\xCov_i^{1/2} (w - \ws) \|^2\,.
\end{align*}

With Jensen's inequality \ref{app:basic_ineq:jensen} used for concave function:
\begin{align*}
    \Expec{\SqrdNrm{\xikmultFN[\eta]}}{\Flast_{k-1}} &\leq \frac{\omgCOne \sigma \max_{i \in  \OneToN}(R_i^2)}{N} \sqrt{ \frac{1}{N} \sum_\iN \|\xCov_i^{1/2} (w - \ws) \|^2} \\
    &\qquad+ \frac{\omgCTwo  \max_{i \in  \OneToN}(R_i^2) }{N^2} \sum_\iN \|\xCov_i^{1/2} (w - \ws) \|^2  \\
    &\leq \frac{\omgCOne \sigma  \max_{i \in  \OneToN}(R_i^2)}{N} \sqrt{ \|\overline{\xCov}^{1/2} (w - \ws) \|^2} \\
    &\qquad+ \frac{1}{N} \omgCTwo  \max_{i \in  \OneToN}(R_i^2) \|\overline{\xCov}^{1/2} (w - \ws) \|^2 \,,
\end{align*}

which allows concluding.

\end{proof}

\begin{property}[Validity of \Cref{asu:main:bound_mult_noise_lin}]
\label{prop:mult_noise_lin_fl}
Consider \Cref{ex:dist_comp_LMS} and \Cref{model:fed} with \Cref{lem:compressor}, if the compressor $\C$ is linear, then for any iteration $k$ in $\N^*$, the multiplicative noise $\xi_k^{\mathrm{mult}}$ is linear, thus there exist a matrix $\Xi_k$ in $\R^{d\times d}$ such that for any $w$ in $\R^d$, $\xi_k^{\mathrm{mult}}(w) = \Xi_k w$. Furthermore the second moment of the multiplicative noise can be bounded for any $w$ in $\R^d$ by $(\omgC +1)  \max_{i \in  \OneToN}(R_i^2) \SqrdNrm{\overline{\xCov}^{1/2} (w - \ws)} / N$, hence \Cref{asu:main:bound_mult_noise_lin} is verified.
\end{property}

\begin{proof}
Let $k$ in $\N^*$, we note $\eta = w - \ws$.
Because we consider \Cref{ex:dist_comp_LMS}, with \Cref{def:class_of_algo,def:add_mult_noise}, we write $\xi_k^{\mathrm{mult}}(\eta) = \frac{1}{N} \sum_\iN \xi_k^{i, \mathrm{mult}}(\eta)$, where $\xi_k^{i, \mathrm{mult}}(\eta) = \xCov_i \eta - \C(\g_k^i(w)) + \C(\gwkstar^i)$ is the multiplicative noise on client $i$ in $ \OneToN$.
And because for any clients $i$ in $\{1, \cdots N\}$ the random mechanism $\C^i_k$ is linear, there exists a random matrix $\Pi_k^i$ in $\R^{d\times d}$ s.t. for any $z$ in~$\R^d$, we have $\C_k^i(z) = \Pi_k^i z$, it follows that:
\begin{align*}
\xikmultFN[\eta] = \nabla F(w) - \frac{1}{N} \sum_\iN \C_k^i(\g_k^i(w)) + \C_k^i(\gwkstar^i) = \bigpar{\overline{\xCov} - \frac{1}{N} \sum_\iN \Pi_k^i (x_k^i \otimes x_k^i) } \eta \,.
\end{align*}
Hence, the first part of \Cref{asu:main:bound_mult_noise_holder} is verified with $ \Xi_k = \frac{1}{N} \sum_\iN \xCov_i -\Pi_k^i (x_k^i \otimes x_k^i)$.
From \Cref{prop:mult_noise_fl}, taking expectation w.r.t. the $\sigma$-algebra $\Flast_{k-1}$, decomposing the multiplicative noise results to have:
\begin{align*}
    \Expec{\SqrdNrm{\xikmultFN[\eta]}}{\Flast_{k-1}} = \frac{1}{N^2} \sum_\iN \Expec{\SqrdNrm{\xi_k^{i, \mathrm{mult}}(\eta)}}{\Flast_{k-1}}\,.
\end{align*}

Next we use the result of \Cref{prop:mult_noise_lin} for each client $i$ in $\OneToN$ and we obtain:
\begin{align*}
    \Expec{\SqrdNrm{\xikmultFN[\eta]}}{\Flast_{k-1}} &\leq \frac{1}{N} \sum_\iN (\omgC +1)  R_i^2 \SqrdNrm{\xCov_i^{1/2} (w - \ws)} \\
    &\leq \frac{(\omgC +1)  \max_{i \in  \OneToN}(R_i^2)}{N^2}  \SqrdNrm{\frac{1}{N} \sum_\iN \xCov_i^{1/2} (w - \ws)} 
    \,,
\end{align*}
which allows concluding.

\end{proof}

\begin{property}[Validity of \Cref{asu:main:baniac_lin}]
\label{prop:baniac_lin_fl}
Considering \Cref{ex:dist_comp_LMS} under the setting of \Cref{model:centralized} with \Cref{rem:as_bounded_features,lem:compressor}, if the compressor $\C$ is linear, then 
for any $k$ in $\N^*$, we have $\aniac \preccurlyeq \sigma^2 \max_{i \in \OneToN} ( \Sha_{\xCov_i}) \overline{\xCov} / N$ and $\FullExpec{\Xi_k \Xi_k^\top} \preccurlyeq  \max_{i \in \OneToN} (R_i^2 \Sha_{\xCov_i}) \overline{H} / N$, with $(\Sha_{\xCov_i})_{i \in \OneToN}$ given in \Cref{cor:value_constants_in_thm}.
Overall, \Cref{asu:main:baniac_lin} is thus verified.
\end{property}

\begin{proof}

\textit{First inequality.} 

By \Cref{def:ania}, we have $    \aniac = \expec{\xikstaruk \otimes \xikstaruk}{\Flast_{k-1}} = \frac{1}{N^2}\sum_\iN \expec{\C_k^i(\gwkstar^i)^{\kcarre}}{\Flast_{k-1}} $, because for any client $i$ in $\OneToN$ $\bigpar{(\varepsilon_k^i)_{k\in \OneToK}}$ is independent from $\bigpar{(x_k^i)_{k\in\OneToK}}$ (\Cref{model:fed}) and using compressor linearity and \Cref{app:eq:bound_compressor_cov}, it gives:
\begin{align*}
    \aniac &= \sigma^2 \frac{1}{N^2}\sum_\iN \FullExpec{\C_k^i(x_k^i)^{\kcarre}} = \frac{\sigma^2}{N^2} \sum_\iN 
 \compCov[i][][] \preccurlyeq \frac{\sigma^2}{N^2} \sum_\iN \Sha_{\xCov_i}\xCov \\
 &\preccurlyeq \frac{\sigma^2 \max_{i \in  \OneToN}(\Sha_{\xCov_i}) }{N} \overline{\xCov} \,.
\end{align*}

\textit{Second inequality.}

Using \Cref{prop:mult_noise_lin_fl}, because the random mechanism $\C^i$ is linear, there exists two matrices $\Pi_k^i, \Xi_k^i$ in $\R^{d\times d}$ s.t. for any $z$ in~$\R^d$, we have $\C_k^i(z) = \Pi_k^i z$ and $\xi_k^\mathrm{mult, i}(z) = \Xi_k^i z = (\xCov_i -\Pi_k^i (x_k^i \otimes x_k^i) z$, which gives that $\Xi_k = \frac{1}{N} \sum_\iN \xCov_i -\Pi_k^i (x_k^i \otimes x_k^i)$.
It follows that:
\begin{align*}
    \Xi_k\Xi_k^\top &= \frac{1}{N^2} \sum_\iN (\Xi_k^i) (\Xi_k^i)^\top + \frac{1}{N^2} \sum_{i \neq j } (\Xi_k^i) (\Xi_k^j)^\top \,.\nonumber 
\end{align*}

Taking the $\sigma$-algebra $\Flast_{k-1}$, using that the $N$ compressions are independent (\Cref{ex:dist_comp_LMS}) and that for any $i$ in $\OneToN$, $\Expec{\xi_k^{i, \mathrm{mult}}}{\Flast_{k-1}}=0$ (\Cref{lem:compressor}) results to have $\expec{\Xi_k\Xi_k^\top}{\Flast_{k-1}} = \frac{1}{N^2} \sum_\iN \Expec{(\Xi_k^i) (\Xi_k^i)^\top}{\Flast_{k-1}}$.
Now, we can reuse the computations given in \Cref{prop:baniac_lin} to obtain $\Expec{(\Xi_k^i) (\Xi_k^i)^\top}{\Flast_{k-1}} \preccurlyeq   R_i^2 \Sha_{H_i} \xCov_i$. 
Therefore, we have $\Expec{\Xi_k \Xi_k^\top}{\Flast_{k-1}} \preccurlyeq \max_{i \in \OneToN} (R_i^2 \Sha_{\xCov_i}) \overline{\xCov} / N$,
which concludes the second part of the verification of \Cref{asu:main:baniac_lin}.

\end{proof} 
\subsection{Heterogeneous optimal point}
\label{app:subsec:fl_wstar}

In this section, we explore further the scenario of concept-shift by adding a memory mechanism \citep{mishchenko_distributed_2019}. This mechanism has been shown by \citet{philippenko_artemis_2020} to improve the convergence in the case of heterogeneous clients. We give below the updates equation defining the algorithm of distributed compressed LSR with memory.

\begin{algo}[Distributed compressed LMS with control variates]\label{ex:dist_comp_LMS_artemis}
Each client $i \in \OneToN$ maintains a sequence $(h_k^i)_{i \in \OneToN}$ in $\R^d$, observes at any step $k \in \OneToK$ an oracle $\g_k^i(\cdot)$ on the gradient of the local objective function $F_i$ and applies an independent random compression mechanism $\C_k^i(\cdot)$ to the difference $\g_k^i-h_k^i$. And for any step-size $\gamma>0$, any $k \in \N^*$, the sequence of iterates $(w_k)_{k \in\N}$ satisfies:
\begin{equation}
\label{eq:def_artemis}
\left\{\begin{aligned}
& w_k = \wkm - \frac{\gamma}{N} \sum_\iN \C_k^i(\gwki - h_{k-1}^i) + h_{k-1}^i\\
& h_{k}^i = h_{k-1}^i + \alpha \C_k^i(\gwki - h_{k-1}^i) \,,
\end{aligned}\right.
\end{equation}
with $\alpha = 1 / 2(\omgC + 1)$.
\end{algo}

The counterpart of adding memory is that the random fields are no more identically distributed, thus \Cref{def:class_of_algo} is not fulfilled, and results from \Cref{sec:theoretical_analysis} cannot be applied, especially because $\FullExpec{\xi_k^{\mathrm{add}} \otimes \xi_k^{\mathrm{add}}}$ changes along iterations.
To remedy this problem, we  define here the \emph{limit} of the covariance of the additive noise i.e. $\aniac^{\infty} = \underset{k \to +\infty}{\lim} \FullExpec{\xi_k^{\mathrm{add}} \otimes \xi_k^{\mathrm{add}}}$. In the following result, we establish an asymptotic result on the convergence, similar  to \Cref{thm:bm2013_with_nonlinear_operator_compression}.

\begin{theorem}[CLT for concept-shift heterogeneity]
\label{thm:asymptotic_normality_artemis}
Consider \Cref{ex:dist_comp_LMS_artemis} under \\ \Cref{model:fed}~with~$\mu > 0$ and \Cref{lem:compressor}, for any step-size $(\gamma_k)_{k\in \N^*}$ s.t. $\gamma_k = 1 / \sqrt{k}$. Then 

\begin{enumerate}[topsep=2pt,itemsep=0pt,leftmargin=0.5cm]
    \item $(\sqrt{K} \etabarkm)_{K>0} \xrightarrow[K \to +\infty]{\mathcal{L}} \mathcal{N}(0, \Fhess^{-1} \aniac^\infty \Fhess^{-1})$,
    \item $\aniac^\infty = \overline{\mathfrak{C}((\C^i, p_{\Theta_i'})_\iN)}$, where $p_{\Theta_i'}$ is the distribution of $\gwkstar^i - \nabla F_i(\ws)$.
\end{enumerate}
\end{theorem}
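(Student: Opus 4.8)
The plan is to adapt the central limit theorem machinery already developed for \eqref{eq:LSA} to the memory-augmented \Cref{ex:dist_comp_LMS_artemis}. The key obstacle, explicitly flagged in the theorem's setup, is that adding the control variate $(h_k^i)$ breaks the i.i.d.\ structure of the random fields required by \Cref{def:class_of_algo}: the additive noise covariance $\FullExpec{\xi_k^{\mathrm{add}} \otimes \xi_k^{\mathrm{add}}}$ now depends on $k$ through the memory sequence. Consequently, I cannot invoke \Cref{prop:asymptotic_normality} or \Cref{app:prop:asymptotic_normality} verbatim; instead I would work directly from the \citet{polyak_acceleration_1992} theorem recalled in \Cref{app:thm:polyak_juditsky_92}, whose hypotheses only require convergence \emph{in probability} of the conditional covariance to a limit matrix $\Sigma$, rather than a stationary noise sequence.

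First I would establish that the memory terms converge: I would show that $h_k^i \to \nabla F_i(\ws)$ in $L^2$ as $k\to\infty$. This is the heart of why control variates help, and it follows the analysis of \citet{mishchenko_distributed_2019} and \citet{philippenko_artemis_2020}---the update $h_k^i = h_{k-1}^i + \alpha \C_k^i(\gwki - h_{k-1}^i)$ with $\alpha = 1/2(\omgC+1)$ is a contraction in expectation towards $\nabla F_i(\ws)$ once $\eta_{k-1}\to 0$. Since the step-size $\gamma_k = 1/\sqrt{k}$ decreases to zero and $\mu>0$, I would first invoke the $L^2$ convergence $\eta_K \xrightarrow{L^2} 0$ (as in the proof of \Cref{app:prop:asymptotic_normality}, relying on \Cref{asu:main:bound_add_noise,asu:main:second_moment_noise} and strong convexity). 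With $\eta_{k}\to 0$, the oracle $\gwki \to \gwkstar^i$ and the memory relaxes to $\nabla F_i(\ws)$, so that the quantity being compressed, $\gwki - h_{k-1}^i$, converges in distribution to $\gwkstar^i - \nabla F_i(\ws)$, whose law is exactly $p_{\Theta_i'}$ by definition.

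Second, I would compute the limit covariance. At the optimum, the additive noise is $\xi_k^{\mathrm{add}} = -\frac1N\sum_\iN \C_k^i(\gwkstar^i - h_{k-1}^i)$ evaluated along the trajectory; using independence of the $N$ compressions (\Cref{ex:dist_comp_LMS_artemis}), unbiasedness from \Cref{lem:compressor}, and the convergence $h_{k-1}^i \to \nabla F_i(\ws)$, the conditional covariance converges in probability to $\frac{1}{N^2}\sum_\iN \mathfrak{C}(\C^i, p_{\Theta_i'}) = \frac1N \overline{\mathfrak{C}((\C^i,p_{\Theta_i'})_\iN)}$, which defines $\aniac^\infty$ and gives claim~(2). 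The reasoning mirrors the covariance computation in \Cref{eq:aniac_to_C_dist}, except that the centered quantity is now $\gwkstar^i - \nabla F_i(\ws)$ rather than $\varepsilon_k^i x_k^i$. I would also verify the vanishing of the multiplicative-noise contribution to the conditional covariance, exactly as in steps (ii)--(iii) of the proof of \Cref{app:prop:asymptotic_normality}, using \Cref{asu:main:bound_mult_noise_holder} together with $\eta_{k-1}\xrightarrow{L^2}0$.

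Finally, with the conditional covariance shown to converge in probability to $\aniac^\infty$, and with $F$ strongly convex ($\mu>0$) and quadratic (so $\|\nabla^2 F\|_\infty < \infty$ trivially), and the step-size condition $\gamma_k = k^{-1/2}$ satisfying $\gamma_k\to 0$ and $\gamma_k^{-1}(\gamma_k - \gamma_{k+1}) = o(\gamma_k)$, all three hypotheses of \Cref{app:thm:polyak_juditsky_92} hold. Applying that theorem yields $\sqrt{K}\etabarkm \xrightarrow{\mathcal{L}} \mathcal{N}(0, \Fhess^{-1}\aniac^\infty \Fhess^{-1})$, establishing claim~(1). The main difficulty I anticipate is making the $L^2$ convergence of the memory rigorous in the presence of the simultaneously-vanishing iterate error, since the two recursions $(\eta_k)$ and $(h_k^i)$ are coupled; I would handle this by a joint Lyapunov argument controlling $\E\|\eta_k\|^2 + \frac{c\gamma_k}{N}\sum_\iN \E\|h_k^i - \nabla F_i(\ws)\|^2$ for an appropriate constant $c$, as is standard in the memory/control-variate literature.
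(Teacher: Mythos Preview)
Your proposal is correct and follows essentially the same route as the paper: both proofs invoke \Cref{app:thm:polyak_juditsky_92} directly (since the random fields are no longer i.i.d.), establish the joint $L^2$ convergence of $(\eta_k, h_k^i - \nabla F_i(\ws))$ via a coupled Lyapunov function, and then show the conditional covariance converges in probability to $\aniac^\infty$ by a three-term argument (additive part, multiplicative part, cross term). Two small differences worth noting: the paper's Lyapunov weights the memory term by $\gamma_k^2$ rather than your $\gamma_k$, and more substantively the paper decomposes the noise as $\xi_{k,*}^{\mathrm A} = -\tfrac1N\sum_i \C_k^i(\gwkstar^i - \nabla F_i(\ws))$ with everything else (including the $h_{k-1}^i$ dependence) pushed into $\xi_k^{\mathrm M}$, rather than keeping $h_{k-1}^i$ inside the additive noise as you do---this has the advantage that $\xi_{k,*}^{\mathrm A}$ is genuinely i.i.d.\ with covariance exactly $\aniac^\infty$ at every step, so only the multiplicative piece needs a limit argument.
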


\Cref{thm:asymptotic_normality_artemis} shows that when using memory, in the settings of heterogeneous optimal points~$(\ws^i)_\iN$, convergence is still impacted by heterogeneity but with a smaller additive noise's covariance as  $\Theta_i' \prec \Theta_i$. In particular, in the case of deterministic gradients (batch case), we case $\Theta_i'\equiv 0$. 
From a technical standpoint, it shows that we recover asymptotically  the results stated by \Cref{thm:bm2013_with_nonlinear_operator_compression,thm:bm2013_with_linear_operator_compression} in the general setting of i.i.d. random fields $(\xik)_{k\in\N^*}$.
To prove this theorem, we show that the assumptions required by \Cref{app:thm:polyak_juditsky_92} are fulfilled by this framework.

\begin{proof}

For the sake of demonstration, we define a Lyapunov function $V_k$ \cite[as in][]{mishchenko_distributed_2019,liu_double_2020,philippenko_artemis_2020}, with $k$ in $\llbracket 1, K \rrbracket$:
\[
V_k = \SqrdNrm{\eta_k} + 2 \gamma_k^2 \cst \frac{1}{N} \sum_\iN \SqrdNrm{h_{k-1}^i - \nabla F_i(\ws)} \,,
\]
with $\cst$ in $\R^*$ being a Lyapunov constant that verifies some conditions given in Theorem S6 in \citet{philippenko_artemis_2020}. For any $k$ in $\N$, the Lyapunov function is defined combining two terms: (1) the distance from parameter $w_k$ to the optimal parameter $\ws$, (2) for any client $i$ in $\OneToN$, the distance between the memory term $h_{k-1}^i$ and the true gradient $\nabla F_i(\ws)$. 

\textbf{First, we have that in the case of decreasing step size s.t. for any $k$ in $\N$, $\gamma_k = k^{-\alpha}$, we have $\eta_{K} \xrightarrow[K \to +\infty]{L^2} 0$ and $h_{K}^i \xrightarrow[K \to +\infty]{L^2} \nabla F_i(\ws)$.}

Let $k$ in $\N^*$, from the demonstration of the \Artemis~algorithm with memory, we have from Theorem S6 in \citet[][]{philippenko_artemis_2020} (see page 41-45) that (1) combining Equations (S14) and (S15) in the form (S14)$+2\gamma_k^2 C $(S15), (2) and applying strong-convexity,
allows to obtain Equation (S17) but adapted to decreasing step-size:
\begin{align*}
    \Expec{V_{k}}{\Flast_{k-1}} &\leq \left(1 - 2 \gamma_k \mu \Box_k \right) \SqrdNrm{w_{k-1} - \ws} + \frac{2\gamma_k^2 \cst \diamondsuit}{N} \sum_{i=1}^N \lnrm h_{k-1}^i - \nabla F_i(\ws) \rnrm^2 +   \frac{2 \gamma_k^2 \sigma \triangle}{N}\,,
\end{align*}

with $\Box_k, \diamondsuit, \triangle$ being three constants in $\R$ whose exact expression can be found on pages 42-43 in \citet{philippenko_artemis_2020}. Furthermore, in the same article, they verify that to obtain a $(1- \gamma_k \mu)$ convergence, the following condition on $\Box_k, \diamondsuit, \triangle$ are fulfilled for any $k$ in $\N^*$: $\Box_k \leq 1/2$ and $\diamondsuit \leq 1 - \gamma_k \mu$.

These properties are valid under some conditions on the Lyapunov constant $\cst$, the step-size $\gamma_k$, and the learning rate $\alpha$; these conditions are provided in the statement of Theorem S6 from \citep{philippenko_artemis_2020} and we don't recall them here.
Hence, we can write that we have:
\begin{align*}
    \Expec{V_{k}}{\Flast_{k-1}} &\leq (1 - \gamma_k \mu) \bigpar{\SqrdNrm{w_{k-1} - \ws} + \frac{2\gamma_k^2 \cst}{N} \sum_{i=1}^N \lnrm h_{k-1}^i - \nabla F_i(\ws) \rnrm^2 } +   \frac{2 \gamma_k^2 \sigma^2 \triangle}{N}\,,
\end{align*}
and because for any $k$ in $N$, the step-size is decreasing, we have $\gamma_k \leq \gamma_{k-1}$, which makes to recover the Lyapunov function $V_{k-1}$ at step $k-1$: $    \Expec{V_{k}}{\Flast_{k-1}} \leq (1 - \gamma_k \mu) V_{k-1} +   \frac{2 \gamma_k^2 \sigma^2 \triangle}{N}$.
Taking full expectation and unrolling the sequence $(V_k)_{k\in \N}$, we obtain:
\begin{align*}
    \E V_{k} &\leq \prod_{i=1}^k (1 - \gamma_i \mu) V_{0} + \frac{2 \sigma^2 \triangle}{N} \sum_{j=1}^k \gamma_j^2 \prod_{i=j+1}^k (1 - \gamma_i \mu) \,.
\end{align*}

To show that each part of the bound given in the above equation tends to zero when $k$ grows to infinity is classical computations and can be find for instance in lectures notes of \citet[][pages 164-167 and 182]{bach_lecturesnotes_2022}, and \cite{kushner_stochastic_2003}.

To apply Theorem 1 from \citet[][recalled in \Cref{app:thm:polyak_juditsky_92}]{polyak_acceleration_1992}, which gives the desired result, it suffices to prove the convergence in probability of the covariance of the noise $\xik$ towards $\aniac$, as $k\to \infty$.

\textbf{In the following, we show that $\underset{k \to +\infty}{\lim} \Expec{ \xik \xik^\top}{\Flast_{k-1}} \stackrel{\mathbb{P}}{=}\aniac^\infty  $.}
Let $k$ in $\N^*$, for this purpose, we consider the following additive/multiplicative noise decomposition:
\begin{equation}
\label{eq:fl_wstar_decomposition_clt}
\left\{\begin{aligned}
& \xi_{k,*}^{\mathrm{A}} = - \frac{1}{N} \sum_\iN \C_k^i(\gwkstar^i - \nabla F_i(\ws)) \\
& \xi_k^{\mathrm{M}}(\eta_k) = \Fhess \eta_k - \frac{1}{N} \sum_\iN \C_k^i(\gwki - h_{k-1}^i) + \frac{1}{N} \sum_\iN \C_k^i(\gwkstar - \nabla F_i(\ws)) + h_{k-1}^i \,.
\end{aligned}\right.
\end{equation}

Furthermore, we have that $\xi_k^{\mathrm{add}} \xrightarrow[k \to +\infty]{L^2} \xi_{k,*}^{\mathrm{A}}$ because of the Hölder-inequality (\Cref{lem:compressor}) and because we shown that $h_{K}^i \xrightarrow[K \to +\infty]{L^2} \nabla F_i(\ws)$; thereby $\E[\xi_k^{\mathrm{add}} \otimes \xi_k^{\mathrm{add}}] \xrightarrow[k \to +\infty]{L^1} \aniac^\infty$. 
Next, from \Cref{eq:fl_wstar_decomposition_clt}, we write:
\begin{align*}
    \xik \xik^\top &= (\xi_{k,*}^{\mathrm{A}} - \xi_k^{\mathrm{M}}(\etakm)) (\xi_{k,*}^{\mathrm{A}} - \xi_k^{\mathrm{M}}(\etakm))^\top \\
    &= \xi_{k,*}^{\mathrm{A}} \otimes \xi_{k,*}^{\mathrm{A}} - \xi_{k,*}^{\mathrm{A}} \xi_k^{\mathrm{M}}(\etakm)^\top - \xi_k^{\mathrm{M}}(\etakm) (\xi_{k,*}^{\mathrm{A}})^\top + \xi_k^{\mathrm{M}}(\etakm) \otimes \xi_k^{\mathrm{M}}(\etakm) \,.
\end{align*} 

(i) Developing the covariance of the additive noise and considering \Cref{model:fed,ex:dist_comp_LMS} results to $\expec{\xi_{k,*}^{\mathrm{A}} \otimes \xi_{k,*}^{\mathrm{A}}}{\Flast_{k-1}} = \frac{1}{N^2} \sum_\iN \expec{\C_k^i(\gwkstar^i - \nabla F_i(\ws))^{\kcarre}}{\Flast_{k-1}}$.
For any iteration~$k$ in~$\N^*$ and any client~$i$ in~$\OneToN$, we note $\Theta_i'$ the covariance of $\gwkstar^i - \nabla F_i(\ws)$, then $\gwkstar^i - \nabla F_i(\ws)$ is an i.i.d. zero-centered random vectors draw from a distribution $p_{\Theta_i'}$, hence we have for any iteration~$k$ in~$\N^*$,~$\aniac^\infty = \expec{\xi_{k,*}^{\mathrm{A}} \otimes \xi_{k,*}^{\mathrm{A}}}{\Flast_{k-1}} = \overline{\mathfrak{C}(\C^i, (p_{\Theta_i'})_\iN)} \,.$

(ii) Second, we show that $\Expec{\xi_k^{\mathrm{M}}(\etakm)^{\kcarre}}{\Flast_{k-1}}$ converge to 0 in probability: it is sufficient to show that $\| \xi_k^{\mathrm{M}}(\etakm)^{\kcarre} \|_F$ tends to 0.
To do so, we use the fact that $\| \xi_k^{\mathrm{M}}(\etakm)^{\kcarre} \|_F = \|\xi_k^{\mathrm{M}}(\etakm)\|^2_ 2$, it results to the following decomposition:
\begin{align*}
    \| \xi_k^{\mathrm{M}}(\etakm)^{\kcarre} \| &\leq 3 \SqrdNrm{\xCov \etakm} + 3\SqrdNrm{ \frac{1}{N} \sum_\iN \C_k^i(\gwki - h_{k-1}^i) - \C_k^i(\gwkstar^i - \nabla F_i(\ws))} \\
    &\qqquad + 3\SqrdNrm{ \frac{1}{N} \sum_\iN h_{k-1}^i - \nabla F_i(\ws)} \,.
\end{align*}
Considering the Hölder inequality given 
in \Cref{item:holder_compressor} from \Cref{lem:compressor}, because $\eta_k \xrightarrow[k \to +\infty]{L^2} 0$ and $h_k^i \xrightarrow[k \to +\infty]{L^2} \nabla F_i(\ws)$, we deduce that $\Expec{ \xi_k^{\mathrm{M}}(\etakm)^{\kcarre}}{\Flast_{k-1}}$ tends to $0$ in $L^1$-norm. 

(iii) Third, it remains to show that $\expec{\xi_k^{\mathrm{M}}(\etakm) (\xi_{k,*}^{\mathrm{A}})^\top}{\Flast_{k-1}} \xrightarrow[k \to +\infty]{\mathbb{P}} 0$.
We use the Cauchy-Schwarz inequality's \ref{app:basic_ineq:cauchy_schwarz_cond} for conditional expectation:
\begin{align*}
    \Expec{\xi_k^{\mathrm{M}}(\etakm) (\xi_{k,*}^{\mathrm{A}})^\top \|_F}{\Flast_{k-1}}^2 &=  \Expec{\xi_k^{\mathrm{M}}(\etakm)\|_2\| (\xi_{k,*}^{\mathrm{A}})^\top \|_2}{\Flast_{k-1}}^2 \\
    &\leq \|\Expec{\xi_k^{\mathrm{M}}(\etakm)\|^2_2}{\Flast_{k-1}} \Expec{ \|(\xi_{k,*}^{\mathrm{A}})^\top\|^2_2}{\Flast_{k-1}} \,.
\end{align*}
The sequence of random vectors $(\xi_{k,*}^{\mathrm{A}})_{k\in \N}$ is i.i.d., and moreover we have shown previously that $\xi_k^{\mathrm{M}}(\etakm)^{\kcarre}$ tends to 0, hence $\expec{\xi_k^{\mathrm{M}}(\etakm) (\xi_{k,*}^{\mathrm{A}})^\top}{\Flast_{k-1}}$ converges to $0$ in distribution.
Consequently, noting $\Theta_i' = \fullexpec{\gwkstar^i - \nabla F_i(\ws))^{\kcarre}}$ we can state that:
\begin{align*}
    \Expec{\xik^{\kcarre}}{\Flast_{k-1}} \xrightarrow[k \to +\infty]{\mathbb{P}} \aniac^\infty =  \overline{\mathfrak{C}(\C^i, (p_{\Theta_i'})_\iN)}  \,.
\end{align*}
\end{proof} 
 
\end{document}